\newcolumntype{L}[1]{>{\raggedright\arraybackslash}p{#1}}
\def\NA{{$-$}}
\def\NAA{{$\dagger$}}
\newcommand{\fastsdp}{SDCut-QN\xspace}
\newcommand{\lowrank}{LowRank\xspace}
\newcommand{\smooth}{SDCut-SN\xspace}
\newcommand{\lbfgsb}{SDCut-QN\xspace}
\newcommand{\sdcut}{SDCut\xspace}
\newcommand{\parat}{$\gamma$\xspace}
\newcommand{\LBFGSB}{\mbox{L-BFGS-B}\xspace} %
\newcommand{\ba}{\mathbf a}
\newcommand{\bu}{\mathbf u}
\newcommand{\bx}{\mathbf x}
\newcommand{\bt}{\mathbf t}
\newcommand{\bh}{\mathbf h}
\newcommand{\by}{\mathbf y}
\newcommand{\bv}{\mathbf v}
\newcommand{\bb}{\mathbf b}
\newcommand{\bp}{\mathbf p}
\newcommand{\bq}{\mathbf q}
\newcommand{\bw}{\mathbf w}
\newcommand{\bz}{\mathbf z}
\newcommand{\bk}{\mathbf k}
\newcommand{\bbf}{\mathbf f}
\newcommand{\bs}{\mathbf s}
\newcommand{\blambda}{\boldsymbol \lambda}
\newcommand{\bX}{\mathbf X}
\newcommand{\bP}{\mathbf P}
\newcommand{\bW}{\mathbf W}
\newcommand{\bA}{\mathbf A}
\newcommand{\bI}{\mathbf I}
\newcommand{\bK}{\mathbf K}
\newcommand{\bH}{\mathbf H}
\newcommand{\bC}{\mathbf C}
\newcommand{\bB}{\mathbf B}
\newcommand{\bZ}{\mathbf Z}
\newcommand{\bY}{\mathbf Y}
\newcommand{\bV}{\mathbf V}
\newcommand{\sst}{\mathrm{s.t.}}
\def\T{{\!\top}}
\newtheorem{theorem}{Theorem}
\newenvironment{theorem*}{\par\noindent{\bf Theorem\ }}{\hfill\\[2mm]}
\newtheorem{proposition}[theorem]{Proposition}
\newenvironment{corollary*}{\par\noindent{\bf Corollary\ }}{\hfill\\[2mm]}
\DeclareRobustCommand\onedot{\futurelet\@let@token\@onedot}
\def\@onedot{\ifx\@let@token.\else.\null\fi\xspace}
\def\eg{for example\xspace} 
\def\ie{that is\xspace} 
\def\wrt{with respect to\xspace}
 \def\vs{\emph{vs}\onedot}
\def\etal{\emph{et al}\onedot}
\def\psd{p.s.d\onedot}
\begin{document}

\title{Large-scale Binary Quadratic Optimization Using Semidefinite Relaxation and Applications}

\author{Peng Wang$^1$, Chunhua Shen$^{1}$, Anton van den Hengel$^{1}$, Philip H. S. Torr$^{2}$
\thanks{$^1$University of Adelaide, Australia;
$^2$University of Oxford, United Kingdom.
Correspondence should be addressed to C. Shen (chunhua.shen@adelaide.edu.au).
}
}

\markboth{Appearing in IEEE Trans. Pattern Analysis and Machine Intelligence, Feb.\ 2016}%
{Wang \MakeLowercase{\textit{et al.}}:
    Semidefinite Binary Quadratic Optimization
}

\IEEEtitleabstractindextext{%
\begin{abstract}
    In computer vision, many problems can be formulated as binary quadratic programs (BQPs),
    which are in general NP hard.
    Finding a solution when the problem is of large size to be of practical interest typically requires relaxation.
    Semidefinite relaxation usually yields tight bounds, but its computational complexity is high.
    In this work, we present a semidefinite programming (SDP) formulation for BQPs,
    with two desirable properties. First, it produces similar bounds to the standard SDP formulation.  Second, compared with
    the conventional SDP formulation, the proposed SDP formulation leads to a
    considerably more efficient and scalable dual optimization
    approach.
    We then propose two solvers, namely, quasi-Newton and smoothing Newton methods,
    for the simplified dual problem. Both of them are significantly more efficient than standard interior-point methods.
    Empirically the smoothing Newton solver is faster than the quasi-Newton solver
    for dense or medium-sized problems, while
    the quasi-Newton solver is preferable for large sparse/structured problems.
\end{abstract}

\begin{IEEEkeywords}
Binary Quadratic Optimization, Semidefinite Programming, Markov Random Fields
\end{IEEEkeywords}}

\maketitle

\onecolumn\tableofcontents
\twocolumn

\clearpage

\section{Introduction}

Binary quadratic programs (BQPs) are a class of combinatorial optimization problems
with binary variables, quadratic objective function and linear/quadratic constraints.
They appear in a wide variety of applications in computer vision,
such as image segmentation/pixel labelling, image registration/matching, image denoising/restoration.
Moreover, Maximum a Posteriori (MAP) inference problems
for Markov Random Fields (MRFs) can be formulated as BQPs too.
There are a long list of references to applications formulated as BQPs or specifically MRF-MAP problems.
Readers may refer to \cite{van2013solution,kochenberger2014unconstrained,li1994markov,wang2013markov,kappes2013comparative,Givry2014exp} %
and the references therein for detailed studies.

Unconstrained BQPs with  submodular pairwise terms
can be solved exactly and efficiently using
graph cuts~\cite{kolmogorov2004energy,kolmogorov2007minimizing,rother2007optimizing}.
However solving general BQP problems is known to be NP-hard (see \cite{li2010polynomially} for exceptions).
In other words, it is unlikely to find polynomial time algorithms to exactly solve these problems.
Alternatively, relaxation approaches can be used to produce a feasible solution close to the global optimum in polynomial time.
In order to accept such a relaxation we require a guarantee that the divergence between the solutions
to the original problem and the relaxed problem is bounded.
The quality of the relaxation thus depends upon the tightness of the bounds.
Developing an efficient relaxation algorithm with a tight relaxation bound
that can achieve a good solution (particularly for large problems) is thus of great practical importance.
There are a number of relaxation methods for BQPs (in particular MRF-MAP inference problems) in the literature,
including linear programming (LP) relaxation~\cite{wainwright2005map,kolmogorov2006convergent,Globerson2007,sontag2012efficiently}, %
quadratic programming relaxation ~\cite{ravikumar2006quadratic},
second order cone relaxation~\cite{kumar2006solving,kim2003exact,ghaddar2011second},
spectral relaxation~\cite{Shi2000normalized,Yu2004segmentation,Cour_solvingmarkov,Gour2006}
and
SDP relaxation~\cite{kim2003exact,torr2003solving}.

Spectral methods are effective for many computer vision applications, such as
image segmentation~\cite{Shi2000normalized,Yu2004segmentation} and motion segmentation~\cite{Lauer09spectralmotion}.
The optimization of spectral methods eventually lead to the computation of top eigenvectors.
Nevertheless, spectral methods may produce loose relaxation bounds in many cases~\cite{Guattery98onthe,Lang05fixingtwo,Kannan00onclusterings}.
Moreover, the inherent quadratic programming formulation of spectral methods
is difficult to incorporate certain types of additional constraints~\cite{Cour_solvingmarkov}.

{%
SDP relaxation has been shown that it leads to tighter approximation than other relaxation methods
for many combinatorial optimization problems~\cite{vandenberghe1996semidefinite,wolkowicz2000handbook,wainwright2008graphical,Kumar2009Ananalysis}.
In particular for the max-cut problem, Goemans and Williamson~\cite{Goemans95improved} achieve the state-of-the-art $0.879$ approximation ratio using SDP relaxation.
SDP relaxation has also been used in a range of vision problems, such as image segmentation~\cite{Heiler2005semi}, restoration~\cite{Keuchel2003binary,Olsson07solvinglarge},
graph matching~\cite{Torr2003MRFSDP,Schellewald05} and co-segmentation~\cite{Joulin2010dis}.
In a standard SDP problem, a linear function of a symmetric matrix $\bX$ is optimized,
subject to linear (in)equality constraints and the constraint of
$\bX$ being positive semidefinite (\psd).
The standard SDP problem and its Lagrangian dual problem are written as:
\begin{align}
\mbox{(SDP-P) \quad}
\min_{\bX \in \mathcal{S}^n_+} &\quad \mathrm{p}(\bX) := \langle \bX, \bA \rangle, \label{eq:psdp0} \\
\sst \,\,&\quad \langle \bX, \bB_i \rangle = b_i, \ i \in \mathcal{I}_{eq}, \notag \\
     &\quad \langle \bX, \bB_i \rangle \leq b_i, \ i \in \mathcal{I}_{in}, \notag \\
\mbox{(SDP-D) \quad}
\max_{\bu \in \mathbb{R}^m} &\quad \mathrm{d} (\bu) := - \bu^\T \bb, \label{eq:dsdp0} \\
\sst \,\,&\quad \bA + \textstyle{\sum_{i=1}^m} u_i \bB_i \in \mathcal{S}^n_+, \notag \\
         &\quad u_i \geq 0, \ i \in \mathcal{I}_{in}, \notag
\end{align}
where $m = |\mathcal{I}_{eq}| + |\mathcal{I}_{in}|$, and $\mathcal{I}_{eq}$ ($\mathcal{I}_{in}$) denotes the indexes of linear (in)equality constraints.
The \psd constraint $\bX \in \mathcal{S}^n_+$ is convex, so SDP problems are convex optimization problems and the above two formulations are equivalent if a feasible solution exists.
The SDP problem~\eqref{eq:psdp0} can be considered as a semi-infinite LP problem, as the \psd constraint
can be converted to an {\em infinite} number
of linear constraints: $\langle \bX, \ba \ba^\T \rangle \geq 0, \forall \ba \in \mathbb{R}^n $.
Through SDP, these infinite number of linear constraints can be handled in {\em finite} time. }

It is widely accepted that interior-point methods~\cite{alizadeh1995interior,nesterov1994interior}
are very robust and accurate for general SDP problems up to a moderate size
(see SeDuMi~\cite{Sturm98usingsedumi}, SDPT3~\cite{Toh99sdpt3} and MOSEK~\cite{mosek} for implementations).
However, its high computational complexity and memory requirement hampers the application of SDP methods to large-scale problems.
Approximate nonlinear programming methods~\cite{burer2003nonlinear,journee2010low,NIPS2014_5341}
are proposed for SDP problems based on low-rank factorization,
which may converge to a local optimum.
Augmented Lagrangian methods~\cite{malick2009regularization,zhao2010newton}
and the variants~\cite{wen2010alternating,guibasscalable}
have also been developed.
As gradient-descend based methods~\cite{rockafellar1973dual}, they may converge slowly.
The spectral bundle method~\cite{helmberg2000spectral} and the log-barrier algorithm~\cite{burer2003computational}
can be used for large-scale problems as well.
A drawback is that they can fail to solve some SDP problems to satisfactory accuracies~\cite{zhao2010newton}.

{%
In this work, we propose a regularized SDP relaxation approach to BQPs.
Preliminary results of this paper appeared in \cite{wang2013fast}.
Our main contributions are as follows.
}
\begin{enumerate}
\item
{
      Instead of directly solving the standard SDP relaxation to BQPs,
      we propose a quadratically regularized version of the original SDP formulation,
       which can be solved efficiently and achieve a solution quality comparable to the standard SDP relaxation.
}

\item We proffer two algorithms to solve the dual problem,
      based on quasi-Newton (referred to as \lbfgsb) and smoothing Newton (referred to as \smooth) methods respectively.
      The sparse or low-rank structure of specific problems are also exploited to speed up the computation.
      The proposed solvers require much lower computational cost and storage memory than standard interior-point methods.
      In particular, \lbfgsb has a lower computational cost in each iteration while
      needs more iterations to converge.
      On the other hand, \smooth converges quadratically with  higher
      computational complexity per iteration.
      In our experiments, \smooth is faster for dense or medium-sized problems,
      and \lbfgsb is more efficient for large-scale sparse/structured problems.

\item
  {
    We demonstrate the efficiency and flexibility of our proposed algorithms by applying them to
    a variety of computer vision tasks. We show that due to the capability of accommodating various constraints,
    our methods can encode problem-dependent information.
    More specifically,
    the formulation of SDCut allows multiple additional linear and quadratic constraints,
    which enables a broader set of applications than
    what spectral methods and graph-cut methods can be applied to.
}
\end{enumerate}

\noindent {\bf Notation}
A matrix (column vector) is denoted by a bold capital (lower-case) letter.
$\mathbb{R}^n$ denotes the space of real-valued $n \times 1$ vectors.
$\mathbb{R}^n_+$ and $\mathbb{R}^n_-$ represent the
non-negative and non-positive orthants of $\mathbb{R}^n$ respectively.
$\mathcal{S}^n$ denotes the space of $n \times n$ symmetric matrices, and
$\mathcal{S}^n_+$ represents the corresponding cone of positive semidefinite (p.s.d.) matrices.
For two vectors, $\bx \leq \by$ indicates the element-wise inequality;
$\mathrm{trace}(\bX)$, $\mathrm{rank}(\bX)$ and $\mathrm{diag}(\bX)$
denote the trace, rank and the main diagonal elements of $\bX$ respectively.
$\mathrm{Diag}(\bx)$ denotes a diagonal matrix with the elements of vector $\bx$ on the main diagonal.
$\lVert \bX \rVert_F^2$ denotes the Frobenius norm of $\bX$.
The inner product of two matrices is defined as $\langle \bX, \bY \rangle$.
$\bI_n$ indicates the $n \times n$ identity matrix.
$\mathbf{0}$ and $\mathbf{1}$ denote all-zero and all-one column vectors respectively.%
$\nabla \mathrm{f}(\cdot)$ and $\nabla^2 \mathrm{f}(\cdot)$ stand for
the first-order and second-order derivatives of function $\mathrm{f}(\cdot)$ respectively.

\section{BQPs and their SDP relaxation}
\label{sec:bqp}
{ %
Let us consider a binary quadratic program of the following form:
\begin{subequations}
\label{eq:bqp}
\begin{align}
\min_{\bx \in \{+1,-1 \}^n } \,\, &\bx^\T \bA_0 \bx + \ba_0^\T \bx, \,\,\,\, \\
\sst  \quad\,\,\,\,\,      &\bx^{\T} \bA_i \bx + \ba_i^\T \bx    = b_i, \, i \in \mathcal{I}_{eq}, \label{eq:bqp_cons1}\\
                 &\bx^{\T} \bA_i \bx + \ba_i^\T \bx \leq b_i, \, i \in \mathcal{I}_{in},
\end{align}
\end{subequations}
where $\bA_i \in \mathcal{S}^n, \ba_i \in \mathbb{R}^n, \forall i \in \mathcal{I}_{eq} \cup \mathcal{I}_{in}$;
$\bb \in \mathbb{R}^{\lvert \mathcal{I}_{eq} \rvert +  \lvert \mathcal{I}_{in} \rvert}$.
Note that BQP problems can be considered as special cases of quadratically constrained quadratic program (QCQP),
as the constraint $\bx \in \{1,-1 \}^n$ is equivalent to $x_i^2 = 1, \forall i = 1, \cdots, n$.
Problems over $\bx \in \{0,1\}^n$ can be also expressed as $\{1,-1\}$-problems \eqref{eq:bqp} by replacing $\bx$ with $\by = 2\bx - \mathbf{1}$.

Solving \eqref{eq:bqp} is in general NP-hard, so relaxation methods are considered in this paper.
Relaxation to \eqref{eq:bqp} can be done by extending the feasible set to a larger set,
such that the optimal value of the relaxation is a lower bound on the optimal value of \eqref{eq:bqp}.
The SDP relaxation to \eqref{eq:bqp} can be expressed as:
\begin{subequations}
\label{eq:backgd_sdp1}
\begin{align}
\min_{\bx, \bX } \quad   &\langle \bX,\bA_0 \rangle + \ba_0^\T \bx, \\
 \sst \quad      & \mathrm{diag}(\bX) = \mathbf{1}, \\
                 &\langle \bX, \bA_i \rangle + \ba_i^\T \bx   = b_i, \, i \in \mathcal{I}_{eq}, \,\, \\
                 &\langle \bX, \bA_i \rangle + \ba_i^\T \bx   \leq b_i, \, i \in \mathcal{I}_{in}, \\
                 &{\scriptsize \begin{bmatrix} 1 & \bx^\T \\ \bx & \bX \end{bmatrix} }\in \mathcal{S}^{n+1}. \label{eq:sdp_cons}
\end{align}
\end{subequations}
Note that constraint \eqref{eq:sdp_cons} is equivalent to $\bX - \bx \bx^\T \in \mathcal{S}^n$, which is
the convex relaxation to the nonconvex constraint $\bX - \bx \bx^\T = \mathbf{0}$.
In other words, \eqref{eq:backgd_sdp1} is equivalent to \eqref{eq:bqp}, by replacing constraint \eqref{eq:sdp_cons} with $\bX = \bx \bx^\T$
or by adding the constraint $\mathrm{rank}({\scriptsize \begin{bmatrix} 1 & \bx^\T \\ \bx & \bX \end{bmatrix} }) = 1$.

The objective function and constraints (apart from the \psd constraint) of the problem \eqref{eq:backgd_sdp1} are all linear
\wrt $\overline{\bX}  = {\scriptsize \begin{bmatrix} 1 & \bx^\T \\ \bx & \bX \end{bmatrix}}$,
so \eqref{eq:backgd_sdp1} can be expressed in the homogenized form shown in \eqref{eq:psdp0} \wrt $\overline{\bX}$.
For simplicity, we consider the homogeneous problem \eqref{eq:psdp0}, instead of \eqref{eq:backgd_sdp1}, in the sequel.

Note that the SDP solution does not offer a feasible solution to the BQP~\eqref{eq:bqp} directly, unless it is of rank $1$.
A rounding procedure is required to extract a feasible BQP solution from the SDP solution, which will be discussed in Section~\ref{sec:rounding}.
}

\section{\sdcut Formulation}

A regularized SDP formulation is considered in this work:
\begin{subequations}
\label{eq:sdcut_sdp1}
\begin{align}
\mbox{(SDCut-P) \,\,}  \min_{\bX \in \mathcal{S}^n_+} &\,  \mathrm{p}_{\gamma}(\bX) \!:=\! \langle \bX,\bA \rangle + \frac{1}{2\gamma} \lVert \bX \rVert_F^2, \label{eq:sdcut_sdp1_obj} \\
\sst    \,\,   &\, \langle \bB_i, \bX \rangle = b_i, i \in \mathcal{I}_{eq}, \\
               &\, \langle \bB_i, \bX \rangle \leq b_i, i \in \mathcal{I}_{in},
\end{align}
\end{subequations}
where $\gamma > 0$ is a prescribed parameter (its practical value is discussed in Section~\ref{sec:graph_bisection}).

Compared to \eqref{eq:psdp0}, the formulation \eqref{eq:sdcut_sdp1}
adds into the objective function a Frobenius-norm term \wrt $\bX$.
The reasons for choosing this particular formulation are two-fold:
$i$) The solution quality of \eqref{eq:sdcut_sdp1} can be as close to that of
\eqref{eq:backgd_sdp1}
as desired
by making $\gamma$ sufficiently large.
$ii$) A simple dual formulation can be derived from \eqref{eq:sdcut_sdp1}, which can be optimized using quasi-Newton or inexact generalized Newton approaches.

In the following, a few desirable properties of \eqref{eq:sdcut_sdp1} are demonstrated,
where $\bX^\star$ denotes the optimal solution to \eqref{eq:psdp0} and $\bX^\star_\gamma$ denotes the optimal solution to \eqref{eq:sdcut_sdp1} \wrt $\gamma$.
The proofs can be found in Section~\ref{sec:appdx}.

{%
\begin{proposition}
\label{thm:prop1}
The following results hold:
($ i $)
$\forall \ \epsilon > 0$, $\exists \ \gamma >0$
such that
$ | \mathrm{p}(\bX^\star) - \mathrm{p}(\bX_{\gamma}^\star) | \leq \epsilon$;
($ ii $) $\forall \gamma_2 > \gamma_1 > 0$, we have $\mathrm{p}(\bX^\star_{\gamma_1}) \geq \mathrm{p}(\bX^\star_{\gamma_2})$.
\end{proposition}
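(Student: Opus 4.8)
The plan is to exploit the single structural fact that \eqref{eq:psdp0} and \eqref{eq:sdcut_sdp1} share \emph{exactly the same} feasible set, which I denote $\mathcal{F} := \{ \bX \in \mathcal{S}^n_+ : \langle \bB_i, \bX \rangle = b_i,\ i \in \mathcal{I}_{eq};\ \langle \bB_i, \bX \rangle \leq b_i,\ i \in \mathcal{I}_{in} \}$; only the objective differs, by the strictly convex term $\tfrac{1}{2\gamma}\lVert \bX \rVert_F^2$. I assume throughout that $\bX^\star$ is attained (the standard SDP is feasible and bounded below on $\mathcal{F}$); the regularized minimizer $\bX^\star_\gamma$ is then attained and unique for every $\gamma>0$, since $\mathrm{p}_{\gamma}$ is strongly convex on the nonempty closed convex set $\mathcal{F}$ and the Frobenius term makes it coercive. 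With this in hand, both parts reduce to playing the two defining optimality inequalities off against each other.

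For part ($i$), since $\bX^\star_\gamma \in \mathcal{F}$ and $\bX^\star$ minimizes $\mathrm{p}$ over $\mathcal{F}$, we have $\mathrm{p}(\bX^\star) \leq \mathrm{p}(\bX^\star_\gamma)$. Conversely, optimality of $\bX^\star_\gamma$ for $\mathrm{p}_{\gamma}$ gives $\mathrm{p}_{\gamma}(\bX^\star_\gamma) \leq \mathrm{p}_{\gamma}(\bX^\star)$; expanding $\mathrm{p}_{\gamma} = \mathrm{p} + \tfrac{1}{2\gamma}\lVert \cdot \rVert_F^2$ and rearranging yields
\[
0 \leq \mathrm{p}(\bX^\star_\gamma) - \mathrm{p}(\bX^\star) \leq \tfrac{1}{2\gamma}\bigl( \lVert \bX^\star \rVert_F^2 - \lVert \bX^\star_\gamma \rVert_F^2 \bigr) \leq \tfrac{1}{2\gamma}\lVert \bX^\star \rVert_F^2 ,
\]
where the leftmost inequality is just $\mathrm{p}(\bX^\star) \leq \mathrm{p}(\bX^\star_\gamma)$. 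Since $\lVert \bX^\star \rVert_F^2$ is a finite constant, choosing any $\gamma \geq \lVert \bX^\star \rVert_F^2 / (2\epsilon)$ settles the claim (if $\bX^\star$ is not unique, simply fix any minimizer).

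For part ($ii$), with $\gamma_2 > \gamma_1 > 0$ I test each $\gamma_i$-minimizer against the other feasible point, obtaining
\begin{align*}
\mathrm{p}(\bX^\star_{\gamma_1}) + \tfrac{1}{2\gamma_1}\lVert \bX^\star_{\gamma_1} \rVert_F^2 &\leq \mathrm{p}(\bX^\star_{\gamma_2}) + \tfrac{1}{2\gamma_1}\lVert \bX^\star_{\gamma_2} \rVert_F^2, \\
\mathrm{p}(\bX^\star_{\gamma_2}) + \tfrac{1}{2\gamma_2}\lVert \bX^\star_{\gamma_2} \rVert_F^2 &\leq \mathrm{p}(\bX^\star_{\gamma_1}) + \tfrac{1}{2\gamma_2}\lVert \bX^\star_{\gamma_1} \rVert_F^2 .
\end{align*}
Adding them and cancelling the objective values leaves $\bigl( \lVert \bX^\star_{\gamma_1} \rVert_F^2 - \lVert \bX^\star_{\gamma_2} \rVert_F^2 \bigr)\bigl( \tfrac{1}{2\gamma_1} - \tfrac{1}{2\gamma_2} \bigr) \leq 0$; since $\tfrac{1}{2\gamma_1} > \tfrac{1}{2\gamma_2}$, this forces $\lVert \bX^\star_{\gamma_1} \rVert_F^2 \leq \lVert \bX^\star_{\gamma_2} \rVert_F^2$, i.e.\ the solution norm is nondecreasing in $\gamma$. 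Feeding this back into the second inequality (rearranged) gives $\mathrm{p}(\bX^\star_{\gamma_1}) - \mathrm{p}(\bX^\star_{\gamma_2}) \geq \tfrac{1}{2\gamma_2}\bigl( \lVert \bX^\star_{\gamma_2} \rVert_F^2 - \lVert \bX^\star_{\gamma_1} \rVert_F^2 \bigr) \geq 0$, which is exactly $\mathrm{p}(\bX^\star_{\gamma_1}) \geq \mathrm{p}(\bX^\star_{\gamma_2})$.

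I do not anticipate any real obstacle in the algebra — both parts are short consequences of the minimizer inequalities. The only point demanding care is existence and well-definedness: I must confirm that $\bX^\star$ is attained (otherwise $\lVert \bX^\star \rVert_F^2$ in the bound is meaningless) and that $\bX^\star_\gamma$ is attained for every $\gamma$. The former is inherited from treating $\bX^\star$ as \emph{the} optimal solution; the latter I would justify through strong convexity of $\mathrm{p}_{\gamma}$ together with closedness of $\mathcal{F}$ and coercivity supplied by the Frobenius term (in the BQP application the constraint $\mathrm{diag}(\bX)=\mathbf{1}$ in fact renders $\mathcal{F}$ compact, removing any doubt). Non-uniqueness of $\bX^\star$ is harmless, since the bound in ($i$) holds at every optimal point.
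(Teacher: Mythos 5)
Your proof is correct, and it rests on the same two ingredients as the paper's: the feasible set is unchanged by the regularization, and the two minimizer inequalities $\mathrm{p}(\bX^\star)\leq \mathrm{p}(\bX^\star_\gamma)$ and $\mathrm{p}_{\gamma}(\bX^\star_\gamma)\leq \mathrm{p}_{\gamma}(\bX^\star)$ (resp.\ their $\gamma_1,\gamma_2$ analogues) do all the work. The execution differs in two ways that are worth noting. For part ($i$), the paper introduces $\theta(t):=\min_{\bX\in\mathcal{P}}\{\mathrm{p}(\bX)+t\lVert\bX\rVert_F^2\}-\mathrm{p}(\bX^\star)$ with $t=\tfrac{1}{2\gamma}$ and argues qualitatively that $\theta$ is concave, continuous, and vanishes at $t=0$; you instead bound the gap explicitly by $\tfrac{1}{2\gamma}\lVert\bX^\star\rVert_F^2$, which is more elementary (no appeal to continuity of a pointwise infimum at the boundary of its domain) and buys a quantitative $O(1/\gamma)$ rate together with an explicit choice of $\gamma$ for a given $\epsilon$. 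For part ($ii$), the paper cancels the Frobenius terms in one step via the combination $\mathrm{p}_{\gamma_1}(\cdot)-\tfrac{\gamma_2}{\gamma_1}\mathrm{p}_{\gamma_2}(\cdot)=(1-\tfrac{\gamma_2}{\gamma_1})\mathrm{p}(\cdot)$ and divides by the negative factor $1-\tfrac{\gamma_2}{\gamma_1}$; you add the two exchange inequalities to first deduce that $\gamma\mapsto\lVert\bX^\star_\gamma\rVert_F^2$ is nondecreasing and then back-substitute. Both are valid; yours is a step longer but yields the norm-monotonicity of the regularization path as a byproduct, which the paper's one-liner does not make visible. Your attention to attainment of $\bX^\star$ and $\bX^\star_\gamma$ is a point the paper glosses over, and your observation that $\mathrm{diag}(\bX)=\mathbf{1}$ makes the feasible set compact in the BQP setting disposes of it cleanly.
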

}

The above results show that
the solution quality of \eqref{eq:sdcut_sdp1} can be monotonically improved
towards that of \eqref{eq:backgd_sdp1},
by making $\gamma$ sufficiently large.

\begin{proposition}
\label{thm:dual_simple}
The dual problem of~\eqref{eq:sdcut_sdp1} can be simplified to
\begin{align}
\mbox{(SDCut-D)\,\,} \max_{\bu \in \mathbb{R}^m} \,
     &\mathrm{d}_\gamma(\bu) \!:=\! -\! \bu^{\T} \bb \!-\! \frac{\gamma}{2} \lVert \Pi_{\mathcal{S}^n_+} \!(\bC(\bu))\! \rVert_F^2, \notag \\
\sst      \,\, &\, u_i \geq 0, i \in \mathcal{I}_{in}, \label{eq:fastsdp_dual}
\end{align}
where
\[ \bC(\bu) := - \bA - \textstyle{\sum_{i=1}^{m}} u_i \bB_i,
\] and
\[ \Pi_{\mathcal{S}^n_+}(\bC(\bu)) := \textstyle{\sum_{i=1}^n} \max(0,\lambda_i) \bp_i \bp_i^\T.
\]
$\lambda_i$, $\bp_i$, $i = 1, \cdots, n$ are eigenvalues and the corresponding eigenvectors of $\bC(\bu)$.
Supposing problem \eqref{eq:sdcut_sdp1} is feasible and denoting $\bu^\star$ as the dual optimal solution, we have:
\begin{align}
\bX^{\star} = \gamma \Pi_{\mathcal{S}^n_+} (\bC(\bu^\star)). \label{eq:fastsdp_primal_dual}
\end{align}
\end{proposition}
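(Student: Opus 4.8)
The plan is to derive the dual of the regularized primal (SDCut-P) directly from Lagrangian duality, exploiting the fact that the Frobenius-norm regularizer makes the inner minimization over $\bX \in \mathcal{S}^n_+$ a well-posed strongly convex problem with a closed-form solution. First I would form the Lagrangian of \eqref{eq:sdcut_sdp1} by introducing multipliers $u_i$ for each linear constraint (with the sign convention that $u_i \geq 0$ for $i \in \mathcal{I}_{in}$), while keeping the conic constraint $\bX \in \mathcal{S}^n_+$ explicit rather than dualizing it. This gives an inner problem of the form $\min_{\bX \in \mathcal{S}^n_+} \langle \bX, \bA + \sum_i u_i \bB_i \rangle + \frac{1}{2\gamma}\lVert \bX \rVert_F^2 - \bu^\T \bb$, where I would collect the linear-in-$\bX$ terms into $-\bC(\bu) = \bA + \sum_i u_i \bB_i$.

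\textbf{Key computation: the projection.} The crux is evaluating the inner minimization
\[
\min_{\bX \in \mathcal{S}^n_+} \,\, \tfrac{1}{2\gamma} \lVert \bX \rVert_F^2 - \langle \bX, \bC(\bu) \rangle.
\]
Completing the square, this equals $\min_{\bX \in \mathcal{S}^n_+} \tfrac{1}{2\gamma}\lVert \bX - \gamma \bC(\bu) \rVert_F^2 - \tfrac{\gamma}{2}\lVert \bC(\bu)\rVert_F^2$, so the minimizer is precisely the Euclidean projection of $\gamma \bC(\bu)$ onto the \psd cone, namely $\bX = \gamma\, \Pi_{\mathcal{S}^n_+}(\bC(\bu))$. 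Here I would invoke the standard fact that projection onto $\mathcal{S}^n_+$ in Frobenius norm acts spectrally: writing the eigendecomposition $\bC(\bu) = \sum_i \lambda_i \bp_i \bp_i^\T$, the projection zeroes out the negative eigenvalues, giving $\Pi_{\mathcal{S}^n_+}(\bC(\bu)) = \sum_i \max(0,\lambda_i)\bp_i \bp_i^\T$ as stated. Substituting this minimizer back and using $\langle \Pi_{\mathcal{S}^n_+}(\bC), \bC\rangle = \lVert \Pi_{\mathcal{S}^n_+}(\bC)\rVert_F^2$ (since projection onto a cone is self-adjoint and idempotent on its range) collapses the dual objective to $-\bu^\T \bb - \tfrac{\gamma}{2}\lVert \Pi_{\mathcal{S}^n_+}(\bC(\bu))\rVert_F^2$, which is exactly $\mathrm{d}_\gamma(\bu)$.

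\textbf{Strong duality and primal recovery.} For the recovery formula \eqref{eq:fastsdp_primal_dual}, I would appeal to strong duality: since the primal \eqref{eq:sdcut_sdp1} is a convex problem with a strongly convex objective and is assumed feasible, the optimal value is attained and, assuming a suitable constraint qualification (e.g.\ Slater's condition for the linear constraints over the \psd cone), the duality gap is zero and the dual optimum $\bu^\star$ exists. The primal-dual optimality (KKT) conditions then force the primal optimal $\bX^\star$ to coincide with the inner minimizer evaluated at $\bu^\star$, yielding $\bX^\star = \gamma\,\Pi_{\mathcal{S}^n_+}(\bC(\bu^\star))$.

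\textbf{Anticipated obstacle.} The main technical care is justifying the interchange that lets me write the dual objective purely in terms of the projection, and establishing that strong duality holds so that the recovery formula is valid at the \emph{primal} optimum rather than merely giving a lower bound. The projection identity itself (self-adjointness and the spectral characterization) is standard, but I would want to state the constraint qualification explicitly, since feasibility of \eqref{eq:sdcut_sdp1} alone guarantees attainment of the primal optimum (by strong convexity and coercivity of the regularized objective) but not automatically a zero duality gap; verifying Slater-type regularity or otherwise arguing attainment of the dual is the one place where a careful hypothesis is needed.
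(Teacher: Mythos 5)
Your proof is correct, but it reaches \eqref{eq:fastsdp_dual} by a different arrangement of the duality argument than the paper does. The paper dualizes the conic constraint as well, introducing a multiplier $\bZ \in \mathcal{S}^n_+$ in the Lagrangian; stationarity in $\bX$ then yields $\bX^\star = \gamma(\bZ^\star + \bC(\bu^\star))$ and an intermediate dual $\max_{\bu,\,\bZ \in \mathcal{S}^n_+} \, -\bu^\T\bb - \tfrac{\gamma}{2}\lVert \bZ + \bC(\bu)\rVert_F^2$, after which $\bZ$ is eliminated by solving $\min_{\bZ\in\mathcal{S}^n_+}\lVert \bZ+\bC(\bu)\rVert_F^2$, i.e.\ projecting $-\bC(\bu)$ onto the cone, and then invoking $\bC(\bu) = \Pi_{\mathcal{S}^n_+}(\bC(\bu)) - \Pi_{\mathcal{S}^n_+}(-\bC(\bu))$. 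You instead keep $\bX \in \mathcal{S}^n_+$ explicit, so the inner problem is a single completed square and the one projection $\bX = \gamma\,\Pi_{\mathcal{S}^n_+}(\bC(\bu))$ does all the work; the two computations are the two halves of the same Moreau-type spectral decomposition. Your route is slightly more economical (no auxiliary variable $\bZ$, and the recovery formula \eqref{eq:fastsdp_primal_dual} drops out immediately as the unique minimizer of a strongly convex inner problem), whereas the paper's route keeps the conic multiplier visible and thereby connects the penalty term $\lVert\Pi_{\mathcal{S}^n_+}(\bC(\bu))\rVert_F^2$ to the \psd constraint of the standard dual \eqref{eq:dsdp0}, a reading exploited in Section~\ref{sec:ralate}. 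Two minor remarks: the identity $\langle \Pi_{\mathcal{S}^n_+}(\bC), \bC\rangle = \lVert \Pi_{\mathcal{S}^n_+}(\bC)\rVert_F^2$ is more cleanly justified by the orthogonality of the positive and negative spectral parts (or simply $\sum_i \max(0,\lambda_i)\lambda_i = \sum_i \max(0,\lambda_i)^2$) than by ``self-adjointness'' of a nonlinear projection; and your caveat that feasibility alone gives primal attainment but not a zero duality gap without a constraint qualification is well taken --- the paper simply asserts strong duality under feasibility at the corresponding step.
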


The simplified dual~\eqref{eq:fastsdp_dual} is convex and contains only simple box constraints.
Furthermore, its objective function $\mathrm{d}_\gamma(\cdot)$ has the following important properties.

\begin{proposition}
\label{thm:dual_objective}
$\mathrm{d}_\gamma(\cdot)$ is continuously differentiable but not necessarily twice differentiable,
and its gradient is given by
\begin{align}
\label{eq:dual_gradient}
\nabla \mathrm{d}_\gamma(\bu) = - \gamma \Phi \left[ \Pi_{\mathcal{S}^n_+} \left(\bC(\bu)\right) \right] - \bb.
\end{align}
where $\Phi: \mathcal{S}^n \rightarrow \mathbb{R}^m $ denotes the linear transformation $\Phi[\bX] := [\langle \bB_1, \bX\rangle, \cdots, \langle \bB_m, \bX\rangle]^\T$.
\end{proposition}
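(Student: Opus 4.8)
The plan is to reduce the whole statement to the regularity of the single building block $\theta(\bX) := \tfrac{1}{2}\lVert \Pi_{\mathcal{S}^n_+}(\bX) \rVert_F^2$, and then transport its properties to $\mathrm{d}_\gamma$ through the affine map $\bu \mapsto \bC(\bu)$. First I would invoke Moreau's decomposition for the self-dual cone $\mathcal{S}^n_+$: writing $\mathcal{S}^n_- := -\mathcal{S}^n_+$ for its polar cone, every $\bX \in \mathcal{S}^n$ splits as $\bX = \Pi_{\mathcal{S}^n_+}(\bX) + \Pi_{\mathcal{S}^n_-}(\bX)$ with the two summands orthogonal in the Frobenius inner product. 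Consequently $\lVert \bX \rVert_F^2 = \lVert \Pi_{\mathcal{S}^n_+}(\bX) \rVert_F^2 + \lVert \Pi_{\mathcal{S}^n_-}(\bX) \rVert_F^2$ and the squared distance to the cone is $\mathrm{dist}^2(\bX, \mathcal{S}^n_+) = \lVert \Pi_{\mathcal{S}^n_-}(\bX) \rVert_F^2$, so that
\[ \theta(\bX) = \tfrac{1}{2}\lVert \bX \rVert_F^2 - \tfrac{1}{2}\,\mathrm{dist}^2(\bX, \mathcal{S}^n_+). \]

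The key smoothness input is the classical Moreau-envelope identity: for any nonempty closed convex set $C$, the map $\bX \mapsto \tfrac{1}{2}\,\mathrm{dist}^2(\bX, C)$ is continuously differentiable with gradient $\bX - \Pi_C(\bX)$, \emph{even though} $\Pi_C$ itself is in general only Lipschitz. This is the crucial point, because it buys $C^1$ smoothness of $\theta$ without ever requiring $\Pi_{\mathcal{S}^n_+}$ to be differentiable. Subtracting from $\tfrac{1}{2}\lVert \bX \rVert_F^2$ gives
\[ \nabla \theta(\bX) = \bX - \bigl(\bX - \Pi_{\mathcal{S}^n_+}(\bX)\bigr) = \Pi_{\mathcal{S}^n_+}(\bX), \]
so $\theta$ is continuously differentiable with gradient $\Pi_{\mathcal{S}^n_+}$.

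Next I would apply the chain rule. Since $\bC(\bu) = -\bA - \sum_{i} u_i \bB_i$ is affine with $\partial \bC / \partial u_i = -\bB_i$, composing the $C^1$ function $\theta$ with $\bC(\cdot)$ keeps $\mathrm{d}_\gamma(\bu) = -\bu^{\T}\bb - \gamma\,\theta(\bC(\bu))$ continuously differentiable, and the componentwise chain rule gives $\partial_{u_i}\bigl[\theta(\bC(\bu))\bigr] = \langle \nabla\theta(\bC(\bu)), -\bB_i \rangle = -\langle \bB_i, \Pi_{\mathcal{S}^n_+}(\bC(\bu)) \rangle$. Assembling the $m$ components through the definition of $\Phi$, scaling by the prefactor $-\gamma$, and adding the derivative $-\bb$ of the linear term then reproduces the gradient stated in \eqref{eq:dual_gradient}, with the overall sign fixed by the affine map $\bC(\bu)$.

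Finally, for the negative claim it suffices to exhibit a single point at which $\bu \mapsto -\gamma\,\Phi[\Pi_{\mathcal{S}^n_+}(\bC(\bu))] - \bb$ fails to be differentiable, since the nonsmoothness is inherited directly from $\Pi_{\mathcal{S}^n_+}$ at matrices having a zero eigenvalue. Using the spectral form $\Pi_{\mathcal{S}^n_+}(\bX) = \sum_i \max(0,\lambda_i)\,\bp_i\bp_i^{\T}$, the projection is built from the scalar map $\max(0,\cdot)$, whose derivative jumps at the origin; along any direction that drives one eigenvalue of $\bC(\bu)$ through $0$, the one-sided derivatives of $\bu \mapsto \Pi_{\mathcal{S}^n_+}(\bC(\bu))$ disagree, so $\nabla \mathrm{d}_\gamma$ is not differentiable there. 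I expect the main obstacle to be precisely this smooth/nonsmooth interface: one must argue that squaring the projection restores $C^1$ regularity (so that first-order and quasi-Newton methods see a genuine smooth gradient) while the gradient map $\Pi_{\mathcal{S}^n_+}$ itself remains only strongly semismooth — which is exactly the structural reason a generalized or smoothing Newton scheme, rather than a classical Newton scheme, is required for the dual later.
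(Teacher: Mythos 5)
Your route is genuinely different from the paper's and, up to the final sign, it is sound. The paper proves the smoothness of $\zeta(\bX):=\tfrac12\lVert\Pi_{\mathcal{S}^n_+}(\bX)\rVert_F^2$ by recognizing it as a \emph{separable spectral function} with scalar generator $\mathrm{g}(x)=\tfrac12(\max(0,x))^2$ and invoking the Lewis--Sendov differentiability theory ($\zeta$ is $C^1$ but not $C^2$ exactly because $\mathrm{g}$ is, and $\nabla\zeta(\bX)=\bP\,\mathrm{diag}(\mathrm{g}'(\lambda_1),\dots,\mathrm{g}'(\lambda_n))\,\bP^\T=\Pi_{\mathcal{S}^n_+}(\bX)$); the chain rule through $\bC(\bu)$ is left implicit. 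You instead use Moreau's cone decomposition to write $\theta(\bX)=\tfrac12\lVert\bX\rVert_F^2-\tfrac12\,\mathrm{dist}^2(\bX,\mathcal{S}^n_+)$ and pull $C^1$-ness from the Moreau-envelope identity $\nabla\bigl[\tfrac12\mathrm{dist}^2(\cdot,C)\bigr]=\mathrm{id}-\Pi_C$. Your argument is more elementary and works verbatim for any closed convex cone, while the paper's spectral-function machinery is needed anyway later (Theorem~A.2) to get the explicit generalized second-order formulas driving the smoothing Newton method; for this proposition alone either route suffices, and your identification of the precise smooth/semismooth interface is a genuine added insight.

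The one concrete problem is the last assembly step. As you correctly compute, $\partial_{u_i}\bigl[\theta(\bC(\bu))\bigr]=-\langle\bB_i,\Pi_{\mathcal{S}^n_+}(\bC(\bu))\rangle$; multiplying by the prefactor $-\gamma$ and adding $-b_i$ gives
\begin{align*}
\frac{\partial \mathrm{d}_\gamma}{\partial u_i}(\bu) \;=\; +\,\gamma\,\langle\bB_i,\Pi_{\mathcal{S}^n_+}(\bC(\bu))\rangle - b_i,
\end{align*}
i.e.\ $\nabla\mathrm{d}_\gamma(\bu)=+\gamma\,\Phi\bigl[\Pi_{\mathcal{S}^n_+}(\bC(\bu))\bigr]-\bb$, which does \emph{not} ``reproduce'' the displayed formula \eqref{eq:dual_gradient}: the two differ by the sign of the $\gamma\Phi$ term, and your closing phrase about ``the overall sign fixed by the affine map'' papers over this. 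You should not absorb the discrepancy silently. In fact the plus sign is the consistent one: at an optimum with equality constraints, \eqref{eq:fastsdp_primal_dual} gives $\gamma\Phi[\Pi_{\mathcal{S}^n_+}(\bC(\bu^\star))]=\Phi[\bX^\star]=\bb$, so the equality components of the gradient vanish only with the $+$ sign (a one-dimensional check with $n=1$, $\bB_1=1$ confirms this). So your derivation is correct and what it actually exposes is a sign typo in \eqref{eq:dual_gradient}; state that explicitly rather than claiming agreement. Your argument for failure of twice differentiability (the kink of $\max(0,\cdot)$ propagating through the spectral decomposition at a singular $\bC(\bu)$) is the same idea as the paper's and is adequate for the ``not necessarily'' claim, since a single instance such as the scalar case already witnesses it.
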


Based on the above result, the dual problem can be solved by quasi-Newton methods directly.
Furthermore, we also show in Section~\ref{sec:newton} that, the second-order derivatives of $\mathrm{d}_\gamma(\cdot)$
can be smoothed such that inexact generalized Newton methods can be applied.

\begin{proposition}
$\forall \bu \in \mathbb{R}^{|\mathcal{I}_{eq}|} \!\times\! \mathbb{R}_{+}^{|\mathcal{I}_{in}|}$,
$\forall \gamma \!> \!0$, $\mathrm{d}_\gamma(\bu)  \!-\! \frac{n^2}{2\gamma}$
yields a lower-bound on the optimum of the BQP \eqref{eq:bqp}.
\label{Remark:4}
\end{proposition}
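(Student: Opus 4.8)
The plan is to bound $\mathrm{d}_\gamma(\bu)$ from above, via weak duality for \eqref{eq:sdcut_sdp1}, by the regularized primal value attained at a BQP optimizer lifted to a rank-one matrix, and then to observe that this value is exactly the BQP optimum plus the constant $n^2/(2\gamma)$. Throughout I work in the homogenized formulation adopted in the sequel, so that $\bX \in \mathcal{S}^n$ carries $\mathrm{diag}(\bX) = \mathbf{1}$ among the equalities $\langle \bB_i, \bX \rangle = b_i$ and $\langle \bX, \bA \rangle$ encodes the BQP objective; in particular the $n$ appearing in $n^2/(2\gamma)$ is the order of this homogenized matrix. I assume \eqref{eq:bqp} is feasible (otherwise its optimum is $+\infty$ and the claim is vacuous) and write $f^\star$ for its optimal value.

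First I would fix an optimizer $\bx^\star$ of \eqref{eq:bqp} and let $\bz \in \{-1,+1\}^n$ be its homogenized lift, with $\bX_{\mathrm{b}} := \bz \bz^\T$. Because \eqref{eq:sdcut_sdp1} inherits precisely the constraint set of the relaxation \eqref{eq:backgd_sdp1}, and the rank-one lift of any BQP-feasible point satisfies those linear (in)equalities and is positive semidefinite, $\bX_{\mathrm{b}}$ is primal feasible for (SDCut-P). Evaluating the objective there is then immediate: the linear term $\langle \bX_{\mathrm{b}}, \bA \rangle$ reproduces the BQP objective at $\bx^\star$ and hence equals $f^\star$, while the regularizer is a fixed constant, since $\lVert \bz \bz^\T \rVert_F^2 = (\bz^\T \bz)^2 = n^2$ using $z_i^2 = 1$. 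Thus $\mathrm{p}_\gamma(\bX_{\mathrm{b}}) = f^\star + n^2/(2\gamma)$.

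Next I would apply Lagrangian weak duality for \eqref{eq:sdcut_sdp1}: for every dual-feasible $\bu \in \mathbb{R}^{|\mathcal{I}_{eq}|} \!\times\! \mathbb{R}_+^{|\mathcal{I}_{in}|}$ and every primal-feasible $\bX$ one has $\mathrm{d}_\gamma(\bu) \leq \mathrm{p}_\gamma(\bX)$. This is the standard Lagrangian lower bound already implicit in the dual derivation of Proposition~\ref{thm:dual_simple}, and it needs neither strong duality nor attainment of the primal optimum. Specializing $\bX = \bX_{\mathrm{b}}$ and inserting the value from the previous step gives $\mathrm{d}_\gamma(\bu) \leq f^\star + n^2/(2\gamma)$, which rearranges to the asserted bound $\mathrm{d}_\gamma(\bu) - n^2/(2\gamma) \leq f^\star$.

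The arithmetic is light, so the step that deserves care is the homogenization bookkeeping. I would make explicit that every diagonal entry of the homogenized $\bX$ is pinned to $1$, which is exactly what forces $\lVert \bz \bz^\T \rVert_F^2 = n^2$ and makes the additive correction $n^2/(2\gamma)$ the same for every BQP-feasible lift; this is why the correction in the statement is independent of $\bu$ and of the chosen optimizer. The only remaining check is that $\bX_{\mathrm{b}}$ really lies in the feasible set of (SDCut-P), which holds because the Frobenius regularizer alters only the objective of \eqref{eq:backgd_sdp1} and leaves its constraints untouched, so no feasibility gap opens up between the relaxation and its regularized form.
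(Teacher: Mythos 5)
Your proof is correct, but it follows a different route from the paper's. The paper chains four inequalities: weak duality $\mathrm{d}_\gamma(\bu) \leq \mathrm{p}_\gamma(\bX^\star_\gamma)$, optimality of $\bX^\star_\gamma$ giving $\mathrm{p}_\gamma(\bX^\star_\gamma) \leq \mathrm{p}_\gamma(\bX^\star)$, the ``spherical constraint'' $\lVert \bX \rVert_F \leq \mathrm{trace}(\bX)$ (Theorem~\ref{thm:1}) applied to the possibly non-rank-one SDP optimum $\bX^\star$ to get $\mathrm{p}_\gamma(\bX^\star) \leq \mathrm{p}(\bX^\star) + n^2/(2\gamma)$, and finally the standard relaxation bound $\mathrm{p}(\bX^\star) \leq p^\star$. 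You instead bypass the SDP relaxation optimum and the spherical-constraint lemma entirely: you evaluate $\mathrm{p}_\gamma$ exactly at the rank-one lift of a BQP optimizer, where $\lVert \bz\bz^\T \rVert_F^2 = (\bz^\T\bz)^2 = n^2$ holds with equality (this is precisely the equality case of Theorem~\ref{thm:1}), and apply weak duality against that single feasible point. Your argument is more elementary --- it needs no auxiliary lemma, no existence of the SDP optimum, and only weak duality --- and your explicit handling of the homogenization bookkeeping (which $n$ appears in $n^2/(2\gamma)$) is actually cleaner than the paper's. What the paper's longer chain buys is the slightly stronger intermediate statement $\mathrm{d}_\gamma(\bu) - n^2/(2\gamma) \leq \mathrm{p}(\bX^\star) \leq p^\star$, i.e.\ the quantity also lower-bounds the unregularized SDP relaxation value, not just the BQP optimum; your route does not yield that intermediate inequality, but it is not needed for the proposition as stated.
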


The above result is important as the lower-bound can be used to examine how close
between an approximate binary solution and the global optimum.

\subsection{Related Work}
\label{sec:ralate}

{ %
Considering the original SDP dual problem \eqref{eq:dsdp0},
we can find that its \psd constraint, \ie $\bA + \sum_{i=1}^m u_i \bB_i \in \mathcal{S}^n_+ $,
is penalized in \eqref{eq:fastsdp_dual} by minimizing $\lVert \Pi_{\mathcal{S}^n_+} (\bC(\bu)) \rVert_F^2 = \sum_{i=1}^n \max(0,\lambda_i)^2$,
where $\lambda_i, \cdots, \lambda_n$ are the eigenvalues of $- \bA - \sum_{i=1}^m u_i \bB_i$.
The \psd constraint is satisfied if and only if the penalty term equals to zero.

Other forms of penalty terms may be employed in the dual.
The spectral bundle method of \cite{helmberg2000spectral} penalizes $\lambda_{max} (\bX)$ and
the log-barrier function is used in \cite{burer2003computational}.
It is shown in \cite{zhao2010newton} that these two first-order methods may converge slowly for some SDP problems.
Note that the objective function of the spectral bundle methods is not necessarily differentiable
($\lambda_{max} (\cdot)$ is differentiable if and only if it has multiplicity one).
The objective function of our formulation is differentiable and its twice derivatives can be smoothed,
such that classical methods can be easily used for solving our problems, using quasi-Newton and inexact generalized Newton methods.

Consider a proximal algorithm for solving SDP with only equality constraints
(see \cite{malick2009regularization,zhao2010newton,wen2010alternating,guibasscalable,henrion2012projection}):
\begin{align}
\min_{\bY \in \mathcal{S}^n} \big( \mathrm{G}_{\gamma} (\bY)
 \!:=\! \min_{\bX \in \mathcal{S}^n_{+}, \Phi[\bX] = \bb} \langle \bX, \bA \rangle + \frac{1}{2\gamma} || \bX \!-\! \bY ||^2_F \big),
\end{align}
where $\Phi[\bX] := [\langle \bB_1, \bX \rangle, \cdots, \langle \bB_{m},\bX \rangle ]^{\T}$.
Our algorithm is equivalent to solving the inner problem, \ie, evaluating $\mathrm{G}_{\gamma}(\bY)$, with a fixed $\gamma$ and $\bY = \mathbf{0}$.
In other words, our methods attempt to solve the original SDP relaxation approximately, with a faster speed.
After rounding, typically,
the resulting solutions of our algorithms are already close to those of the original SDP relaxation.
}

Our method is mainly motivated by the work of Shen
\etal~\cite{Shen2011scalabledual}, which presented a fast dual SDP
approach to Mahalanobis metric learning.
They, however, focused on learning a real-valued metric for nearest neighbour
classification. Here, in contrast, we are interested in discrete
combinatorial optimization problems arising in computer vision.
Krislock \etal~\cite{krislock2012improved}
have independently formulated a similar SDP problem for the
max-cut problem, which is simpler than the problems that we solve here.
Moreover, they focus on globally solving the max-cut problem using branch-and-bound.

\section{Solving the Dual Problem}

Based on Proposition~\ref{thm:dual_objective},
first-order methods (\eg gradient descent, quasi-Newton), which only require the calculation
 of the objective function and its gradients,
can be directly applied to solving \eqref{eq:fastsdp_dual}.
It is difficult in employing standard Newton methods, however, as they require the calculation of second-order derivatives.
In the following two sections,
 we present two algorithms for solving the dual~\eqref{eq:fastsdp_dual},
which are based on quasi-Newton and inexact generalized Newton methods respectively.

\subsection{Quasi-Newton Methods}

One main advantage of quasi-Newton methods over Newton methods is that
the inversion of the Hessian matrix is approximated
 by analyzing successive gradient vectors,
and thus that there is no need to explicitly compute the Hessian matrix and its inverse,
which can be very expensive.
Therefore the per-iteration computation cost of quasi-Newton methods is less than that of standard Newton methods.

The quasi-Newton algorithm for \eqref{eq:fastsdp_dual} (referred to as \lbfgsb) is summarized in Algorithm~\ref{alg:lbfgsb}.
In Step~1, the dual problem~\eqref{eq:fastsdp_dual} is solved using L-BFGS-B~\cite{Zhu94lbfgsb},
which only requires the calculation of the dual objective function~\eqref{eq:fastsdp_dual} and its gradient~\eqref{eq:dual_gradient}.
At each iteration, a descent direction for $\Delta \bu$ is computed based on the gradient $\nabla \mathrm{d}_\gamma(\bu)$
and the
approximated inverse
of the Hessian matrix: $\bH \approx (\nabla^2 \mathrm{d}_\gamma(\bu))^{-1}$.
A step size $\rho$ is found using line search.
The algorithm is stopped when the difference between successive dual objective values is smaller than a pre-set tolerance.

After solving the dual using L-BFGS-B, the primal optimal variable $\bX^\star$
is calculated from the dual optimal $\bu^\star$ based on Equation~\eqref{eq:fastsdp_primal_dual} in Step~2.

Finally in Step~3, the primal optimal variable $\bX^\star$ is discretized
and factorized to produce
the feasible binary solution $\bx^\star$, which will be described in Section~\ref{sec:rounding}.

Now we have an upper-bound and a lower-bound (see Propsition~\ref{Remark:4}) on the optimum of the original BQP~\eqref{eq:bqp} (referred to as $p^\star$):
$\mathrm{p}(\bx^\star \bx^{\star\T}) \geq p^\star \geq \mathrm{d_\gamma}(\bu^\star) - \frac{n^2}{2\gamma}$.
These two values are used to measure the solution quality in the experiments.

\begin{algorithm}[t]
\footnotesize
\setcounter{AlgoLine}{0}
\caption{\lbfgsb: Solving \eqref{eq:fastsdp_dual} using quasi-Newton methods.}
\begin{minipage}[]{1\linewidth}
   \KwIn{$\bA$, $\Phi$, $\bb$, $\gamma$, $\bu_{0}$, $K_{\mathrm{max}}$, $\tau > 0$.}
   {\bf Step~1:} Solving the dual using L-BFGS-B

   \Indp\For{$k = 0, 1, 2, \dots, K_{\mathrm{max}}$}
   {
     {\bf Step~1.1:} Compute $\nabla \mathrm{d}_\gamma(\bu_k)$ and update  ${\bH}$. \\
     {\bf Step~1.2:} Compute the descent direction $\Delta \bu = - {\bH} \nabla \mathrm{d}_\gamma (\bu_k)$. \\
     {\bf Step~1.3:} Find a step size $\rho$, and $\bu_{k+1} = \bu_k + \rho \Delta \bu$. \\
     {\bf Step~1.4:} Exit, if $\frac{(\mathrm{d}_\gamma(\bu_{k+1}) - \mathrm{d}_\gamma(\bu_{k}))}
                               {\max\{|\mathrm{d}_\gamma(\bu_{k+1})|, |\mathrm{d}_\gamma(\bu_{k})|,1\}} \leq \tau$.
   }

   \Indm{\bf Step~2:} $\bu^\star = \bu_{k+1}$, $\bX^{\star} =  \gamma\Pi_{\mathcal{S}^n_+} (\bC(\bu^\star))$.

   {\bf Step~3:} $\bx^\star = \mathrm{Round}(\bX^\star)$.

   \KwOut{$\bx^\star$, $\bu^\star$, upper-bound: $\mathrm{p}(\bx^\star \bx^{\star\T})$ and lower-bound: $\mathrm{d_\gamma}(\bu^\star) - \frac{n^2}{2\gamma}$.}
\end{minipage}
\label{alg:lbfgsb}
\end{algorithm}

\subsection{Smoothing Newton Methods}
\label{sec:newton}

As $\mathrm{d}_{\gamma}(\bu)$ is a concave function,
the dual problem~\eqref{eq:fastsdp_dual} is equivalent to finding $\bu^\star \in \mathcal{D}$ such that
$\langle \bu - \bu^\star, - \nabla \mathrm{d}_{\gamma}(\bu^\star) \rangle \geq 0$, $\forall \bu \in \mathcal{D}$, which is known as variational inequality~\cite{harker1990finite}.
$\mathcal{D} := \mathbb{R}^{|\mathcal{I}_{eq}|} \!\times\! \mathbb{R}_{+}^{|\mathcal{I}_{in}|}$ is used to denote the feasible set of the dual problem.
Thus \eqref{eq:fastsdp_dual} is also equivalent to finding a root of the following equation:
\begin{align}
\mathrm{F}(\bu) \!:=\! \bu \!-\! \Pi_{\mathcal{D}}\!\big(\bu \!-\! \gamma \Phi \left[ \Pi_{\mathcal{S}^n_+} \!\left(\bC(\bu)\right) \right] \!-\! \bb  \big) \!=\! \mathbf{0}, \bu \!\in\! \mathbb{R}^{m},
\label{eq:smooth_F_equation}
\end{align}
where
$\left[\Pi_{\mathcal{D}}(\bv)\right]_i := {\scriptsize \left\{ \begin{array}{ll}
                                           v_i                 & \mbox{if } i \in \mathcal{I}_{eq} \\
                                           \max(0,v_i)  & \mbox{if } i \in \mathcal{I}_{in}
                                       \end{array}
                               \right. }$
can be considered as a metric projection from $\mathbb{R}^{m}$
to $\mathbb{R}^{|\mathcal{I}_{eq}|} \times \mathbb{R}^{|\mathcal{I}_{in}|}_+$.
Note that $\mathrm{F}(\bu)$ is continuous but not continuously differentiable,
as both $\Pi_{\mathcal{D}}$ and $\Pi_{\mathcal{S}^n_+}$ have the same smoothness property.
Therefore, standard Newton methods cannot be applied directly to solving \eqref{eq:smooth_F_equation}.
In this work, we use the inexact smoothing Newton method in \cite{gao2009calibrating} to solve the smoothed Newton equation:
\begin{align}
\mathrm{E}(\epsilon,\bu) := \left[ {\epsilon}; \tilde{\mathrm{F}}(\epsilon,\bu) \right]  = \mathbf{0},
\quad (\epsilon,\bu) \in \mathbb{R}\times\mathbb{R}^m, \label{eq:smooth_E_equation}
\end{align}
where $\tilde{\mathrm{F}}(\epsilon,\bu)$ is a smoothing function of $\mathrm{F}(\bu)$, which is constructed as follows.

Firstly, the smoothing functions for $\Pi_{\mathcal{D}}$ and $\Pi_{\mathcal{S}^n_+}$ are respectively written as:
\begin{align}
  &\left[\tilde{\Pi}_{\mathcal{D}}(\epsilon,\bv)\right]_i \!:=\! \left\{ \!\begin{array}{ll}
                                    v_i                 & \mbox{if } i \in \mathcal{I}_{eq}, \\
                                    \phi(\epsilon,v_i)  & \mbox{if } i \in \mathcal{I}_{in},
                                  \end{array}
                          \right. (\epsilon, \bv) \!\in\! \mathbb{R} \!\times\! \mathbb{R}^{m}, \\
  &\tilde{\Pi}_{\mathcal{S}^n_+}(\epsilon,\bX)   :=
                           \sum_{i=1}^n \phi(\epsilon, \lambda_i) \bp_i \bp_i^\T,
                           \quad (\epsilon, \bX) \in \mathbb{R} \times \mathcal{S}^n,
\end{align}
where $\lambda_i$ and $\bp_i$ are the $i$th eigenvalue and the corresponding eigenvector of $\bX$.
$\phi(\epsilon,v)$ is the Huber smoothing function that we adopt here to replace $\max(0,v)$:
\begin{align}
\phi(\epsilon,v) := \left\{ \begin{array}{ll}
                  v & \mbox{if } v > 0.5\epsilon, \\
                  (v+0.5\epsilon)^2 / 2\epsilon, &\mbox{if } -0.5\epsilon \leq v \leq 0.5\epsilon, \\
                  0 & \mbox{if } v < -0.5\epsilon.
                  \end{array} \right.
\end{align}
Note that at $\epsilon = 0$,
$\phi(\epsilon,v) = \max(0,v)$,
$\tilde{\Pi}_{\mathcal{D}}(\epsilon,\bv) = \Pi_{\mathcal{D}}(\bv)$ and
$\tilde{\Pi}_{\mathcal{S}^n_+}(\epsilon,\bX)   = \Pi_{\mathcal{S}^n_+}(\bX)$.
$\phi$, $\tilde{\Pi}_{\mathcal{D}}$, $\tilde{\Pi}_{\mathcal{S}^n_+}$ are Lipschitz continuous on
$\mathbb{R}$, $\mathbb{R} \times \mathbb{R}^m$, $\mathbb{R} \times \mathcal{S}^n$ respectively,
and they are continuously differentiable when $\epsilon \neq 0$.
Then $\tilde{\mathrm{F}}(\epsilon,\bu)$ is defined as:
\begin{align}
\tilde{\mathrm{F}}(\epsilon,\bu) := \bu - \tilde{\Pi}_{\mathcal{D}} \left(\epsilon,
   \bu -  \gamma \Phi \left[ \tilde{\Pi}_{\mathcal{S}^n_+} \left( \epsilon,\bC(\bu) \right) \right] - \bb \right),
\label{eq:Upsilon_smooth_F}
\end{align}
which has the same smoothness property as $\tilde{\Pi}_{\mathcal{D}}$ and $\tilde{\Pi}_{\mathcal{S}^n_+}$.

The presented inexact smoothing Newton method (referred to as \smooth) is shown in Algorithm~\ref{alg:smooth}.
In Step~1.2, the Newton linear system \eqref{eq:cg_equation}
is solved approximately using conjugate gradient (CG) methods when $|\mathcal{I}_{in}| = 0$
and using biconjugate gradient stabilized (BiCGStab) methods~\cite{van1992bi} otherwise.
In Step~1.3,
we carry out a search in the direction $\left[ \Delta \epsilon_k ; \Delta \bu_k  \right]$ for an appropriate step size $\rho^{l}$ such that the norm of $\mathrm{E}(\epsilon,\bu)$ is decreased.

   \begin{algorithm}[t]
   \footnotesize
   \setcounter{AlgoLine}{0}
   \caption{\footnotesize \smooth: Solving \eqref{eq:fastsdp_dual} using smoothing Newton methods.}
   \centering
   \begin{minipage}[]{1\linewidth}
   \KwIn{$\bA$, $\Phi$, $\bb$, $\gamma$, $\bu_{0}$, ${\epsilon_0}$, $K_{\mathrm{max}}$, $\tau > 0$, $\mu \in (0,1)$, $\rho \in (0,1)$.}

   {\bf Step~1:} Solving the dual using smoothing Newton methods

   \Indp\For{$k = 0, 1, 2, \dots, K_{\mathrm{max}}$}
   {
     {\bf Step~1.1:} $\bar{\epsilon} \leftarrow {\epsilon_k}$ or $\mu{\epsilon_k}$. \\
     {\bf Step~1.2:} Solve the following linear system up to certain accuracy
              \begin{align}
                 \mathrm{E}(\epsilon_k,\bu_k) + \nabla \mathrm{E} (\epsilon_k, \bu_k)
                      \left[  \Delta \epsilon_k ; \Delta \bu_k  \right]
                 = \left[  \bar{\epsilon} ; \mathbf{0}  \right].
              \label{eq:cg_equation}
              \end{align}

     {\bf Step~1.3:} Line Search

           \Indp $l = 0$; \\
           \lWhile(){$\lVert \mathrm{E}(\epsilon_k \!+\! \rho^l \Delta \epsilon_k,\bu_k \!+\! \rho^l \Delta \bu_k) \rVert^2_2
                           \geq \lVert\mathrm{E}(\epsilon_k,\bu_k)\rVert^2_2$}
                     {\hspace{0.5cm} $\quad\,\,\, l = l + 1$}

            $\epsilon_{k+1} = \epsilon_{k} + \rho^l \Delta \epsilon_k$, $\bu_{k+1} = \bu_{k} + \rho^l \Delta \bu_k$.

     \Indm{\bf Step~1.4:} If $\frac{|\mathrm{d}_\gamma(\bu_{k+1}) - \mathrm{d}_\gamma(\bu_{k})|}
                      {\max\{|\mathrm{d}_\gamma(\bu_{k+1})|,|\mathrm{d}_\gamma(\bu_{k})|,1\}} \leq \tau$, break.

   }

   \Indm{\bf Step~2:} $\bu^\star = \bu_{k+1}$, $\bX^{\star} =  \gamma\Pi_{\mathcal{S}^n_+} (\bC(\bu^\star))$.

   {\bf Step~3:} $\bx^\star = \mathrm{Round}(\bX^\star)$.

   \KwOut{$\bx^\star$, $\bu^\star$, upper-bound: $\mathrm{p}(\bx^\star \bx^{\star\T})$ and lower-bound: $\mathrm{d_\gamma}(\bu^\star) - \frac{n^2}{2\gamma}$.}
   \end{minipage}
   \label{alg:smooth}
   \end{algorithm}

   \begin{algorithm}[t]
   \footnotesize
   \setcounter{AlgoLine}{0}
   \caption{ { \footnotesize Randomized Rounding Procedure: $\bx^\star = \mathrm{Round}(\bX^\star)$}}
   \centering
   \begin{minipage}[]{1\linewidth}
{ %
   \KwIn{The SDP solution $\bX^\star$, which is decomposed to a set of vectors
         $\bv_1 \dots \bv_n \in \mathbb{R}^r$ where $r = \mathrm{rank}(\bX^\star)$.}

   \Indp\For{$k = 0, 1, 2, \dots, K$}
   {
     {\bf Step~1:} Random sampling: obtain a real $1$-dimensional vector $\bz = [\bv_1 \dots \bv_n]^\T \by$, where $\by \sim \mathcal{N}(\mathbf{0},\bI_r)$. \\
     {\bf Step~2:} Discretization: $\bz$ is discretized to a feasible BQP solution (see Table~\ref{tab:formulation} for problem-specific methods).
   }
   \KwOut{$\bx^\star$ is assigned to the best feasible solution.}
}
   \end{minipage}
   \label{alg:rounding}
   \end{algorithm}

\subsection{Randomized Rounding Procedure}
\label{sec:rounding}

{ %
In this section, we describe a randomized rounding procedure (see Algorithm~\ref{alg:rounding}) for obtaining a
feasible binary solution from the relaxed SDP solution $\bX^\star$. %

Suppose that $\bX^\star$ is decomposed into a set of $r$-dimensional vectors $\bv_1 \dots \bv_n$,
such that $\bX^\star_{ij} = \bv_i^\T \bv_j$.
This decomposition can be easily obtained through the eigen-decomposition of $\bX^\star$:
$\bX = \bV \bV^\T$ and $\bV = [\bv_1 \dots \bv_n]^\T$.
We can see that these vectors reside on the $r$-dimensional unit sphere $\mathcal{S}_r := \{ \bv \in \mathbb{R}^r, \bv^\T \bv = 1 \}$,
and the angle between two vectors $\bv_i$ and $\bv_j$ defines how likely
the corresponding two variables $x_i$ and $x_j$ will be separated (assigned with different labels).
To transform these vectors into binary solutions,
they are firstly projected onto a random $1$-dimensional line $\by \sim \mathcal{N}(\mathbf{0},\bI_r)$ in Step~$1$ of Algorithm~\ref{alg:rounding},
\ie, $\bz = [\bv_1 \dots \bv_n]^\T \by$.
Note that Step~$1$ is equivalent to sampling $\bz$ from the Gaussian distribution $\mathcal{N}(\mathbf{0},\bX^\star)$,
which has a probabilistic interpretation~\cite{d2003relaxations,luo2010semidefinite}:
$\bX^\star$ is the optimal solution to the problem
\begin{align}
\min_{\Sigma} \quad &\mathbb{E}_{\bz \sim \mathcal{N}(\mathbf{0},\Sigma)}[ \bz^\T \bA \bz], \label{equ:prob_inter} \\
         \sst \quad %
                    &\mathbb{E}_{\bz \sim \mathcal{N}(\mathbf{0},\Sigma)}[\bz^\T \bB_i \bz] =    b_i, \, i \in \mathcal{I}_{eq}, \notag \\
                    &\mathbb{E}_{\bz \sim \mathcal{N}(\mathbf{0},\Sigma)}[\bz^\T \bB_i \bz] \leq b_i, \, i \in \mathcal{I}_{in}, \notag
\end{align}
where $\Sigma$ denotes a covariance matrix.
{
The proof is simple: since $\mathbb{E}_{\bz \sim  \mathcal{N}(\mathbf{0},\Sigma)} [ \bz^\T \bA \bz]
= \sum_{i,j} A_{ij} \mathbb{E}_{\bz \sim  \mathcal{N}(\mathbf{0},\Sigma)}[z_i z_j ]
= \sum_{i,j} A_{ij} \Sigma_{ij}$ for any $\bA \in \mathcal{S}^{n}$, \eqref{equ:prob_inter} is equivalent to \eqref{eq:psdp0}.
}
In other words, $\bz$ solves the BQP in expectation. %
As the eigen-decomposition of $\bX^\star$ is already known when computing $\Pi_{\mathcal{S}^n_+} (\bC(\bu^\star))$ at the last descent step,
there is no extra computation for obtaining $\bv_1 \dots \bv_n$.
Due to the low-rank structure of SDP solutions (see Section~\ref{sec:analysis1}),
the computational complexity of sampling $\bz$ is linear in the number of variables $n$.

{%
Note that the above random sampling procedure does not guarantee that a feasible solution can always be found.
In particular, this procedure will certainly fail when equality constraints are imposed on the problems~\cite{d2003relaxations}.
But for all the problems considered in this work, each random sample $\bz$ can be discretized to a ``nearby'' feasible solution (Step $2$ of Algorithm \ref{alg:rounding}).
The discretization step is problem dependant, which is discussed in Table~\ref{tab:formulation}.

}
}
\subsection{Speeding Up the Computation}
\label{sec:analysis1}

In this section, we discuss
several techniques  for the eigen-decompostion of $\bC(\bu)$,
which is one of the computational bottleneck for our algorithms.

\noindent
{\bf Low-rank Solution}
In our experiments, we observe that the final \psd solution typically has a low-rank structure
and $r = \mathrm{rank}(\Pi_{\mathcal{S}^n_+}(\bC(\bu)))$ usually decreases sharply such that $r \ll n$ for most of descent iterations
in both our algorithms.
Actually, it is known (see \cite{barvinok1995problems} and \cite{pataki1998rank})
that any SDP problem with $m$ linear constraints has an optimal solution $\bX^\star \in \mathcal{S}^n_+$,
such that $\mathrm{rank}(\bX^\star)(\mathrm{rank}(\bX^\star)+1)/2 \leq m$.
It means that the rank of $\bX^\star$ is roughly bounded by $\sqrt{2m}$.
Then Lanczos methods can be used to efficiently calculate the $r$ positive eigenvalues of $\bC(\bu)$ and the corresponding eigenvectors.
Lanczos methods rely only on the product of the matrix $\bC(\bu)$ and a column vector.
This simple interface allows us to exploit specific structures of the coefficient matrices $\bA$ and $\bB_i$, $i = 1, \cdots, m$.

\noindent
{\bf Specific Problem Structure}
In many cases, $\bA$ and $\bB_i$ are sparse or structured.
Such that the computational complexity and memory requirement of the matrix-vector product \wrt $\bC(\bu)$ can be considered as linear in $n$,
which are assumed as $\mathcal{O}(nt_1)$ and $\mathcal{O}(nt_2)$ respectively.
The iterative Lanczos methods are faster than standard eigensolvers when $r \ll n$ and $\bC(\bu)$ is sparse/structured,
which require $\mathcal{O}(nr^2+nt_1r)$ flops and $\mathcal{O}(nr+nt_2)$ bytes at each iteration of Lanczos factorization,
given that the number of Lanczos basis vectors is set to a small multiple ($1\!\sim\!3$) of $r$.
ARPACK \cite{lehoucq1997arpack}, an implementation of Lanczos algorithms,
is employed in this work for the eigen-decomposition of sparse or structured matrices.
The DSYEVR function in LAPACK~\cite{anderson1999lapack} is used for dense matrices.

\noindent
{\bf Warm Start}
A good initial point is crucial for the convergence speed of iterative Lanczos methods.
In quasi-Newton and smoothing Newton methods,
the step size $\Delta \bu = \bu_{k+1} - \bu_{k}$ tends to decrease with descent iterations.
It means that $\bC(\bu_{k+1})$ and $\bC(\bu_{k})$ may have similar eigenstructures,
which inspires us to use a random linear combination of eigenvectors of $\bC(\bu_{k})$
as the starting point of the Lanczos process for $\bC(\bu_{k+1})$.

\noindent
{\bf Parallelization}
Due to the importance of eigen-decomposition, its parallelization has been well studied and there are
several off-the-shelf parallel eigensolvers (such as SLEPc~\cite{Hernandez:2005:SSF}, PLASMA~\cite{PLASMA} and MAGMA~\cite{MAGMA}).
Therefore, our algorithms can also be easily parallelized by using these off-the-shelf parallel eigensolvers.

\subsection{Convergence Speed, Computational Complexity and Memory Requirement}
\label{sec:analysis}

\begin{table*}[t]
  \centering
  \scriptsize
{ %
  \begin{tabular}{l@{\hspace{0cm}}|@{\hspace{0.1cm}}c@{\hspace{0cm}}c@{\hspace{0.1cm}}c@{\hspace{0.1cm}}c}
  \hline
& &\\ [-2ex]
   Algorithms  & Convergence & Eigen-solver & Computational Complexity & Memory Requirement\\
& &\\ [-2ex]
  \hline
& &\\ [-2ex]
   \lbfgsb \begin{tabular}{@{\hspace{0cm}}c@{\hspace{0.1cm}}}
                         Dense \\
                         Sparse/Structured
                         \end{tabular}
   & unknown  & \begin{tabular}{c}
                         LAPACK-DSYEVR \\
                         ARPACK
                         \end{tabular}
             & \begin{tabular}{c}
                         $\mathcal{O}(m + n^3)$ \\
                         $\mathcal{O}(m) + \mathcal{O}(nr^2 + nt_1r) \times \#\mbox{Lanczos-iters}$
                         \end{tabular}
                       & \begin{tabular}{c}
                         $\mathcal{O}(m+n^2)$\\
                         $\mathcal{O}(m+nr+nt_2)$
                         \end{tabular}\\
& &\\ [-2ex]
  \hline
& &\\ [-2ex]
   \smooth & quadratic & LAPACK-DSYEVR & $\mathcal{O}(n^3)+ \mathcal{O}(m+n^2r)\times \#\text{CG-iters}$ & $\mathcal{O}(m+n^2)$ \\
& &\\ [-2ex]
  \hline
& &\\ [-2ex]
   Interior Point Methods    & quadratic  & $-$ & $\mathcal{O}(m^3+mn^3+m^2n^2)$ & $\mathcal{O}(m^2+n^2)$\\
  \hline
  \end{tabular}
}
\caption{
{ %
The comparison of our algorithms and interior-point algorithms on convergence rate, computational complexity and memory requirement.
\lbfgsb is considered in two cases: the matrix $\bC(\bu)$ is dense or sparse/structured and different eigen-solvers are applied. $n$ and $m$ denotes the primal \psd matrix size and the number of dual variables.
The definition of $r$, $t_1$ and $t_2$ can be found in Section~\ref{sec:analysis1}.
}}
\label{tab:complexity}
\end{table*}

\noindent
{\bf \lbfgsb}
In general, quasi-Newton methods converge superlinearly given that the objective function is at least twice differentiable
(see \cite{broyden1973local,dennis1974characterization,qi1997superlinear}).
However, the dual objective function in our case~\eqref{eq:fastsdp_dual} is not necessarily twice differentiable.
So {\em the
theoretical
convergence speed of \lbfgsb is unknown}.

At each iteration of \LBFGSB, both of the computational complexity and memory requirement of \LBFGSB itself are $\mathcal{O}(m)$.
The only computational bottleneck of \lbfgsb is on the computation of the projection
$\Pi_{\mathcal{S}^n_+}(\bC(\bu))$, which is discussed in Section~\ref{sec:analysis1}.

\noindent
{\bf \smooth}
	The inexact smoothing Newton method \smooth is quadratically convergent under the assumption that the constraint nondegenerate condition holds at the optimal solution (see \cite{gao2009calibrating}).
There are two computationally intensive aspects of \smooth: $i$). the
CG algorithms for solving the linear system~\eqref{eq:cg_equation}.
In the appendix,
we show
that the Jacobian-vector product requires
$\mathcal{O}(m+n^2r)$
flops at each CG iteration, where $r = \mathrm{rank}(\Pi_{\mathcal{S}^n_+}(\bC(\bu)))$.
$ii$). All eigenpairs of $\bC(\bu)$ are needed to obtain Jacobian matrices implicitly, which takes $\mathcal{O}(n^3)$ flops using DSYEVR function in LAPACK.

From Table~\ref{tab:complexity}, we can see that the computational costs and memory requirements for both \lbfgsb and \smooth are linear in $m$,
which means that our methods are much more scalable to $m$ than interior-point methods.
In terms of $n$, our methods is also more scalable than interior-point methods and comparable to spectral methods.
Especially for sparse/structured matrices, the computational complexity of \lbfgsb is linear in $n$.
As \smooth cannot significantly benefit from sparse/structured matrices, it needs more time than \lbfgsb in each descent iteration for such matrices.
However, \smooth has a fast convergence rate than \lbfgsb.
In the experiment section, we compare the speeds of \smooth and \lbfgsb in different cases.

\section{Applications}
\label{sec:app}

\begin{table*}[t]
  \centering
  \scriptsize
{ %
  \begin{tabular}{p{1.5cm}|L{6cm}|p{9.3cm}}
  \hline
     Application & BQP formulation  & Comments\\
  \hline
  \hline
& &\\ [-2ex]
Graph bisection

(Sec.~\ref{sec:graph_bisection})
&
\vspace{-0.6cm}
{
\begin{subequations}
\label{eqa:app_gb}
\begin{flalign}
&\min_{\bx \in \{-1,1\}^n} \,\,- \bx^{\T} \bW \bx, &\\
&\quad\quad\,\,\sst \quad\,\,\, \bx^{\T} \mathbf{1} = 0. &
\end{flalign}
\end{subequations}
}
\vspace{-0.5cm}
&
$W_{ij} \!=\! { \left\{
                \begin{array}{ll}
                \! \exp(-\mathrm{d}_{ij}^2 / \sigma^2)  & \!\mbox{if } (i,j) \in \mathcal{E},\\
                \!0 &\! \mbox{otherwise,}  \end{array}
         \right.} $
where
$\mathrm{d}_{ij}$ denotes the Euclidean distance between $i$ and $j$.

{\em Discretization}: $\bx = \mathrm{sign}(\bz - \mathrm{median}(\bz))$.
\\
& &\\ [-2ex]
\hline
& &\\ [-2ex]
Image segmentation with partial grouping constraints

(Sec.~\ref{sec:segm})
&
\vspace{-0.6cm}
{
{%
\begin{subequations}
\label{equ:app_segm1}
\begin{flalign}
&\min_{\bx \in \{-1,1\}^n} \,\, - \bx^{\T} \bW \bx,  &\\
&\quad\quad\,\, \sst \quad\,\,    (\bs_f^{\T} \bx)^2  \geq  \kappa^2,  &\\
&\quad\quad\,\,\quad\quad\quad    (\bs_b^{\T} \bx)^2  \geq  \kappa^2,  &\\
&\quad\quad\,\,\quad\quad\,\,\,    - \frac{1}{2} \bx^\T (\bs_f \bs_b^\T + \bs_b \bs_f^\T) \bx \geq \kappa^2. &
\end{flalign}
\end{subequations}
}
}
&
{%
$W_{ij} \!=\! { \left\{
                \begin{array}{ll}
                \!\mathrm{exp} \left(- \lVert \mathbf{f}_i - \mathbf{f}_j \rVert_2^2 / \sigma_f^2
                               - \mathrm{d}_{ij}^2 / \sigma_d^2 \right) & \!\mbox{if }   \mathrm{d}_{ij} \!<\! r,\\
                \!0 &\! \mbox{otherwise,}  \end{array}
         \right.} $
where $\mathbf{f}_i$ denotes the local feature of pixel $i$.
The weighted partial grouping pixels are defined as $\bs_f = \bP \bt_f / (\mathbf{1}^\T \bP \bt_f) $
and $\bs_b = \bP \bt_{b} / (\mathbf{1}^\T \bP \bt_b)$ for foreground and background respectively,
where
$\bt_f, \bt_b \in \{0,1\}^n$ are two indicator vectors for manually labelled pixels and
$\bP \!=\! \mathrm{Diag}(\bW \mathbf{1})^{-1}\!\bW$ is the normalized affinity matrix used as smoothing terms~\cite{Yu2004segmentation}.
The overlapped non-zero elements between $\bs_f$ and $\bs_b$ are removed.
$\kappa \in (0,1]$ denotes the degree of belief.

{\em Discretization}: see \eqref{eq:round_partial}. %
}
\\
& &\\ [-2ex]
\hline
& &\\ [-2ex]
Image segmentation with histogram constraints

(Sec.~\ref{sec:segm})
&
\vspace{-0.4cm}
{ %
\begin{subequations}
\label{eq:app_hist}
\begin{flalign}
&\min_{\bx \in \{-1,1\}^n} \,\, - \bx^{\T} \bW \bx, &\\
&\quad\quad\,\, \sst \,\,\,\,\,  \sum_{i=1}^{K} \left(\frac{\langle \bt_{i}, \bx+\mathbf{1} \rangle}{\langle \mathbf{1} , \bx+\mathbf{1} \rangle} - q_i \right)^2 \leq \delta^2, & \label{eq:hist_cons} \\
&\quad\quad\quad\quad\quad (\bx^\T \mathbf{1})^2 \leq \kappa^2 n^2.  & \label{eq:hist_cons2}
\end{flalign}
\end{subequations}
}
\vspace{-0.5cm}
&
{ %
$\bW$ is the affinity matrix as defined above.
$\bq$ is the target $K$-bin color histogram;
$\bt_i \in \{0,1\}^n$ is the indicator vector for every color bin;
$\delta$ is the prescribed upper-bound on the Euclidean distance between the obtained histogram and $\bq$.
Note that \eqref{eq:hist_cons} is equivalent to a quadratic constraint on $\bx$ and can be expressed as $\bx \bB \bx + \ba^\T \bx \leq b$.
Constraint \eqref{eq:hist_cons} is penalized in the objective function with a weight (multiplier) $\alpha > 0$ in this work:
$\min_{\bx \in \{-\!1\!,\!1\}^{\!n}} \bx^{\T} (\alpha \bB - \bW) \bx + \alpha \ba^\T \bx, \, \sst \, \eqref{eq:hist_cons2}$.
Constraint \eqref{eq:hist_cons2} is used to avoid trivial solutions.

{\em Discretization}: $\bx = \mathrm{sign}(\bz - \theta)$. See \eqref{eq:hist-round} for the computation of the threshold $\theta$. %
}
\\
& &\\ [-2ex]
\hline
& &\\ [-2ex]
Image
co-segmentation

(Sec.~\ref{sec:cosegm})
&
\vspace{-0.4cm}
{%
\begin{subequations}
\begin{flalign}
&\min_{\bx \in \{-1,1\}^n} \quad\bx^{\T}\! \bA \bx,  &\\
&\quad\quad\,\, \sst  \quad\,\, ( \bx^{\T} \bt_i)^2 \leq \kappa^2 n^2_i,  \ i = 1 ,\dots, s. &
\end{flalign}
\end{subequations}
}
\vspace{-0.4cm}
&
{%
The definition of $\bA$ can be found in \cite{Joulin2010dis}.
$s$ is the number of images, $n_i$ is the number of pixels for $i$-th image, and $n = \sum_{i=1}^{s} n_i$.
$\bt_i \in \{0,1\}^n$ is the indicator vector for the $i$-th image. $\kappa \in (0,1]$.

{\em Discretization}: see \eqref{eq:cosegm-round}.
}
\\
& &\\ [-2ex]
\hline
& &\\ [-2ex]
Graph

matching

(Sec.~\ref{sec:matching})
&
\vspace{-0.4cm}
{
\begin{subequations}
\begin{flalign}
&\min_{\bx \in \{0,1\}^{KL} } \,\,\,\, \bh^{\T} \bx + \bx^{\T} \bH \bx,  &\\
&\quad\quad \sst  \,\,   \textstyle{\sum_{j=1}^L} \bx_{\tiny(i-1)\!L+j}=1, i \!=\! 1,\! \dots,\! K,  &\\
&             \quad\quad\quad\quad  \textstyle{\sum_{i=1}^K} \bx_{\tiny(i-1)\!L+j} \leq 1, j \!=\! 1,\! \dots,\! L. &
\end{flalign}
\end{subequations}
}
\vspace{-0.4cm}
&
$x_{\tiny(i-1)\!L+j}\!=\!1$ if the $i$-th source point is matched to the $j$-th target point; otherwise it equals to $0$.
$h_{\tiny(i-1)\!L+j}$ records the local feature similarity between source point $i$ and target point $j$;
$H_{\tiny(i-1)\!L+j,(k-1)\!L+l} = \mathrm{exp} (-(\mathrm{d}_{ij} - \mathrm{d}_{kl})^2 / \sigma^2)$
encodes the structural consistency of source point $i$, $j$ and target point $k$,~$l$.
See \cite{Schellewald05} for details.

{\em Discretization}: see \eqref{eq:graph_match_round}.
\\
& &\\ [-2ex]
\hline
& &\\ [-2ex]
Image

deconvolution

(Sec.~\ref{sec:deconv})
&
\vspace{-0.4cm}
{%
\begin{flalign}
\label{eq:deconv}
&\,\, \min_{\bx \in \{0,1\}^n} \quad\,\, \lVert \bq - \bK \bx \rVert^2_2 + \mathrm{S}(\bx). &
\end{flalign}
}
&
{%
$\bK$ is the convolution matrix corresponding to the blurring kernel $\bk$;
$\mathrm{S}$ denotes the smoothness cost;
$\bx$ and $\bq$ represent the input image and the blurred image respectively.
See \cite{raj2005graph} for details.

{\em Discretization}: $\bx = (\mathrm{sign}(\bz) + \mathbf{1}) / 2$.
}
\\
& &\\ [-2ex]
\hline
& &\\ [-2ex]
Chinese

character

inpainting

(Sec.~\ref{sec:dtf})
&
\vspace{-0.4cm}
{
\begin{flalign}
& \min_{\bx \in \{-1,1\}^n } \quad \bh^{\T} \bx + \bx^{\T} \bH \bx. &
\end{flalign}
}
&
The unary terms ($\bh \in \mathbb{R}^n$) and pairwise terms ($\bH \in \mathbb{R}^{n \times n}$)
are learned using decision tree fields~\cite{nowozin2011decision}.

{\em Discretization}: $\bx = \mathrm{sign}(\bz)$.
\\
\hline
\end{tabular}
\caption{ BQP formulations for different applications considered in this paper.
          The discretization step in Algorithm~\ref{alg:rounding} for each application is also described.
        }
\label{tab:formulation}
}
\end{table*}

We now show how we can attain good solutions on various vision tasks with the proposed methods.
The two proposed methods,  \lbfgsb and \smooth, are evaluated on several computer vision applications.
The BQP formulation of different applications and the corresponding rounding heuristics are demonstrated in Table~\ref{tab:formulation}.
The corresponding SDP relaxation can be obtained based on \eqref{eq:backgd_sdp1}.
In the experiments, we also compare our methods to spectral methods~\cite{Shi2000normalized,Yu2004segmentation,Cour_solvingmarkov,Gour2006},
graph cuts based methods~\cite{kolmogorov2004energy,kolmogorov2007minimizing,rother2007optimizing}
and interior-point based SDP methods~\cite{Sturm98usingsedumi,Toh99sdpt3,mosek}.
The upper-bounds (\ie, the objective value of BQP solutions) and the lower-bounds (on the optimal objective value of BQPs)
achieved by different methods are demonstrated, and the runtimes are also compared.

The code is written in Matlab, with some key subroutines
implemented in C/MEX.
We have used the \LBFGSB~\cite{Zhu94lbfgsb} for the optimization in \lbfgsb.
All of the experiments are evaluated on a core of Intel Xeon E$5$-$2680$ $2.7$GHz CPU ($20$MB cache).
The maximum number of descent iterations of \lbfgsb and \smooth are set to $50$ and $500$ respectively.
As shown in Algorithm~\ref{alg:lbfgsb} and Algorithm~\ref{alg:smooth},
the same stopping criterion is used for \lbfgsb and \smooth,
and the tolerance $\tau$ is set to $10^7 \mathrm{eps}$ where $\mathrm{eps}$ is the machine precision.
The initial values of the dual variables $u_i, i \in \mathcal{I}_{eq}$ are set to $0$,
and $u_i, i \in \mathcal{I}_{in}$ are set to a small positive number.
The selection of parameter $\gamma$ will be discussed in the next section.

\subsection{Graph Bisection}
\label{sec:graph_bisection}

Graph bisection is a problem of separating the nodes of a weighted graph $\mathcal{G} = (\mathcal{V}, \mathcal{E})$ into two disjoint sets with equal cardinality,
while minimizing the total weights of the edges being cut. $\mathcal{V}$ denotes the set of nodes and $\mathcal{E}$ denotes the set of non-zero edges.
The BQP formulation of graph bisection can be found in \eqref{eqa:app_gb} of Table~\ref{tab:formulation}.
To enforce the feasibility (two partitions with equal size), the randomized score vector $\bz$ in Algorithm~\ref{alg:rounding}
is dicretized by thresholding the median value (see Table~\ref{tab:formulation}).

\begin{figure}[t]
\begin{minipage}[c]{0.48\textwidth}
\centering
\subfloat{
\includegraphics[width=0.24\textwidth]{./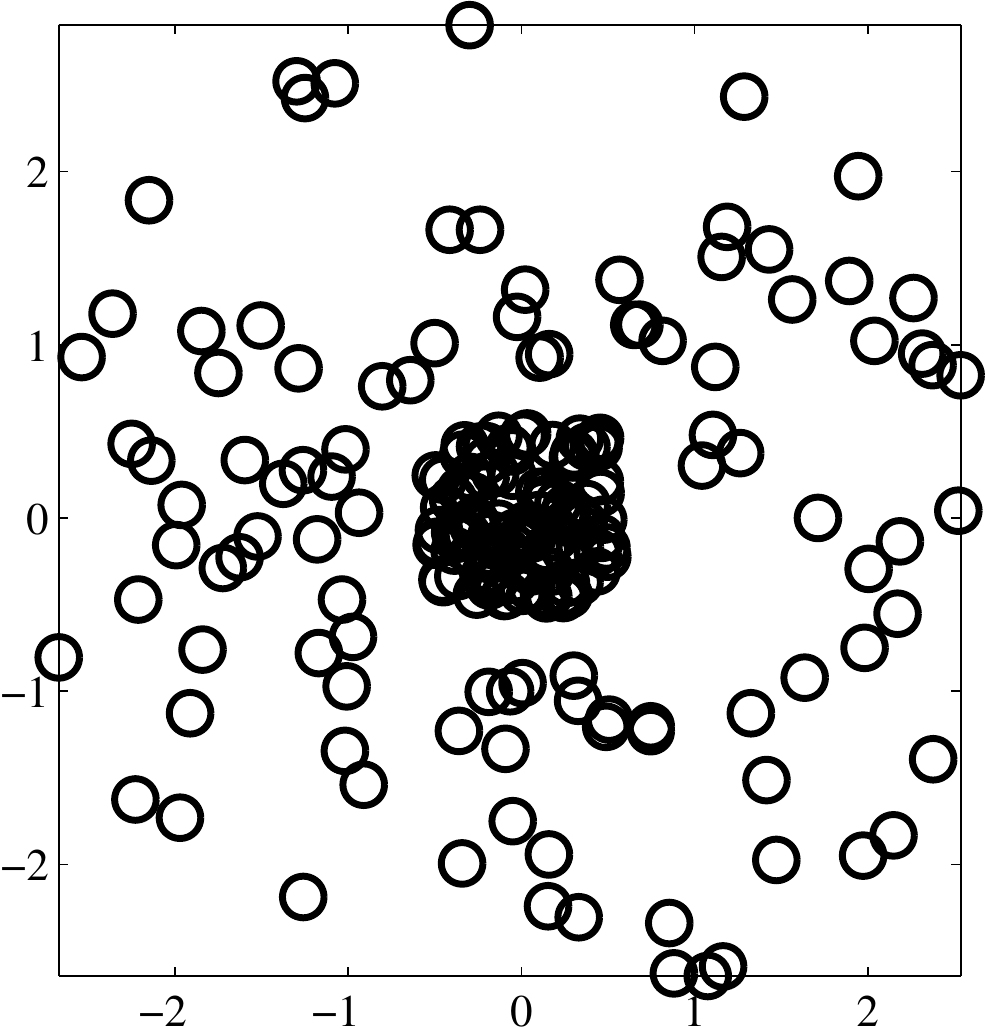}
\includegraphics[width=0.24\textwidth]{./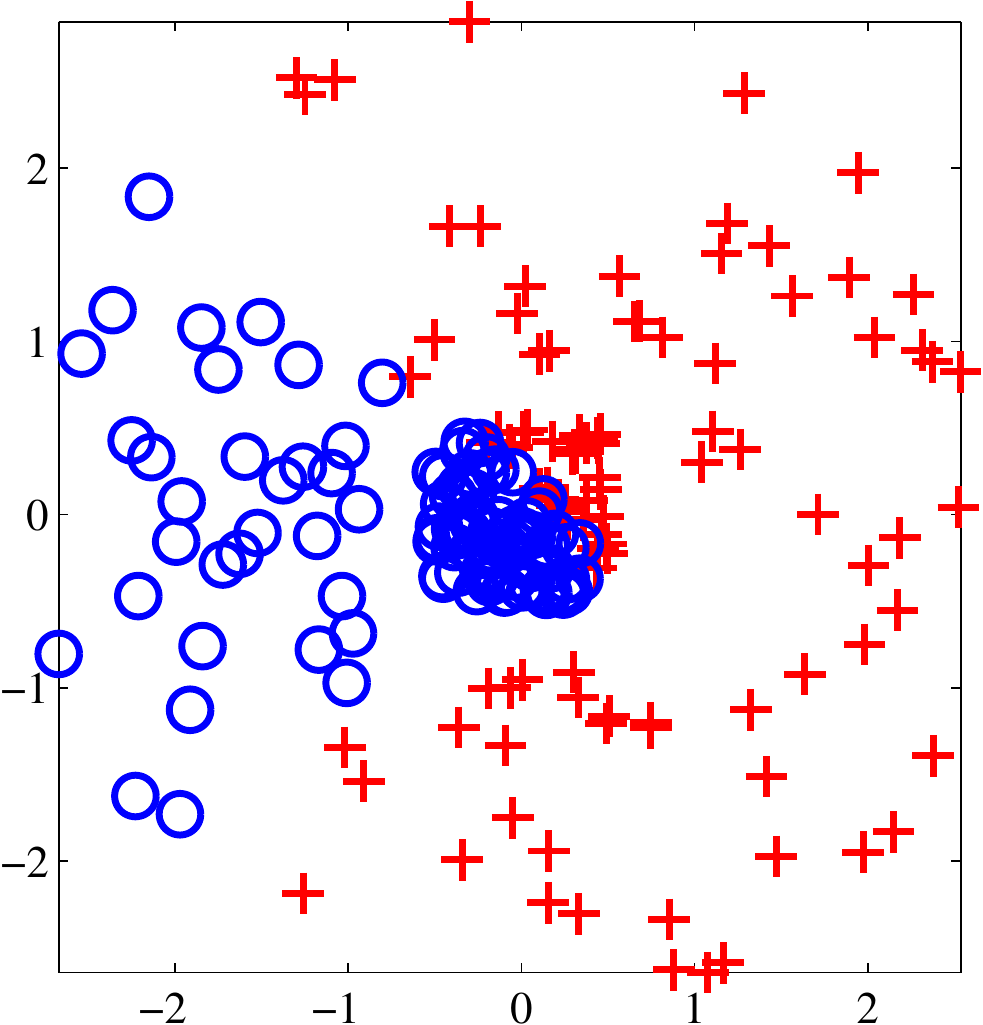}
\includegraphics[width=0.24\textwidth]{./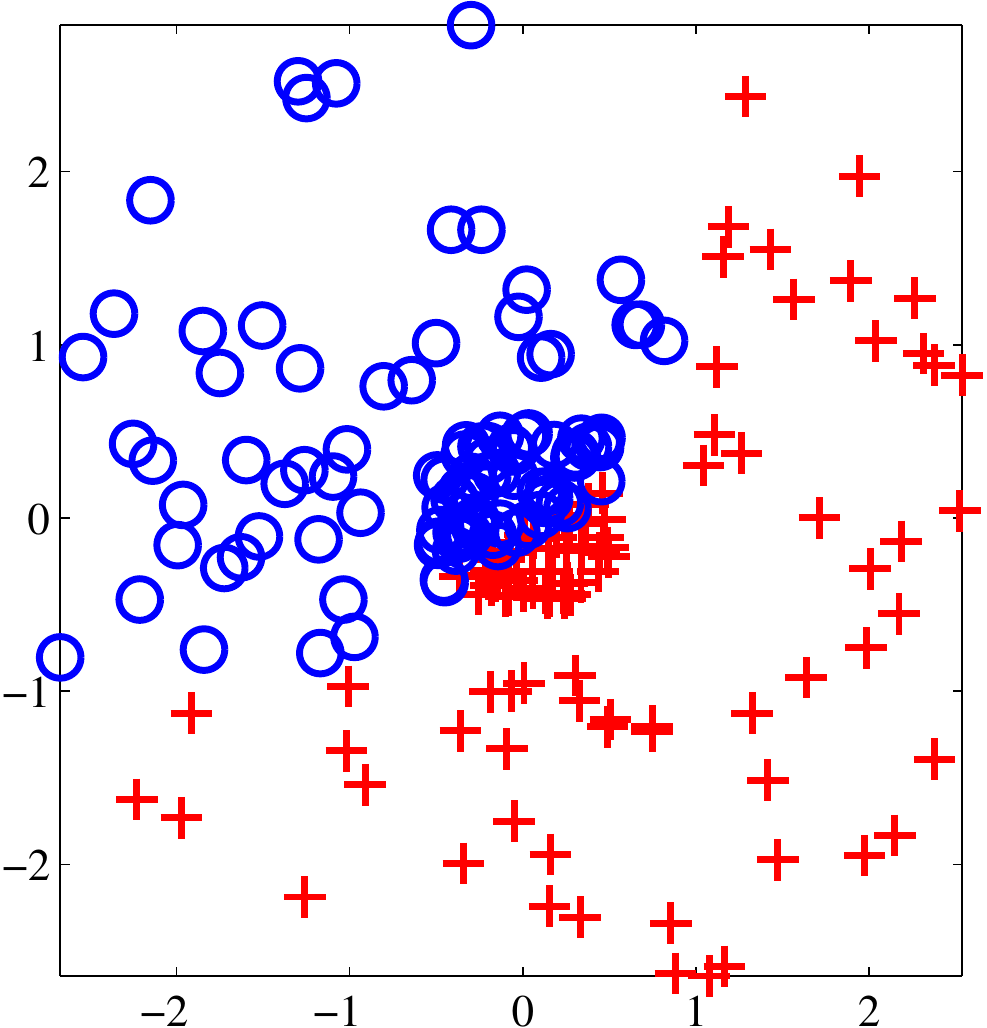}
\includegraphics[width=0.24\textwidth]{./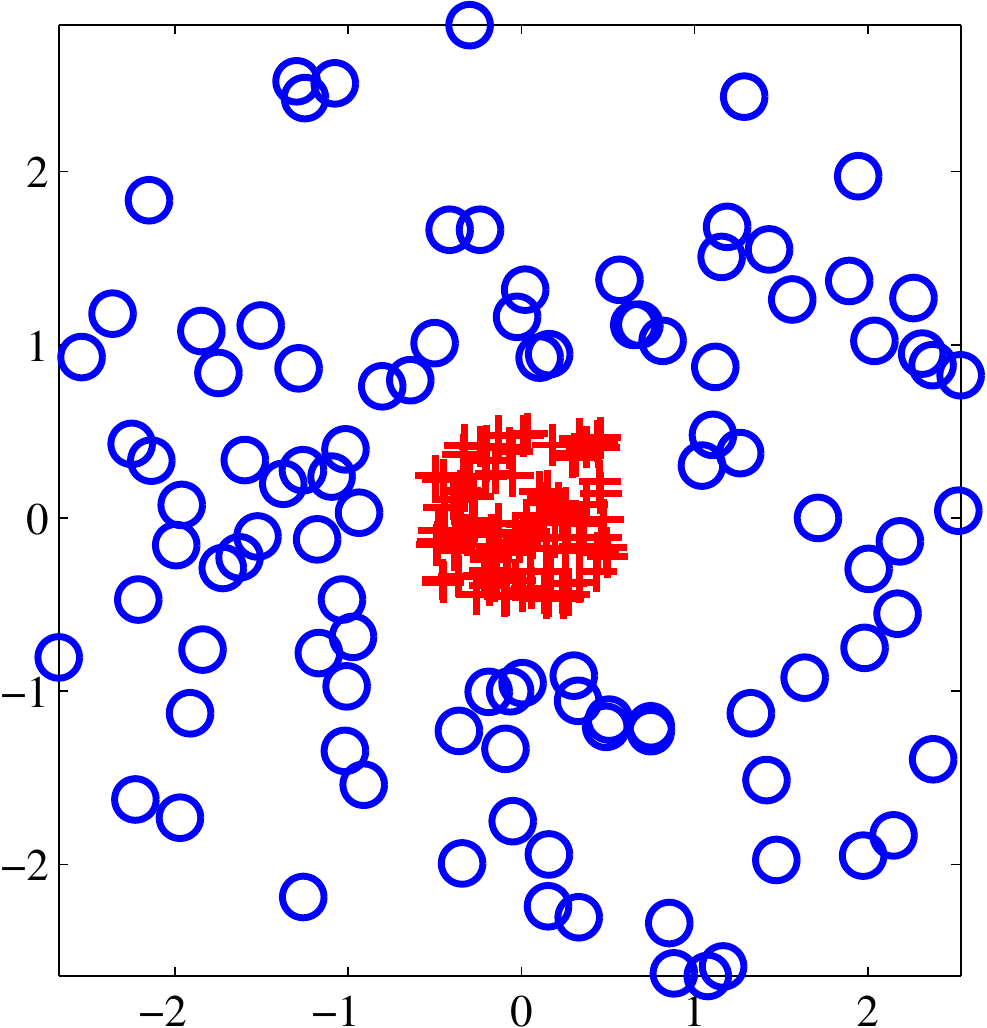}
\centering
}\\
\subfloat{
\centering
\includegraphics[width=0.24\textwidth]{./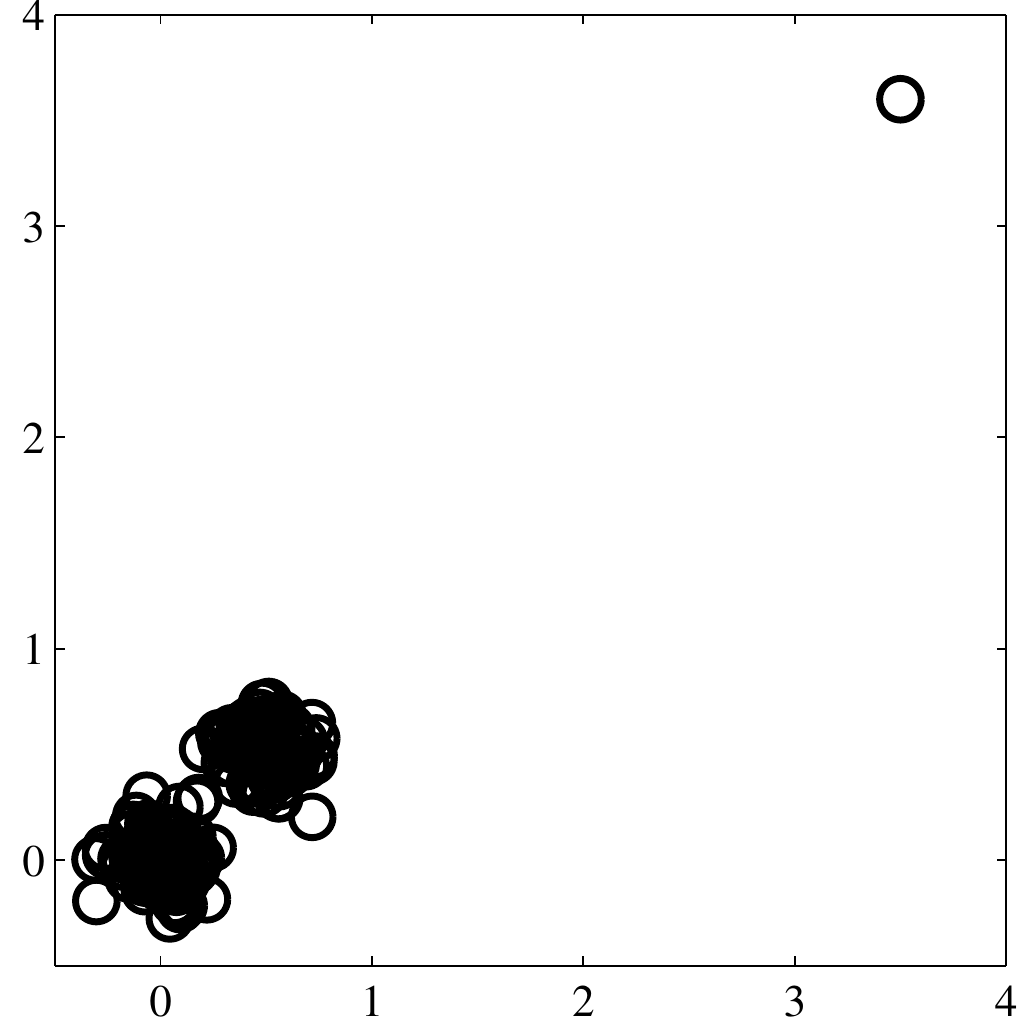}
\includegraphics[width=0.24\textwidth]{./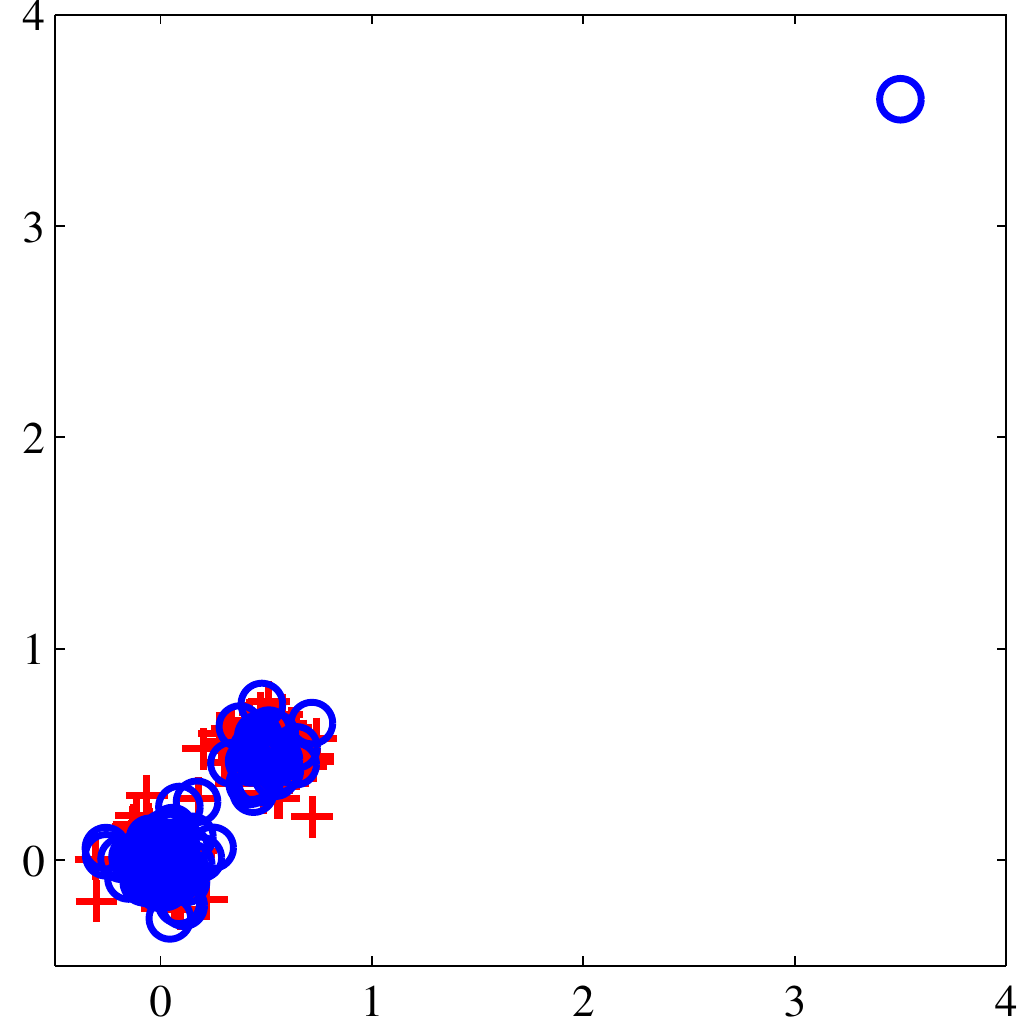}
\includegraphics[width=0.24\textwidth]{./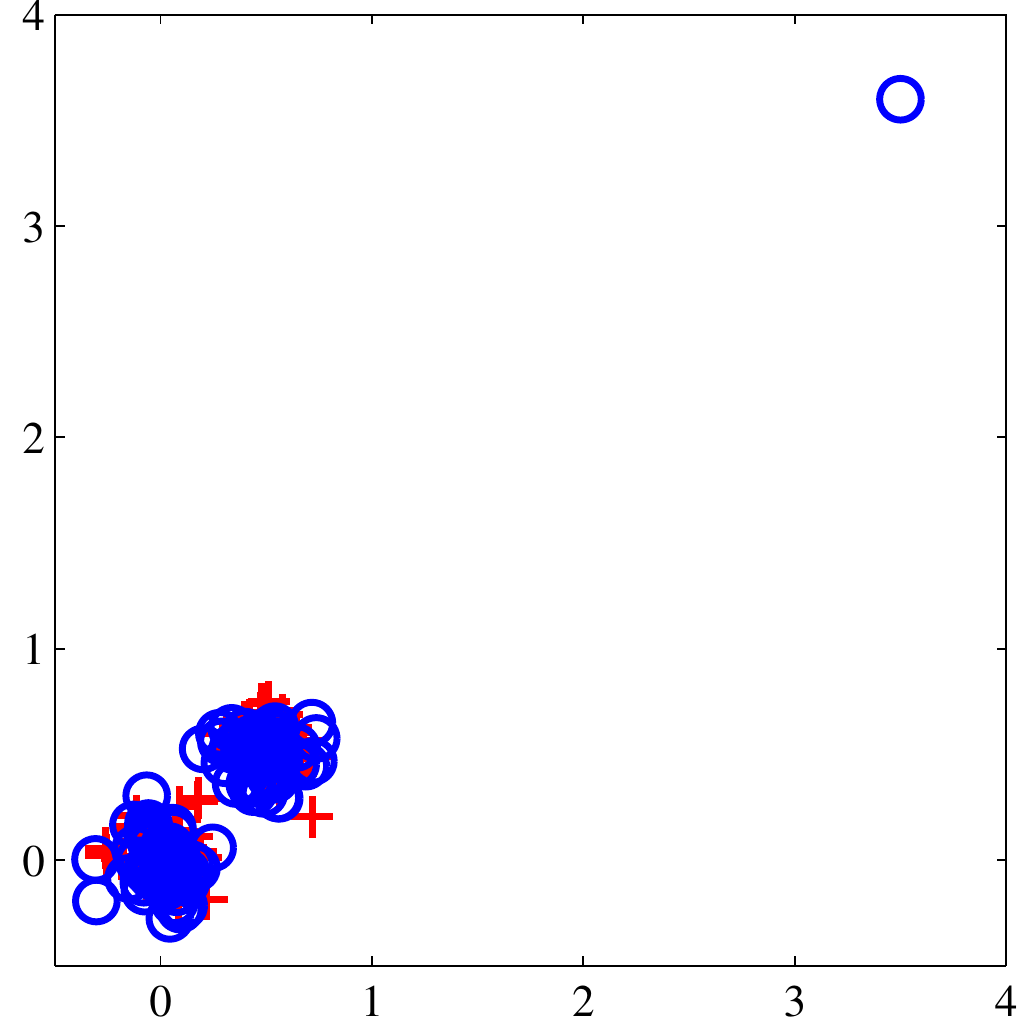}
\includegraphics[width=0.24\textwidth]{./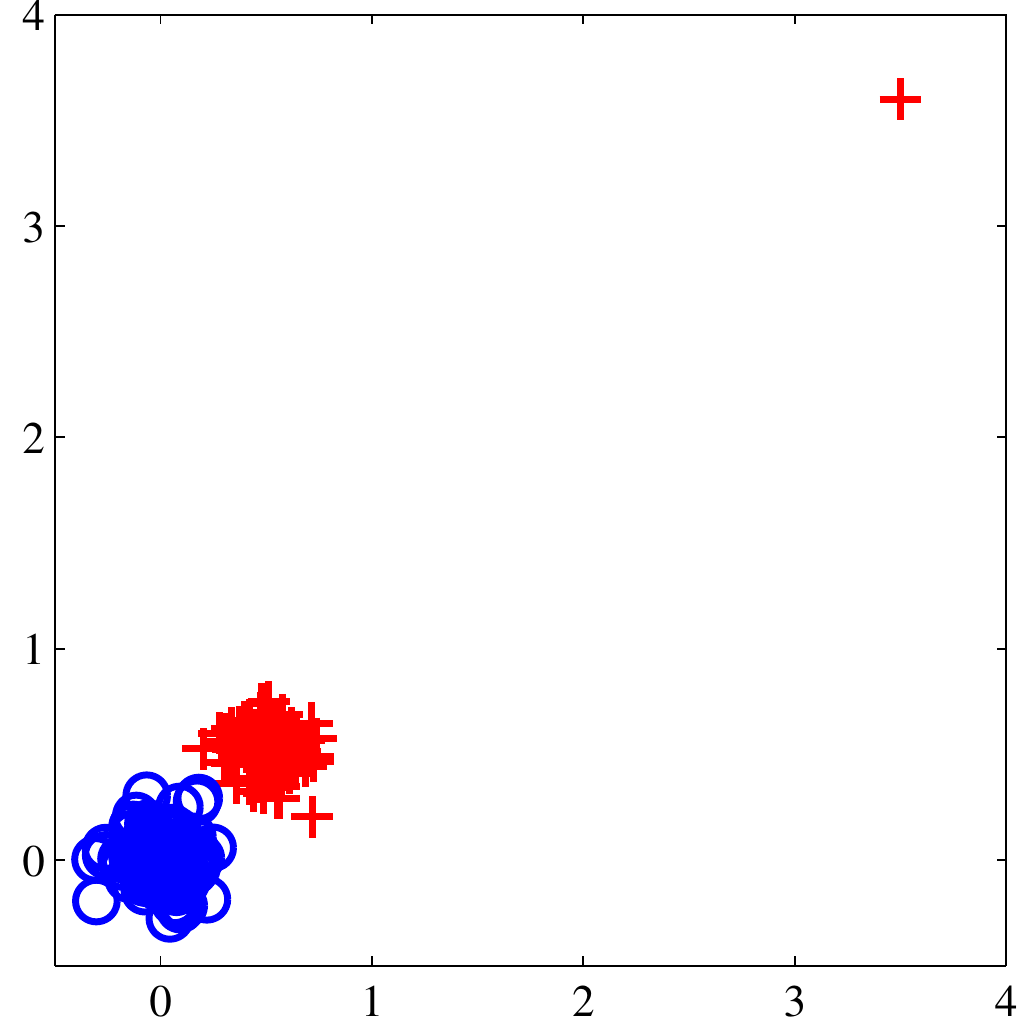}
}\\
\subfloat{
\centering
\begin{minipage}[c]{0.24\textwidth}
\centering
{\scriptsize Original data}
\end{minipage}
\begin{minipage}[c]{0.24\textwidth}
\centering
{\scriptsize NCut}
\end{minipage}
\begin{minipage}[c]{0.24\textwidth}
\centering
{\scriptsize RatioCut}
\end{minipage}
\begin{minipage}[c]{0.24\textwidth}
\centering
{\scriptsize \lbfgsb}
\end{minipage}
}
\end{minipage}
\caption{Results for $2$-demensional points bisection. %
The resulting two classes
of points are shown in red  `+'  and blue `$ \circ $' respectively.
\lbfgsb succeeds in clustering the points as desired,
while both RatioCut and NCut failed in these two cases.}
\label{fig:2d-cluster}
\end{figure}

To show that the proposed SDP methods have better solution quality than spectral methods
we compare the graph-bisection results of RatioCut~\cite{hagen1992new}, Normalized Cut (NCut)~\cite{Shi2000normalized} and \lbfgsb %
on two artificial 2-dimensional datasets.

As shown in Fig.~\ref{fig:2d-cluster}, the first data set (the first row) contains two sets of points with different densities,
and the second set contains an outlier.
RatioCut and NCut fail to offer satisfactory results on both of the
data sets, possibly due to the poor approximation of spectral relaxation.
In contrast, our \lbfgsb achieves desired results on these data sets.

Secondly, to demonstrate the impact of the parameter~\parat,
we test \lbfgsb and \smooth on a random graph with \parat ranging from $10^2$ to $10^4$
({%
$\bA$ and $\bB_i$ in \eqref{eq:psdp0} are scaled such that $\lVert \bA \rVert_F^2 = \lVert \bB \rVert_F^2 = 1$}).
The graph is generated with $1000$ vertices and all possible edges are assigned a non-zero weight uniformly sampled from $(0,1]$.
As the resulting affinity matrices are dense, the DSYEVR routine in LAPACK package is used for eigen-decomposition.
In Fig.~\ref{fig:graph_bisection_t}, we show the upper-bounds, lower-bounds, number of iterations and time achieved by \lbfgsb and \smooth,
\wrt different values of {\parat}.
There are several observations:
$i$) With the increase of \parat, upper-bounds become smaller and lower-bounds become larger,
which implies a tighter relaxation.
$ii$) Both \lbfgsb and \smooth take more iterations to converge when $\gamma$ is larger.
$iii$) \smooth uses fewer iterations than \lbfgsb.
The above observations coincide with the analysis in Section~\ref{sec:analysis}.
Using a larger parameter \parat yields better solution quality, but at the cost of slower convergence speed.
{%
The choice of a good \parat is data dependant.
To reduce the difficulty of the choice of $\gamma$, the matrices $\bA$ and $\bB_i$ of Equation~\eqref{eq:psdp0}
are scaled such that the Frobenius norm is $1$ in the following experiments.
}

Thirdly, experiments are performed to evaluate another two factors affecting the speed of our methods:
the sparsity of the affinity matrix $\bW$ and the matrix size $n$.
The numerical results corresponding to dense and sparse affinity matrices
are shown in Table~\ref{tab:graph_bisection_dense} and Table~\ref{tab:graph_bisection_sparse} respectively.
The sparse affinity matrices are generated from random graphs with $8$-neighbour connection.
In these experiments, the size of matrix $\bW$ is varied from $200$ to $5000$.
ARPACK is used by \lbfgsb for partial eigen-decomposition of sparse problems,
and DSYEVR is used for other cases.
For both \lbfgsb and \smooth, the number of iterations does not grow significantly with the increase of $n$.
However, the running time is still correlated with $n$,
since an eigen-decompostion of an $n \times n$ matrix needs to be computed at each iteration for both of our methods.
We  also find that the second-order method \smooth
uses significantly fewer iterations than the first-order method \lbfgsb.
For dense affinity matrices, \smooth runs consistently faster than \lbfgsb.
In contrast for sparse affinity matrices, \smooth is only faster than \lbfgsb
on problems of size up to $n \geq 2000$.
That is because the Lanczos method used by \lbfgsb (for partial eigen-decompostion)
scales much better for large sparse matrices than the standard factorization method (DSYEVR) used by \smooth (for full eigen-decomposition).
The upper-/lower-bounds yielded by our methods are similar to those of the interior-point methods.
Meanwhile, NCut and RatioCut run much faster than other methods, but offer significantly worse upper-bounds.

{ %
Finally, we evaluate \lbfgsb on a large dense graph with $10000$ nodes.
The speed performance is compared on a single CPU core (using DSYEVR function of LAPACK as eigensolver)
and a hybrid CPU+GPU workstation (using the DSYEVDX$\_2$STAGE function of MAGMA as eigensolver).
The results are shown in Table~\ref{tab:graph_bisection_large} and we can see that the parallelization brings a $10$-fold speedup over running on a single CPU core.
The lower-/upper-bounds are almost identical as there is no difference apart from the implementation of eigen-decompostion.
}

\begin{table*}[t]
\begin{minipage}[t]{0.74\textwidth}
  \centering
  \scriptsize
  \begin{tabular}{l|l|ccccc|cc}
  \hline
& & & & & & & &\\ [-2ex]
     $n$, $m$  & Methods  & \lbfgsb  & \smooth & SeDuMi & SDPT3 & MOSEK & NCut & RatioCut\\
  \hline
  \hline
\multirow{3}{*}{\begin{tabular}{c} $200$, \\ $201$ \end{tabular}}
& & & & & & & &\\ [-2ex]
& Time/Iters
& $0.7$s/$67.7$	& $\mathbf{0.6}\bf{s}$/$\mathbf{11.0}$	& $10.4$s	& $7.0$s	& $5.5$s	& $0.2$s	& $0.2$s	\\
& Upper-bound
& $1.03$	& $1.04$	& $1.04$	& $1.03$	& $1.04$	& $1.82$	& $4.61$	\\
& Lower-bound
& $-0.63$	& $-0.63$	& $-0.58$	& $-0.58$	& $-0.58$	& \NA	& \NA	\\
& & & & & & & &\\ [-2ex]
\hline\multirow{3}{*}{\begin{tabular}{c} $500$, \\ $501$ \end{tabular}}
& & & & & & & &\\ [-2ex]
& Time/Iters
& $1.9$s/$43.2$	& $\mathbf{1.8}\bf{s}$/$\mathbf{9.7}$	& $01$m$21$s	& $33.9$s	& $36.0$s	& $0.3$s	& $0.4$s	\\
& Upper-bound
& $2.94$	& $2.96$	& $2.93$	& $2.92$	& $2.93$	& $4.01$	& $9.23$	\\
& Lower-bound
& $-0.31$	& $-0.31$	& $-0.20$	& $-0.20$	& $-0.20$	& \NA	& \NA	\\
& & & & & & & &\\ [-2ex]
\hline\multirow{3}{*}{\begin{tabular}{c} $1000$, \\ $1001$ \end{tabular}}
& & & & & & & &\\ [-2ex]
& Time/Iters
& $22.6$s/$39.9$	& $\mathbf{13.0}\bf{s}$/$\mathbf{9.0}$	& $08$m$21$s	& \NAA %
                                                                                              & $02$m$36$s	& $0.5$s	& $0.9$s	\\
& Upper-bound
& $5.06$	& $5.10$	& $5.07$	& \NAA %
                                                                & $5.04$	& $6.10$	& $13.28$	\\
& Lower-bound
& $-0.19$	& $-0.19$	& $0.02$	& \NAA %
                                                                & $0.02$	& \NA	& \NA	\\
& & & & & & & &\\ [-2ex]
\hline\multirow{3}{*}{\begin{tabular}{c} $2000$, \\ $2001$ \end{tabular}}
& & & & & & & &\\ [-2ex]
& Time/Iters
& $01$m$54$s/$34.9$	& $\mathbf{54.3}\bf{s}$/$\mathbf{9.0}$	& $55$m$45$s	& \NAA %
                                                                                               & $22$m$25$s	& $2.1$s	& $2.9$s	\\
& Upper-bound
& $8.02$	& $7.99$	& $7.94$	& \NAA %
                                                                & $7.95$	& $9.00$	& $20.85$	\\
& Lower-bound
& $-0.18$	& $-0.18$	& $0.21$	& \NAA %
                                                                & $0.21$	& \NA	& \NA	\\
& & & & & & & &\\ [-2ex]
\hline\multirow{3}{*}{\begin{tabular}{c} $5000$, \\ $5001$ \end{tabular}}
& & & & & & & &\\ [-2ex]
& Time/Iters
& $20$m$39$s/$27.1$	& $\mathbf{11}$\bf{m}$\mathbf{05}$\bf{s}/$\mathbf{8.1}$	& $14$h$55$m	& \NAA %
                                                                                                              & $04$h$40$m	& $24.2$s	& $15.4$s	\\
& Upper-bound
& $13.89$	& $13.87$	& $13.78$	& \NAA %
                                                                & $15.60$	& $14.91$	& $33.46$	\\
& Lower-bound
& $-0.32$	& $-0.32$	& $0.51$	& \NAA %
                                                                & $2.66$	& \NA	& \NA	\\
\hline
\end{tabular}
\end{minipage}\hfill
\begin{minipage}[t]{0.23\textwidth}
\vspace{-27mm}
  \caption{Numerical results for graph bisection with dense affinity matrices.
           All the results are the average over $10$ random graphs.
           SDP based methods (the left five columns) achieve better upper-bounds than spectral methods (NCut and RatioCut).
           \smooth uses fewer iterations than \lbfgsb and achieves the fastest speed of the five SDP based methods.
           \NAA~ denotes the cases where SDPT3 fails to output feasible solutions.
          }
\label{tab:graph_bisection_dense}
\end{minipage}
\end{table*}

\begin{table*}[t]
\begin{minipage}[t]{0.74\textwidth}
  \centering
  \scriptsize
  \begin{tabular}{l|l|ccccc|cc}
  \hline
& & & & & & & &\\ [-2ex]
     $n$, $m$  & Methods  & \lbfgsb  & \smooth & SeDuMi & SDPT3 & MOSEK & NCut & RatioCut\\
  \hline
  \hline
\multirow{3}{*}{\begin{tabular}{c} $200$, \\ $201$ \end{tabular}}
& & & & & & & &\\ [-2ex]
& Time/Iters
& $6.0$s/$76.5$	& $\mathbf{0.6}\bf{s}$/$\mathbf{11.0}$	& $9.8$s	& $7.3$s	& $3.5$s	& $0.1$s	& $0.1$s	\\
& Upper-bound
& $-0.57$	& $-0.57$	& $-0.57$	& $-0.57$	& $-0.57$	& $8.38$	& $-0.48$	\\
& Lower-bound
& $-1.32$	& $-1.32$	& $-1.28$	& $-1.28$	& $-1.28$	& \NA	& \NA	\\
& & & & & & & &\\ [-2ex]
\hline\multirow{3}{*}{\begin{tabular}{c} $500$, \\ $501$ \end{tabular}}
& & & & & & & &\\ [-2ex]
& Time/Iters
& $12.3$s/$65.3$	& $\mathbf{3.1}\bf{s}$/$\mathbf{11.0}$	& $01$m$36$s	& $54.0$s	& $40.5$s	& $0.1$s	& $0.2$s	\\
& Upper-bound
& $0.65$	& $0.64$	& $0.65$	& $0.64$	& $0.64$	& $19.20$	& $0.73$	\\
& Lower-bound
& $-0.41$	& $-0.41$	& $-0.30$	& $-0.30$	& $-0.30$	& \NA	& \NA	\\
& & & & & & & &\\ [-2ex]
\hline\multirow{3}{*}{\begin{tabular}{c} $1000$, \\ $1001$ \end{tabular}}
& & & & & & & &\\ [-2ex]
& Time/Iters
& $28.5$s/$73.3$	& $\mathbf{24.0}\bf{s}$/$\mathbf{11.8}$	& $11$m$36$s	& \NAA %
                                                                                              & $02$m$43$s	& $0.1$s	& $0.3$s	\\
& Upper-bound
& $1.35$	& $1.35$	& $1.35$	& \NAA %
                                                                & $1.34$	& $28.32$	& $1.41$	\\
& Lower-bound
& $0.25$	& $0.25$	& $0.46$	& \NAA %
                                                                & $0.46$	& \NA	& \NA	\\
& & & & & & & &\\ [-2ex]
\hline\multirow{3}{*}{\begin{tabular}{c} $2000$, \\ $2001$ \end{tabular}}
& & & & & & & &\\ [-2ex]
& Time/Iters
& $\mathbf{01}$\bf{m}$\mathbf{12}$\bf{s}/$72.5$	& $02$m$38$s/$\mathbf{12.5}$	& $42$m$19$s	& \NAA %
                                                                                                                                & $23$m$12$s	& $0.3$s	& $0.5$s	\\
& Upper-bound
& $2.43$	& $2.43$	& $2.41$	& \NAA %
                                                                & $2.41$	& $41.18$	& $2.51$	\\
& Lower-bound
& $1.01$	& $1.01$	& $1.40$	& \NAA %
                                                                & $1.40$	& \NA	& \NA	\\
& & & & & & & &\\ [-2ex]
\hline\multirow{3}{*}{\begin{tabular}{c} $5000$, \\ $5001$ \end{tabular}}
& & & & & & & &\\ [-2ex]
& Time/Iters
& $\mathbf{04}$\bf{m}$\mathbf{43}$\bf{s}/$90.3$	& $26$m$19$s/$\mathbf{13.2}$	& $15$h$48$m	& \NAA %
                & $05$h$18$m	& $1.2$s	& $0.9$s	\\
& Upper-bound
& $4.00$	& $3.99$	& $3.95$	& \NAA %
                                                                & $3.95$	& $64.98$	& $4.02$	\\
& Lower-bound
& $2.24$	& $2.24$	& $3.12$	& \NAA %
                                                                & $3.12$	& \NA	& \NA	\\
\hline\end{tabular}
\end{minipage}\hfill
\begin{minipage}[t]{0.23\textwidth}
\vspace{-27mm}
\centering
\caption{Numerical results for graph bisection with sparse affinity matrices.
         All the results are the average over $10$ random graphs.
         The upper-bounds achieved by SDP based methods are close to each other
         and significantly better than spectral methods (NCut and RatioCut).
         The number of iterations for \smooth is much less than \lbfgsb.
         For problems with $n \leq 1000$, \smooth is faster than \lbfgsb.
         While for larger problems ($n \geq 2000$), \lbfgsb achieves faster speeds than \smooth.
         \NAA~ denotes the cases where SDPT3 fails to output feasible solutions.
}
\label{tab:graph_bisection_sparse}
\end{minipage}
\end{table*}

\begin{table}[t]
  \centering
  \scriptsize
{ %
  \begin{tabular}{l|cc}
  \hline
& & \\ [-2ex]
   & CPU  & CPU+GPU\\
  \hline
  \hline
& \\ [-2ex]
 Time/Iters &$3$h$8$m/$24.0$	& ${18}{m}$/$24.0$			\\
Upper-bound & $20.42$	& $20.36$			\\
Lower-bound & $-4.15$	& $-4.15$			\\
\hline\end{tabular}
}
\caption{{ %
Graph bisection on large dense graphs ($n = 10000, m = 10001$).
\lbfgsb is tested on $1$ core of Intel Xeon E$5$-$2680$ $2.7$GHz CPU ($20$MB cache) and a workstation with
$1$ Intel Xeon E$5$-$2670$ $2.30$GHz CPU ($8$ cores and $20$MB cache) and $1$ NVIDIA Tesla K$40$c GPU.
A $10$-fold speedup is achieved by using CPU+GPU compared with using CPU only.}
\label{tab:graph_bisection_large}}
\end{table}

\begin{figure}
\centering
\includegraphics[width=0.4\textwidth]{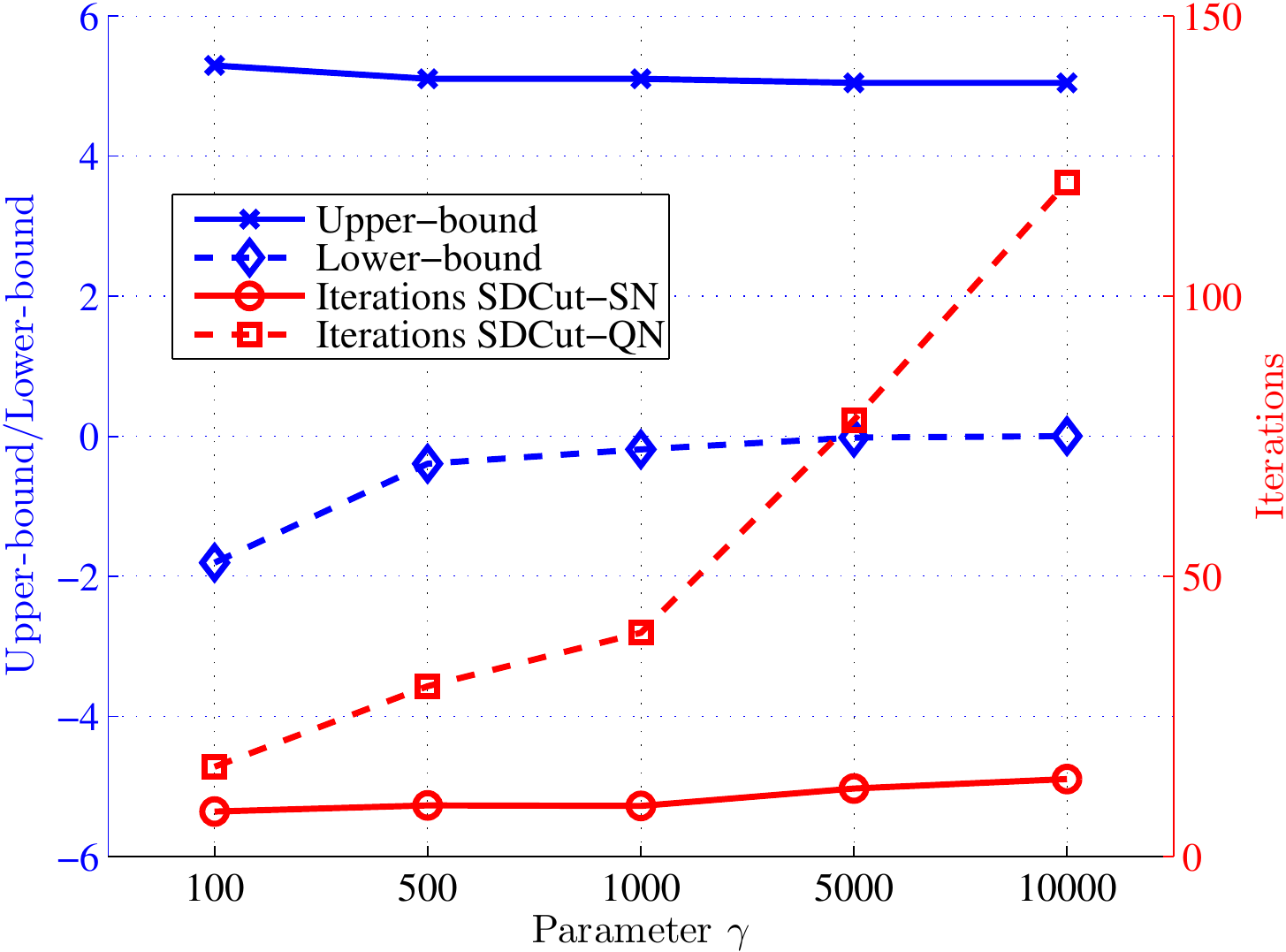} \\
\caption{Results for graph bisection with different values of the parameter $\gamma$.
The illustrated results are averaged over $10$ random graphs.
Upper-bounds and lower-bounds achieved by \lbfgsb are shown in this figure (those of \smooth is very similar and thus omitted).
The relaxation becomes tighter (\ie, upper-bounds and lower-bounds are closer) for larger $\gamma$.
The number of iterations for both \smooth and \lbfgsb grows with the increase of $\gamma$.
}
\label{fig:graph_bisection_t}
\end{figure}

\subsection{Constrained Image Segmentation}
\label{sec:segm}

We consider image segmentation with two types of quadratic constraints (\wrt $\bx$):
partial grouping constraints~\cite{Yu2004segmentation} and histogram constraints~\cite{gorelick2012segmentation}.
The affinity matrix $\bW$ is sparse, so
ARPACK is used by \lbfgsb for eigen-decomposition.

Besides interior-point SDP methods,
we also compare our methods with graph-cuts~\cite{kolmogorov2004energy,kolmogorov2007minimizing,rother2007optimizing} and two constrained spectral clustering method
proposed by Maji \etal~\cite{Maji2011biased} (referred to as BNCut) and Wang and Davidson~\cite{wang2010flexible} (referred to as SMQC).
BNCut and SMQC can encode {\em only} one quadratic constraint,
but it is difficult (if not impossible) to generalize them to multiple quadratic constraints.

\noindent{\bf Partial Grouping Constraints}
The corresponding BQP formulation is Equation~\eqref{equ:app_segm1} in Table~\ref{tab:formulation}.
{%
A feasible solution $\bx$ to \eqref{equ:app_segm1} can obtained from any random sample $\bz$ as follows:
\begin{align}
x_i = \left\{ \begin{array}{ll}
              \mathrm{sign}(z_i - \theta_f) & \mbox{if } (\bs_f)_i > 0, \\
              \mathrm{sign}(z_i - \theta_b) & \mbox{if } (\bs_b)_i > 0, \\
              \mathrm{sign}(z_i) & \mbox{otherwise},
              \end{array} \right. \label{eq:round_partial}
\end{align}
where $\theta_f$ and $\theta_b$ are chosen from $[\min(\{ z_i | (\bs_f)_i > 0 \}), +\infty)$ and $(-\infty, \max(\{ z_i | (\bs_b)_i > 0 \}]$ respectively.
Note that for any sample $\bz$, $\bx$ is feasible if $\theta_f = \min(\{ z_i | (\bs_f)_i > 0 \})$
and $\theta_b = \max(\{ z_i | (\bs_b)_i > 0 \})$.
}

Fig.~\ref{fig:img_segm} illustrates the result for image segmentation with partial grouping constraints on the Berkeley dataset~\cite{MartinFTM01}.
All the test images are over-segmented into about $760$ superpixels.
We find that BNCut did not accurately segment foreground,
as it only incorporates a single set of grouping pixels (foreground).
In contrast, our methods are able to accommodate multiple sets of grouping pixels and
segment the foreground more accurately.
In Table~\ref{tab:img_segm}, we compare the CPU time and the upper-bounds of \fastsdp, SeDuMi and SDPT3.
\lbfgsb achieves objective values similar to that of SeDuMi and SDPT3, yet is over $10$ times faster.

\begin{figure*}[t]
\centering
\begin{minipage}[c]{0.75\textwidth}
\vspace{-0.0cm}
\centering
\subfloat{
\centering
\begin{minipage}[c]{0.01\textwidth}
\begin{turn}{90} {\scriptsize Images} \end{turn}
\end{minipage}
\begin{minipage}[c]{0.99\textwidth}
\includegraphics[width=0.112\textwidth]{./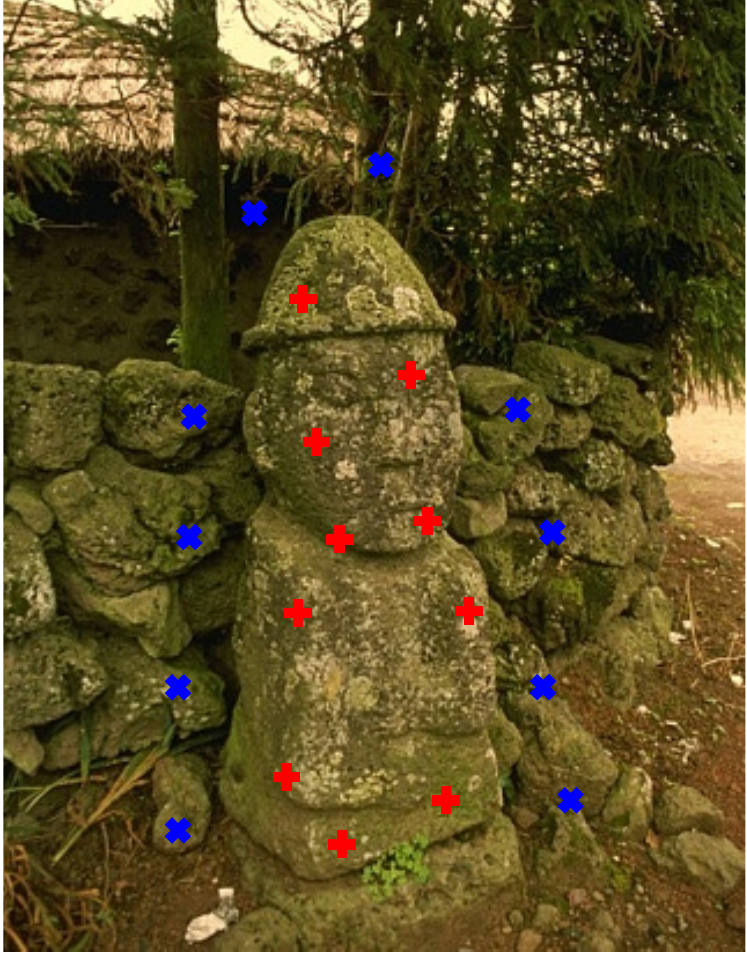}
\includegraphics[width=0.215\textwidth]{./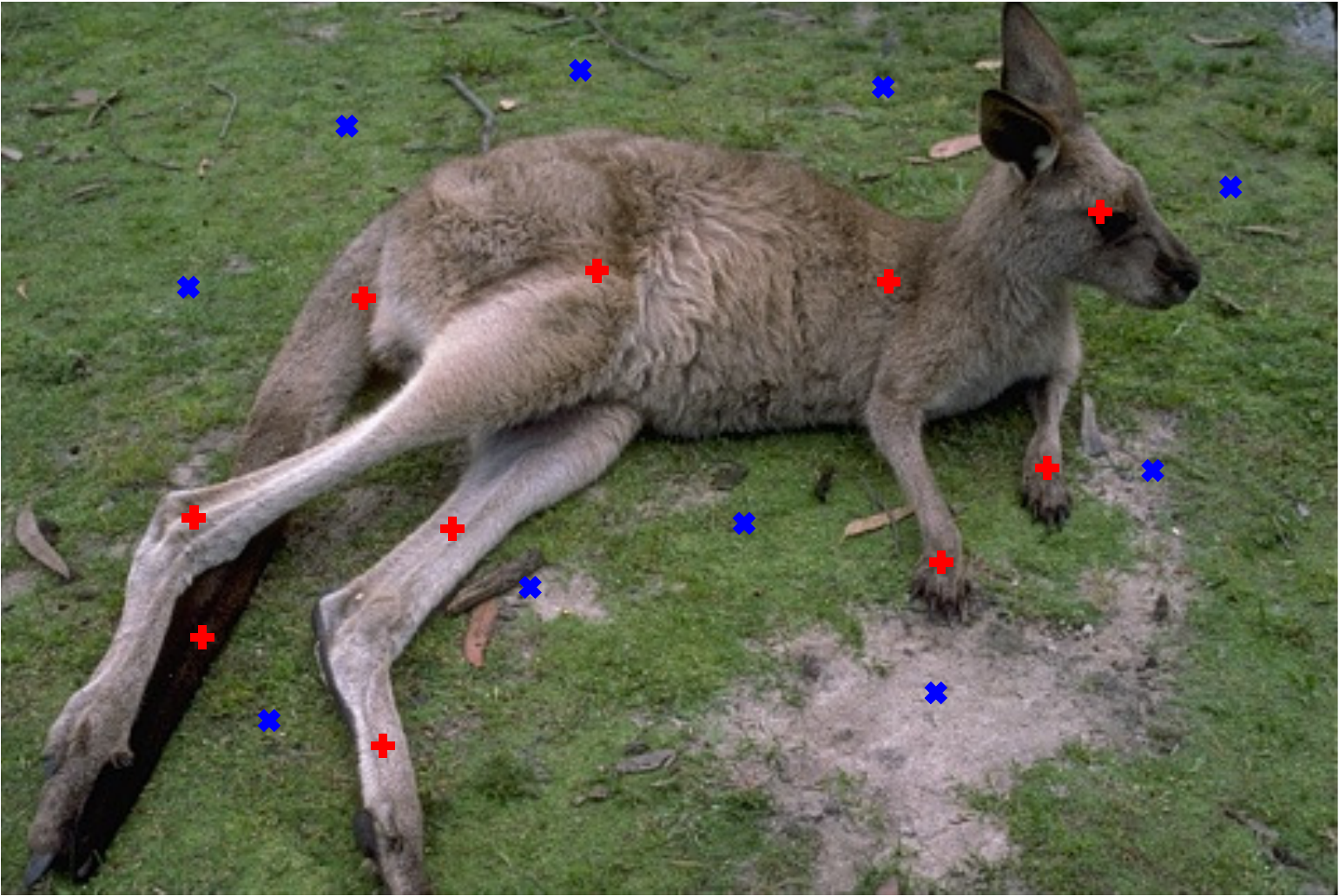}
\includegraphics[width=0.215\textwidth]{./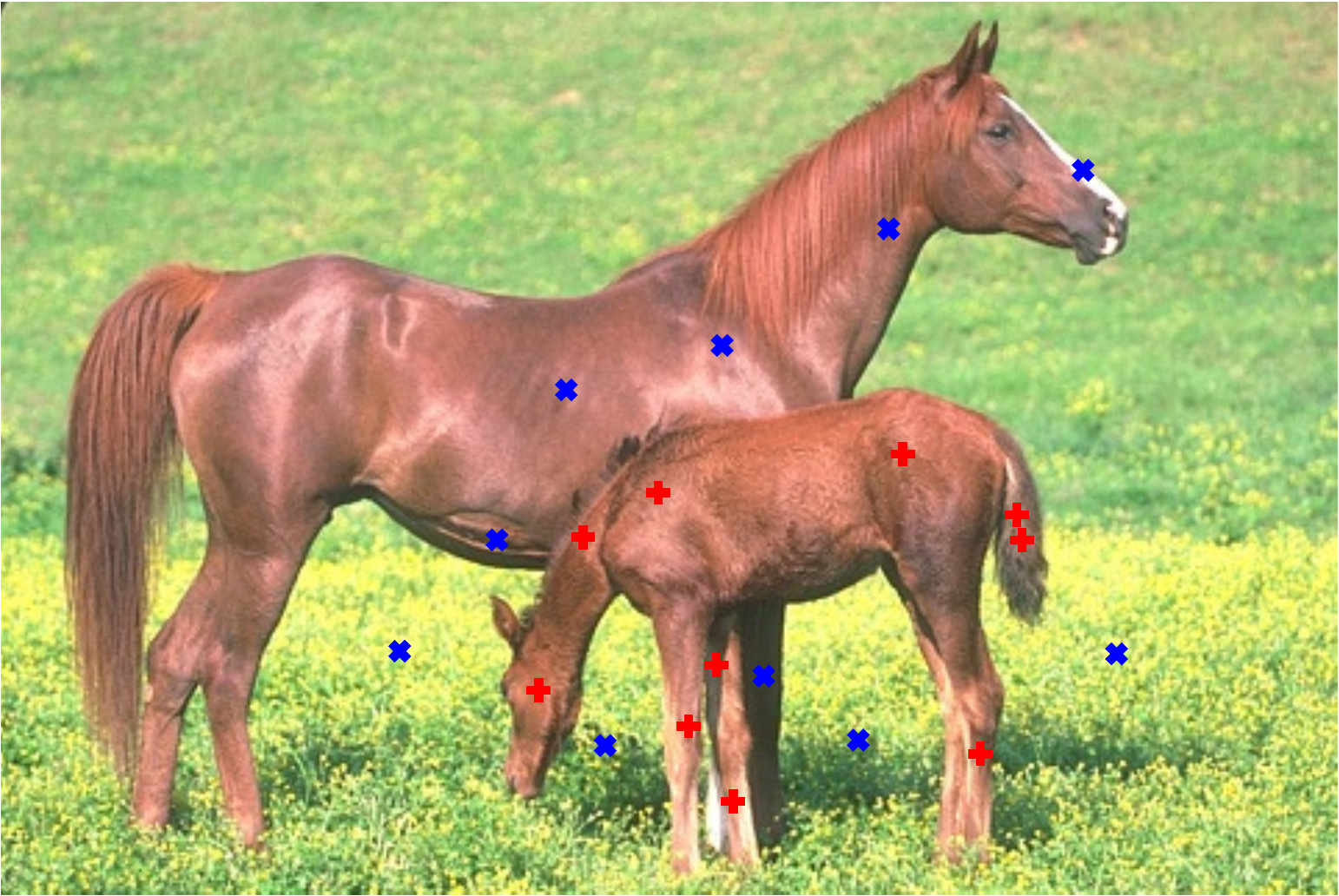}
\includegraphics[width=0.215\textwidth]{./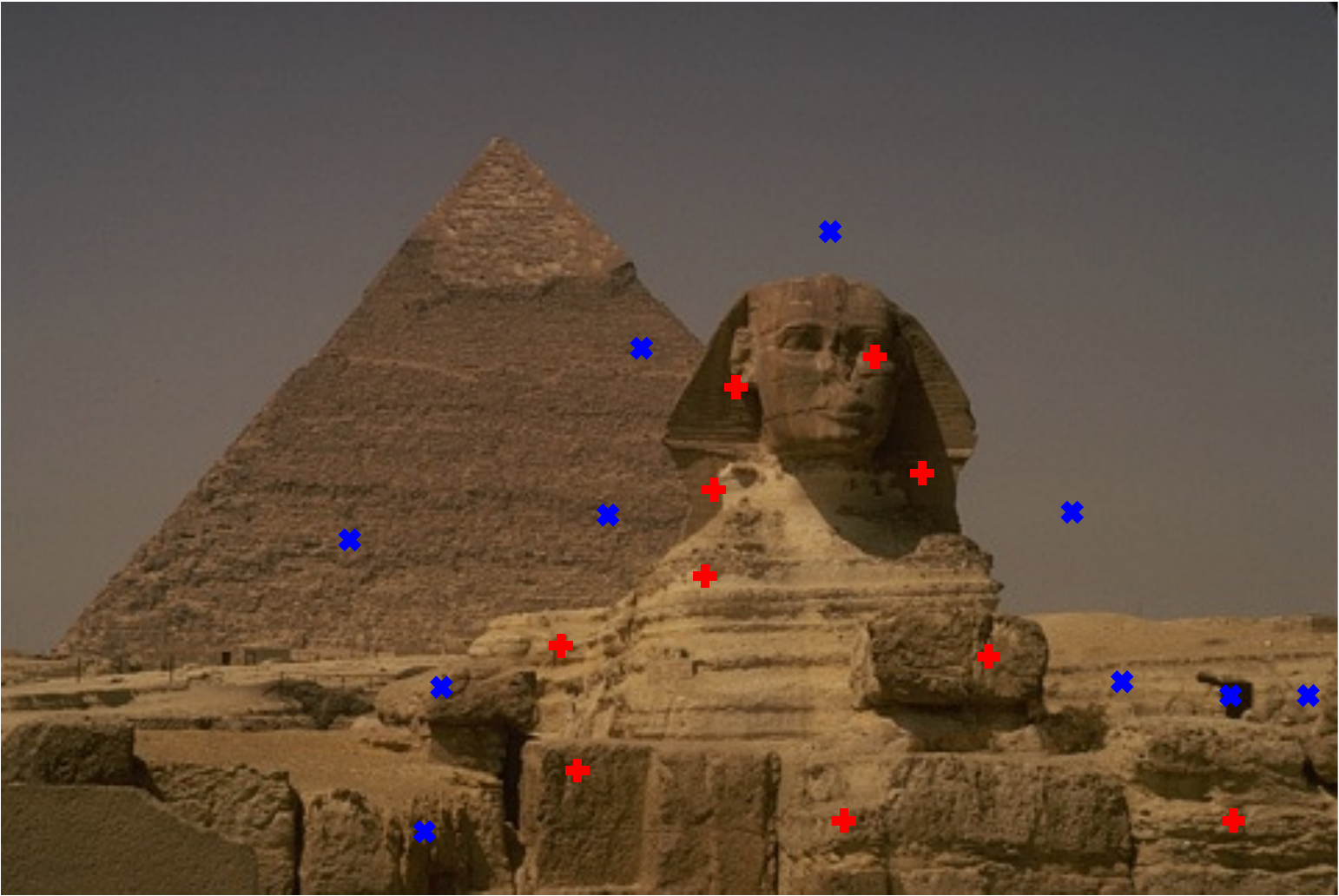}
\includegraphics[width=0.175\textwidth]{./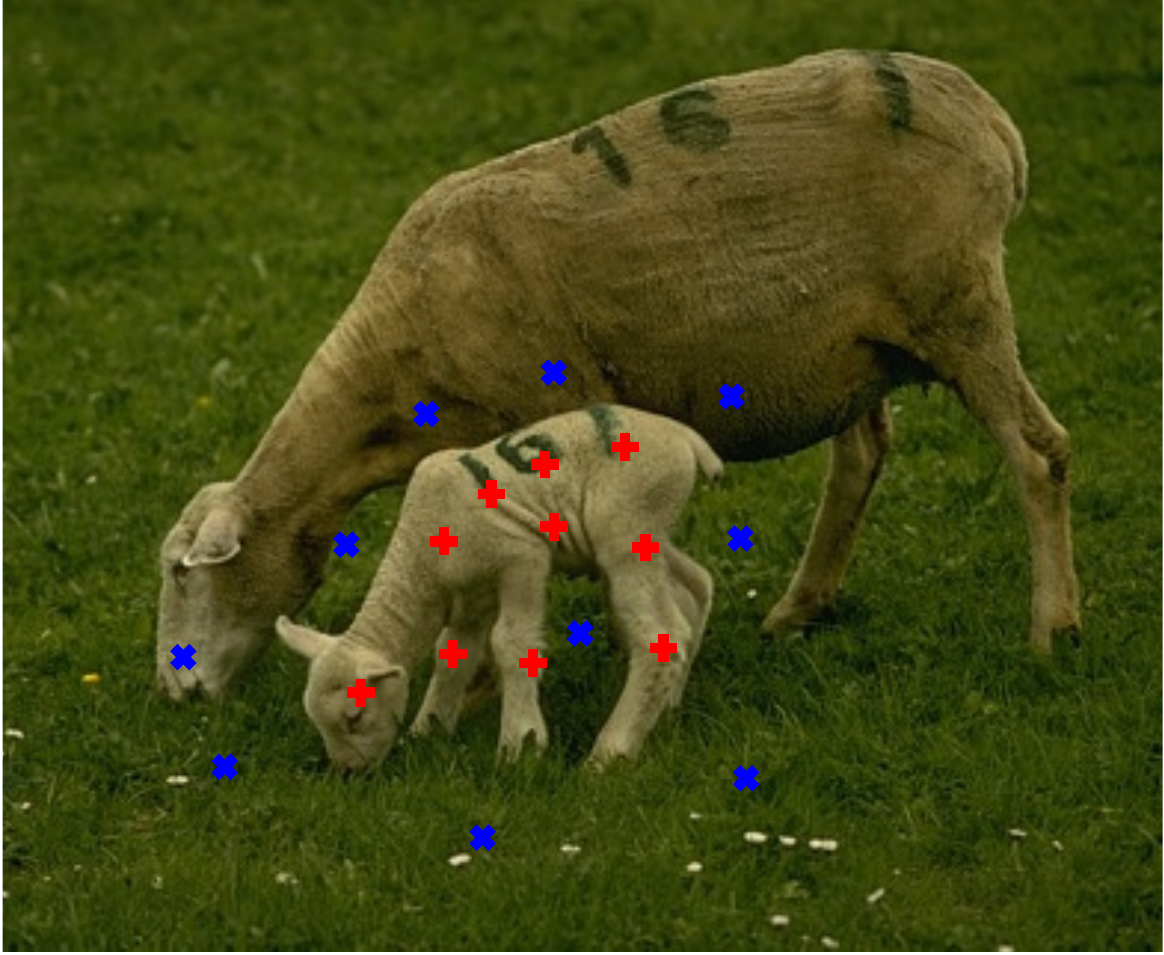}
\end{minipage}
}\\
\vspace{-0.3cm}
\subfloat{
\centering
\begin{minipage}[c]{0.01\textwidth}
\begin{turn}{90} {\scriptsize BNCut} \end{turn}
\end{minipage}
\begin{minipage}[c]{0.99\textwidth}
\includegraphics[width=0.112\textwidth]{./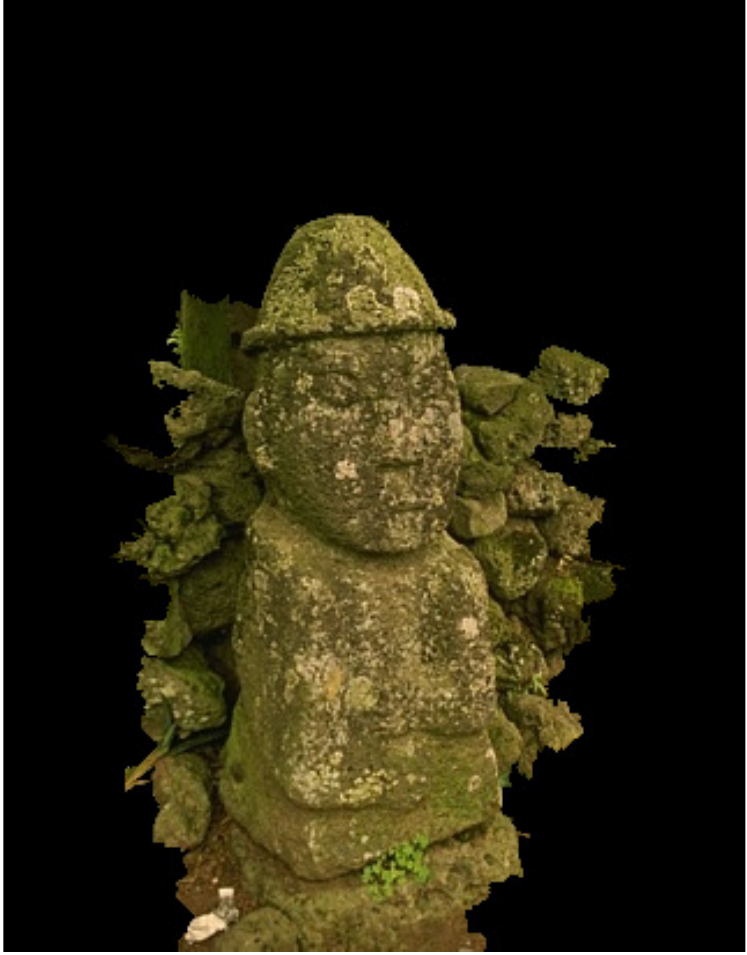}
\includegraphics[width=0.215\textwidth]{./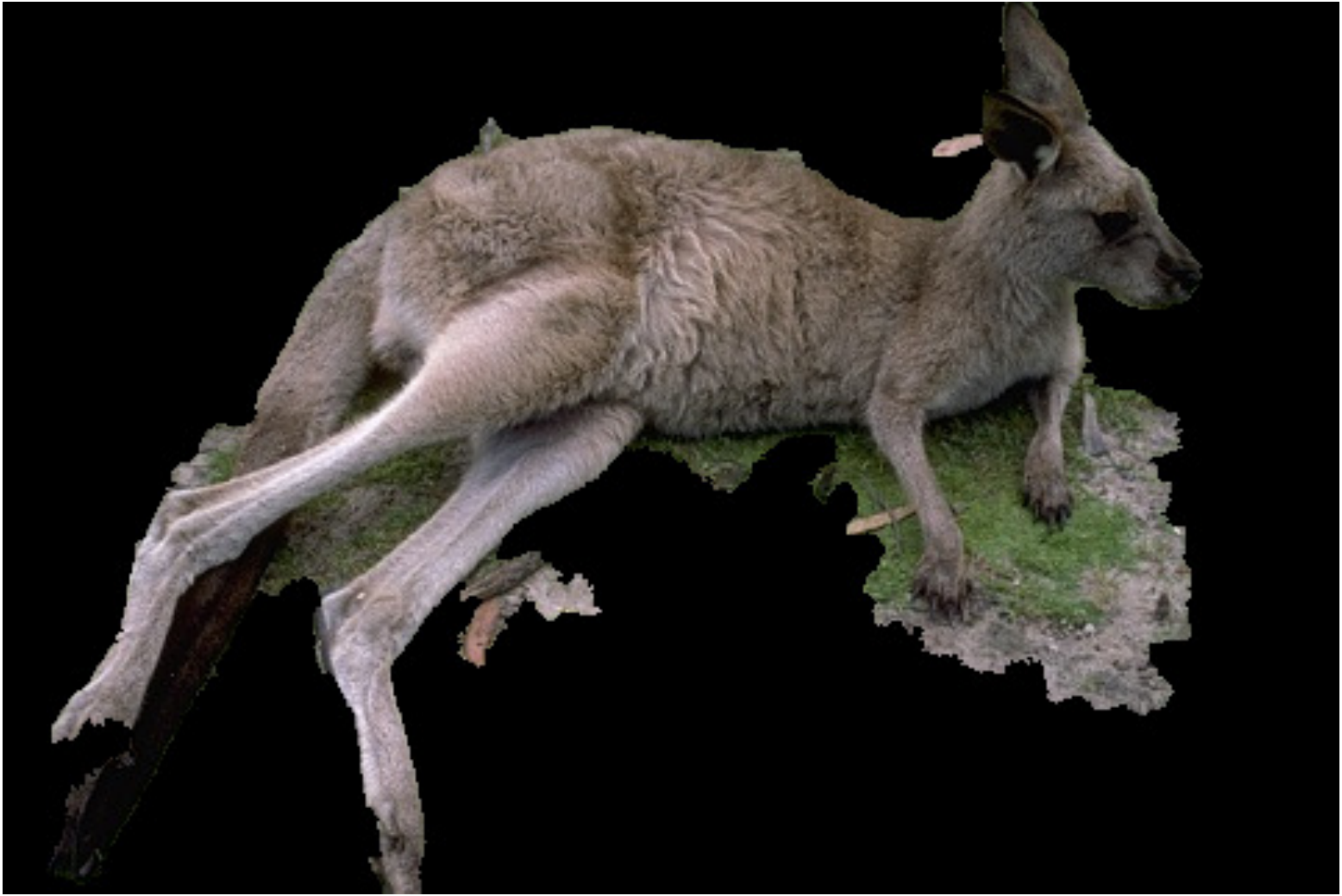}
\includegraphics[width=0.215\textwidth]{./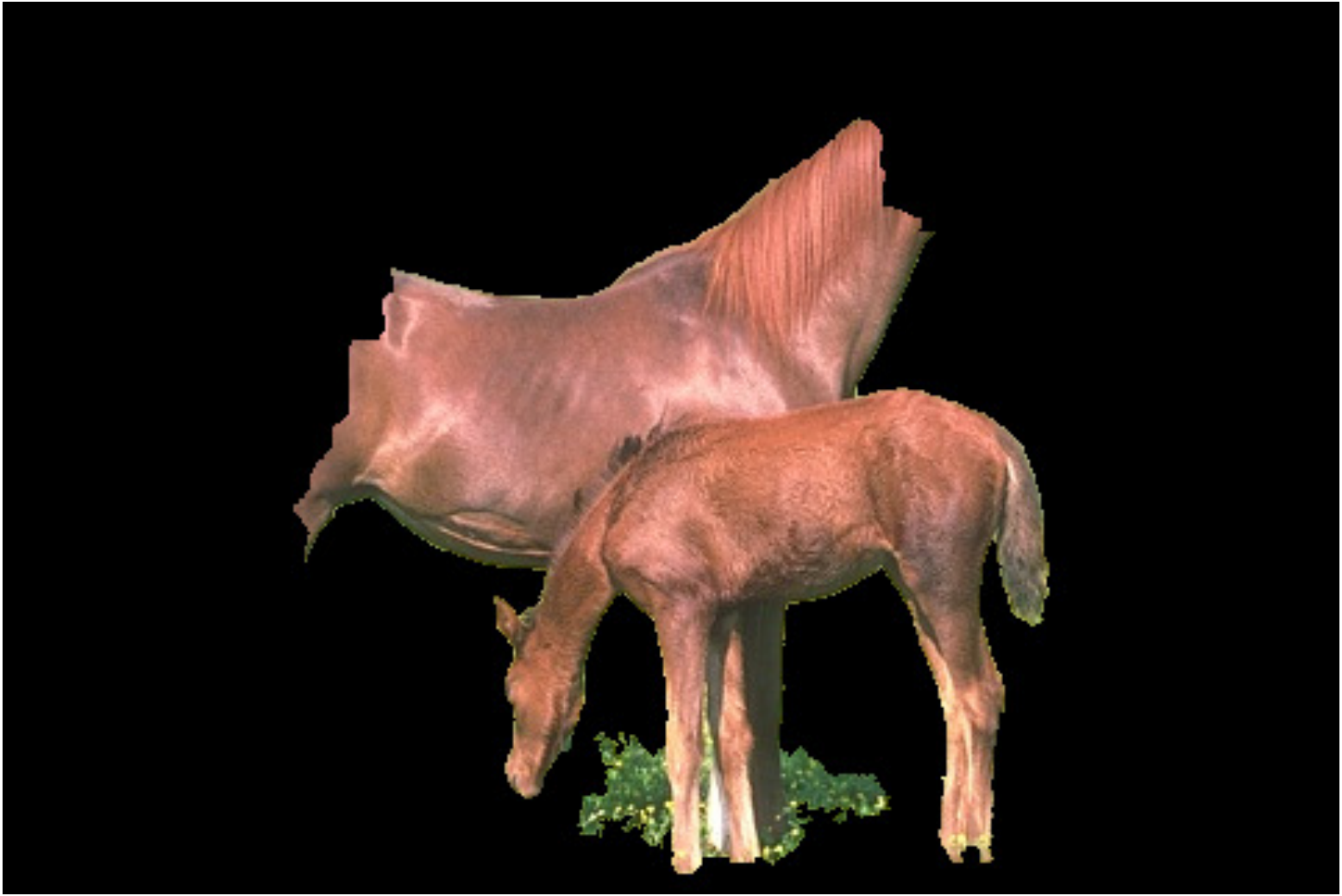}
\includegraphics[width=0.215\textwidth]{./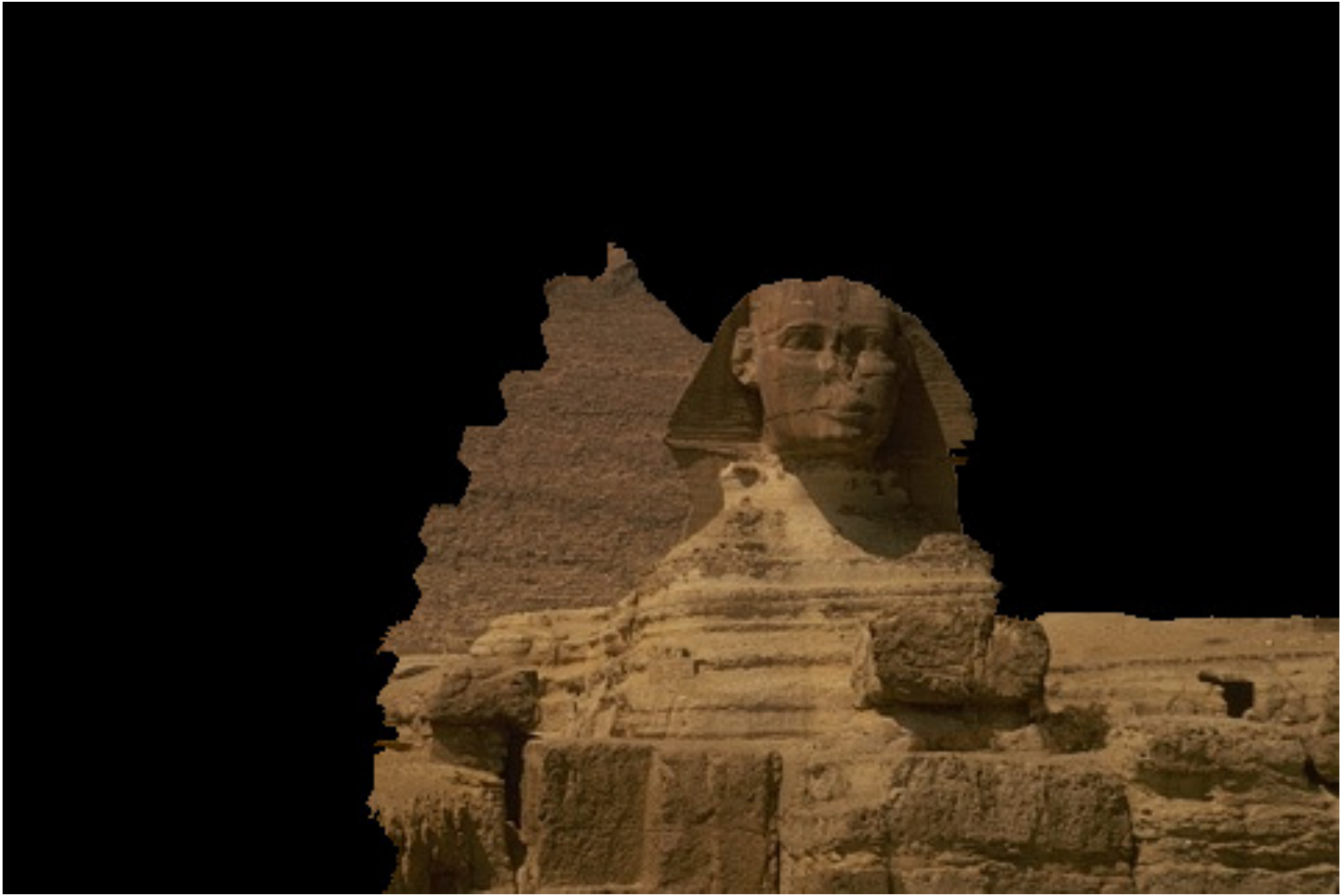}
\includegraphics[width=0.175\textwidth]{./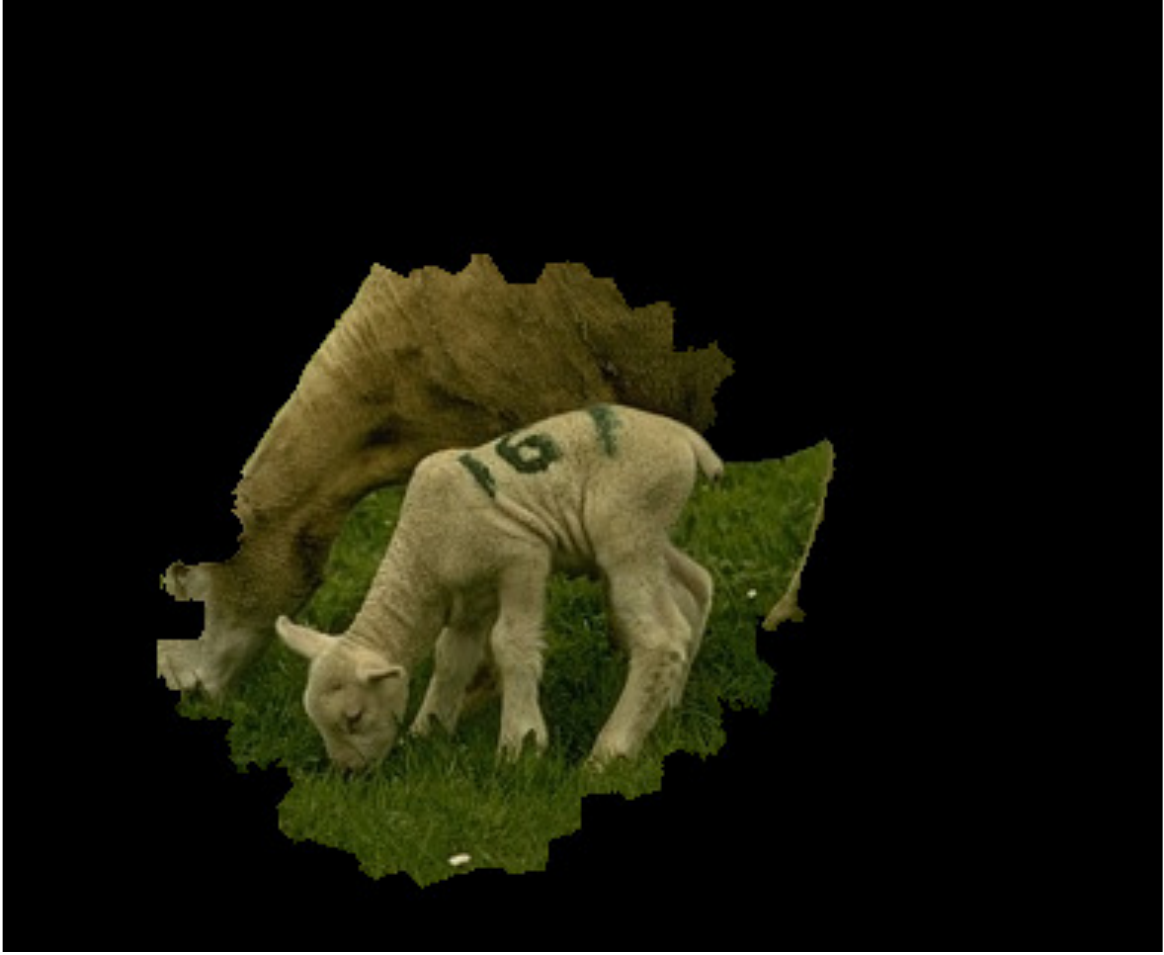}
\end{minipage}
}\\
\vspace{-0.3cm}
\subfloat{
\centering
\begin{minipage}[c]{0.01\textwidth}
\begin{turn}{90} {\scriptsize \lbfgsb} \end{turn}
\end{minipage}
\begin{minipage}[c]{0.99\textwidth}
\includegraphics[width=0.112\textwidth]{./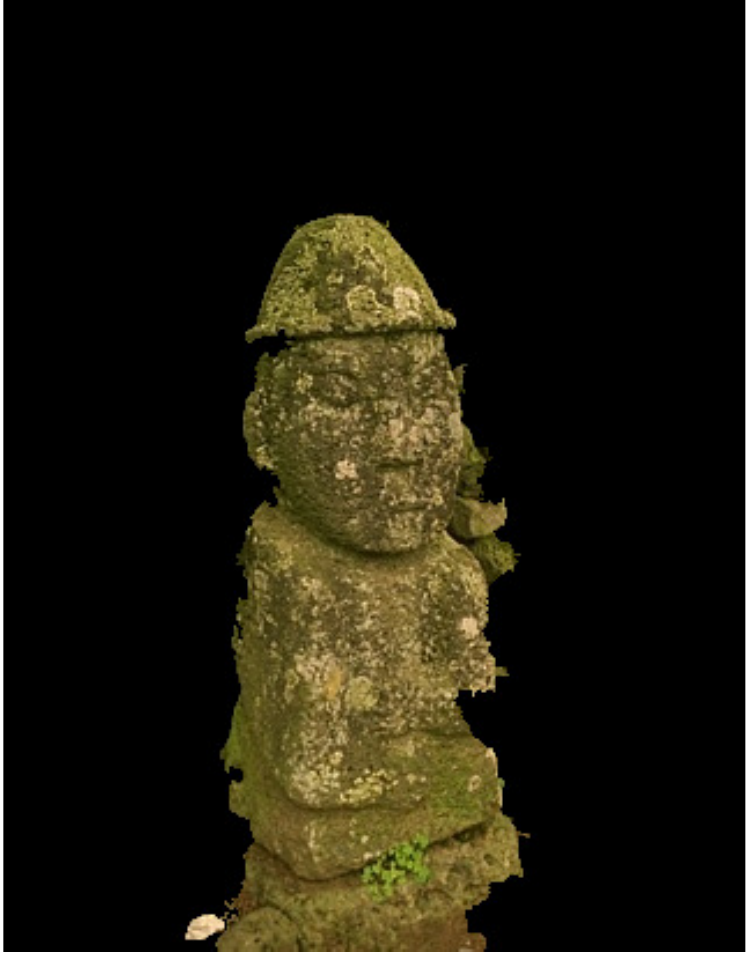}
\includegraphics[width=0.215\textwidth]{./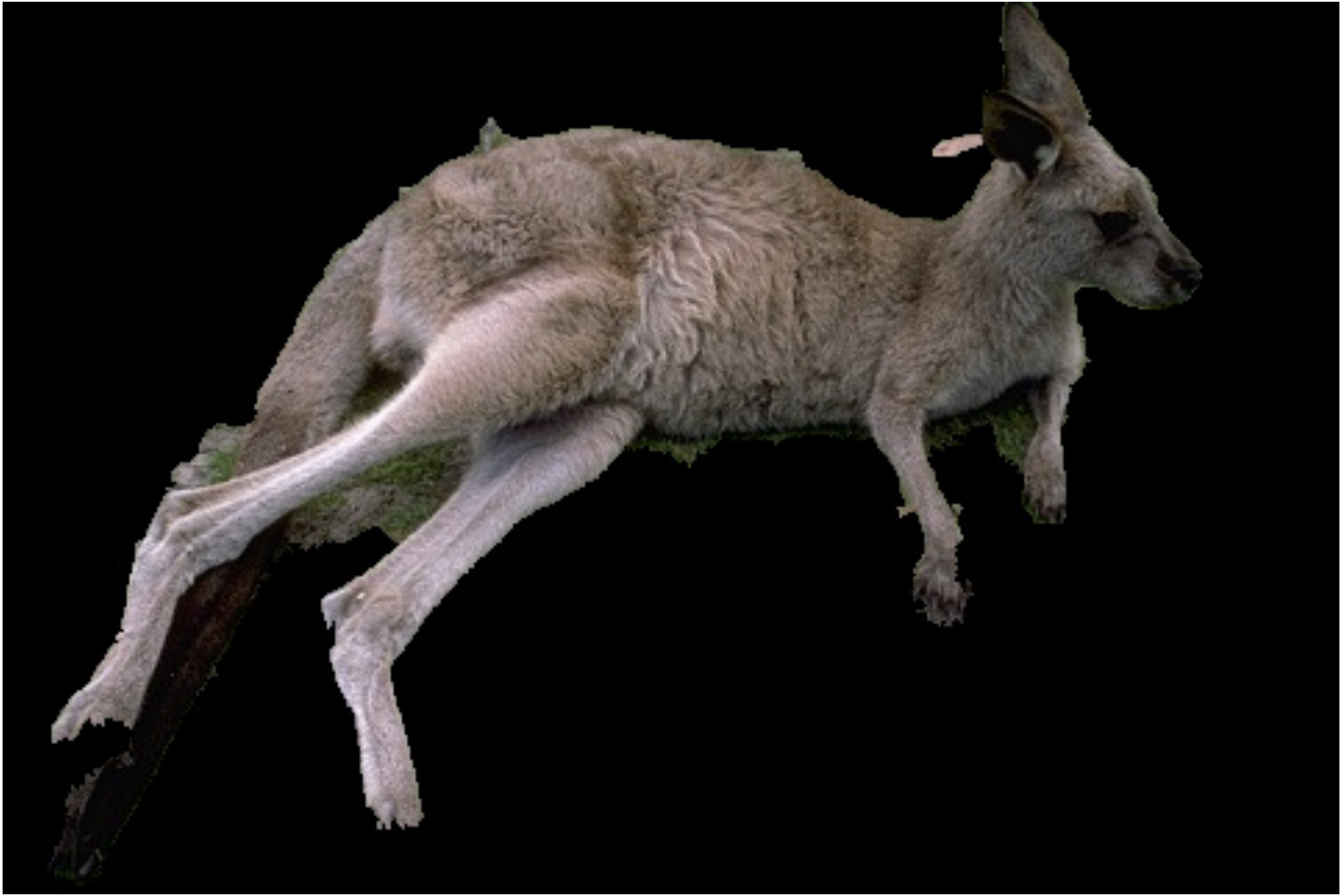}
\includegraphics[width=0.215\textwidth]{./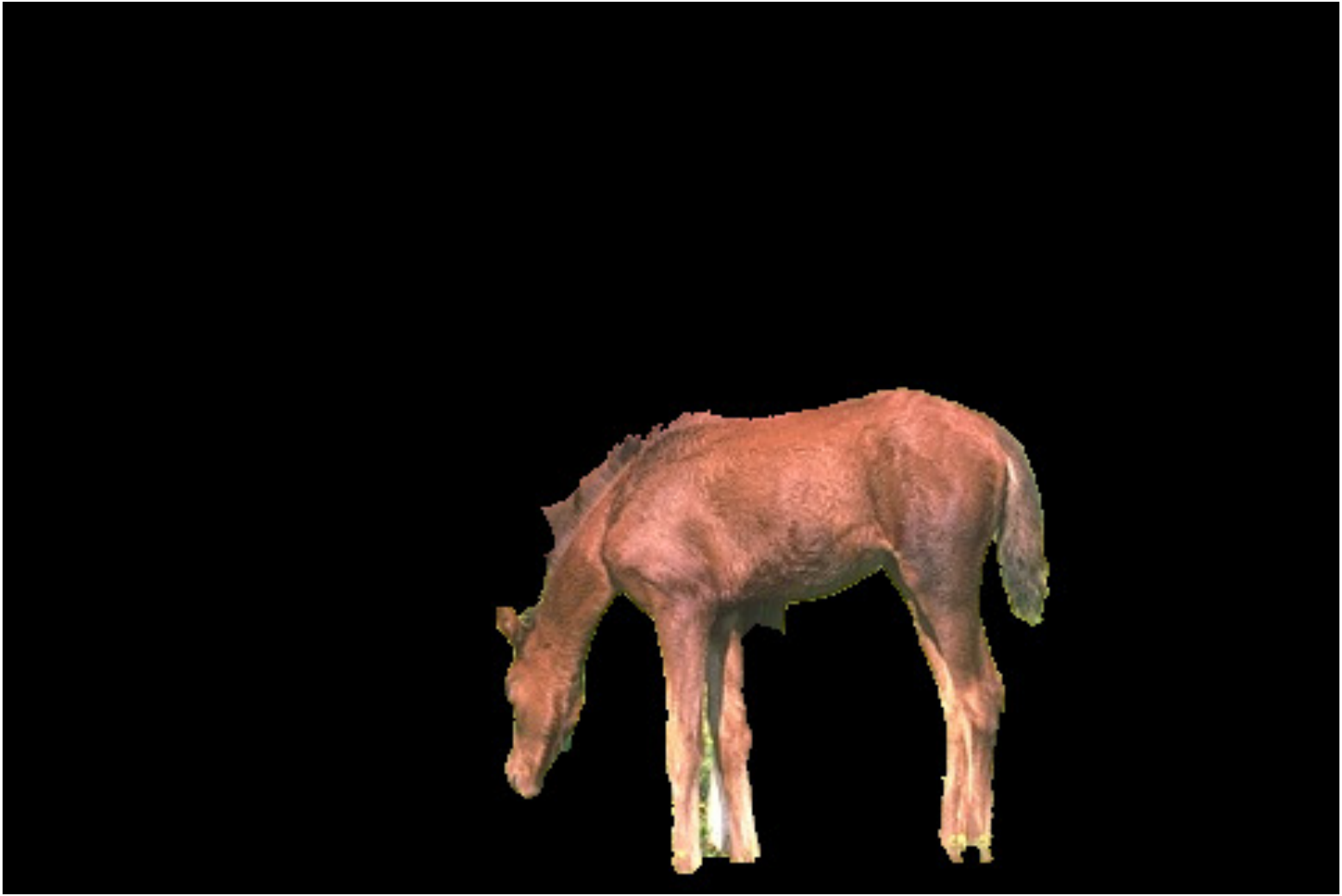}
\includegraphics[width=0.215\textwidth]{./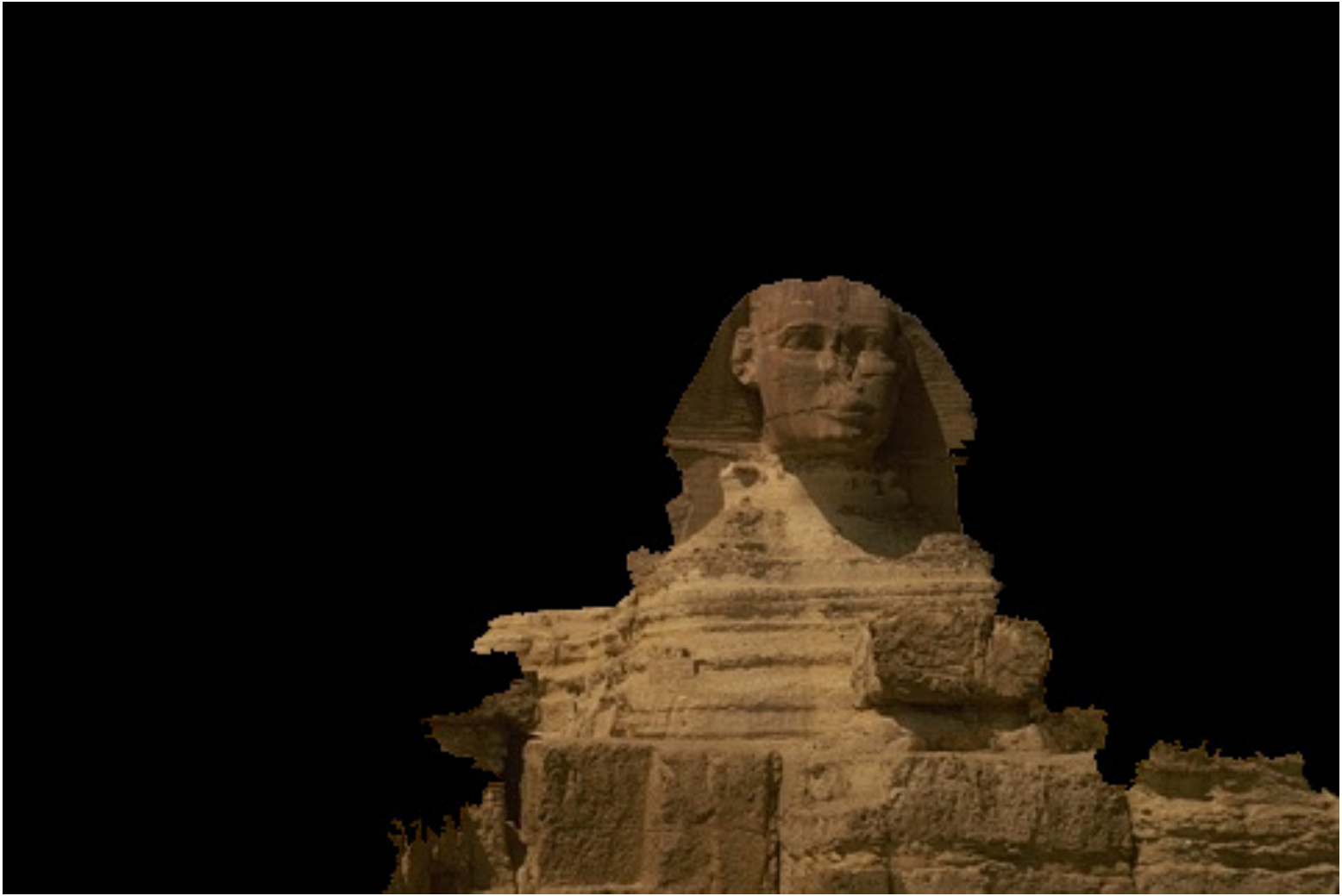}
\includegraphics[width=0.175\textwidth]{./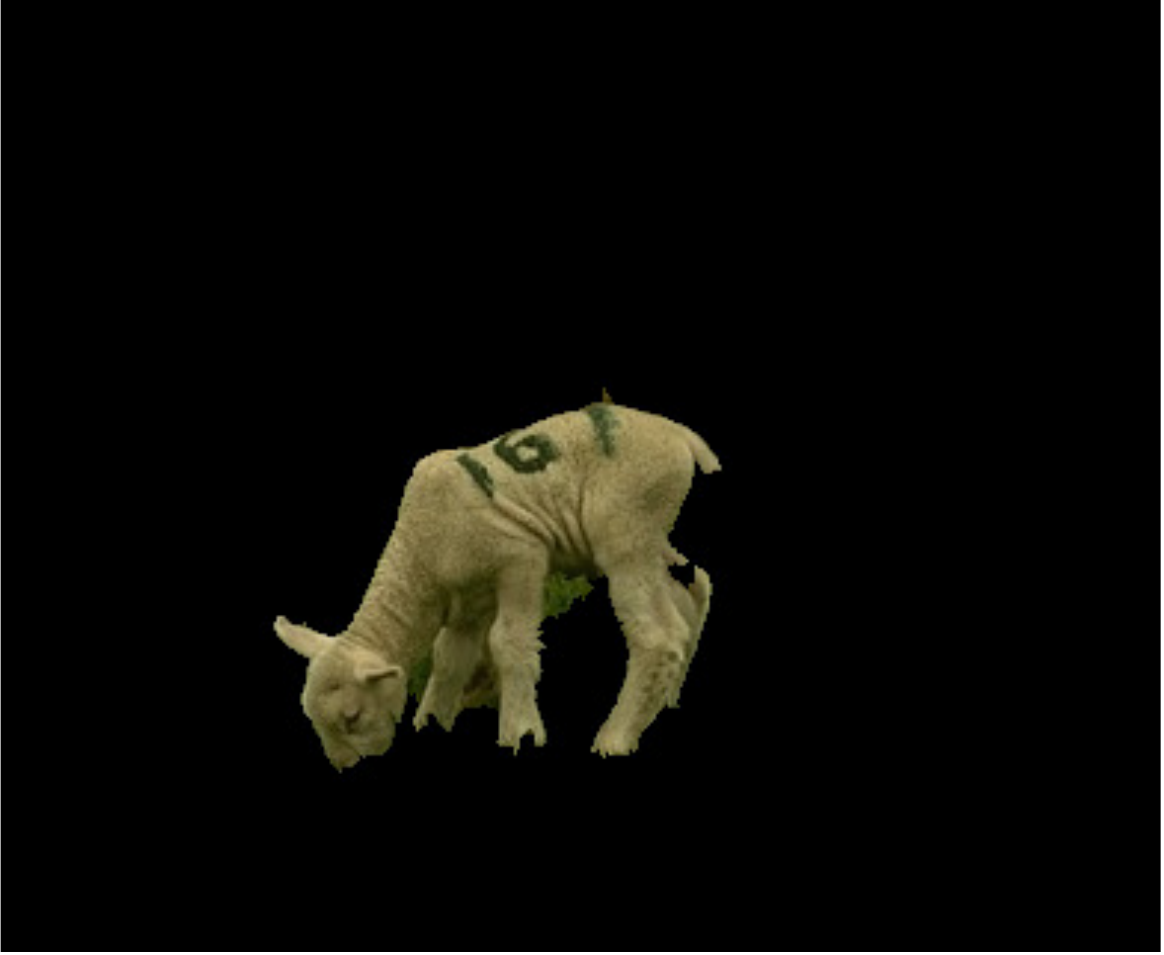}
\end{minipage}
}
\end{minipage}
\caption{Image segmentation with partial grouping constraints.
The top row shows the original images with $10$ labelled foreground (red markers) and $10$ background (blue markers) pixels.
\lbfgsb achieves significantly better results than BNCut.
The results of SeDuMi and SDPT3 are omitted, as they are similar to those of \lbfgsb.
}
\label{fig:img_segm}
\end{figure*}

\begin{table}[t]
  \centering
  \scriptsize
  \begin{tabular}{l|cccc}
  \hline
& & & &\\ [-2ex]
     Methods             & \lbfgsb & SeDuMi & SDPT3  \\
  \hline
  \hline
& & & &\\ [-2ex]
     Time                & $23.7$s   & $6$m$12$s    & $5$m$29$s   \\
     Upper-bound         & $-116.10$ & $-116.30$ & $-116.32$ \\
  \hline
  \end{tabular}
  \caption{Numerical results for image segmentation with partial grouping constraints.
           Time and upper-bound are the means over the five images in Fig.~\ref{fig:img_segm}.
           \fastsdp runs $10$ times faster than SeDuMi and SDPT3, and offers a similar upper-bound.
          } %
  \label{tab:img_segm}
\end{table}

\begin{figure*}[t]
\centering
\subfloat{
\begin{minipage}[c]{0.002\textwidth}
\begin{turn}{90} {\scriptsize Images} \end{turn}
\end{minipage}
\hspace{-0.2mm}
\begin{minipage}[c]{0.996\textwidth}
\includegraphics[width=0.113\textwidth]{./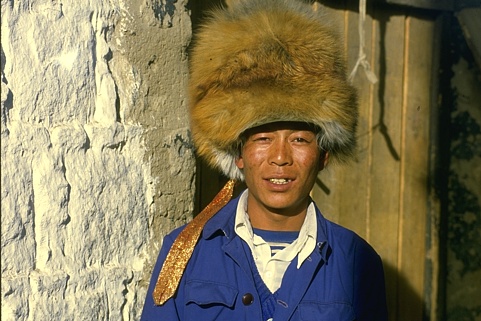}
\includegraphics[width=0.113\textwidth]{./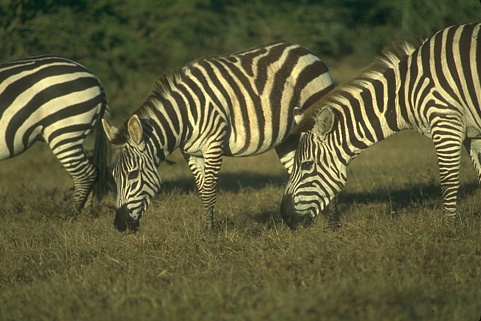}
\includegraphics[width=0.113\textwidth]{./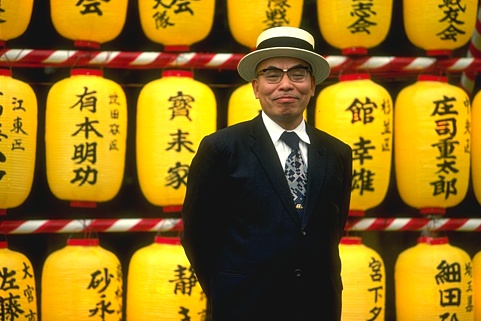}
\includegraphics[width=0.113\textwidth]{./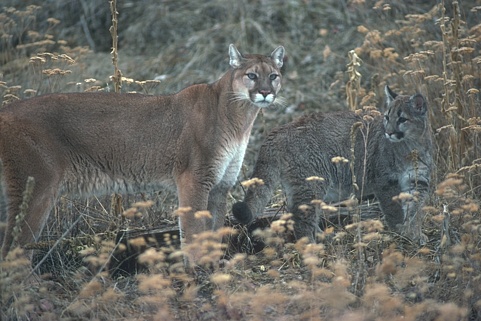}
\includegraphics[width=0.113\textwidth]{./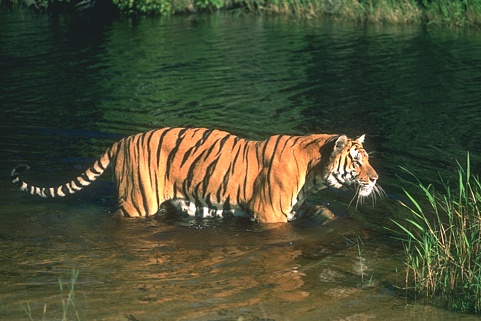}
\includegraphics[width=0.113\textwidth]{./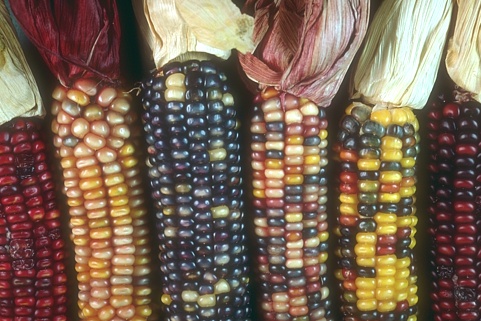}
\includegraphics[width=0.113\textwidth]{./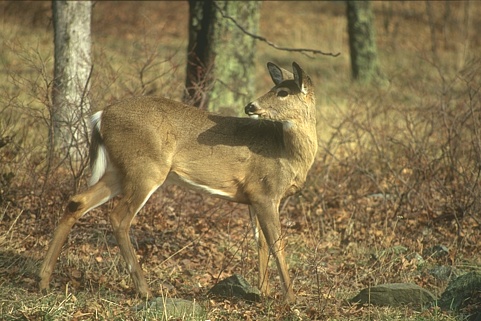}
\includegraphics[width=0.113\textwidth]{./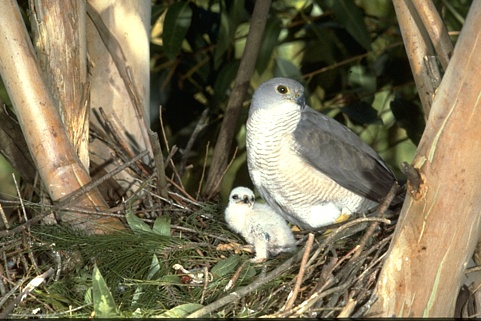}
\centering
\end{minipage}
}\\
\vspace{-2mm}
\subfloat{
\begin{minipage}[c]{0.002\textwidth}
\begin{turn}{90} {\scriptsize GT} \end{turn}
\end{minipage}
\hspace{-0.2mm}
\begin{minipage}[c]{0.996\textwidth}
\includegraphics[width=0.113\textwidth]{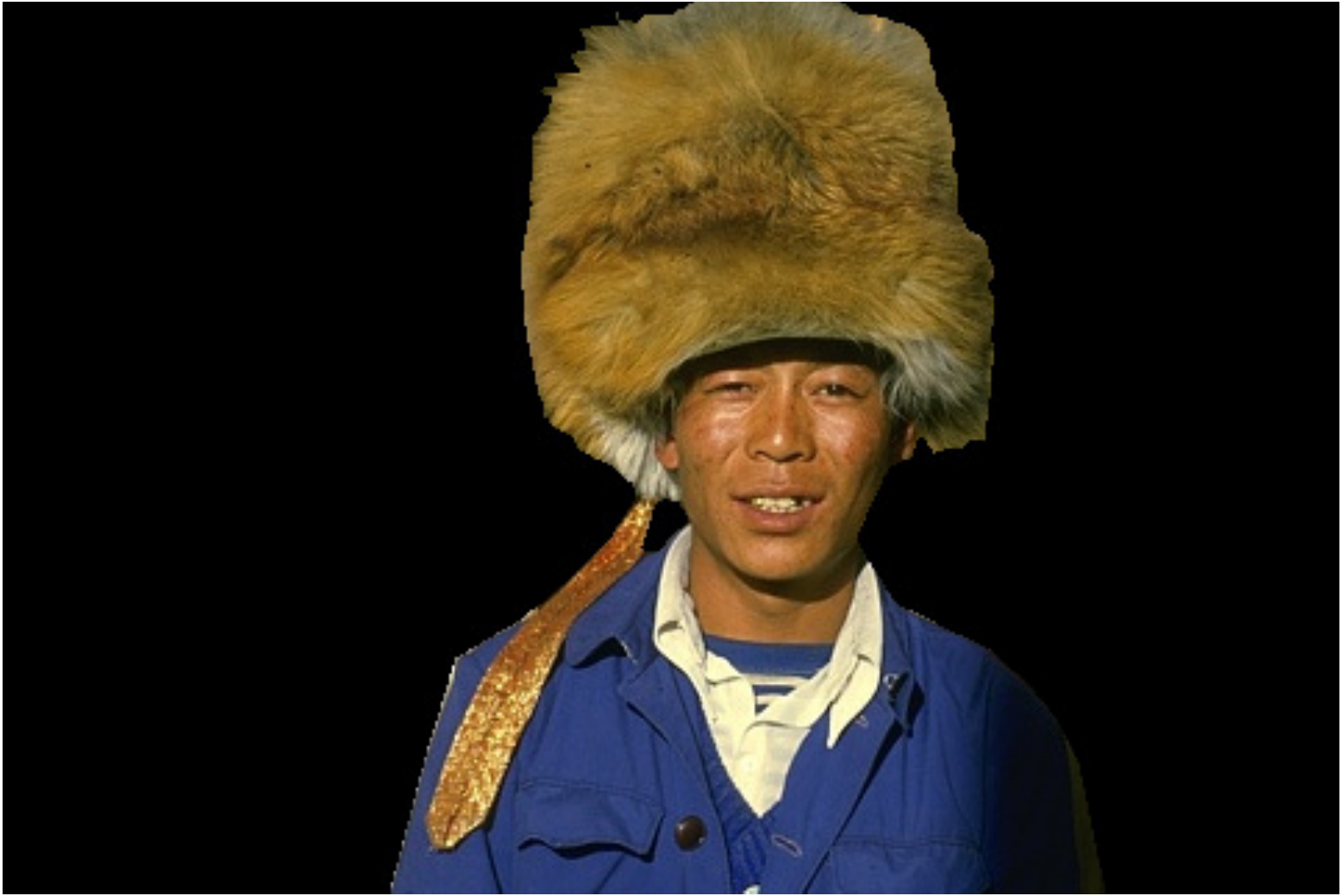}
\includegraphics[width=0.113\textwidth]{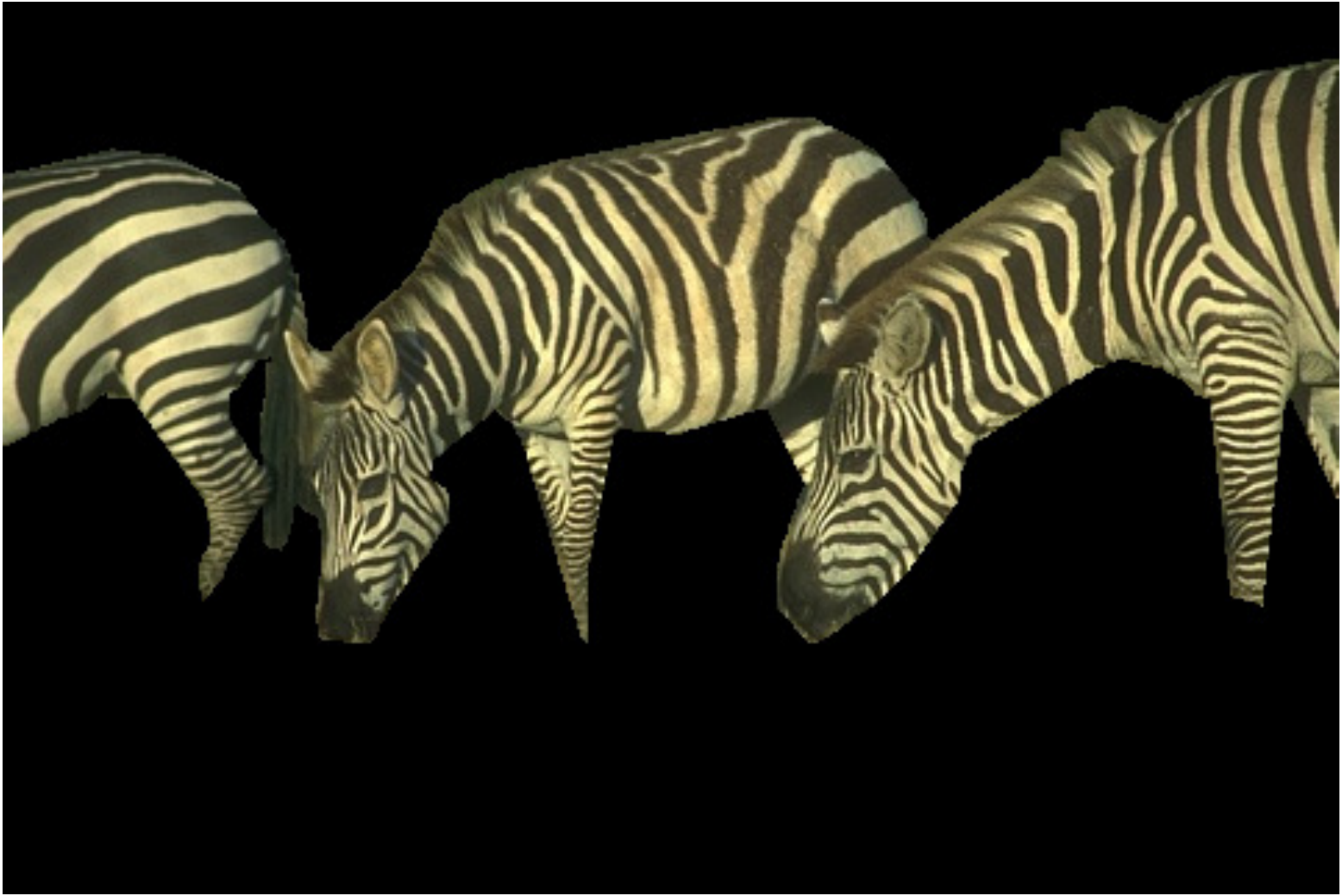}
\includegraphics[width=0.113\textwidth]{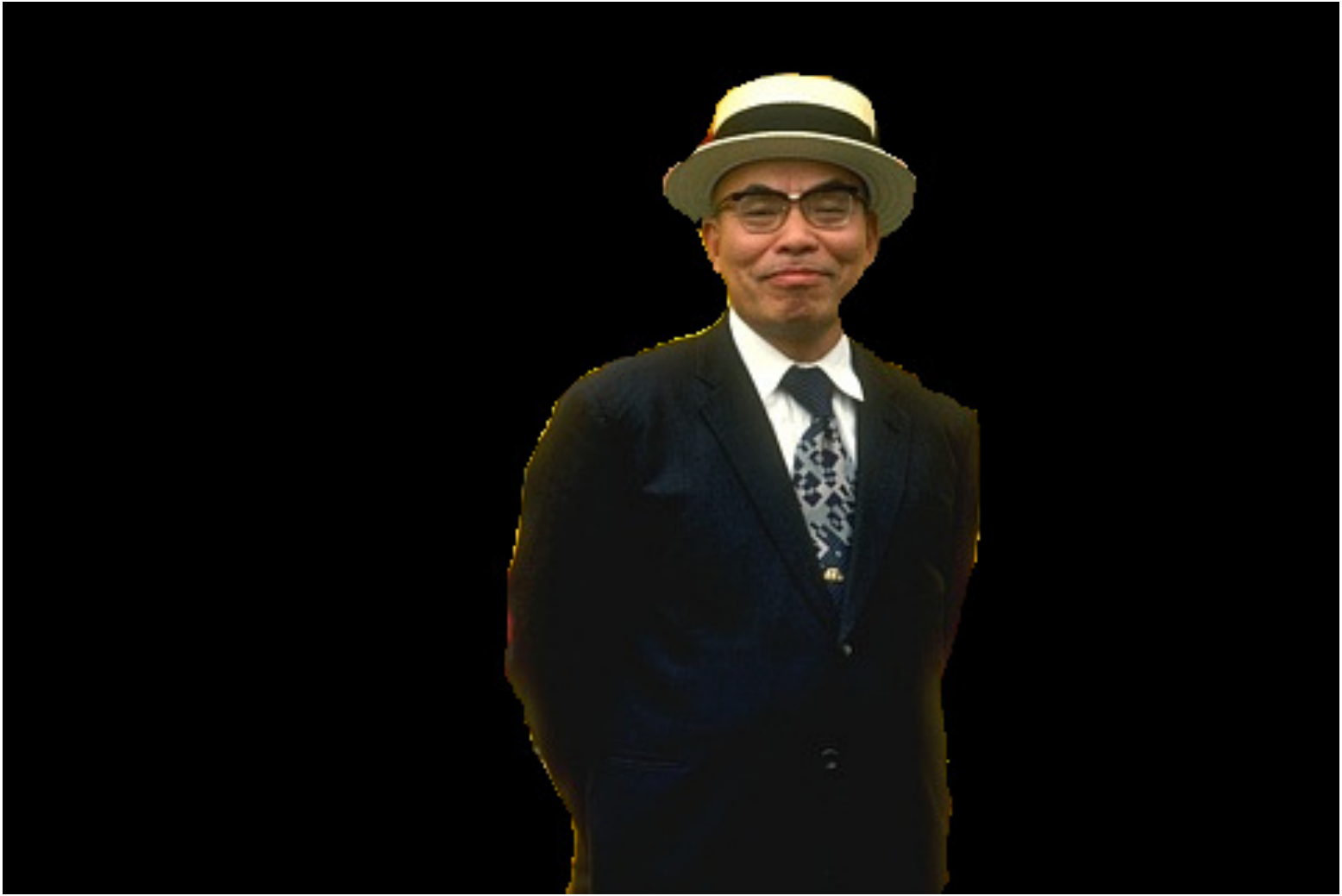}
\includegraphics[width=0.113\textwidth]{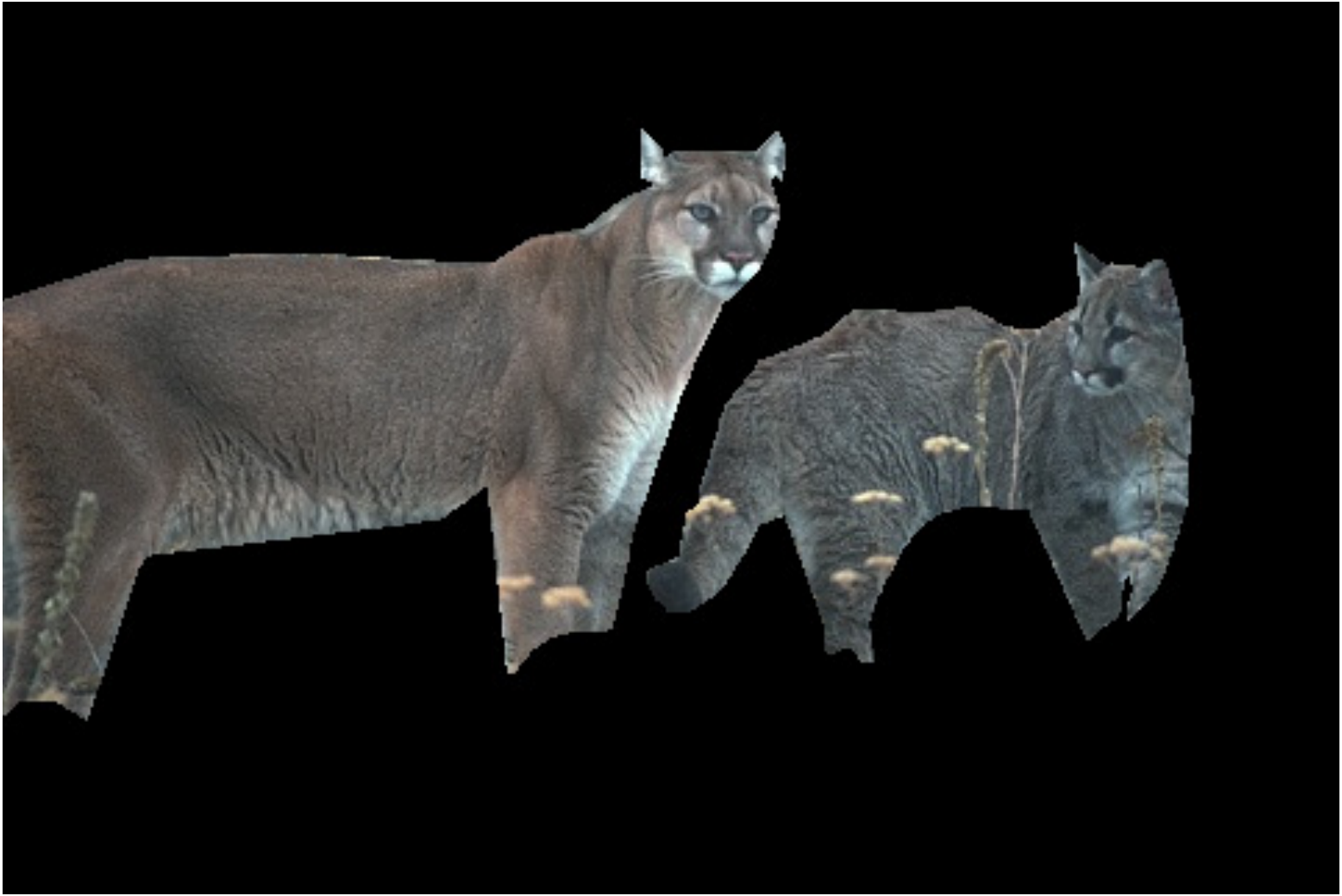}
\includegraphics[width=0.113\textwidth]{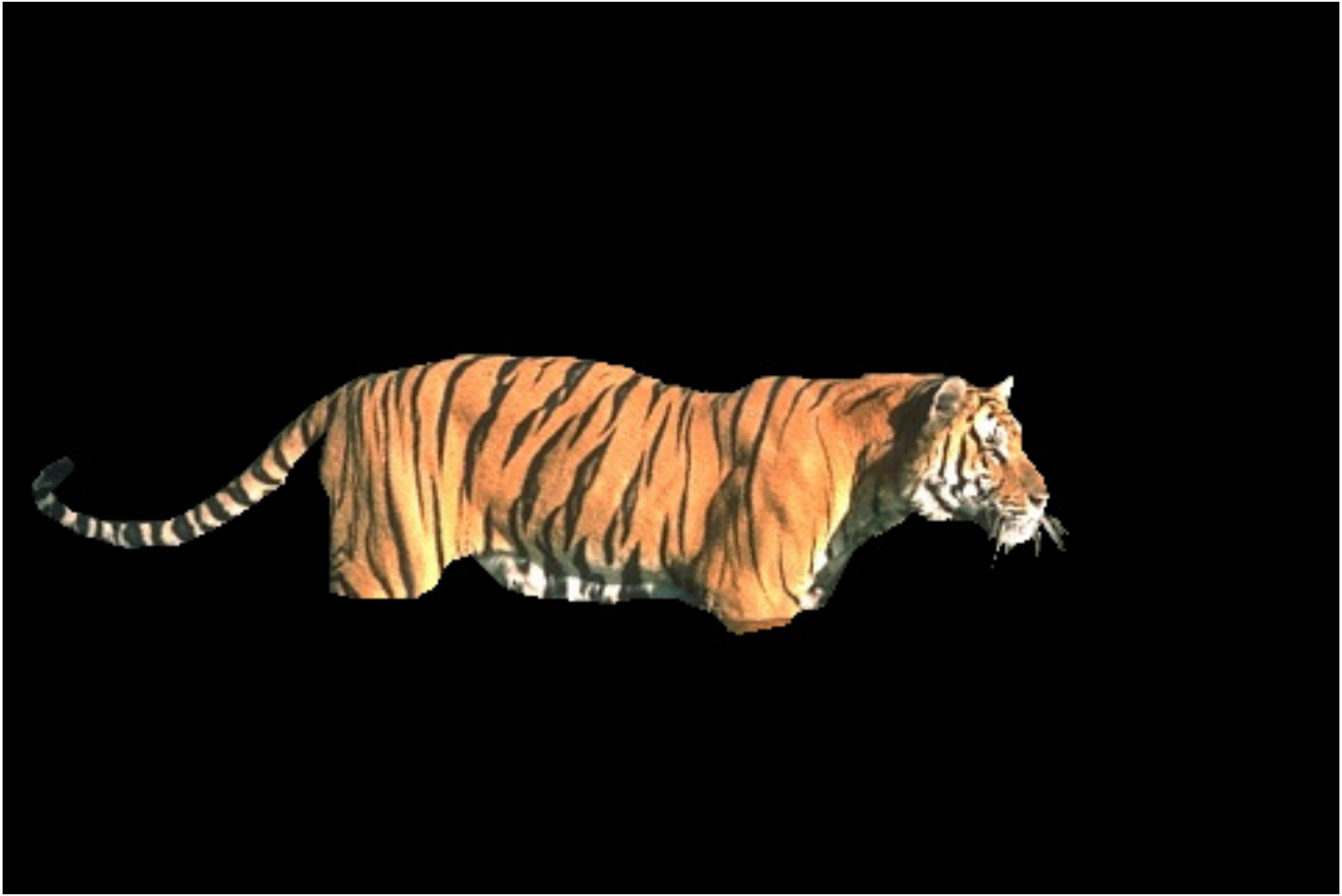}
\includegraphics[width=0.113\textwidth]{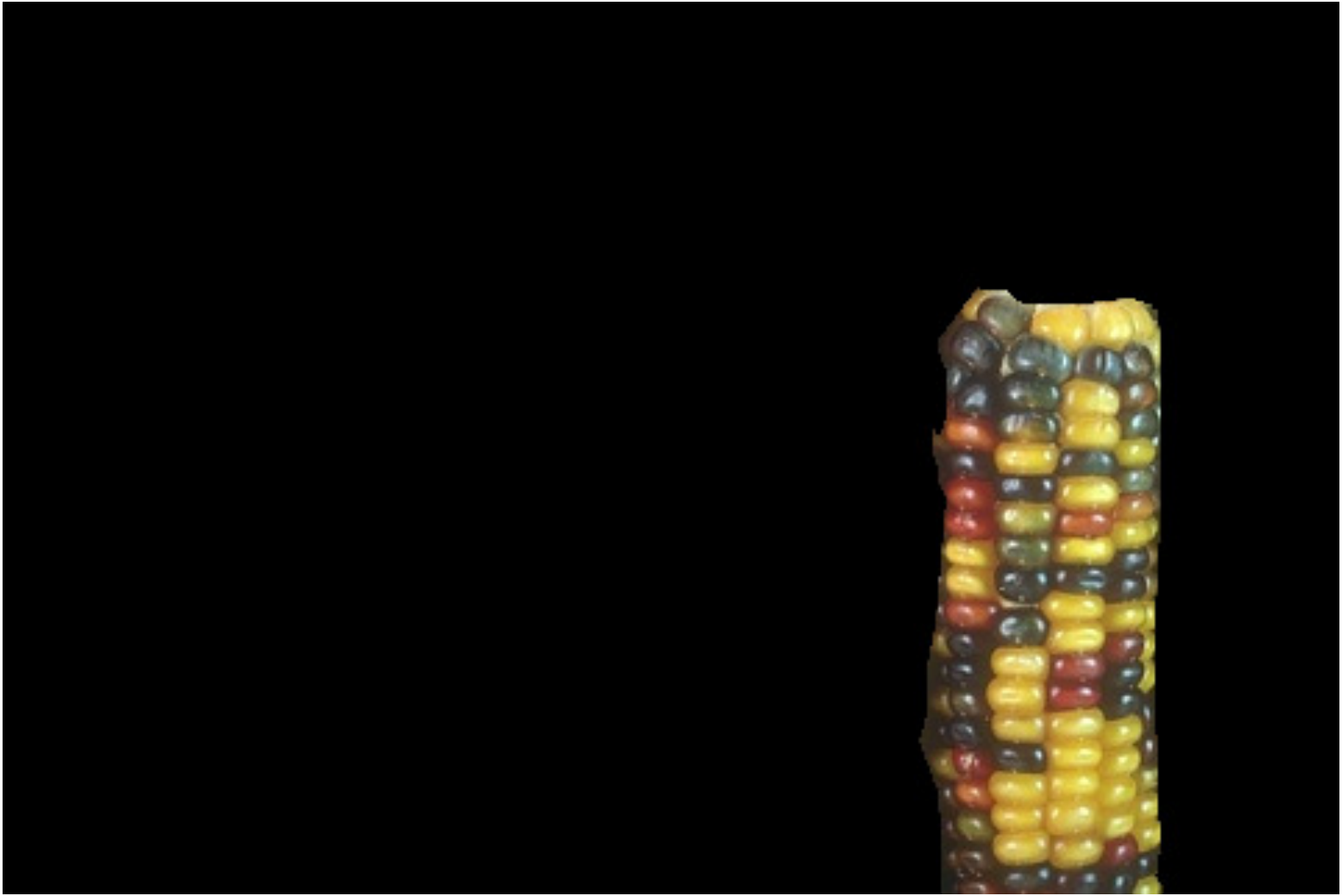}
\includegraphics[width=0.113\textwidth]{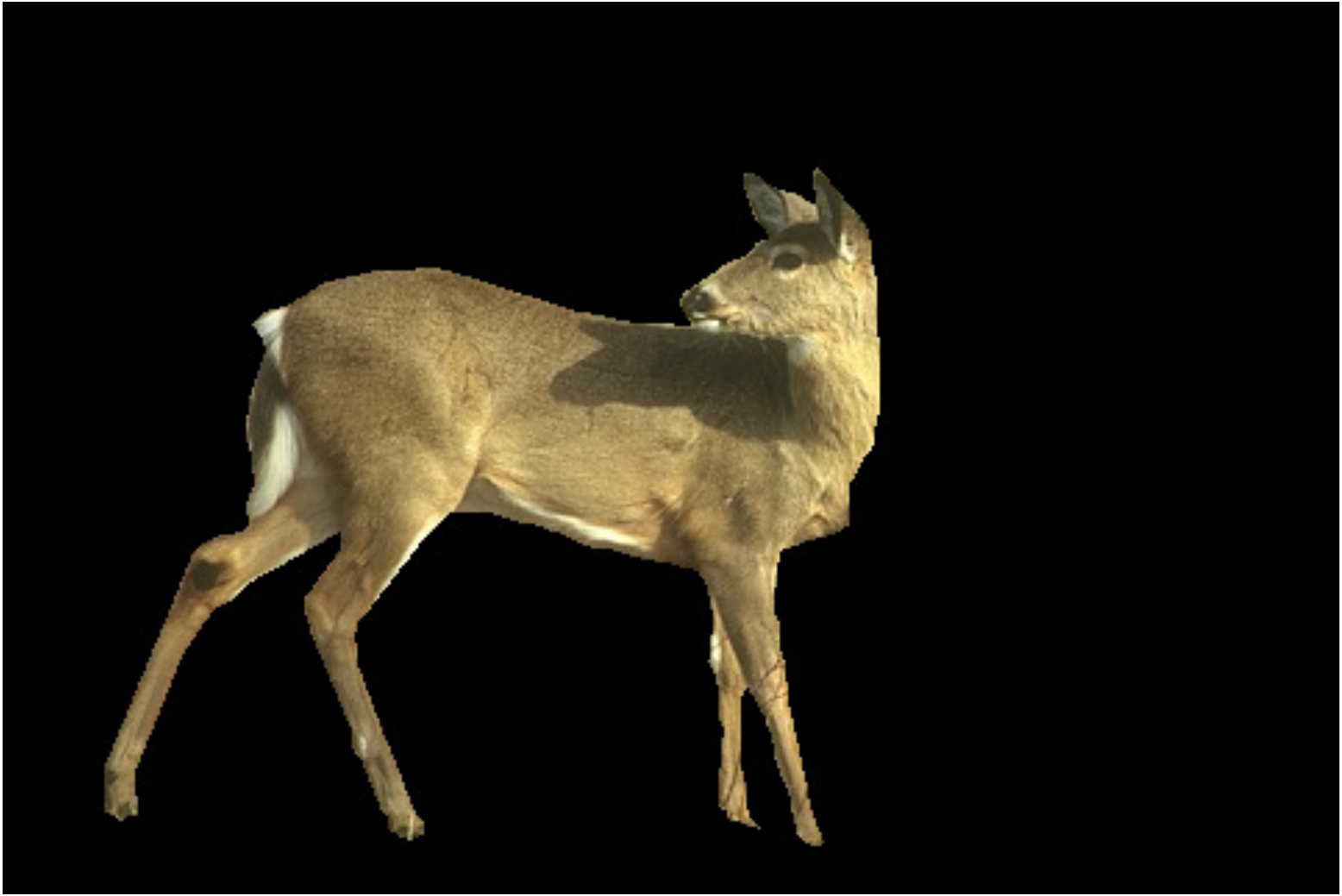}
\includegraphics[width=0.113\textwidth]{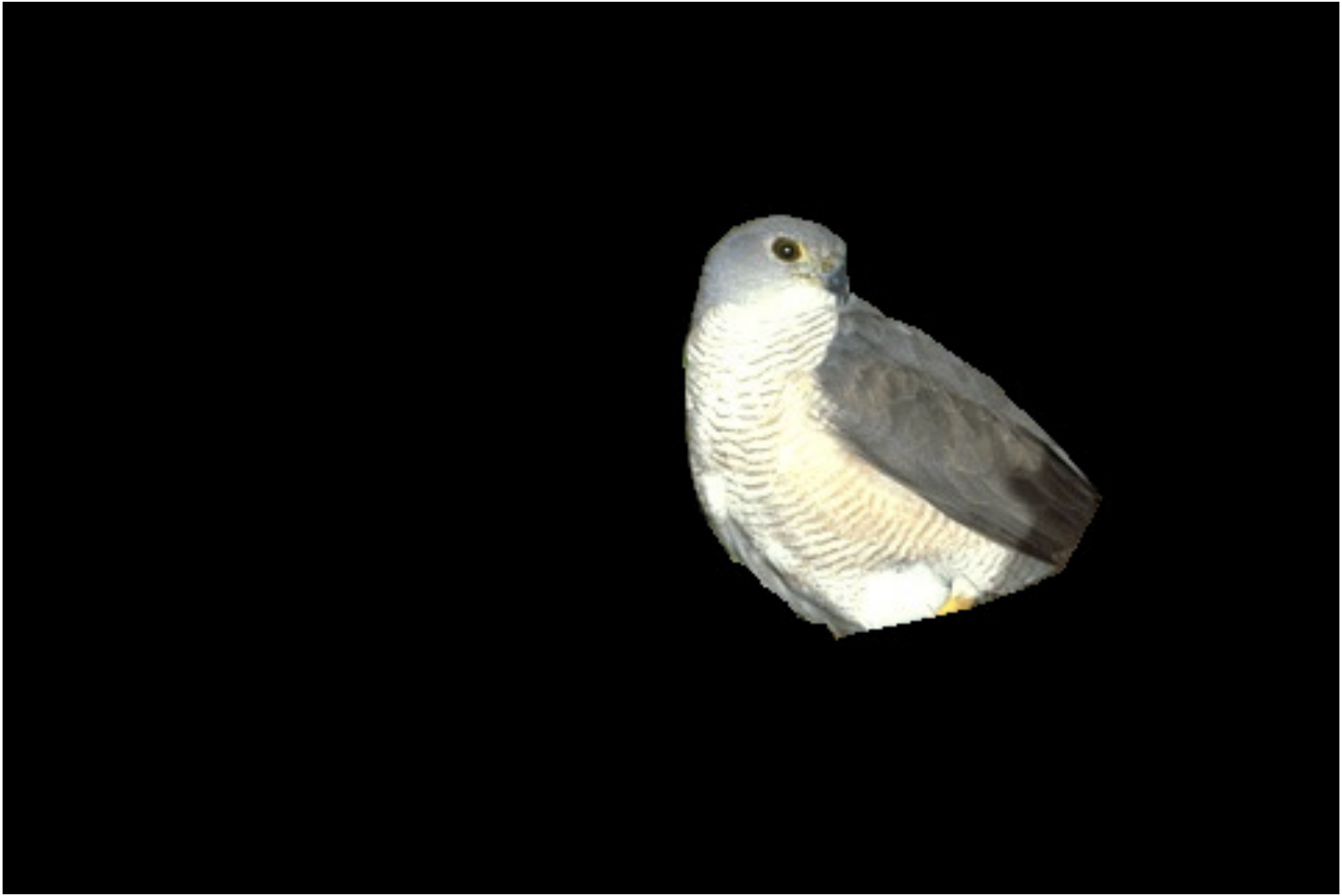}
\centering
\end{minipage}
}\\
\vspace{-2mm}
\subfloat{
\begin{minipage}[c]{0.001\textwidth}
\begin{turn}{90} {\scriptsize Graph cuts} \end{turn}
\end{minipage}
\hspace{0.2mm}
\begin{minipage}[c]{0.001\textwidth}
\begin{turn}{90} {\scriptsize unary} \end{turn}
\end{minipage}
\hspace{-2.3mm}
\begin{minipage}[c]{0.996\textwidth}
\includegraphics[width=0.113\textwidth]{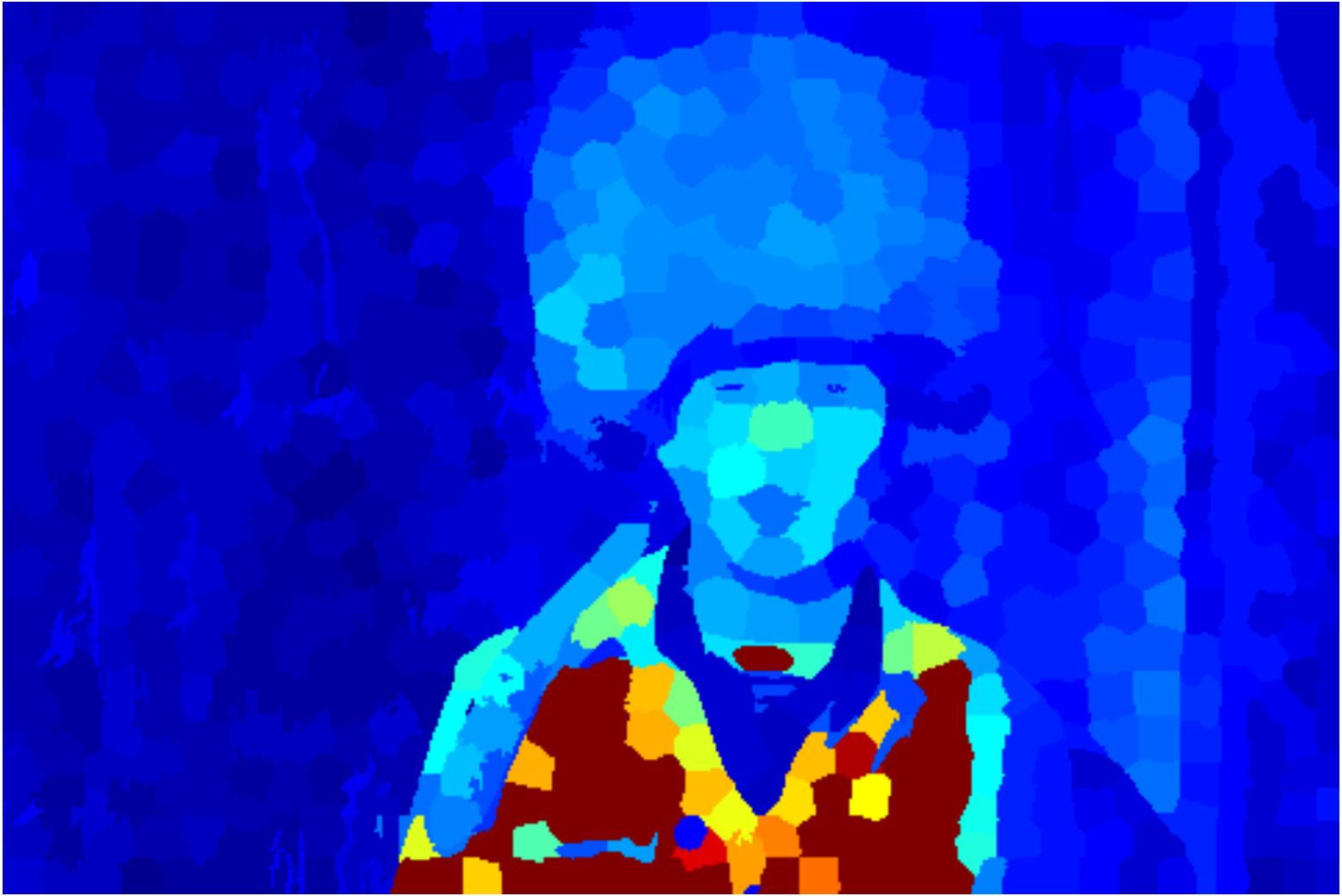}
\includegraphics[width=0.113\textwidth]{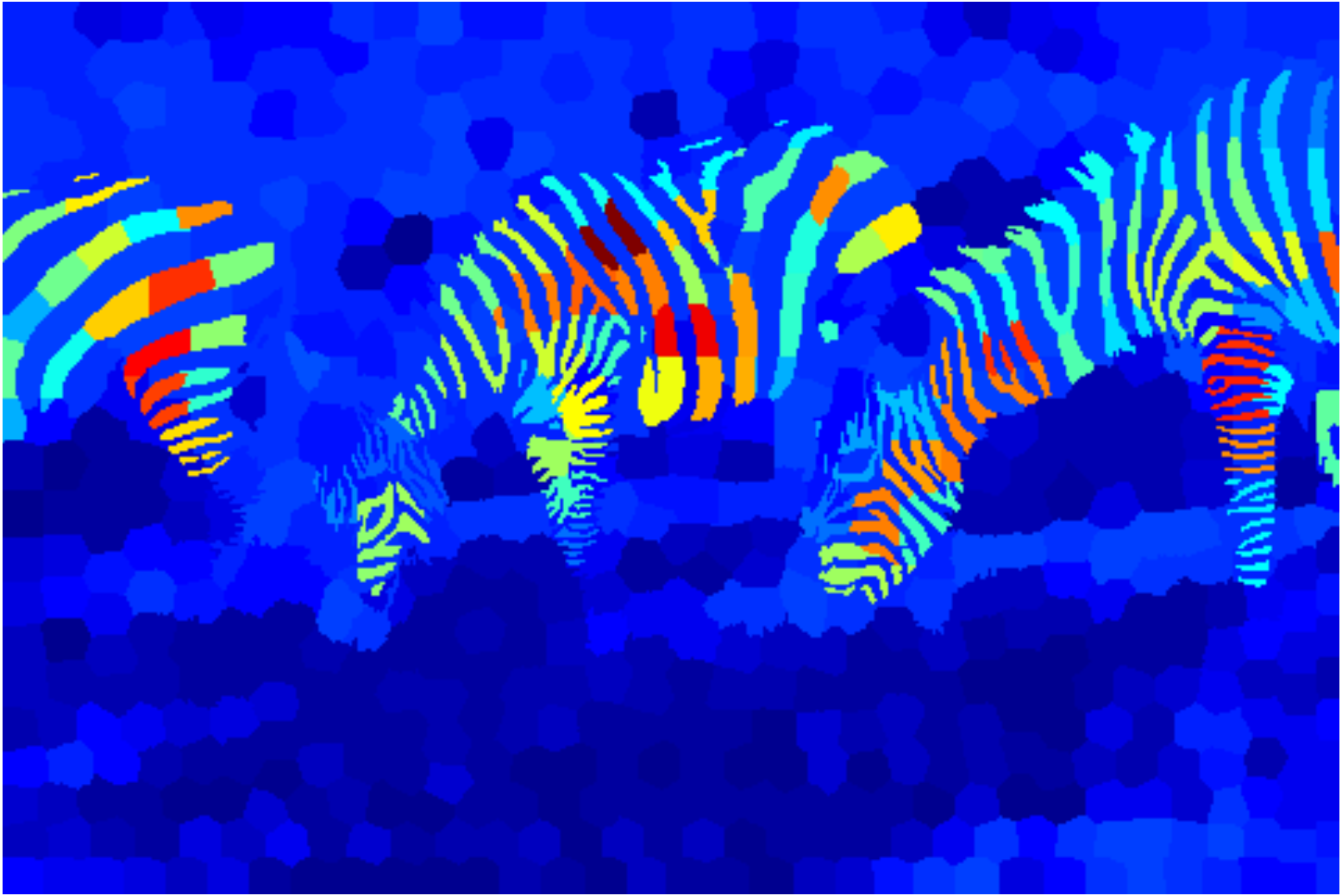}
\includegraphics[width=0.113\textwidth]{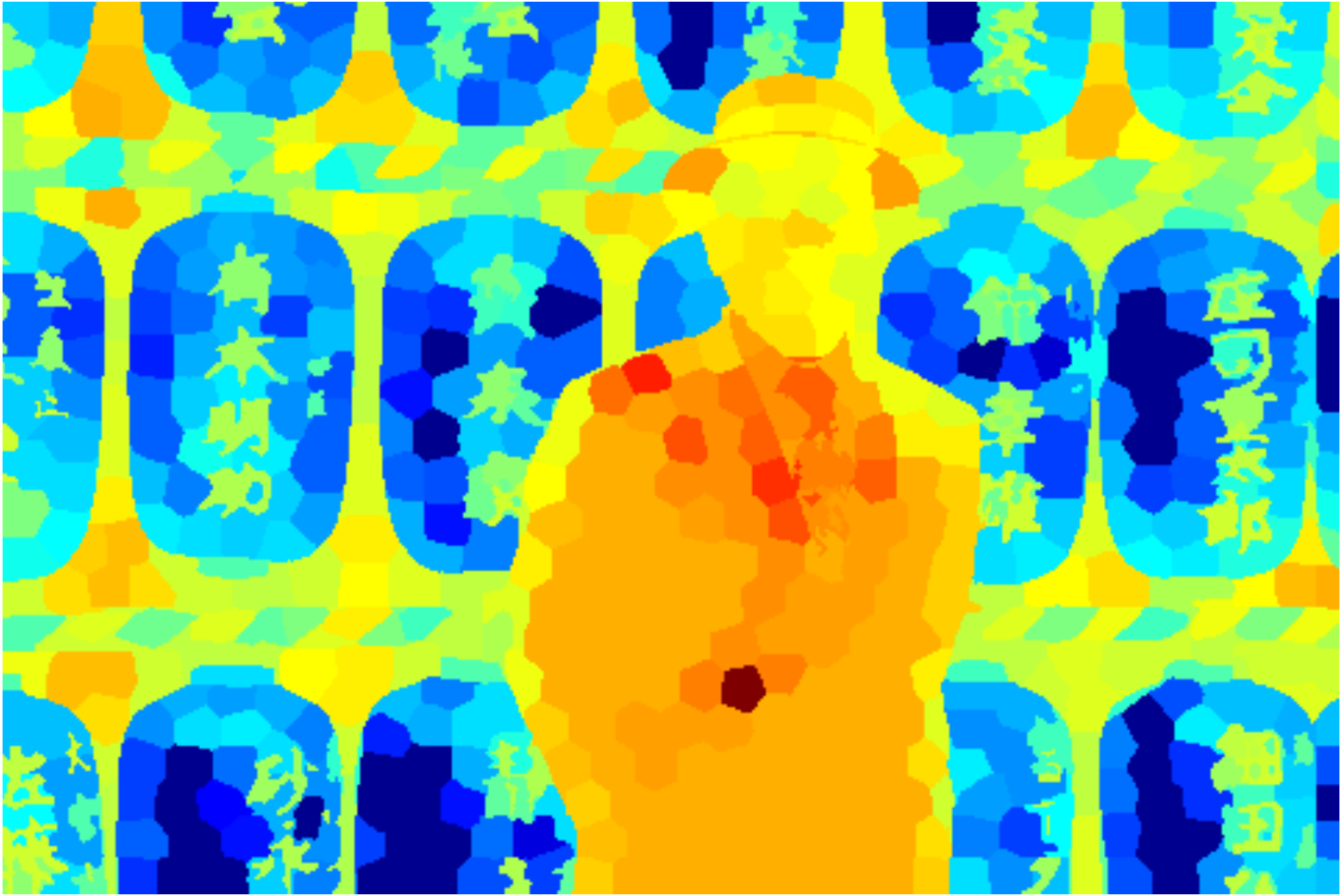}
\includegraphics[width=0.113\textwidth]{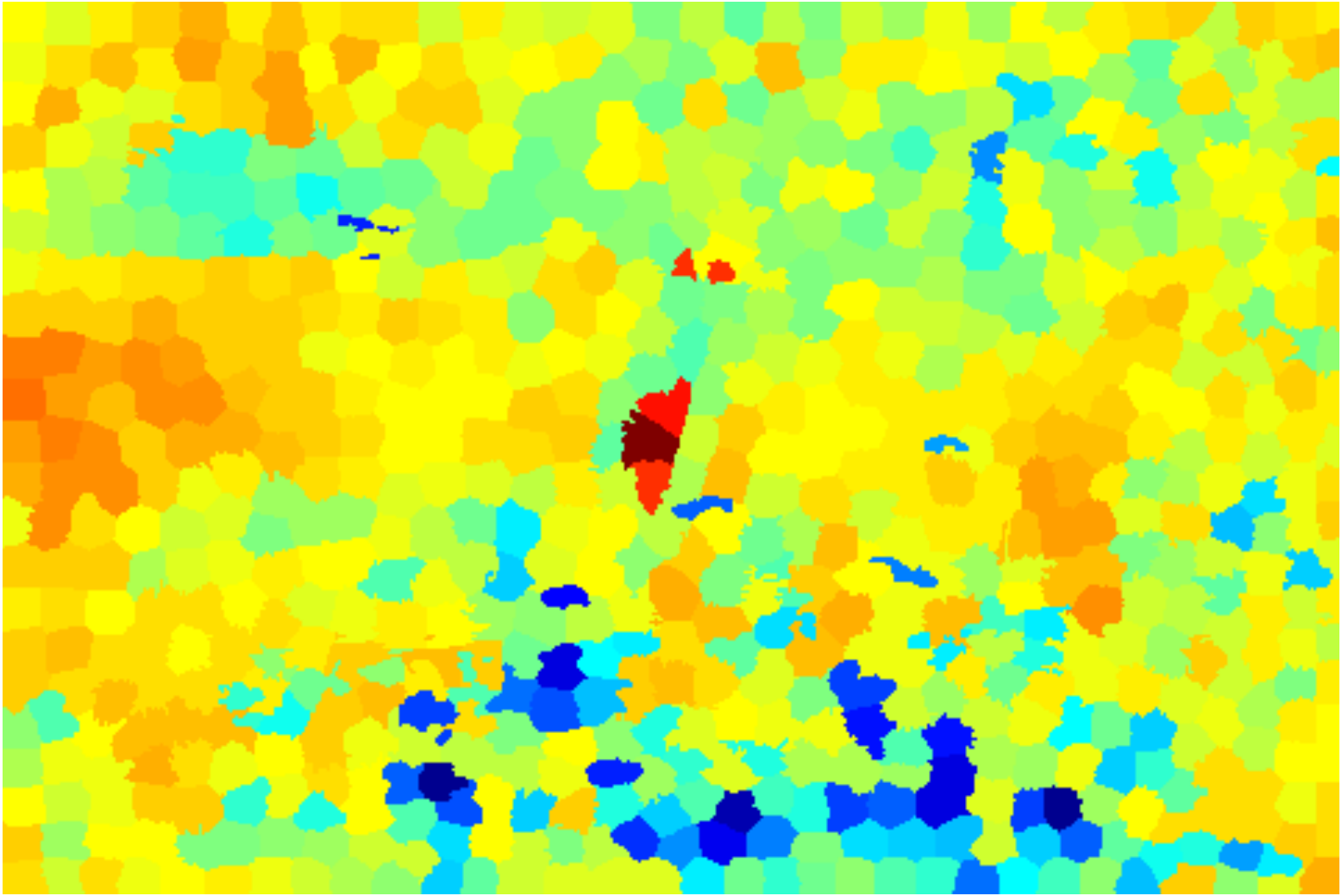}
\includegraphics[width=0.113\textwidth]{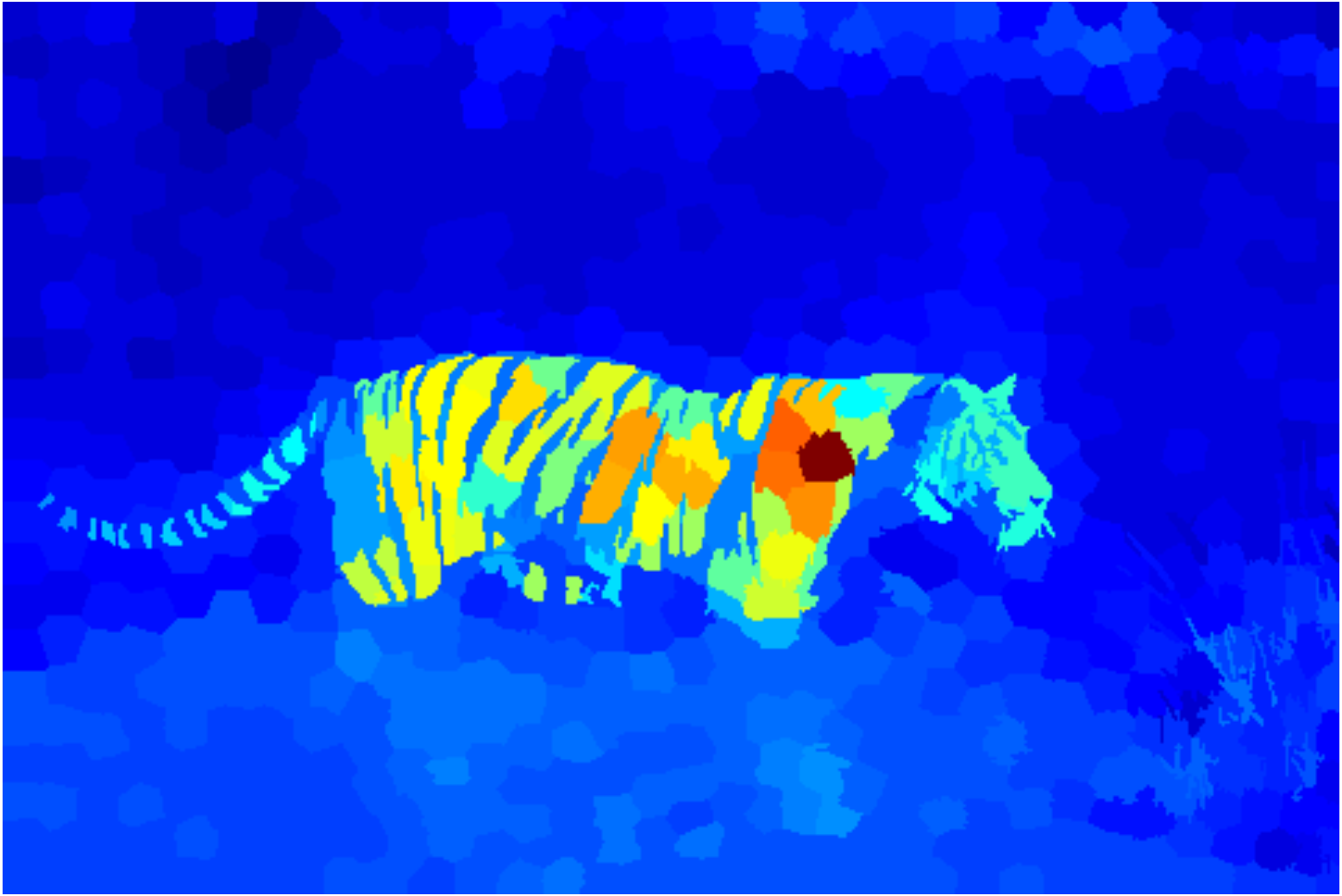}
\includegraphics[width=0.113\textwidth]{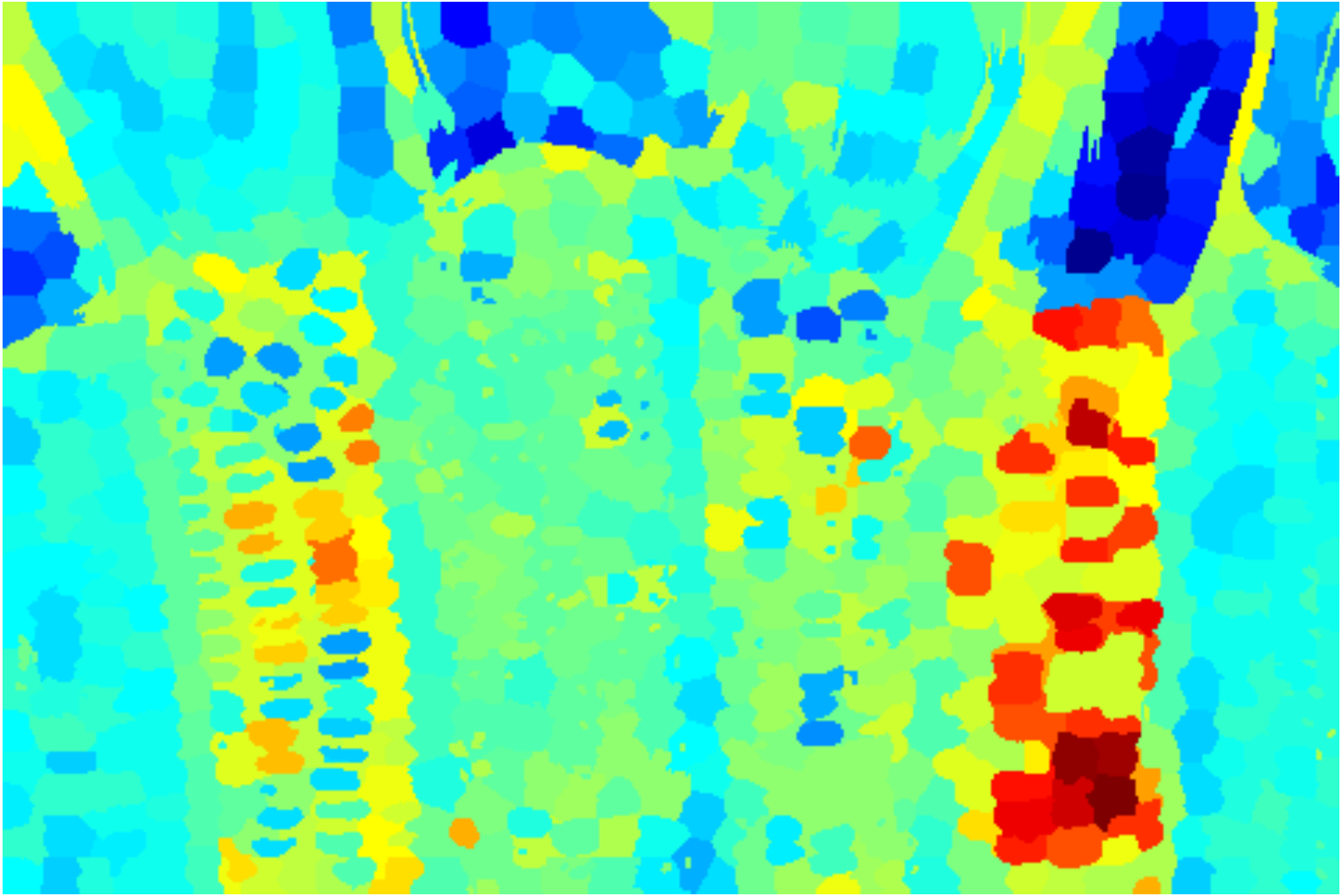}
\includegraphics[width=0.113\textwidth]{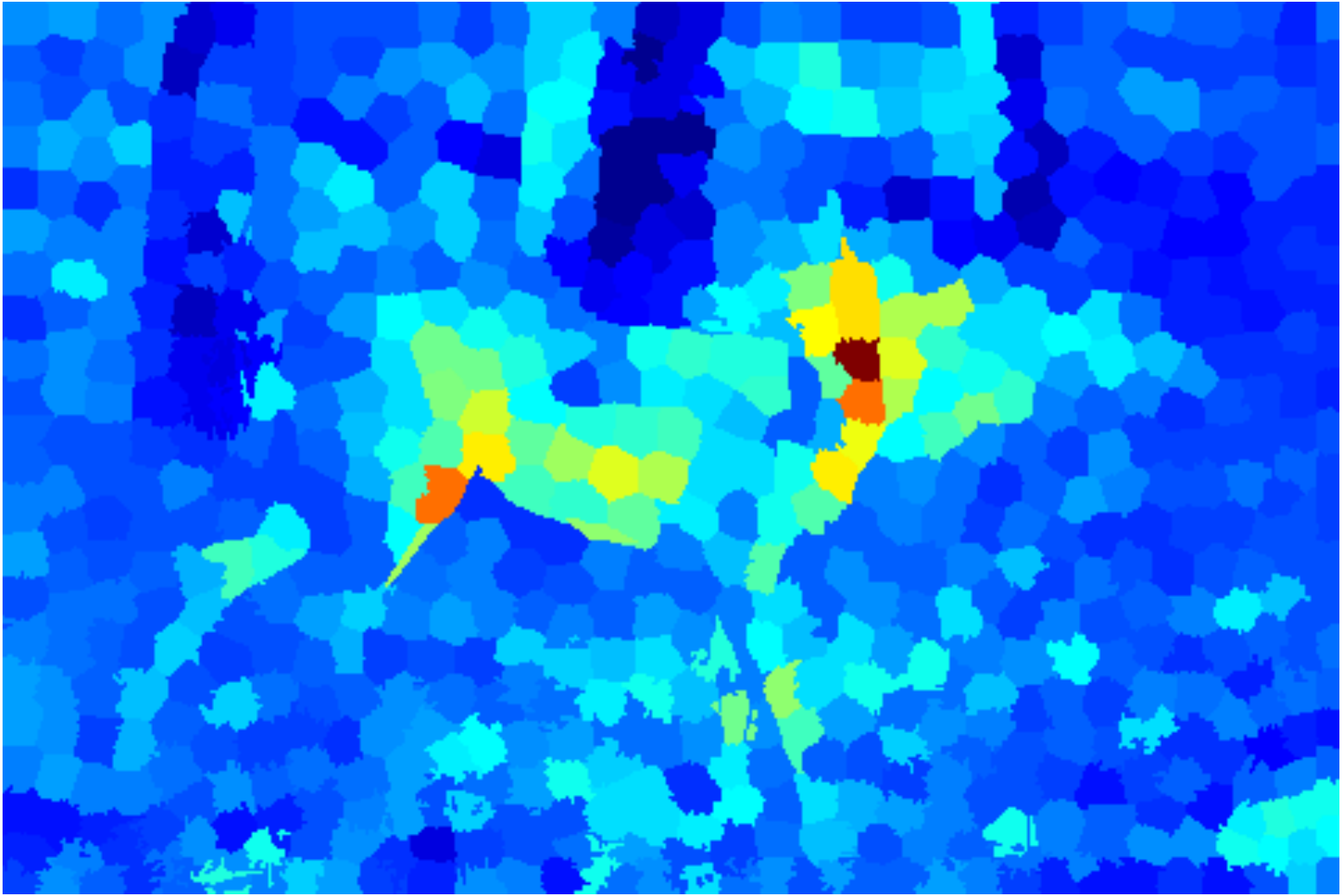}
\includegraphics[width=0.113\textwidth]{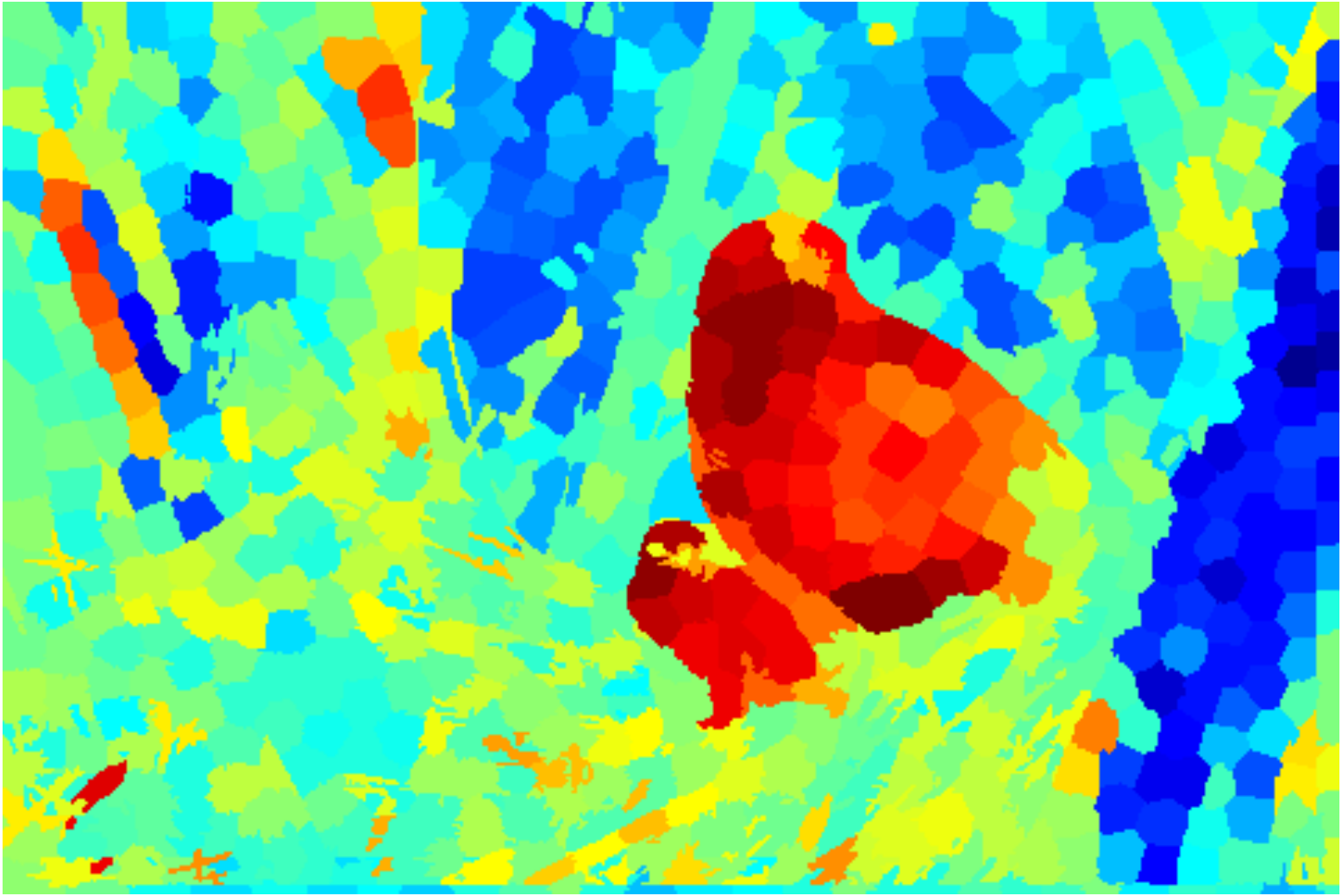}
\centering
\end{minipage}
}\\
\vspace{-2mm}
\subfloat{
\begin{minipage}[c]{0.002\textwidth}
\begin{turn}{90} {\scriptsize Graph cuts} \end{turn}
\end{minipage}
\hspace{-0.2mm}
\begin{minipage}[c]{0.996\textwidth}
\includegraphics[width=0.113\textwidth]{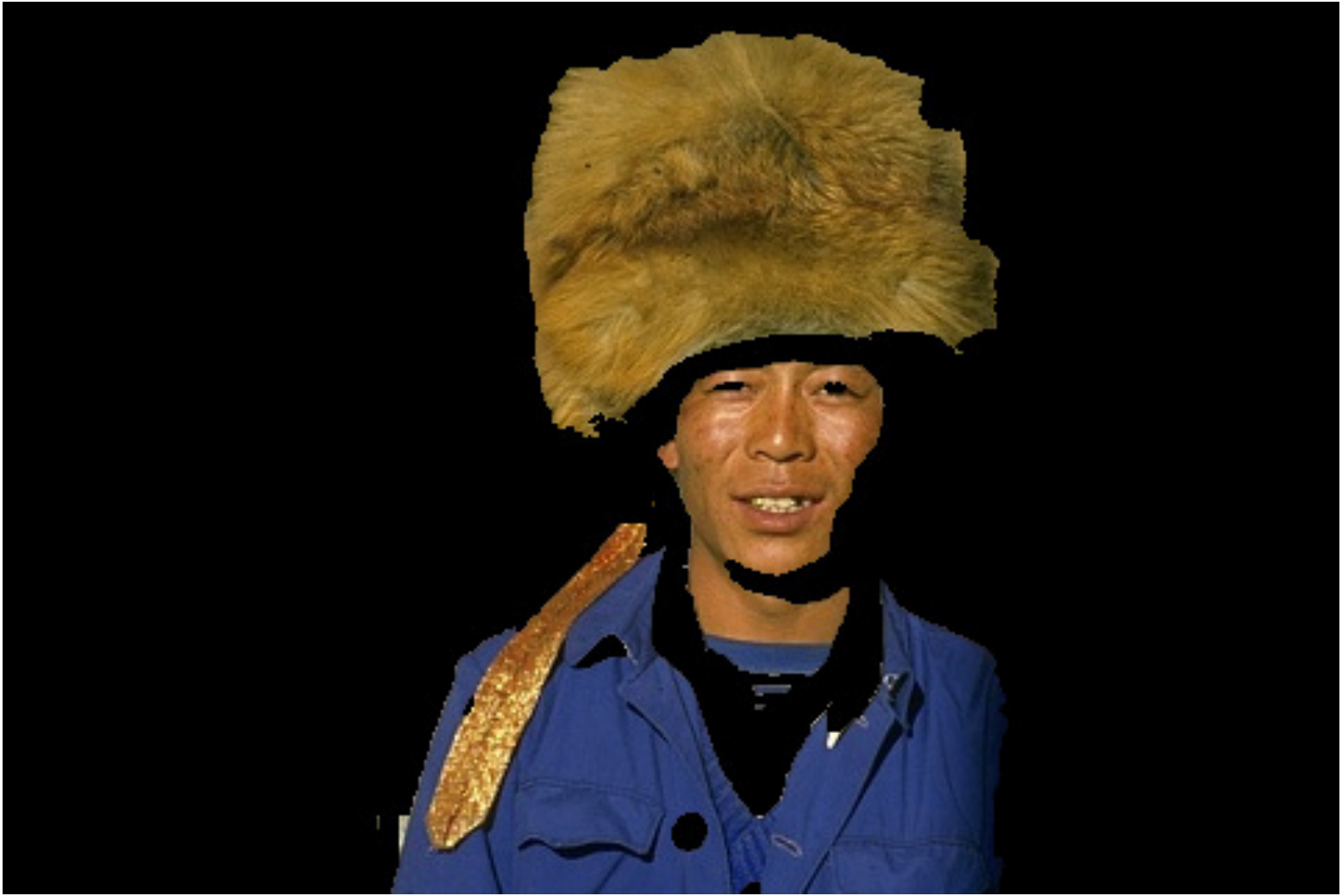}
\includegraphics[width=0.113\textwidth]{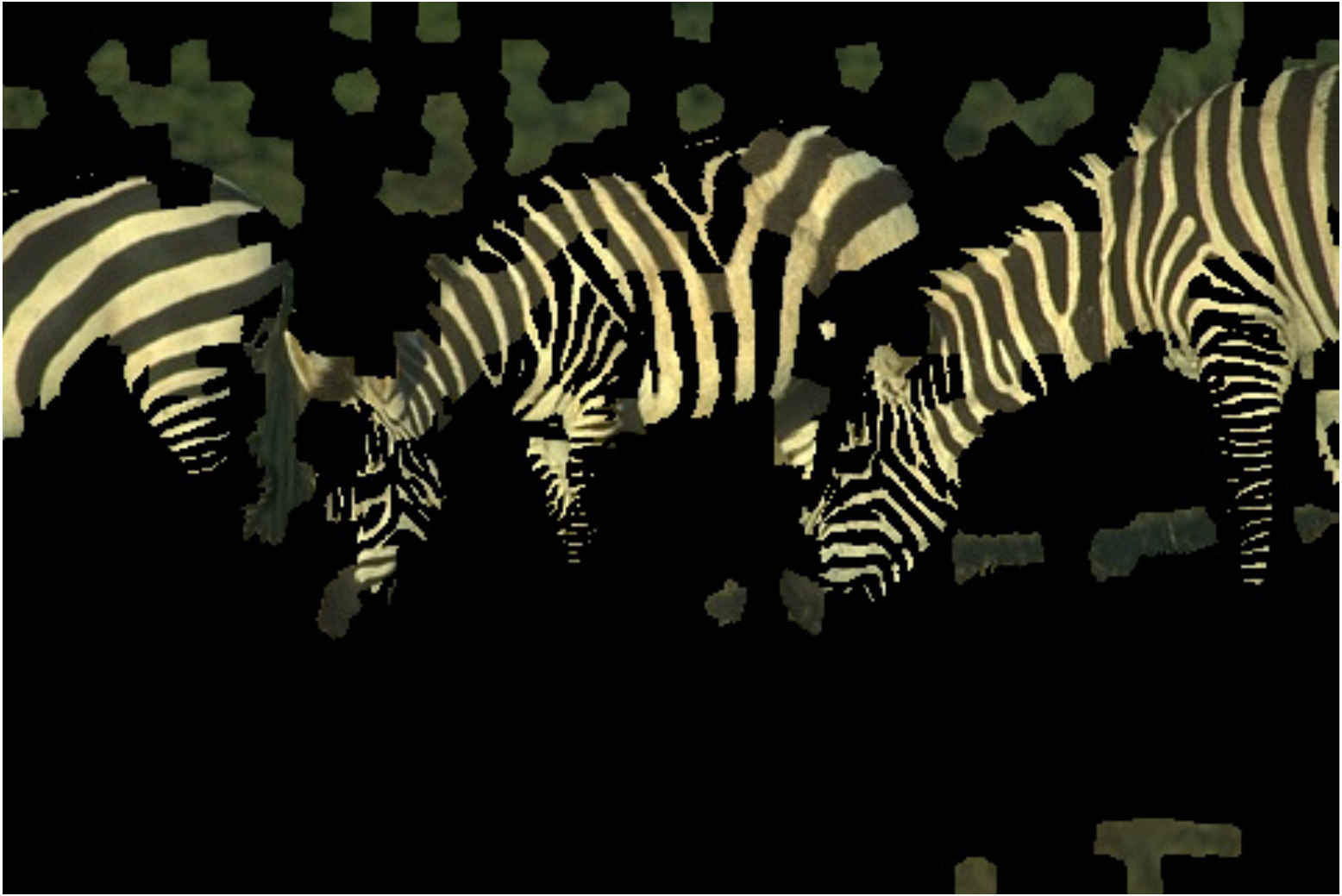}
\includegraphics[width=0.113\textwidth]{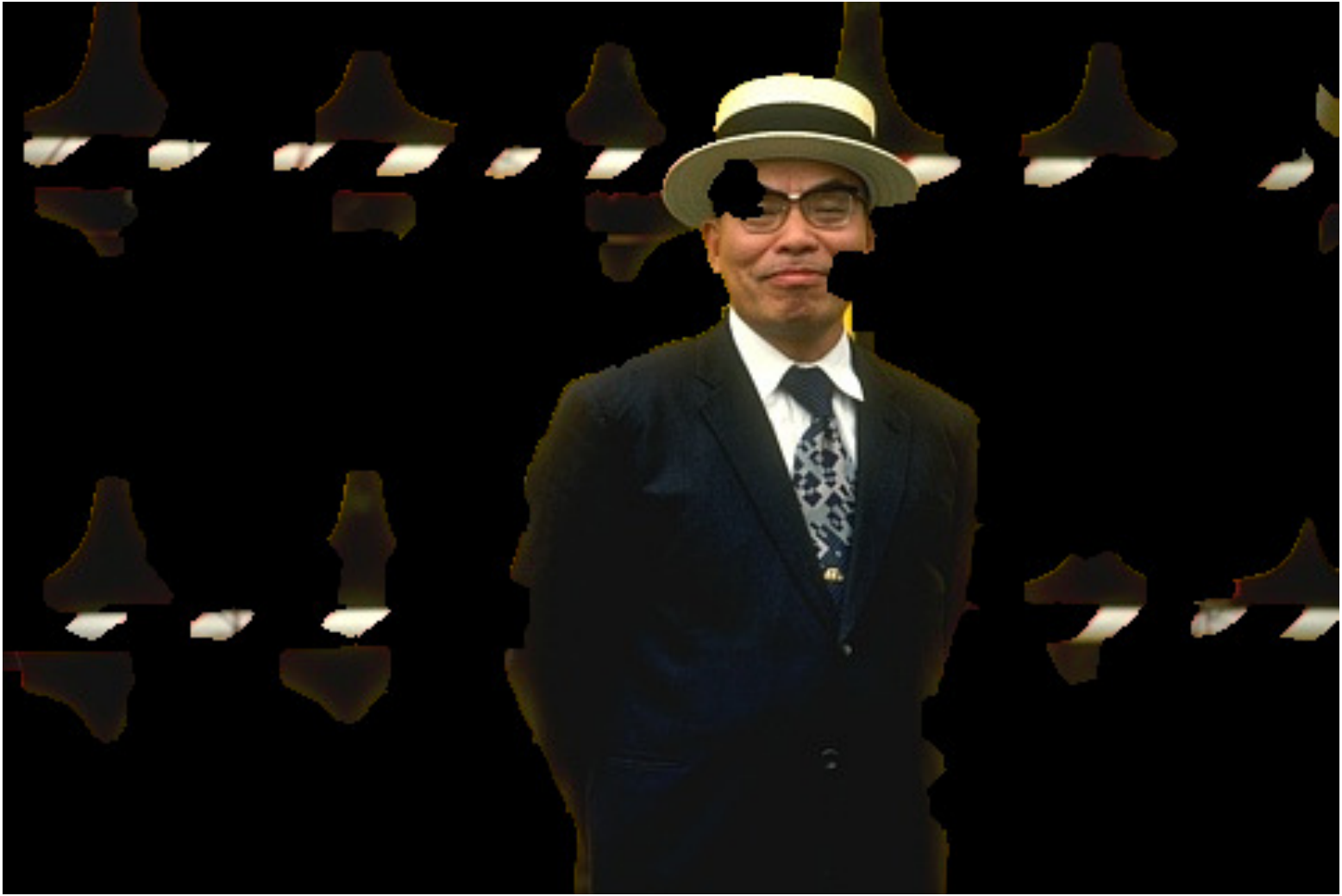}
\includegraphics[width=0.113\textwidth]{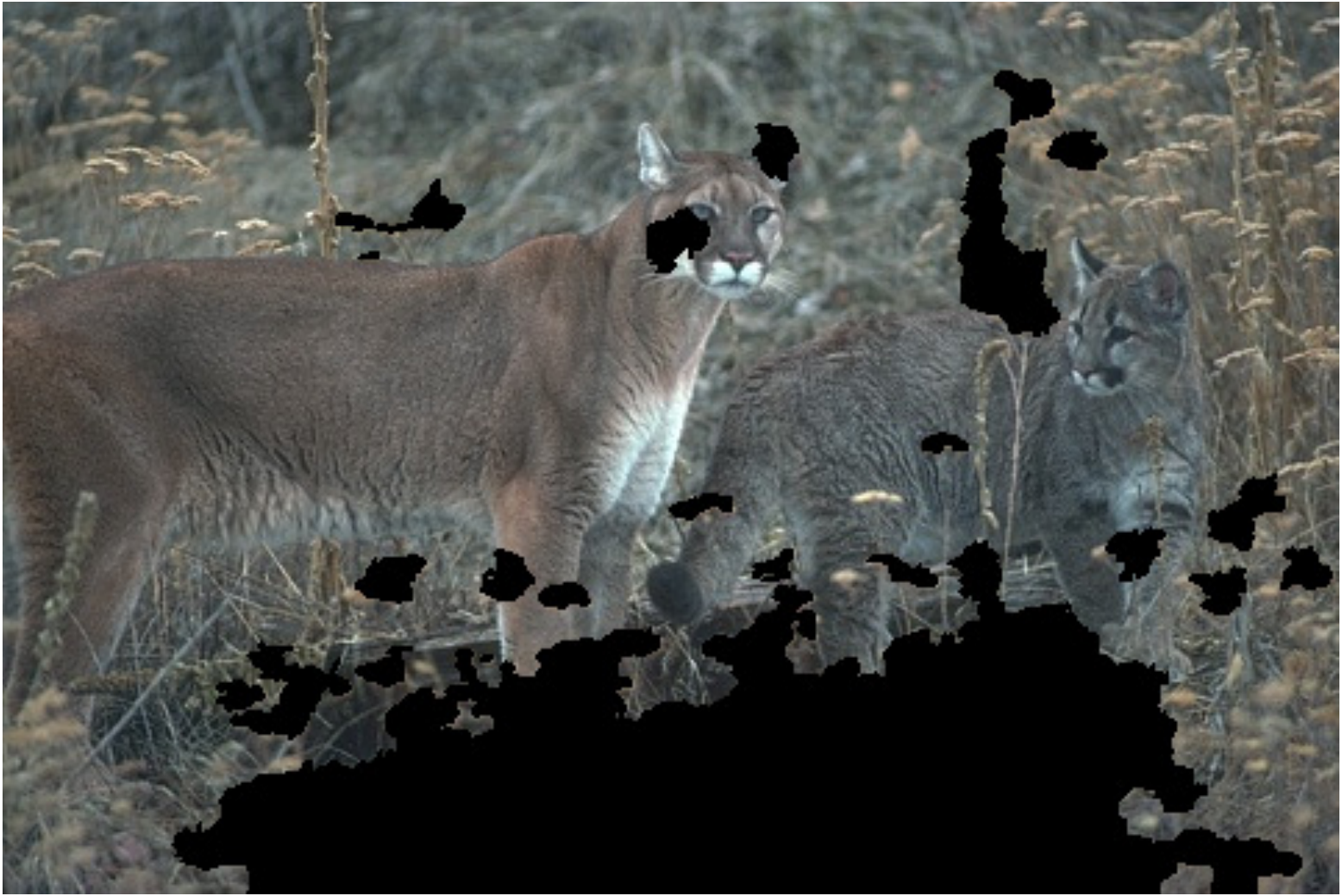}
\includegraphics[width=0.113\textwidth]{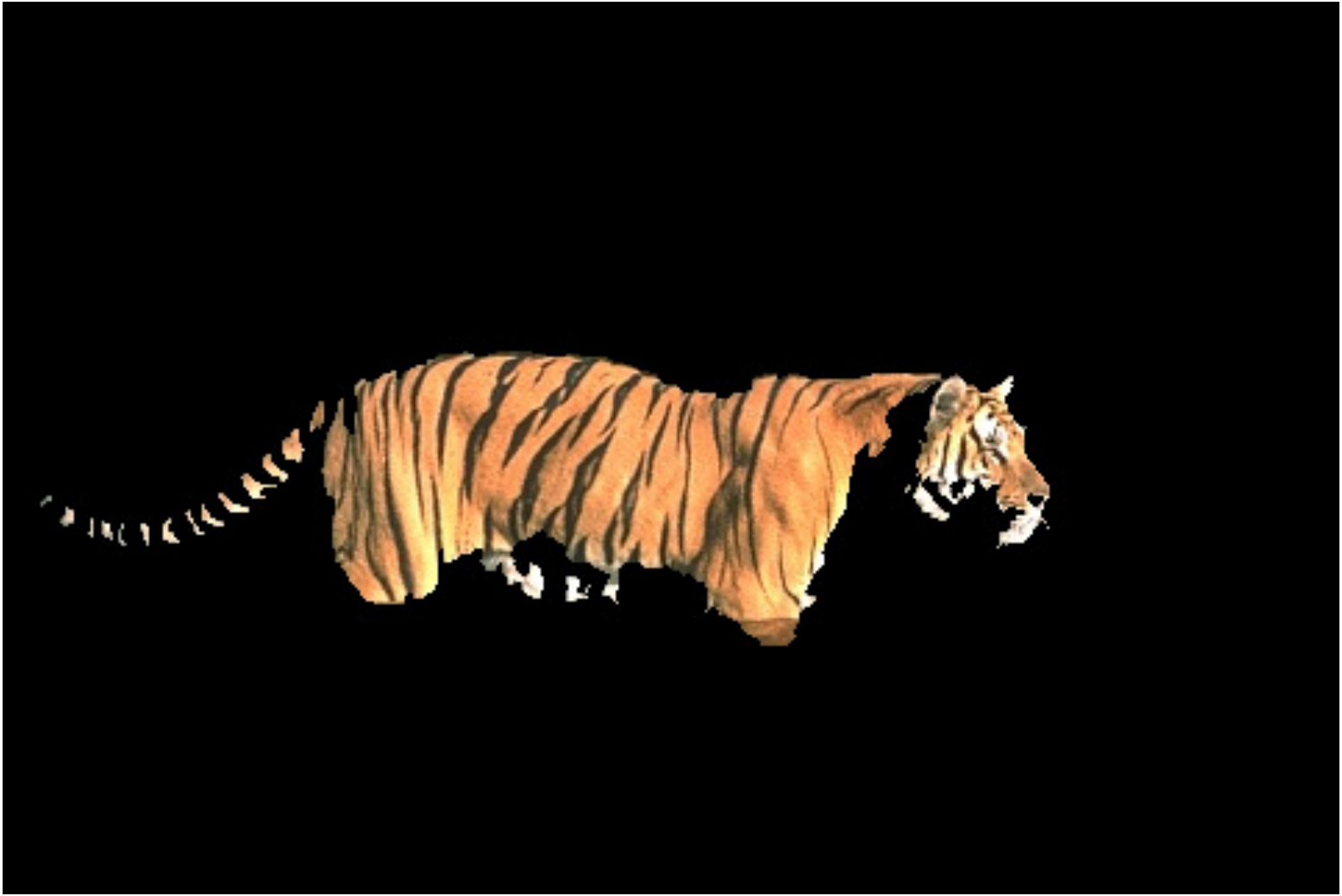}
\includegraphics[width=0.113\textwidth]{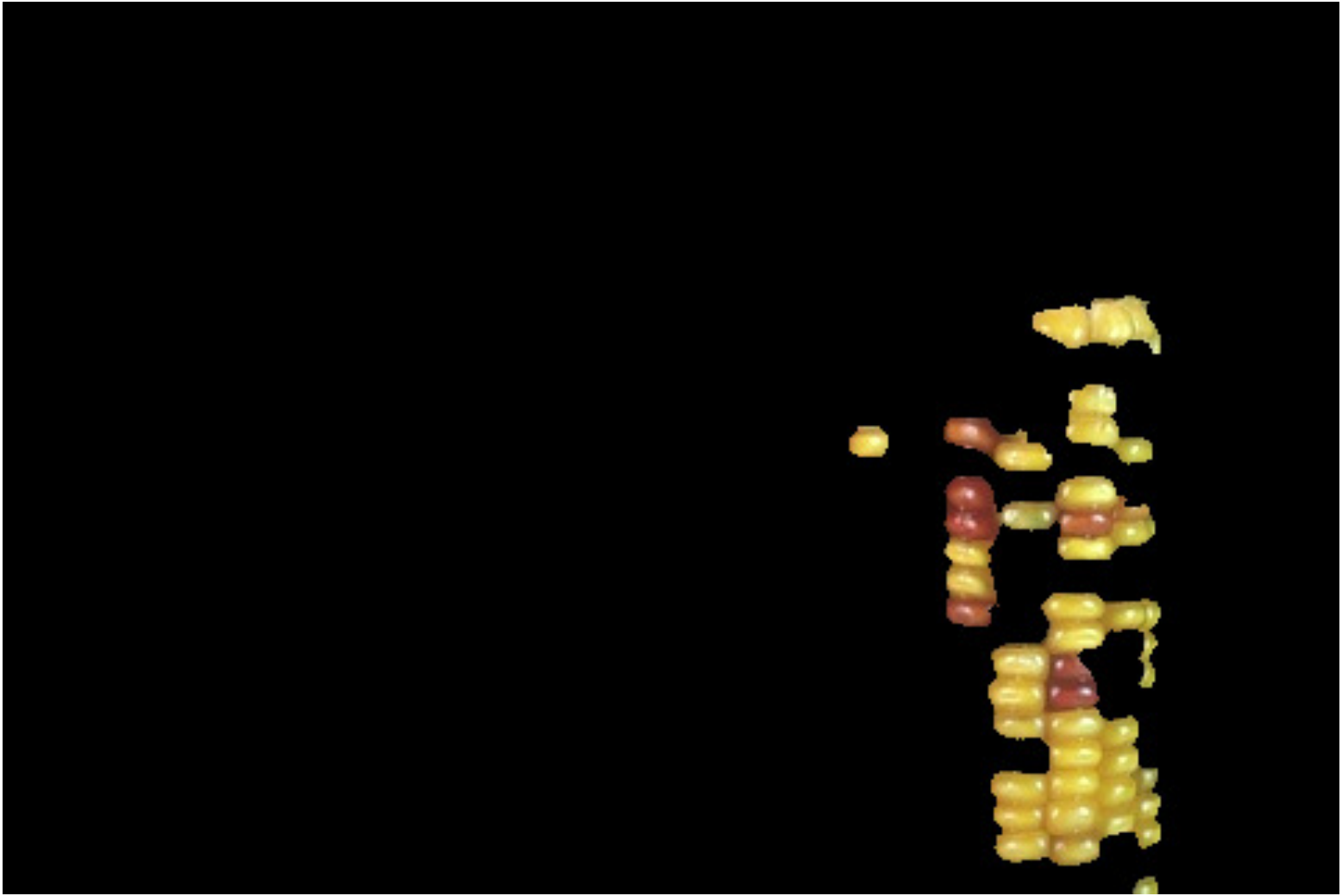}
\includegraphics[width=0.113\textwidth]{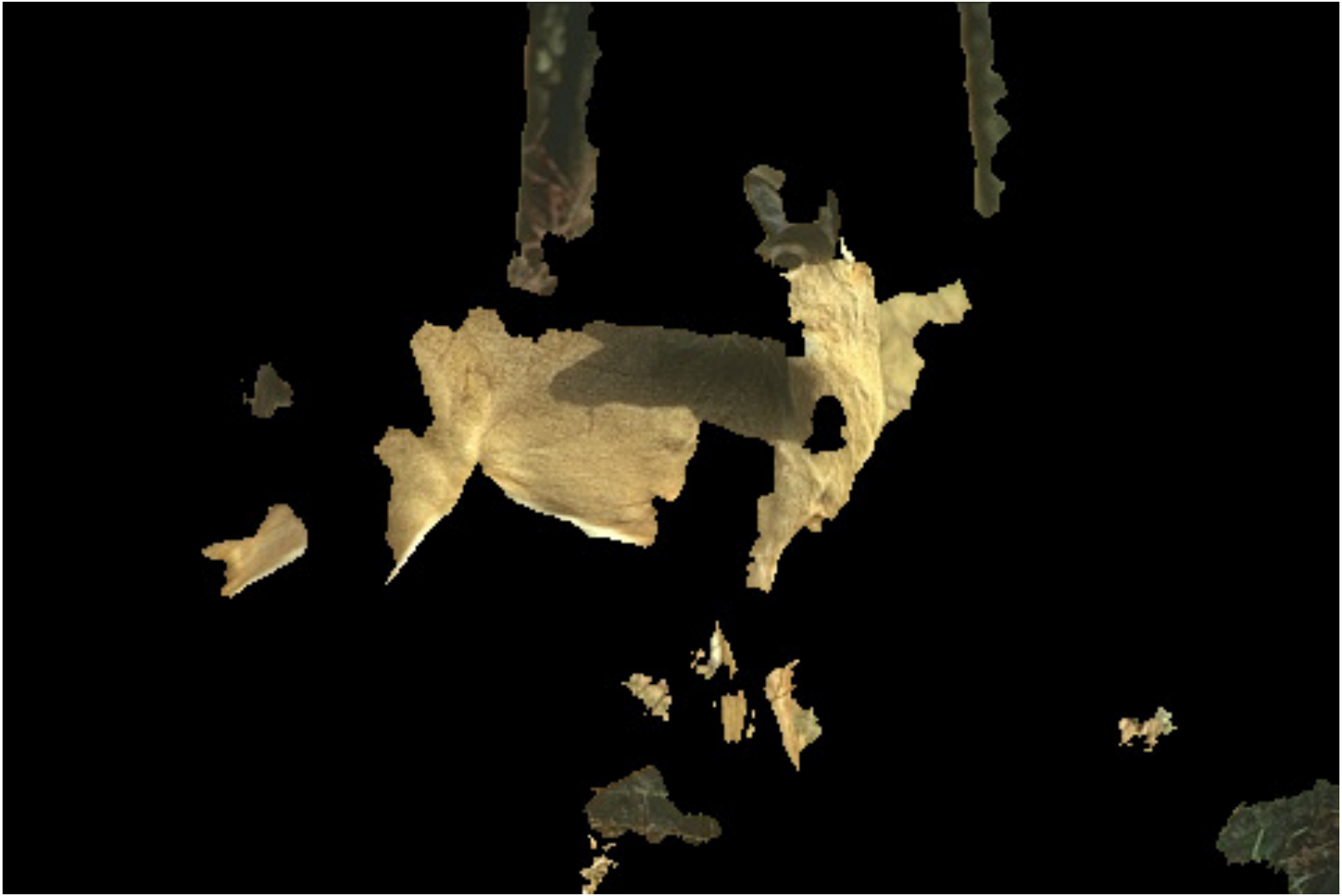}
\includegraphics[width=0.113\textwidth]{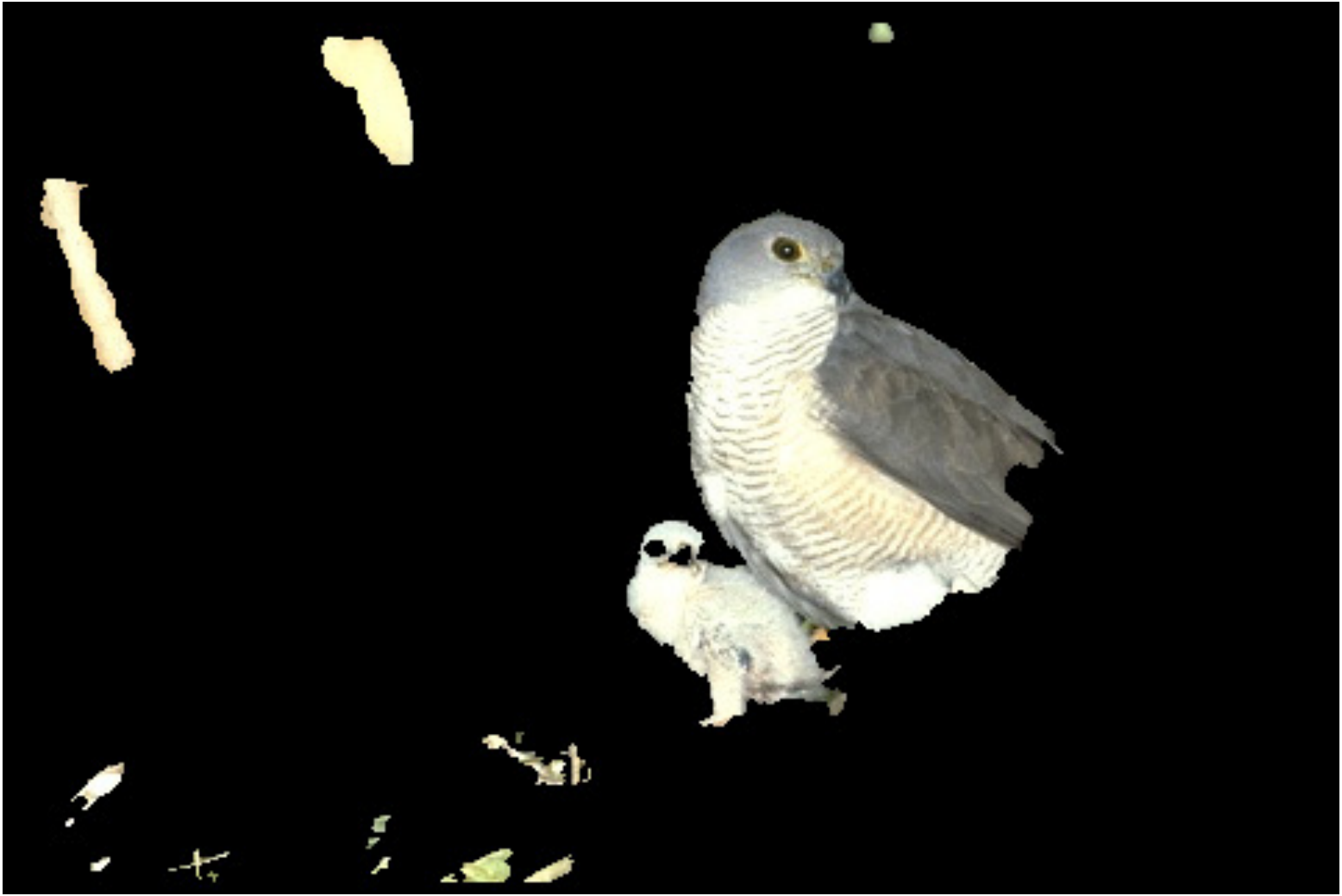}
\centering
\end{minipage}
}\\
\vspace{-2mm}
\subfloat{
\begin{minipage}[c]{0.002\textwidth}
\begin{turn}{90} {\scriptsize SMQC} \end{turn}
\end{minipage}
\hspace{-0.2mm}
\begin{minipage}[c]{0.996\textwidth}
\includegraphics[width=0.113\textwidth]{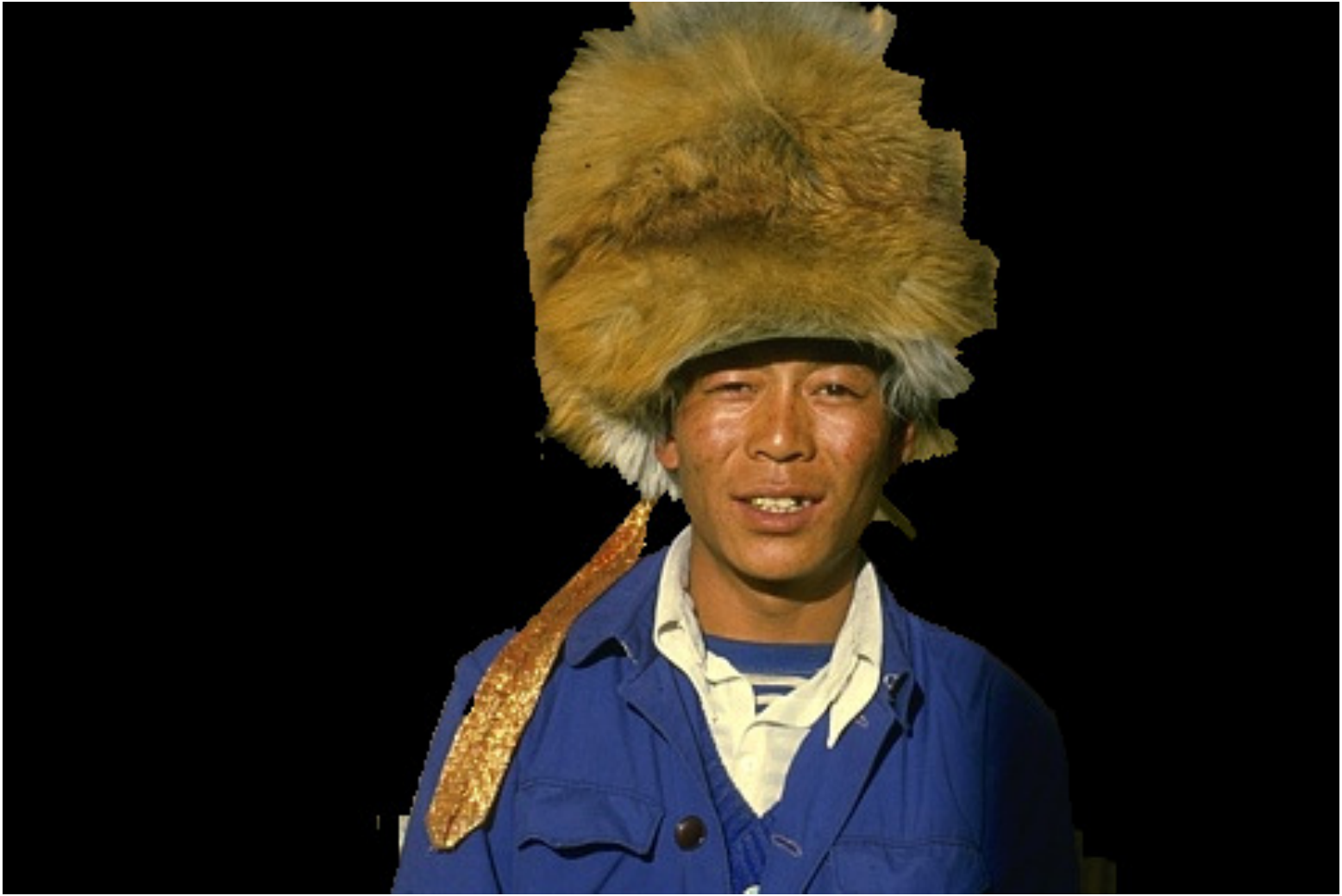}
\includegraphics[width=0.113\textwidth]{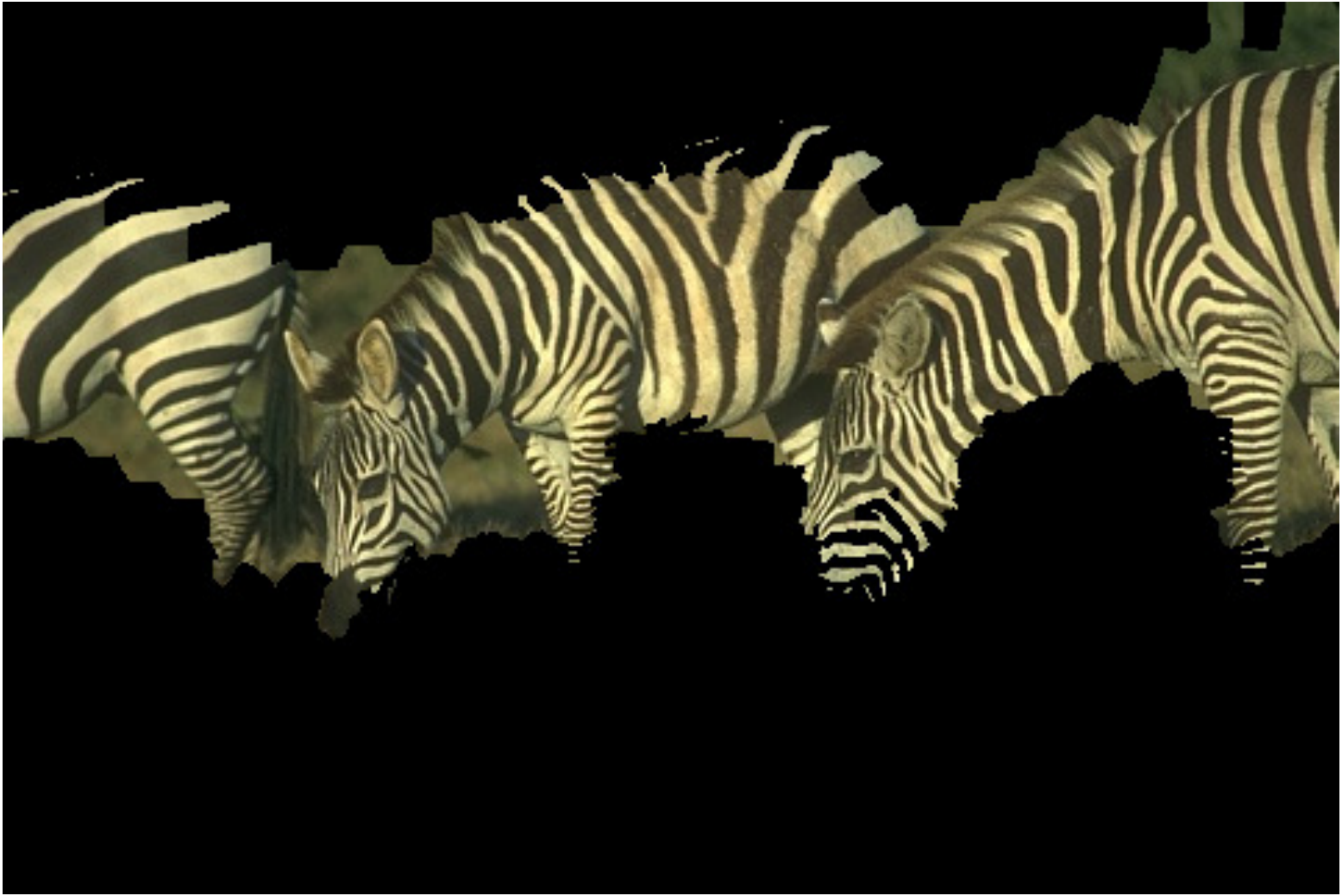}
\includegraphics[width=0.113\textwidth]{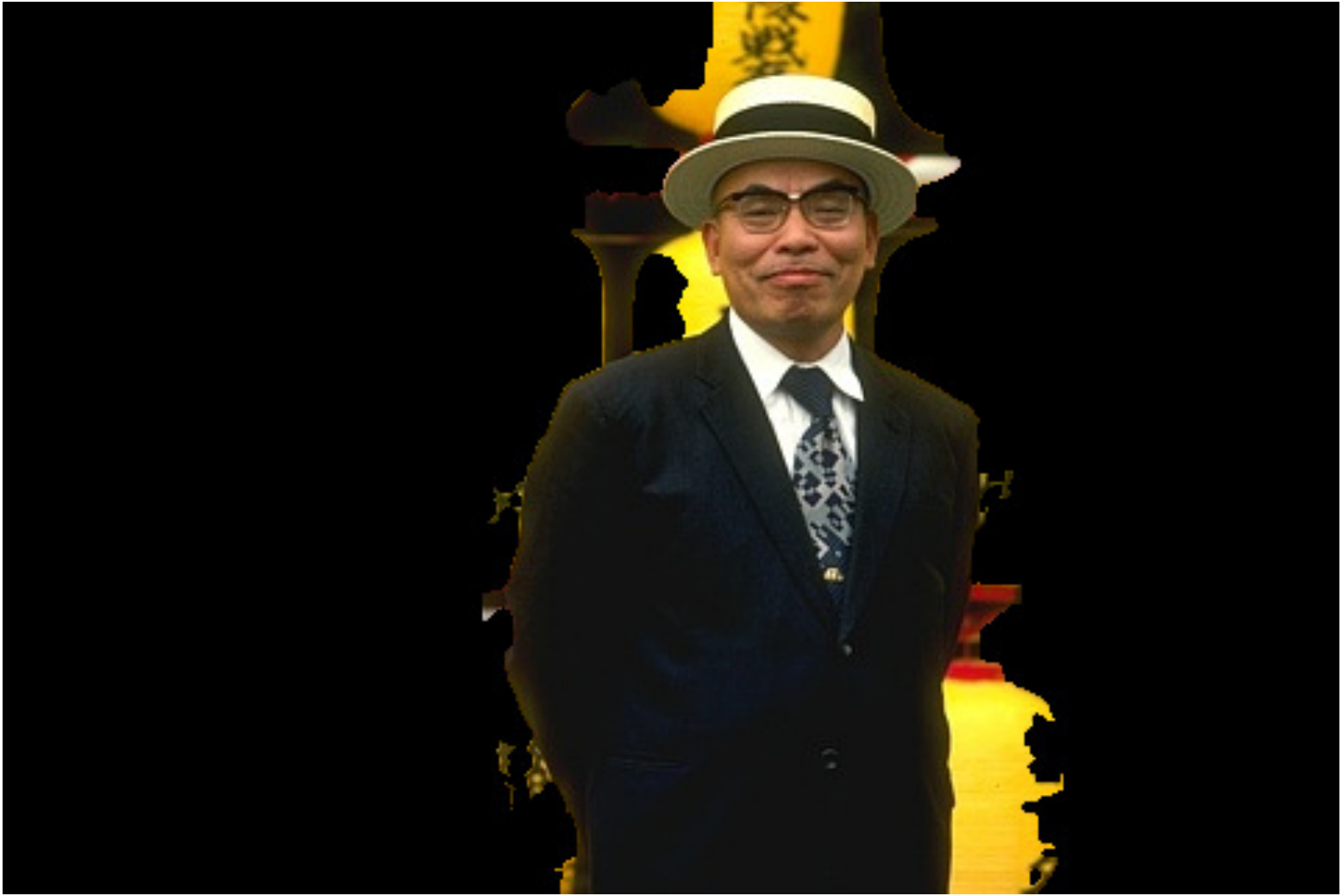}
\includegraphics[width=0.113\textwidth]{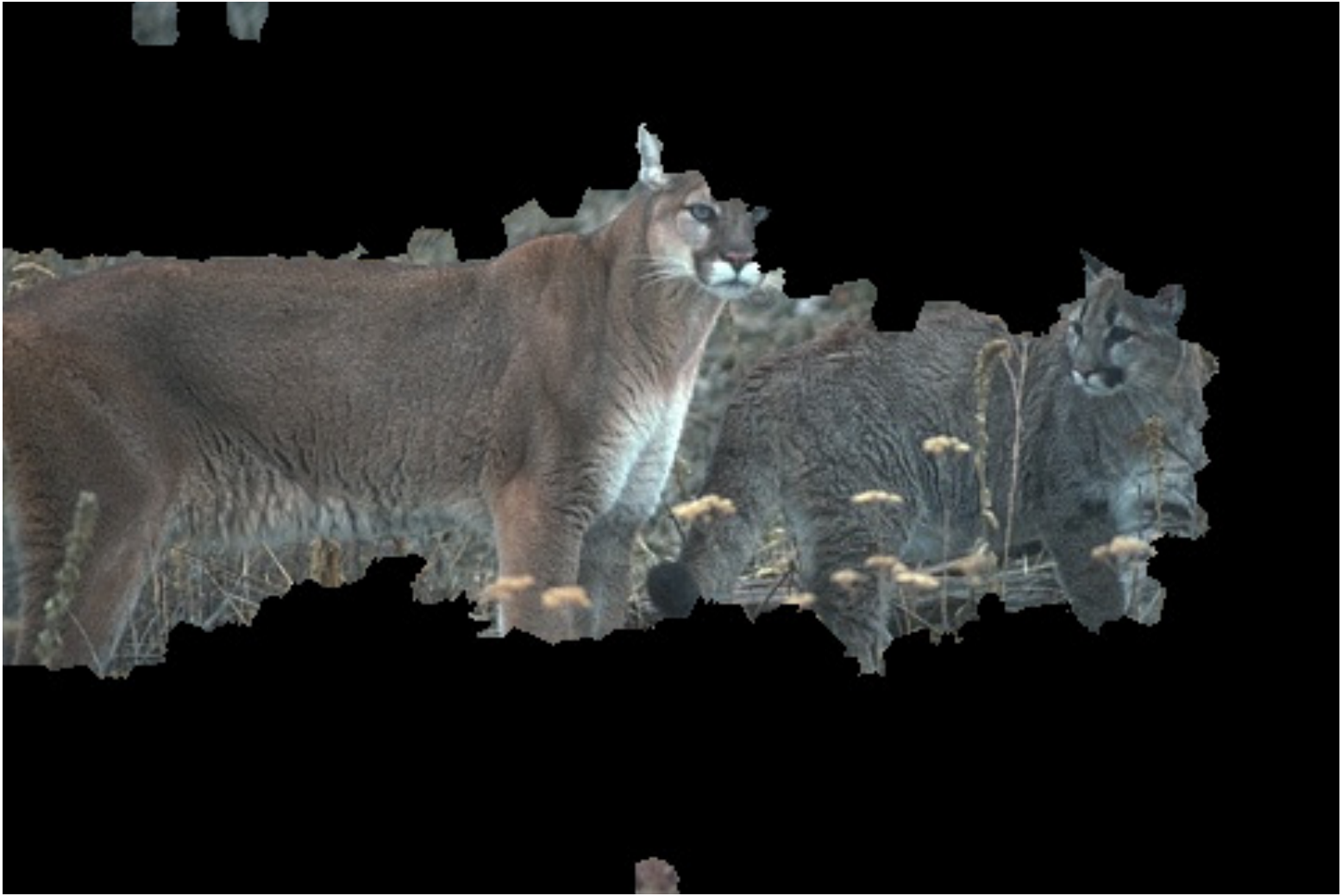}
\includegraphics[width=0.113\textwidth]{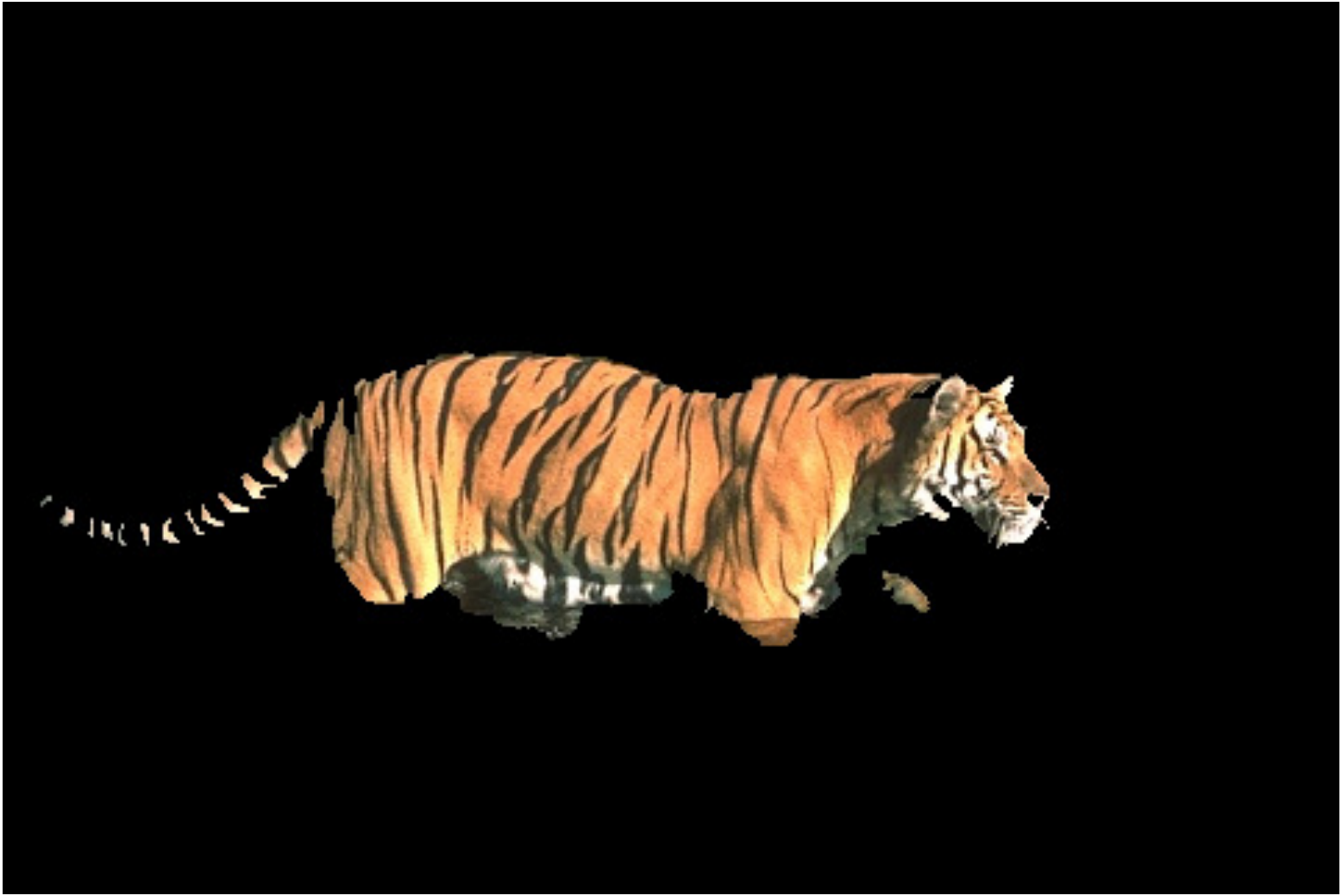}
\includegraphics[width=0.113\textwidth]{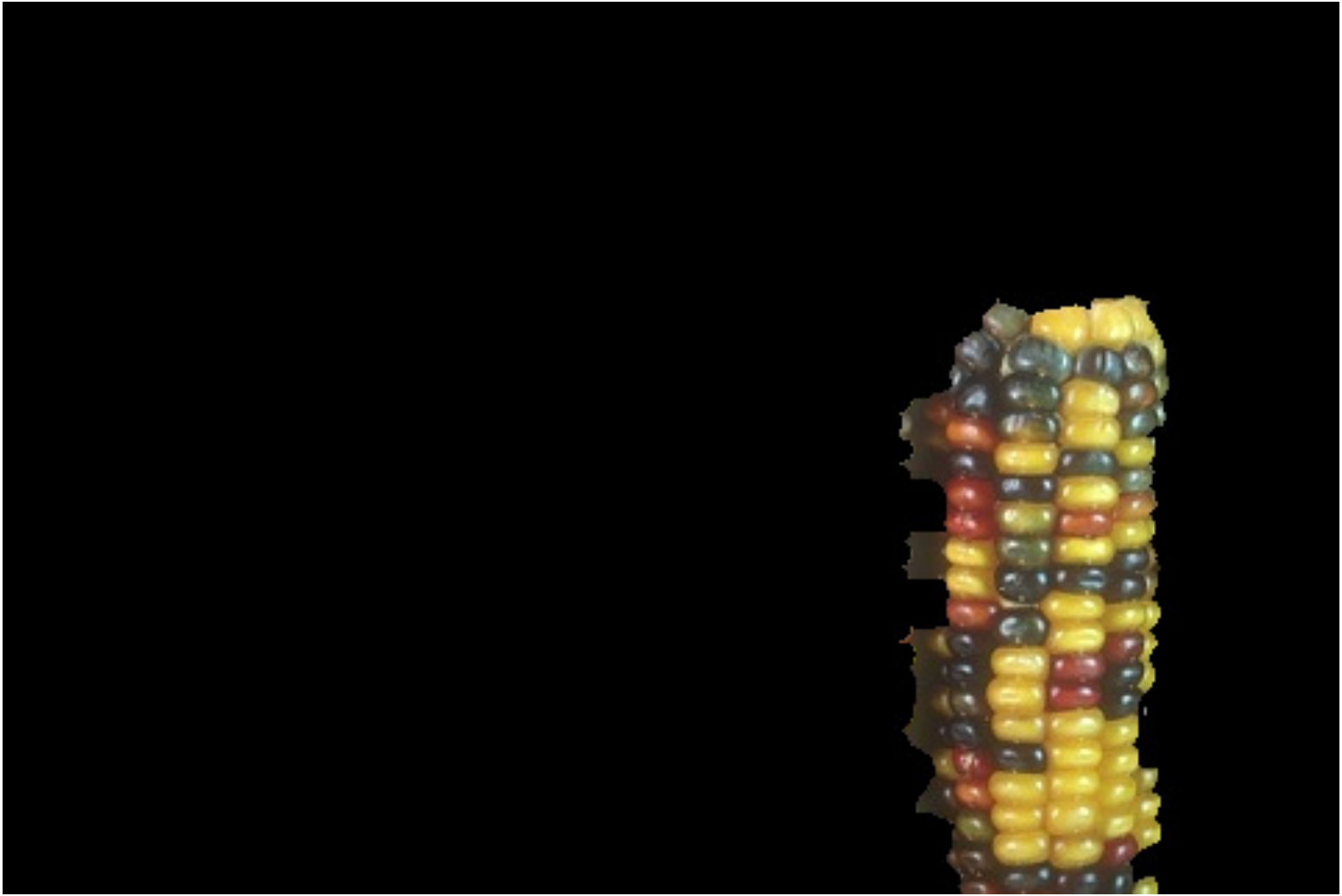}
\includegraphics[width=0.113\textwidth]{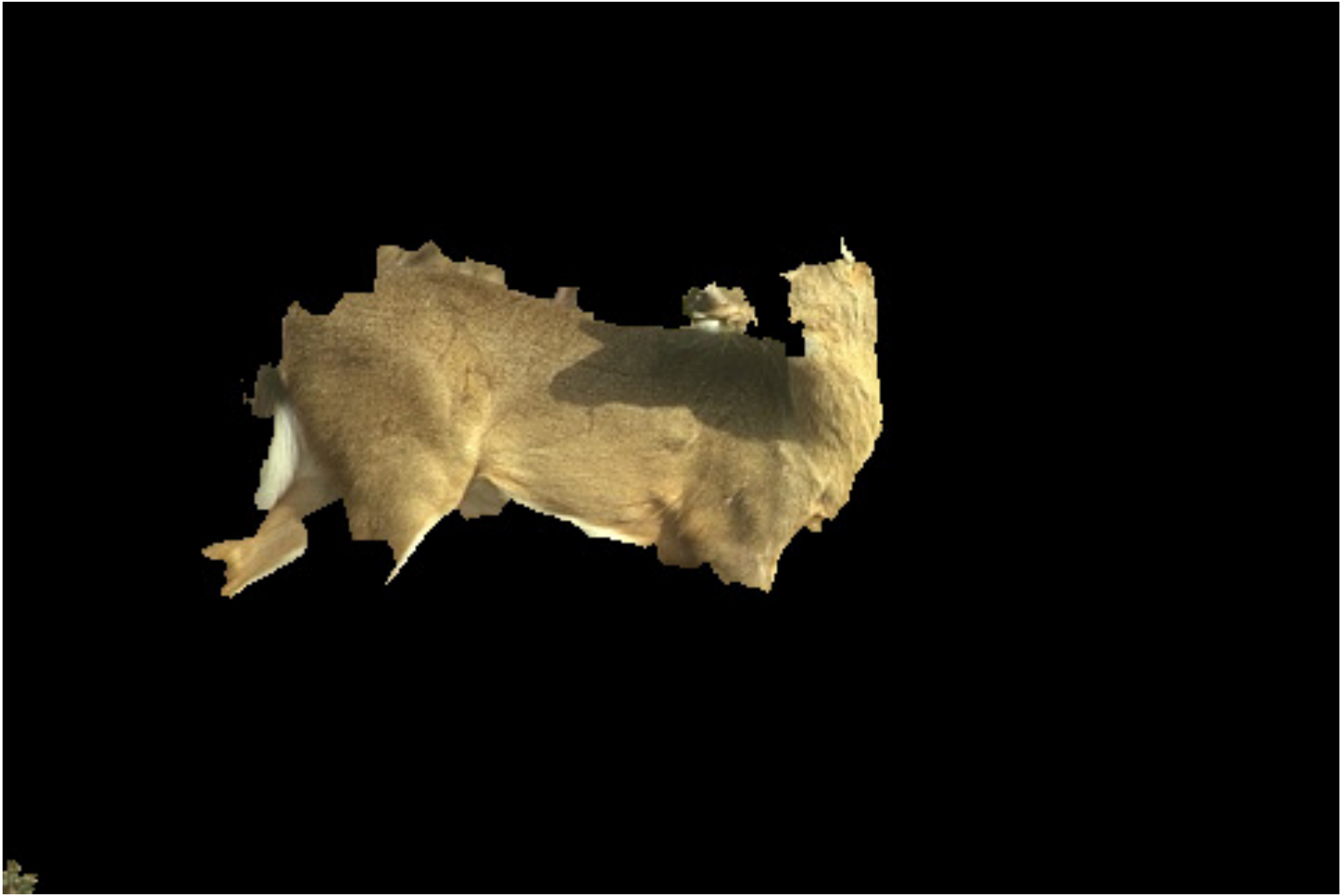}
\includegraphics[width=0.113\textwidth]{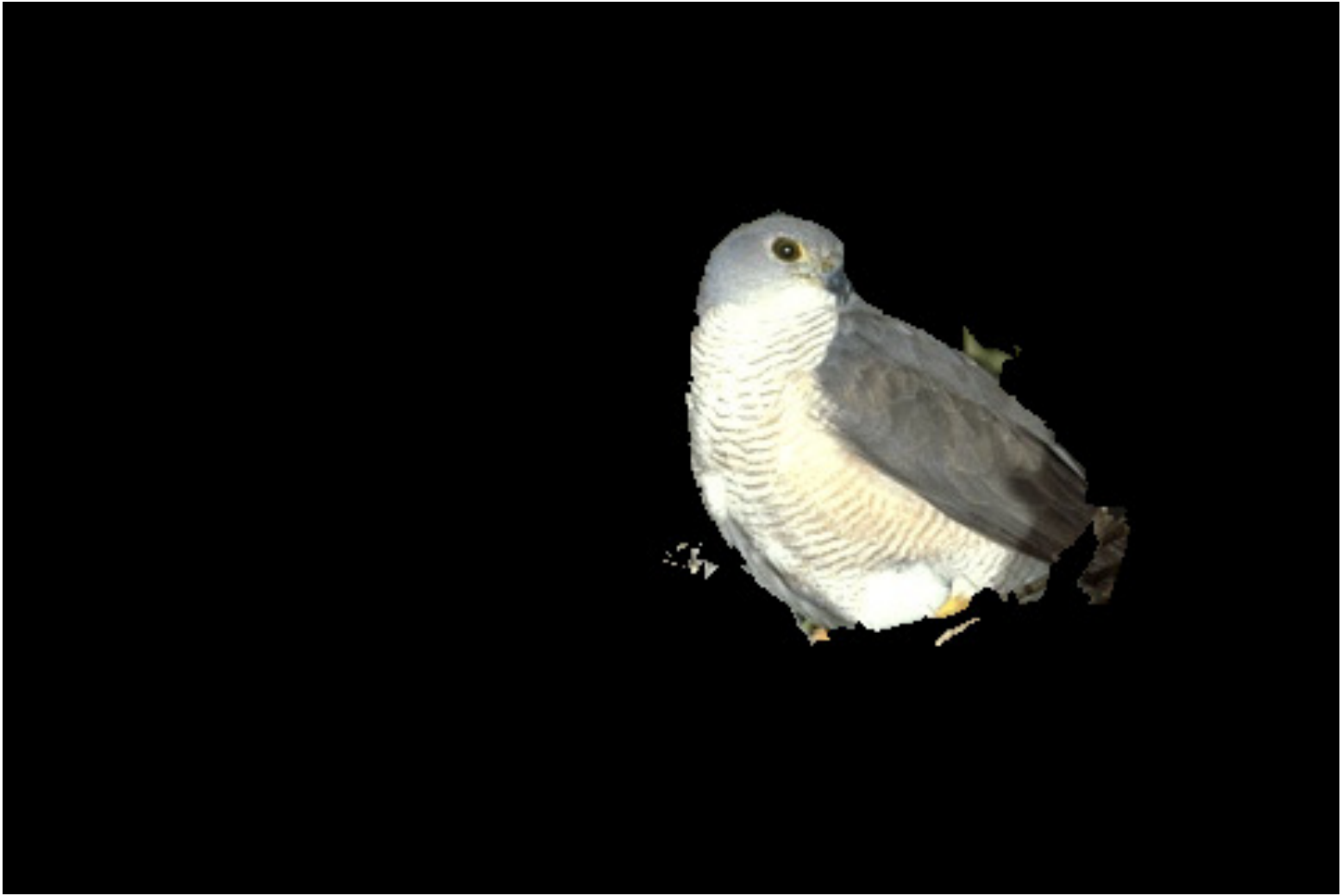}
\centering
\end{minipage}
}\\
\vspace{-2mm}
\subfloat{
\begin{minipage}[c]{0.002\textwidth}
\begin{turn}{90} {\scriptsize \lbfgsb} \end{turn}
\end{minipage}
\hspace{-0.2mm}
\begin{minipage}[c]{0.996\textwidth}
\includegraphics[width=0.113\textwidth]{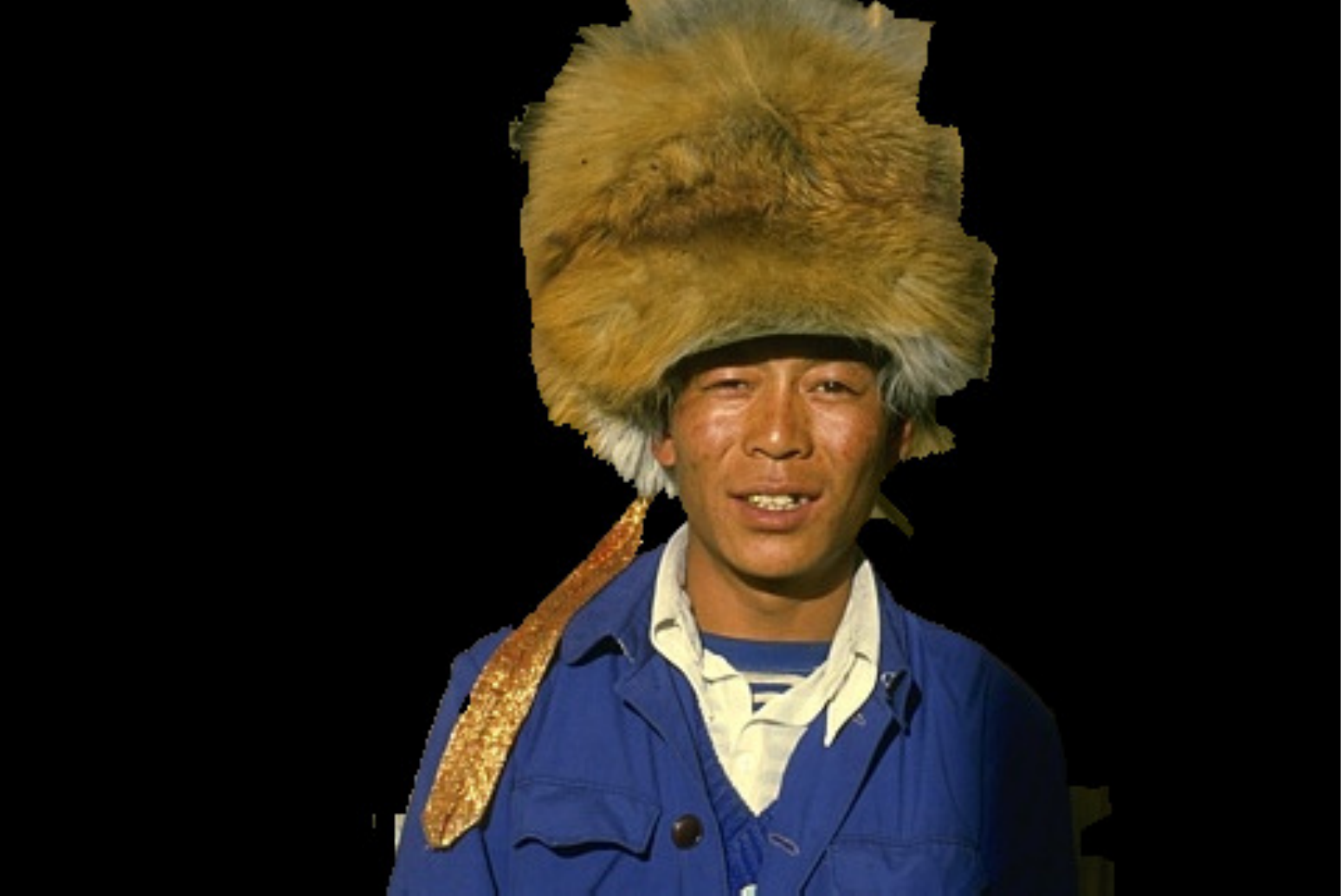}
\includegraphics[width=0.113\textwidth]{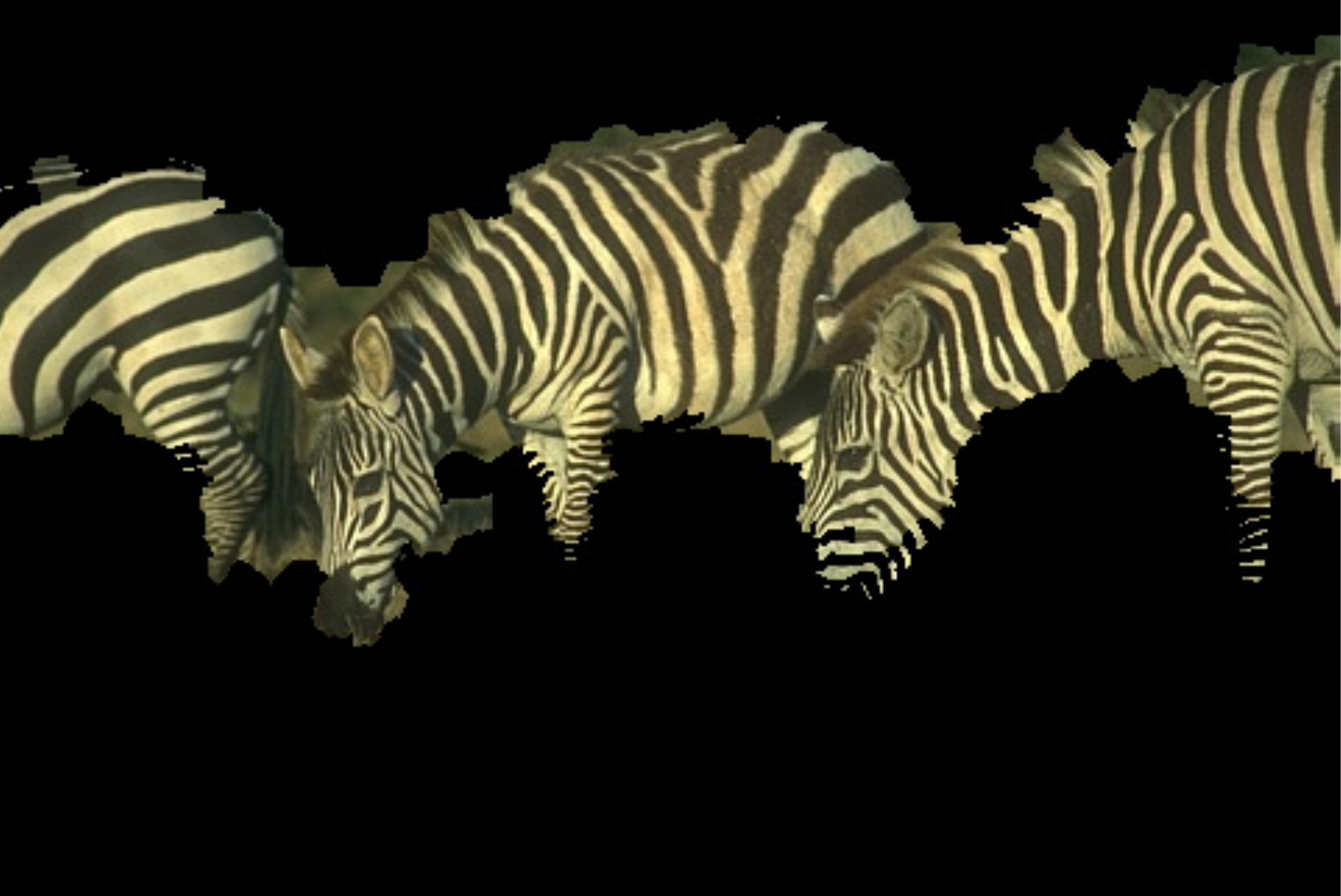}
\includegraphics[width=0.113\textwidth]{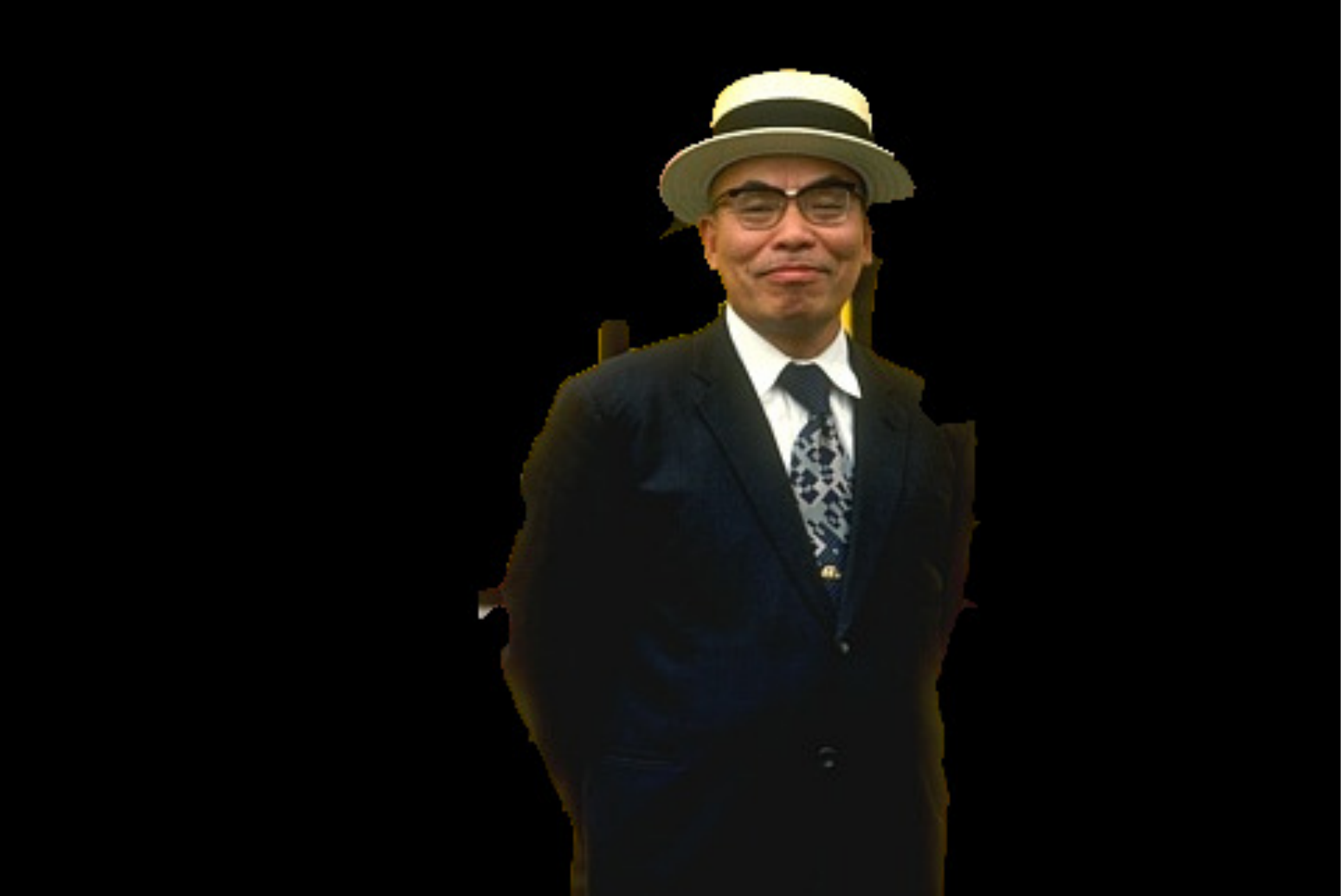}
\includegraphics[width=0.113\textwidth]{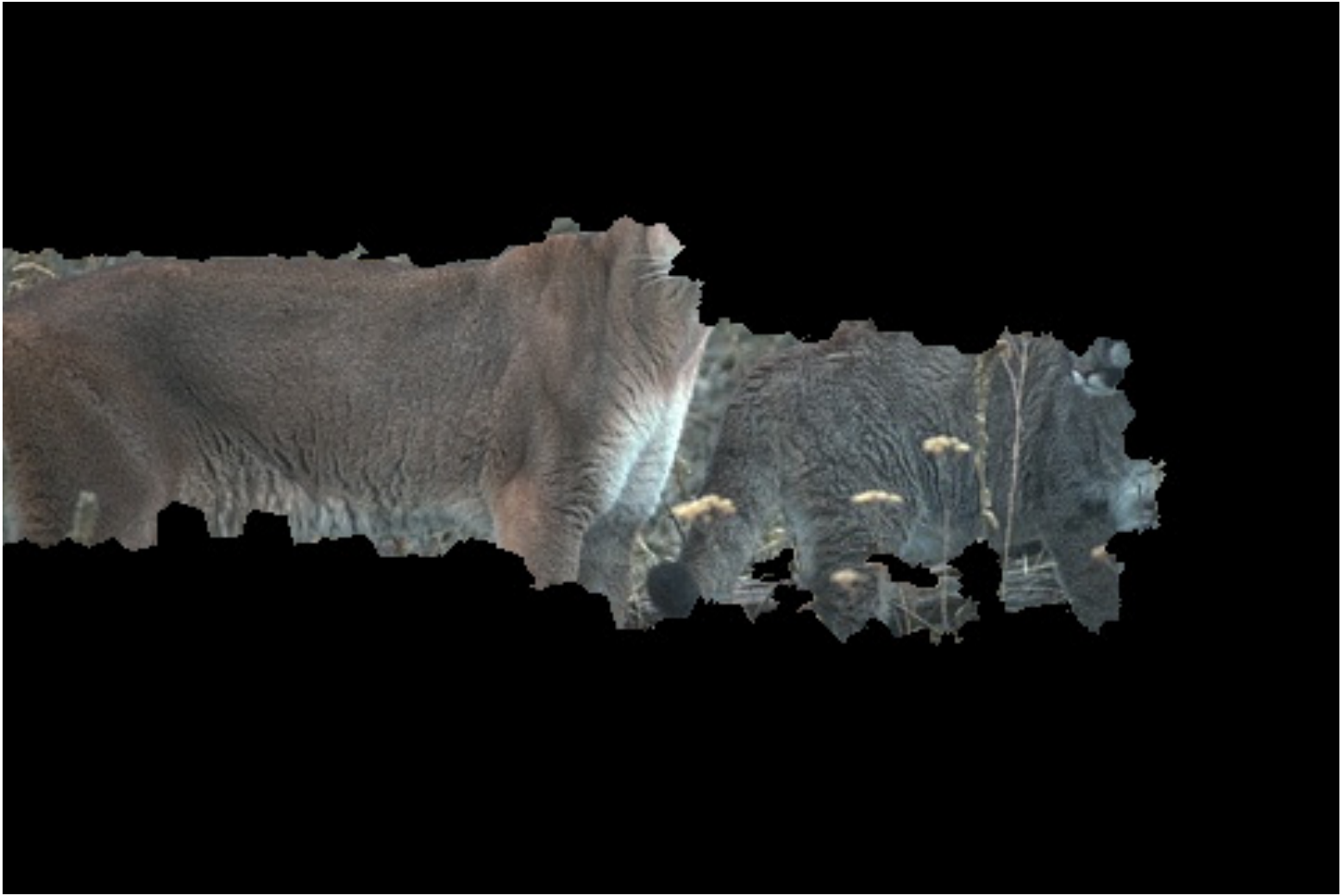}
\includegraphics[width=0.113\textwidth]{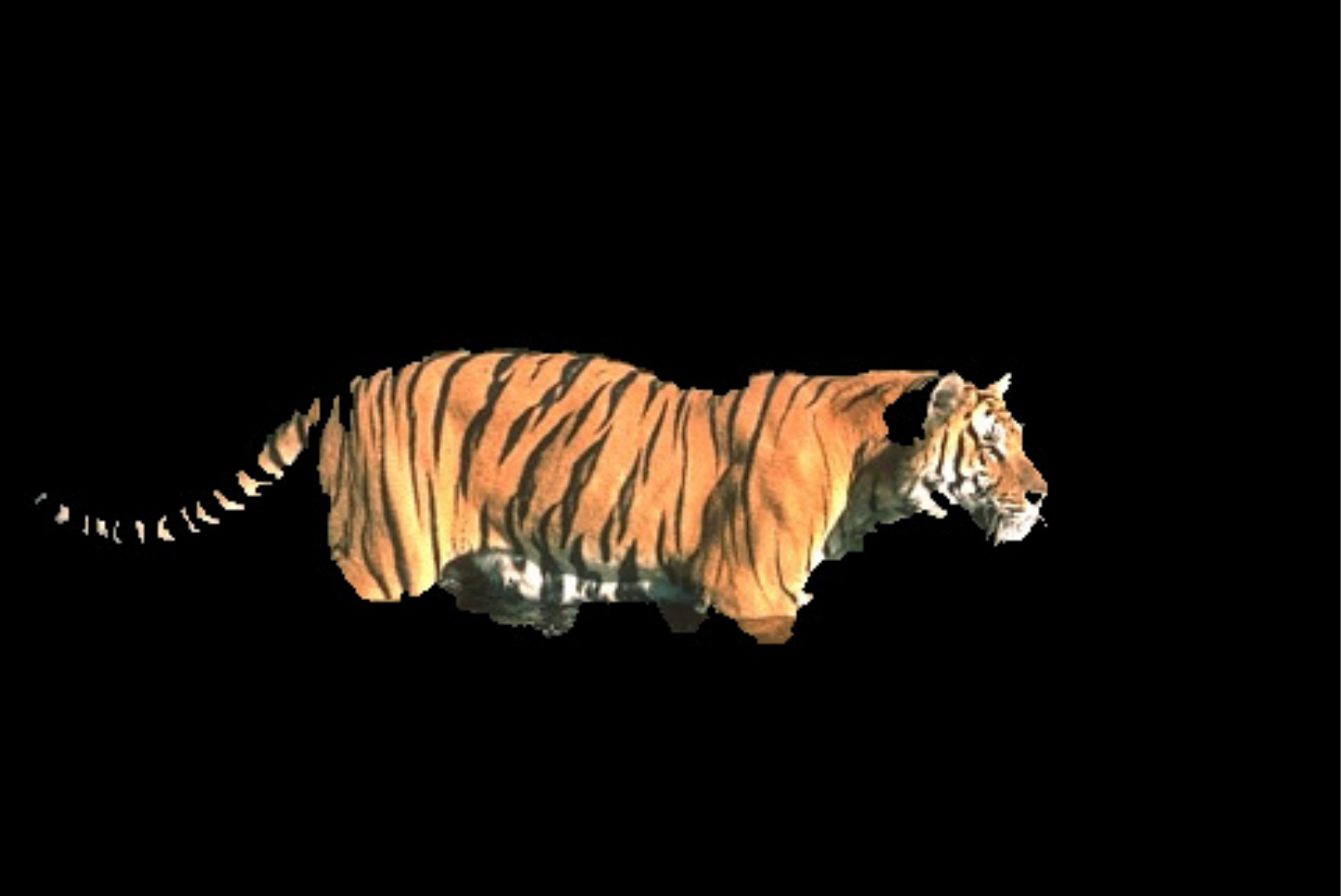}
\includegraphics[width=0.113\textwidth]{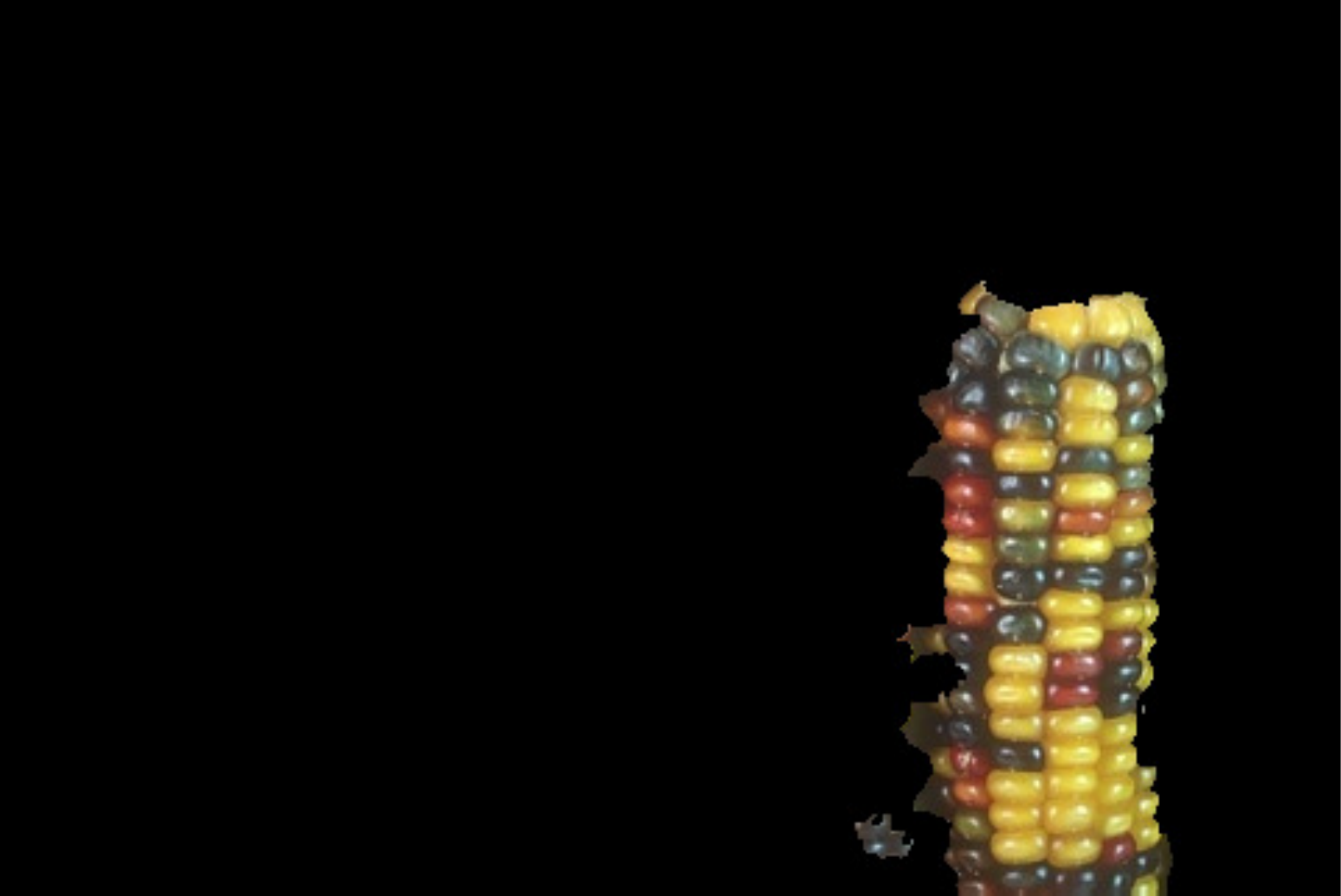}
\includegraphics[width=0.113\textwidth]{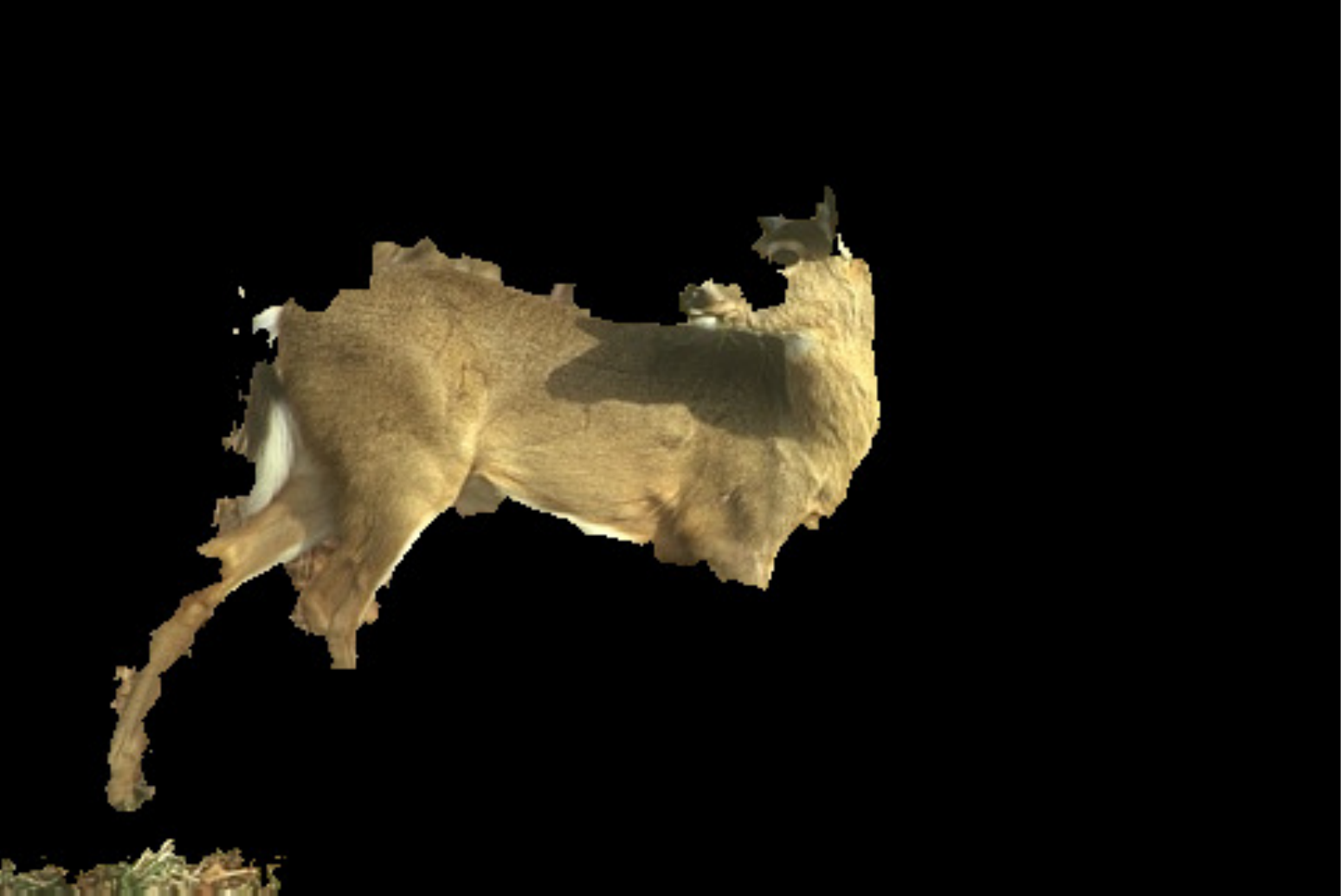}
\includegraphics[width=0.113\textwidth]{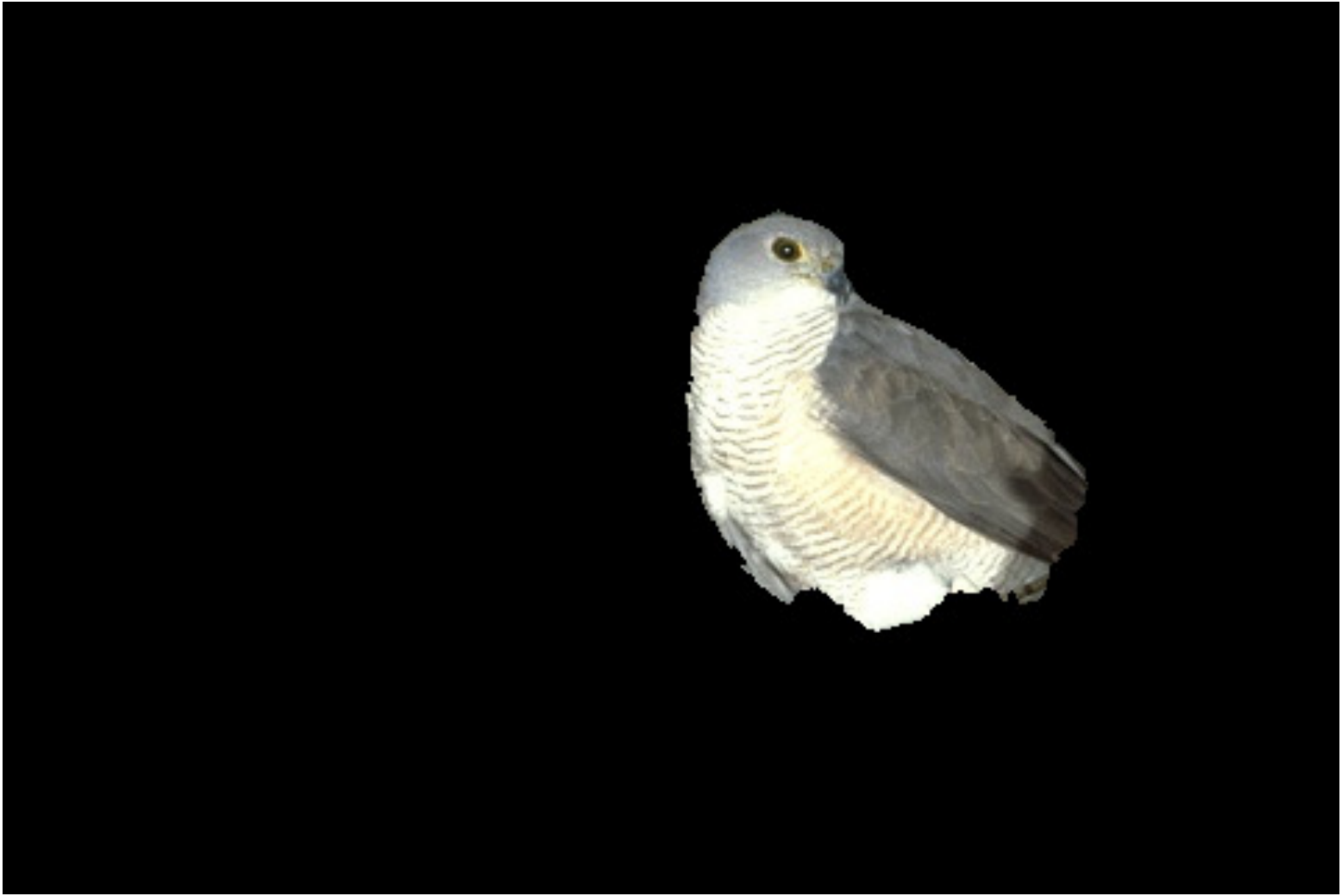}
\centering
\end{minipage}
}\\
\caption{Image segmentation with histogram constraints (coarse over-segmentation).
The number of superpixels is around $726$.
From top to bottom are:
original images, ground-truth (GT), superpixels, unary terms for graph-cuts, results of graph-cuts, SMQC and \lbfgsb.
Results for other SDP based methods are similar to that of \lbfgsb and thus omitted.
Graph cuts tends to mix together the foreground and background with similar color.
\lbfgsb achieves the best segmentation results.}
\label{fig:imgsegmnonlinear}
\end{figure*}

\begin{table}[t]
  \centering
  \scriptsize
  \begin{tabular}{l@{\hspace{0.05cm}}|@{\hspace{0.05cm}}c@{\hspace{0.05cm}}c@{\hspace{0.05cm}}c@{\hspace{0.05cm}}c@{\hspace{0.05cm}}c@{\hspace{0.05cm}}|@{\hspace{0.05cm}}c@{\hspace{0.05cm}}c}
  \hline
& & & & & & &\\ [-2ex]
     Methods  & \lbfgsb  & \smooth & SeDuMi & SDPT3 & MOSEK & GC & SMQC\\
  \hline
  \hline
& & & & & & &\\ [-2ex]
Time/Iters     	& $32.3$s/$248.3$	& $\mathbf{14.8}${\bf s}/$\mathbf{22.9}$	& $2$m$57$s	& $1$m$2$s	& $1$m$19$s	& $0.2$s	& $5.1$s\\
F-measure 	& $0.930$	& $0.925$	& $0.928$	& $0.926$	& $0.928$	& $0.722$	& $0.832$\\
Upper-bound 	& $-120.0$	& $-120.1$	& $-120.1$	& $-119.8$	& $-120.1$	& \NA	& \NA \\
Lower-bound 	& $-126.8$	& $-126.8$	& $-126.7$	& $-126.7$	& $-126.7$	& \NA	& \NA\\
  \hline
  \end{tabular}
  \caption{Image segmentation with histogram constraints.
           Results are the average of the eight images shown in Fig.~\ref{fig:imgsegmnonlinear}.
           \smooth uses fewer iterations than \lbfgsb and is faster than all other SDP based methods.
           Graph cuts and SMQC exhibit worse F-measure scores than SDP based methods.
          }
  \label{tab:imgsegmnonlinear}
\end{table}

\noindent{\bf Histogram Constraints}
{%
Given a random sample $\bz$, a feasible solution $\bx$ to the corresponding BQP formulation~\eqref{eq:app_hist}
can be obtained through $\bx = \mathrm{sign}(\bz - \theta)$,
where
\begin{align}
 \theta \!=\! \left\{
            \begin{array}{ll}
               \!0
                      \!& \mbox{if }  \lvert \mathbf{1}^\T  \mathrm{sign}(\bz) \rvert \!\leq\! \kappa n, \\
               \!(\tilde{z}_{\lfloor \frac{n + \kappa\! n \!}{2} \rfloor} \!+\! \tilde{z}_{\lfloor \frac{n + \kappa\! n \!}{2} \rfloor +\! 1}) / 2
                      \!& \mbox{if } { \mathbf{1}^\T \mathrm{sign}(\bz) \!>\!   \kappa n} , \\
               \!(\tilde{z}_{\lceil \frac{n - \kappa\! n  \!}{2} \rceil} \!+\! \tilde{z}_{\lceil \frac{n -\kappa\! n  \!}{2}\rceil +\! 1}) / 2
                      \!& \mbox{if } { \mathbf{1}^\T \mathrm{sign}(\bz) \!<\! - \kappa n},
            \end{array}
         \right.
\label{eq:hist-round}
\end{align}
and $\tilde{\bz}$ is obtained by sorting $\bz$ in descending order.
}
For graph cuts methods, the histogram constraint is encoded as unary terms:
$\varphi_i = - \ln \big(  \mathrm{Pr}(\bbf_i|\mbox{fore}) / \mathrm{Pr}(\bbf_i|\mbox{back}) \big)$, $i = 1,2,\dots,n$.
$\mathrm{Pr}(\bbf_i | \mbox{fore})$ and $\mathrm{Pr}(\bbf_i | \mbox{back})$ are
probabilities for the color of the $i$th pixel belonging to foreground and background respectively.

Fig.~\ref{fig:imgsegmnonlinear} and Table~\ref{tab:imgsegmnonlinear}
demonstrate the results for image segmentation with histogram constraints.
We can see that unary terms (the second row in Fig.~\ref{fig:imgsegmnonlinear}) are not ideal especially when the color distribution of foreground and background are overlapped.
For example in the first image, the white collar of the person in the foreground have similar unary terms with the white wall in the background.
The fourth row of Fig.~\ref{fig:imgsegmnonlinear} shows that the unsatisfactory unary terms degrade the segmentation results of graph cuts methods significantly.

The average F-measure of all evaluated methods are reported in Table~\ref{tab:imgsegmnonlinear}.
Our methods outperforms graph cuts and SMQC in terms of F-measure.
As for the running time, \smooth is faster than all other SDP-based methods (\ie, \lbfgsb, SeDuMi, SDPT3 and MOSEK).
As expected, \smooth uses much less ($1/6$) iterations than \lbfgsb.
\lbfgsb and \smooth have comparable upper-bounds and lower-bounds than interior-point methods.

From Table~\ref{tab:imgsegmnonlinear}, we can find that SMQC is faster than our methods.
However, SMQC does not scale well to large problems since it needs to compute full eigen-decomposition.
We also test \lbfgsb and SMQC on problems with a larger number of superpixels ($9801$).
Both of the algorithms achieve similar segmentation results, but \lbfgsb is much faster than SMQC ($23$m$21$s \vs $4$h$9$m).

\subsection{Image Co-segmentation}
\label{sec:cosegm}

\begin{figure*}[t]
\vspace{-0.0cm}
\centering{
\subfloat{
\centering{
\begin{minipage}[c]{0.01\textwidth}
\begin{turn}{90} {\scriptsize Images} \end{turn}
\end{minipage}
\begin{minipage}[c]{0.99\textwidth}
\includegraphics[width=0.113\textwidth,clip]{./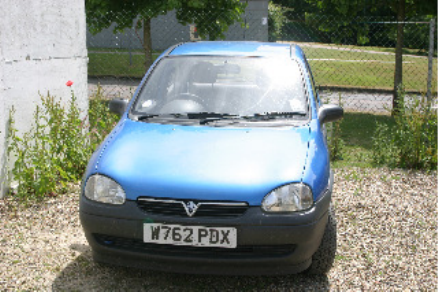}
\includegraphics[width=0.113\textwidth,clip]{./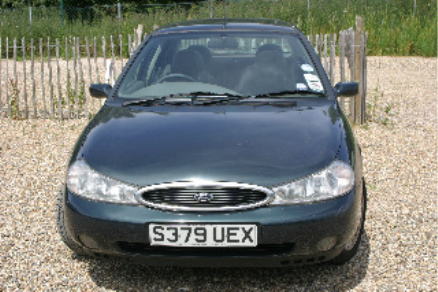}
\includegraphics[width=0.113\textwidth,clip]{./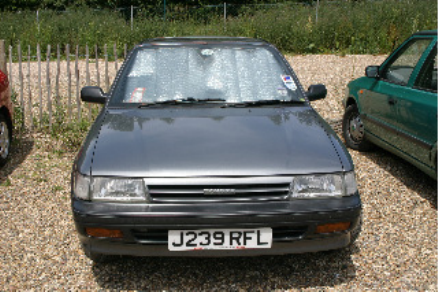}
\includegraphics[width=0.113\textwidth,clip]{./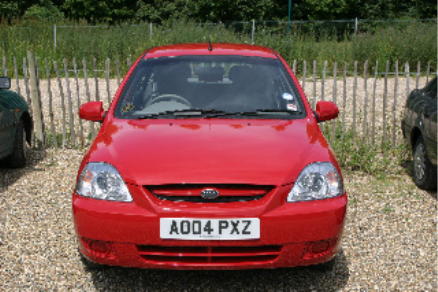}
\includegraphics[width=0.113\textwidth,clip]{./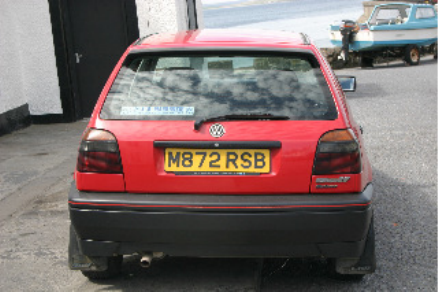}
\includegraphics[width=0.113\textwidth,clip]{./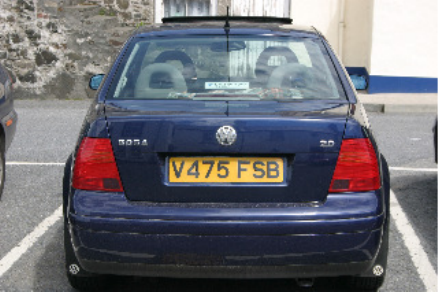}
\includegraphics[width=0.113\textwidth,clip]{./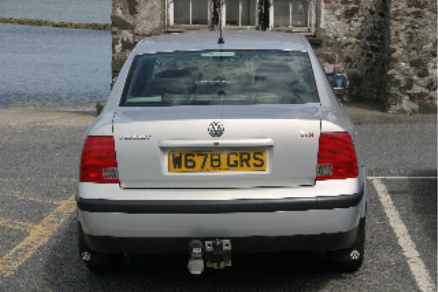}
\includegraphics[width=0.113\textwidth,clip]{./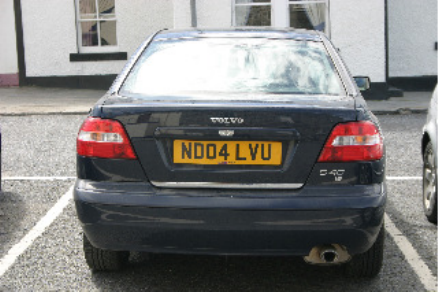}
\end{minipage}
}}
\vspace{-2mm} \\
\subfloat{
\centering{
\begin{minipage}[c]{0.01\textwidth}
\begin{turn}{90} {\scriptsize \lowrank} \end{turn}
\end{minipage}
\begin{minipage}[c]{0.99\textwidth}
\includegraphics[width=0.113\textwidth,clip]{./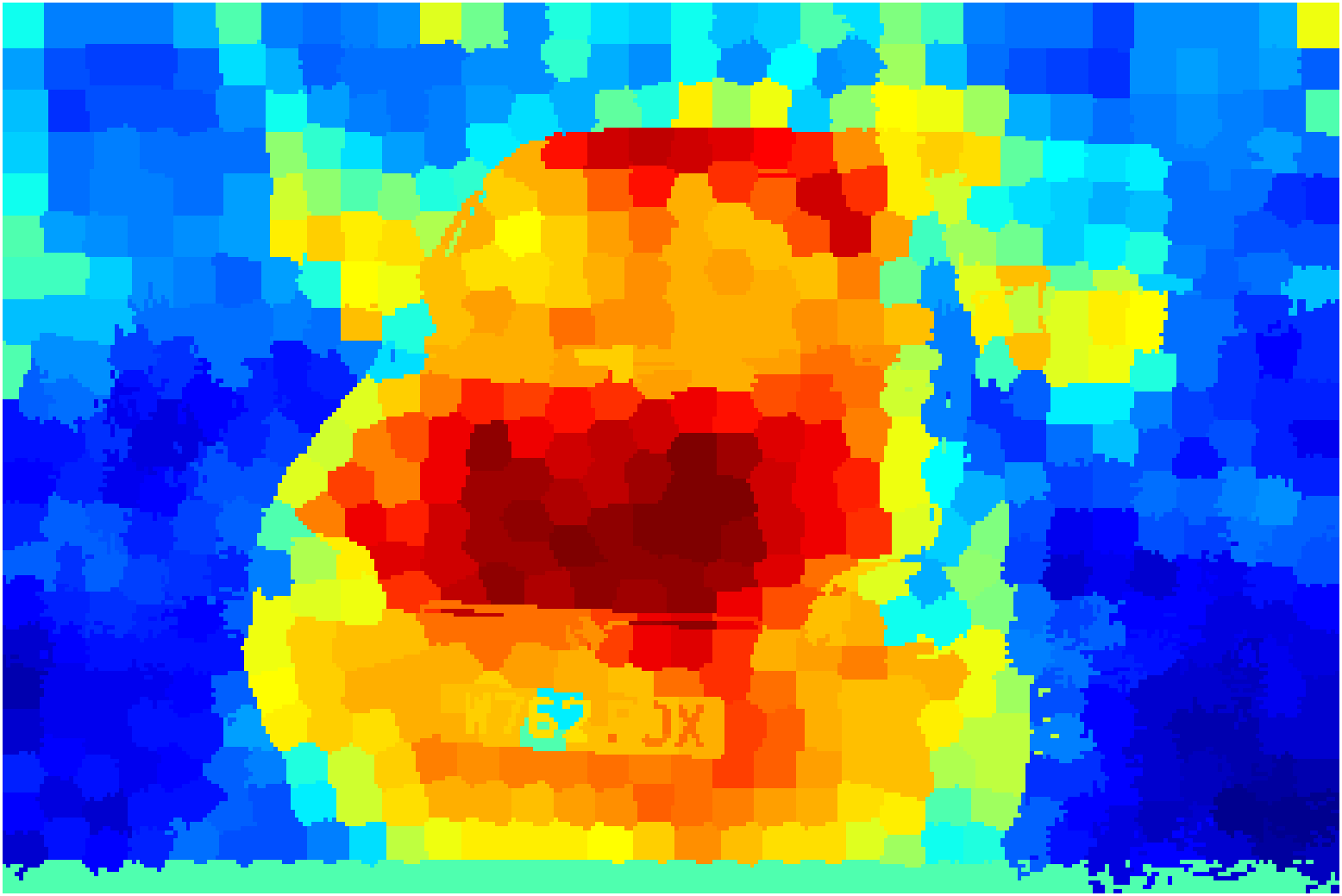}
\includegraphics[width=0.113\textwidth,clip]{./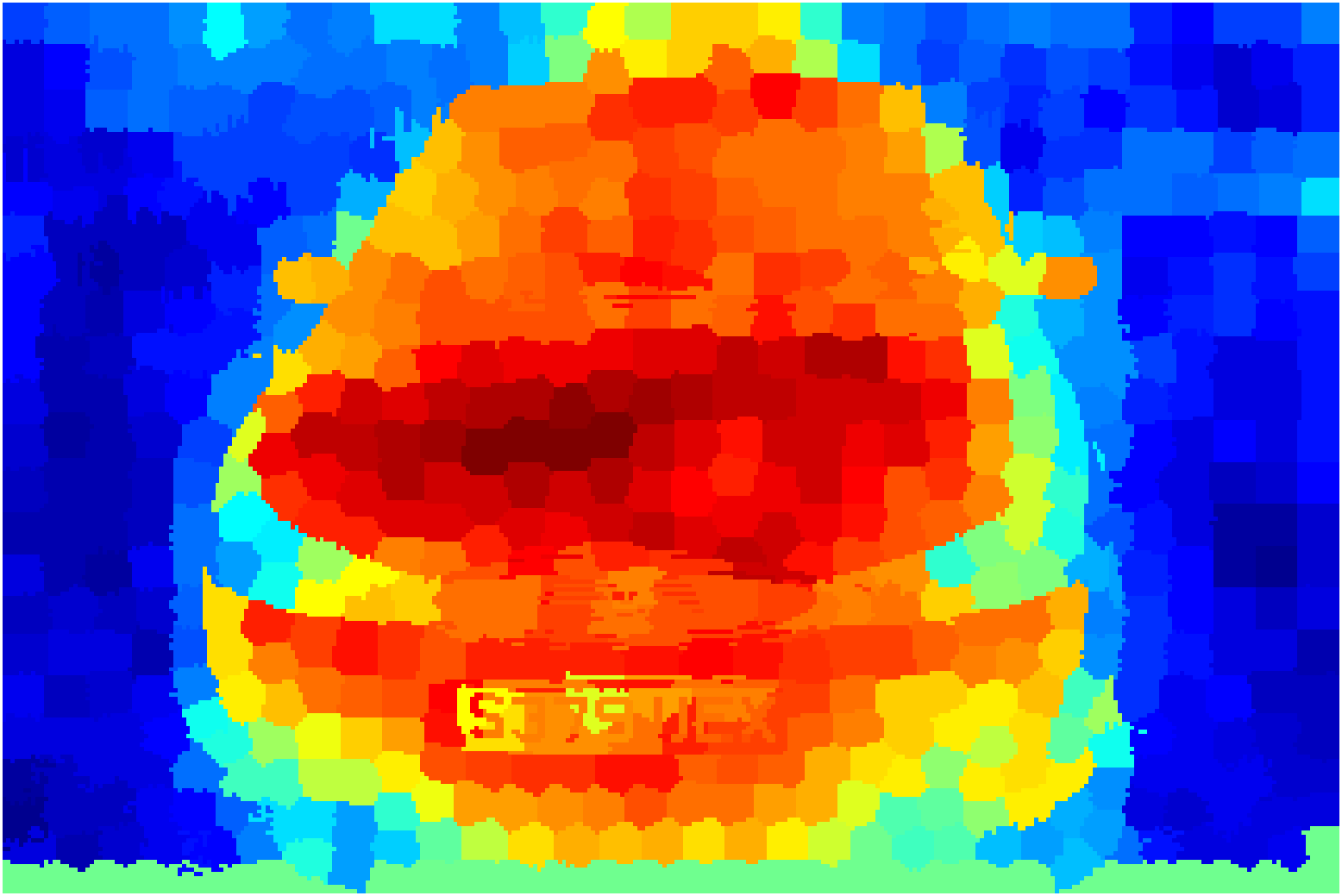}
\includegraphics[width=0.113\textwidth,clip]{./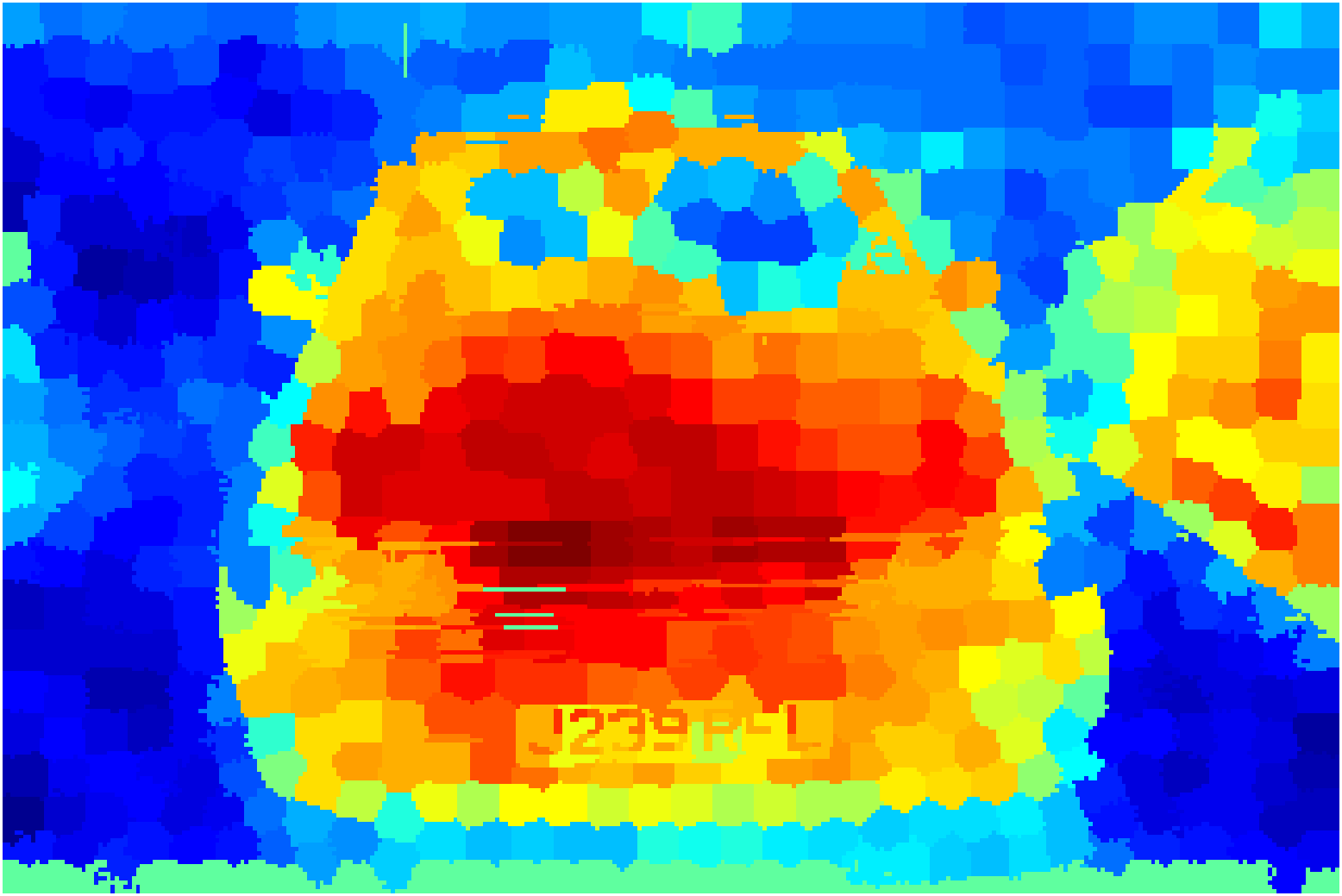}
\includegraphics[width=0.113\textwidth,clip]{./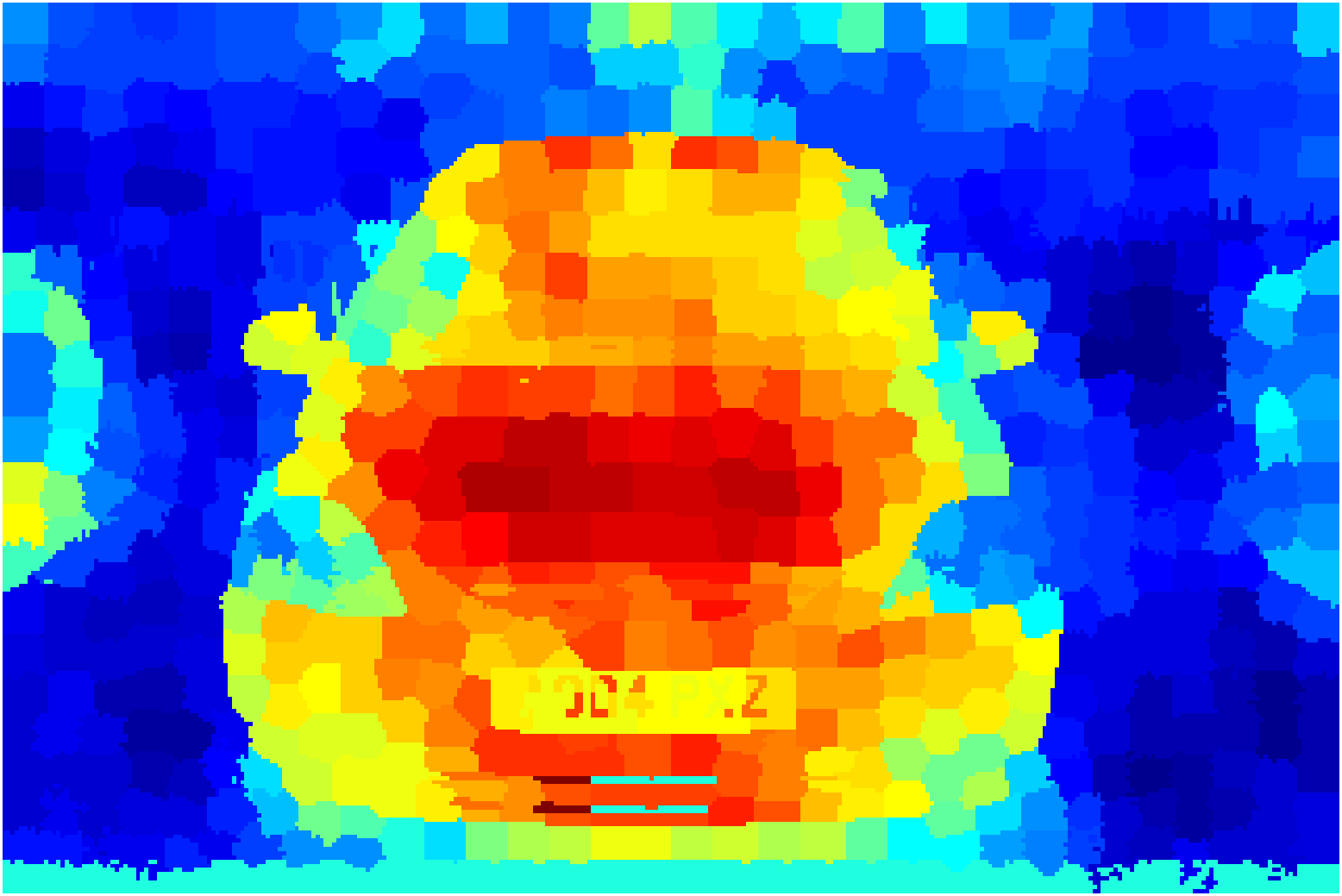}
\includegraphics[width=0.113\textwidth,clip]{./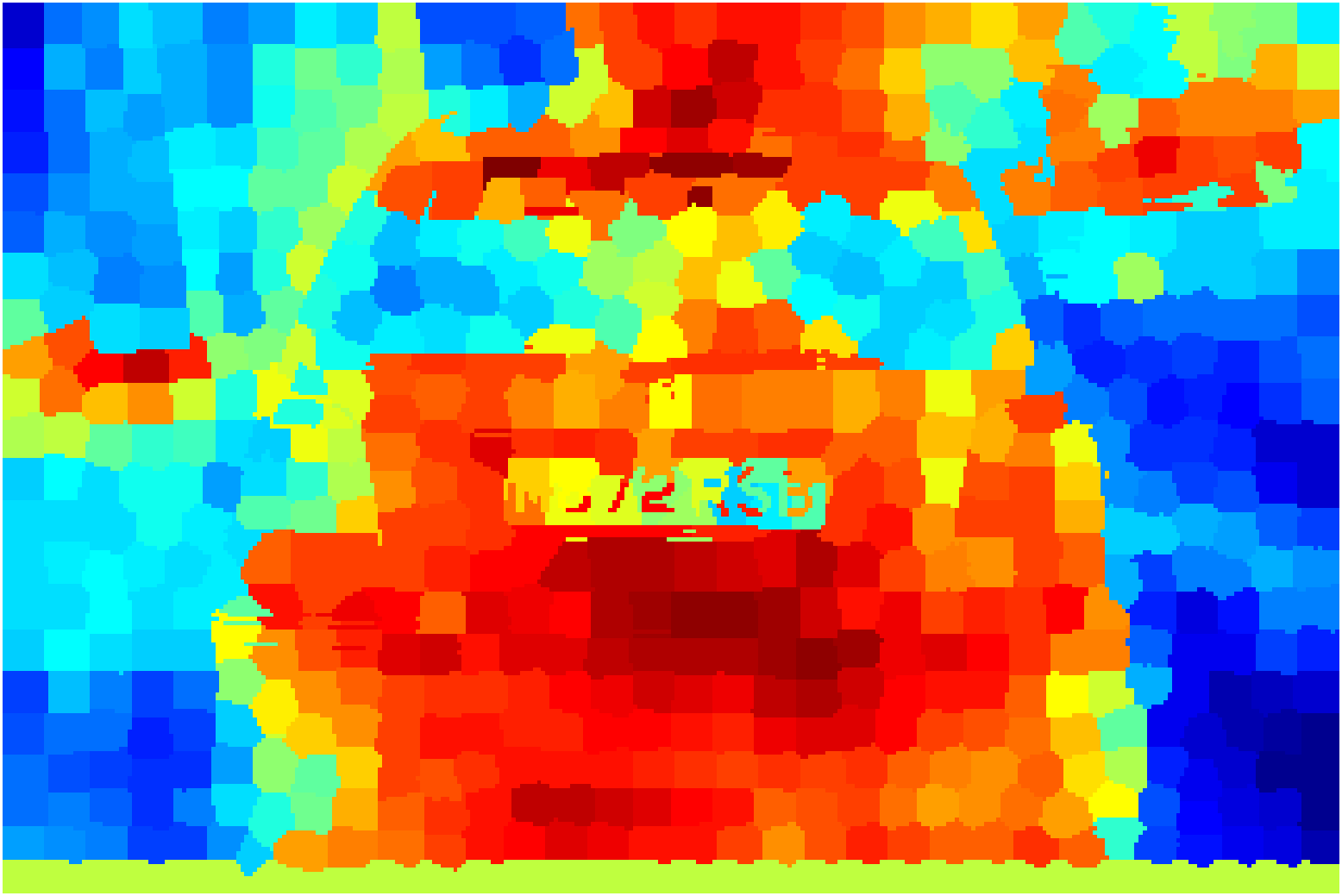}
\includegraphics[width=0.113\textwidth,clip]{./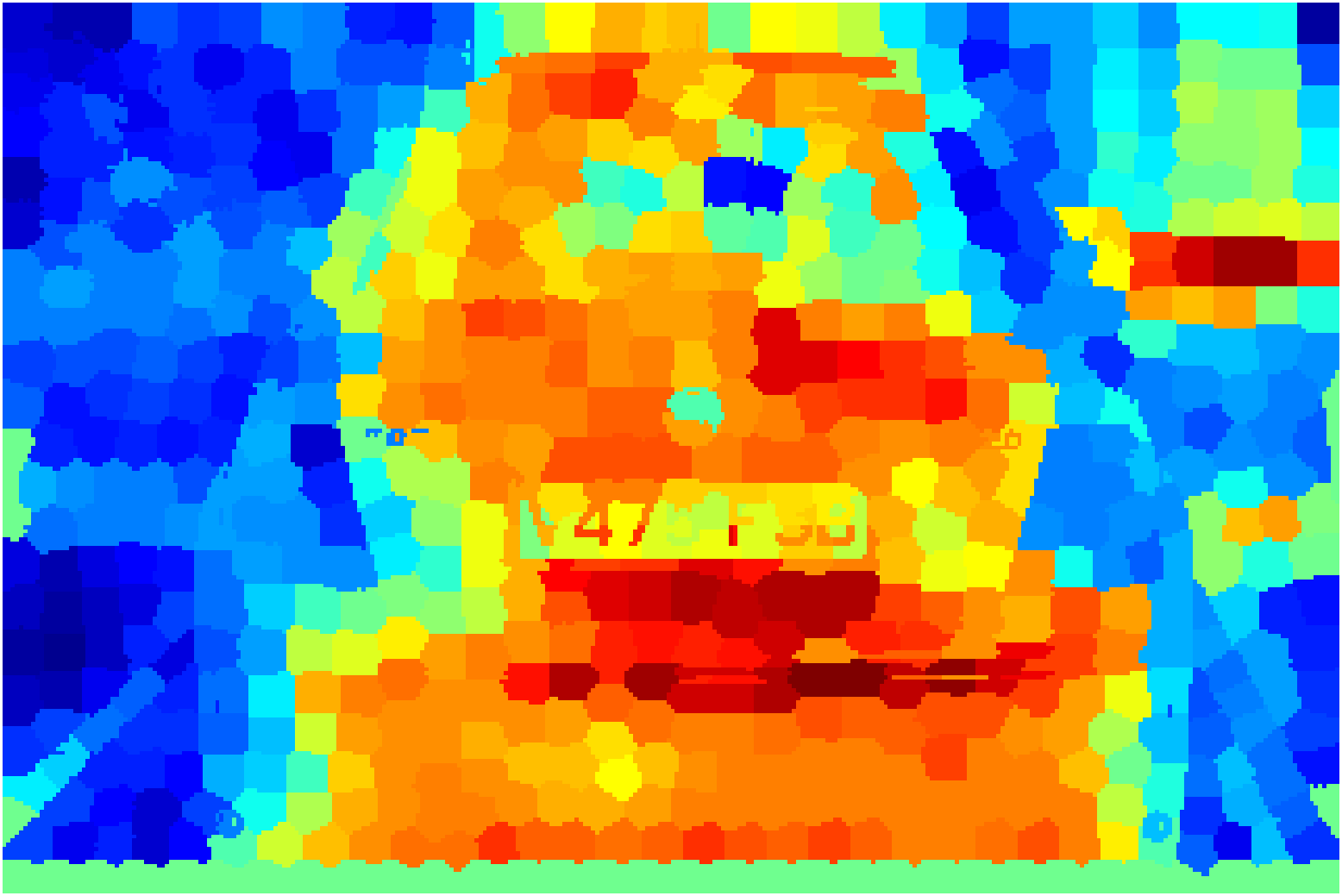}
\includegraphics[width=0.113\textwidth,clip]{./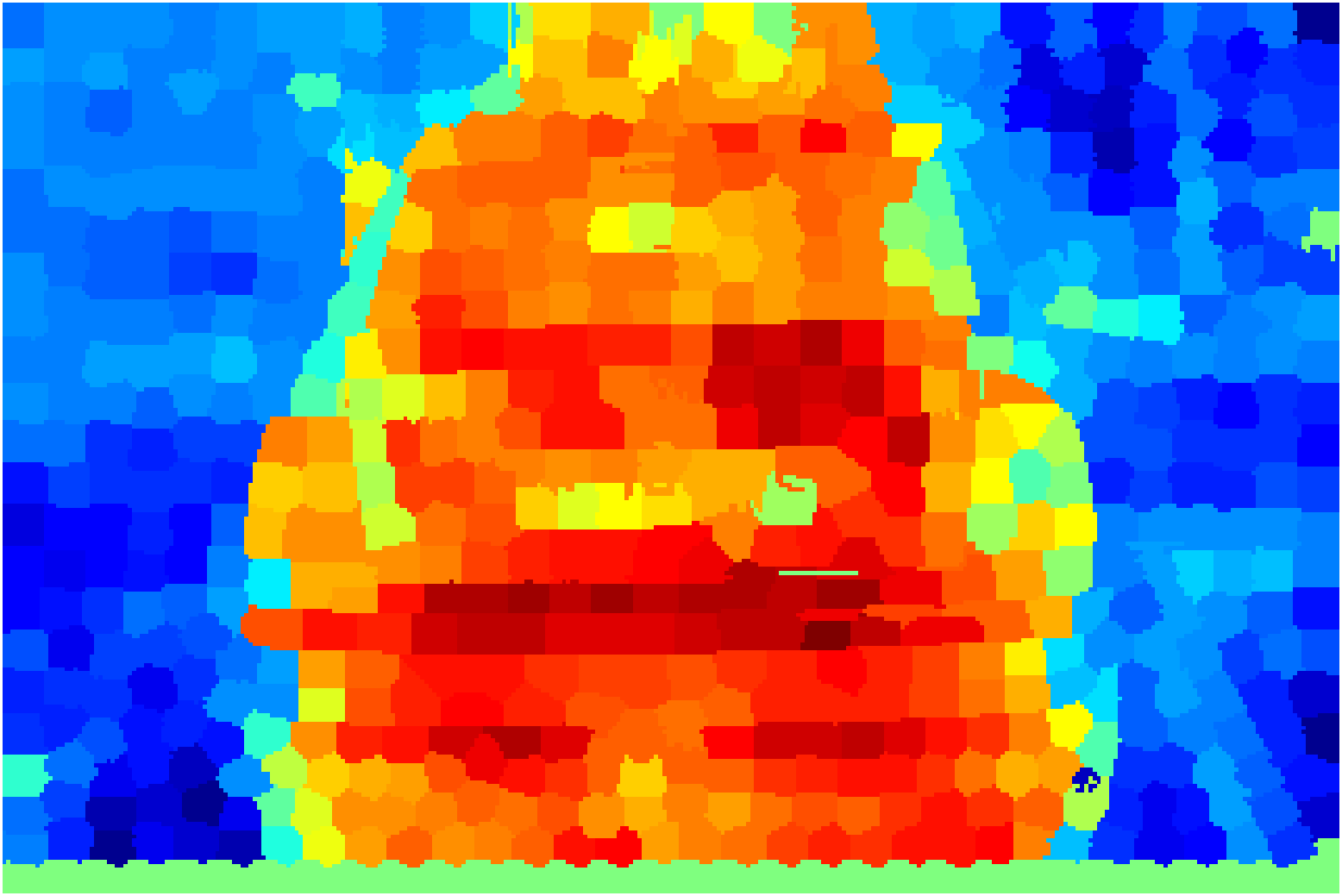}
\includegraphics[width=0.113\textwidth,clip]{./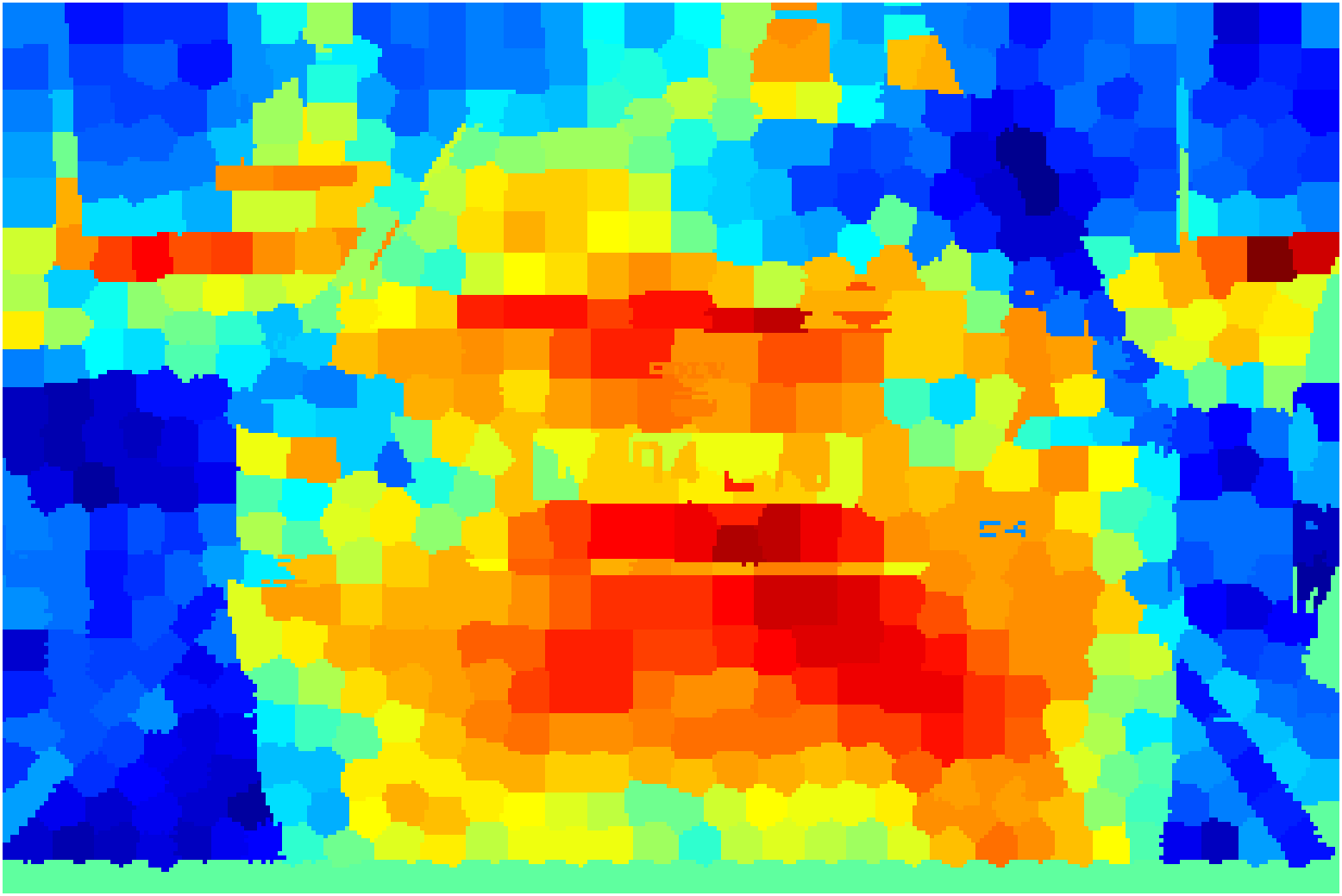}
\end{minipage}
}}\\ \vspace{-2mm}
\subfloat{
\centering{
\begin{minipage}[c]{0.01\textwidth}
\begin{turn}{90} {\scriptsize \lbfgsb} \end{turn}
\end{minipage}
\begin{minipage}[c]{0.99\textwidth}
\includegraphics[width=0.113\textwidth,clip]{./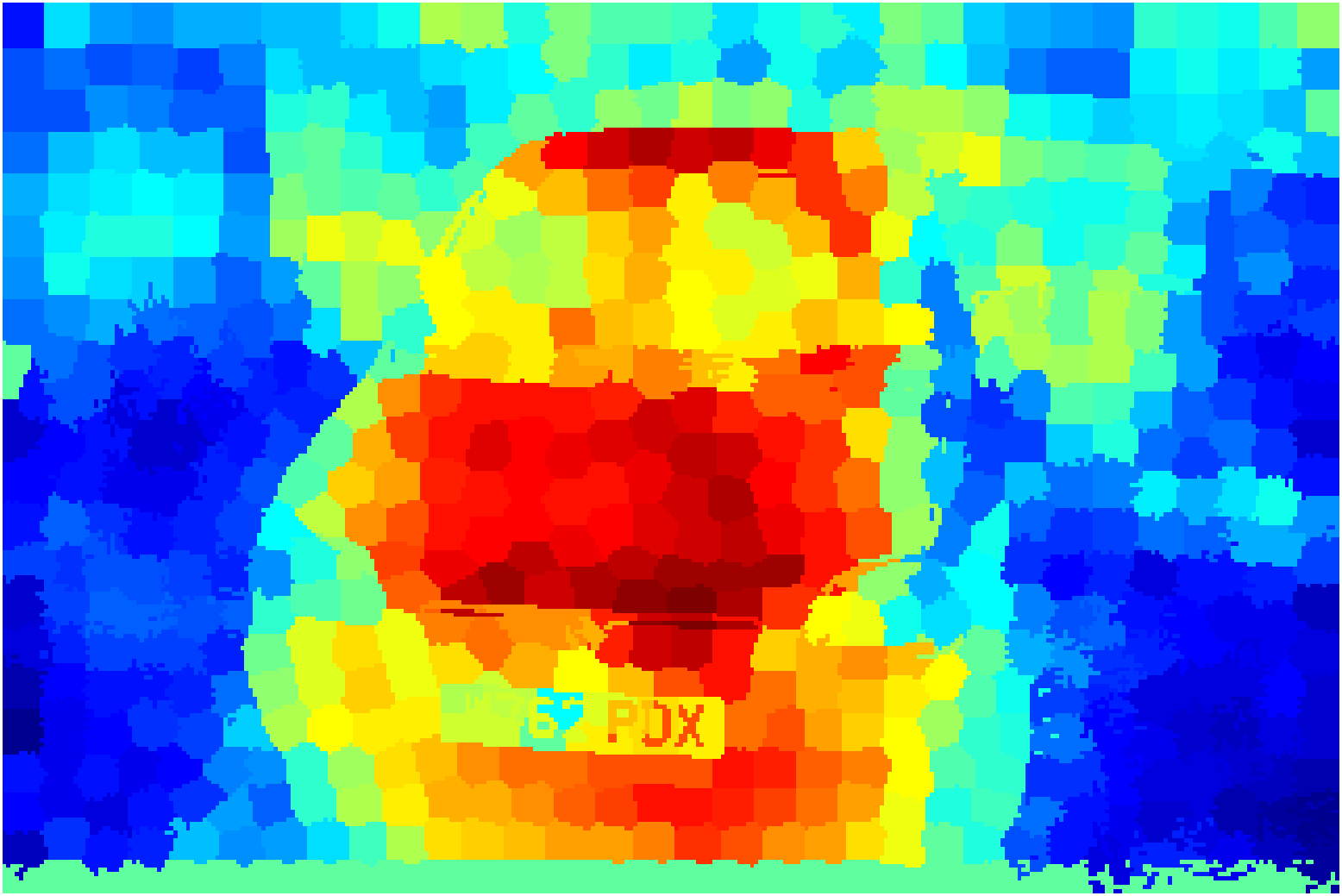}
\includegraphics[width=0.113\textwidth,clip]{./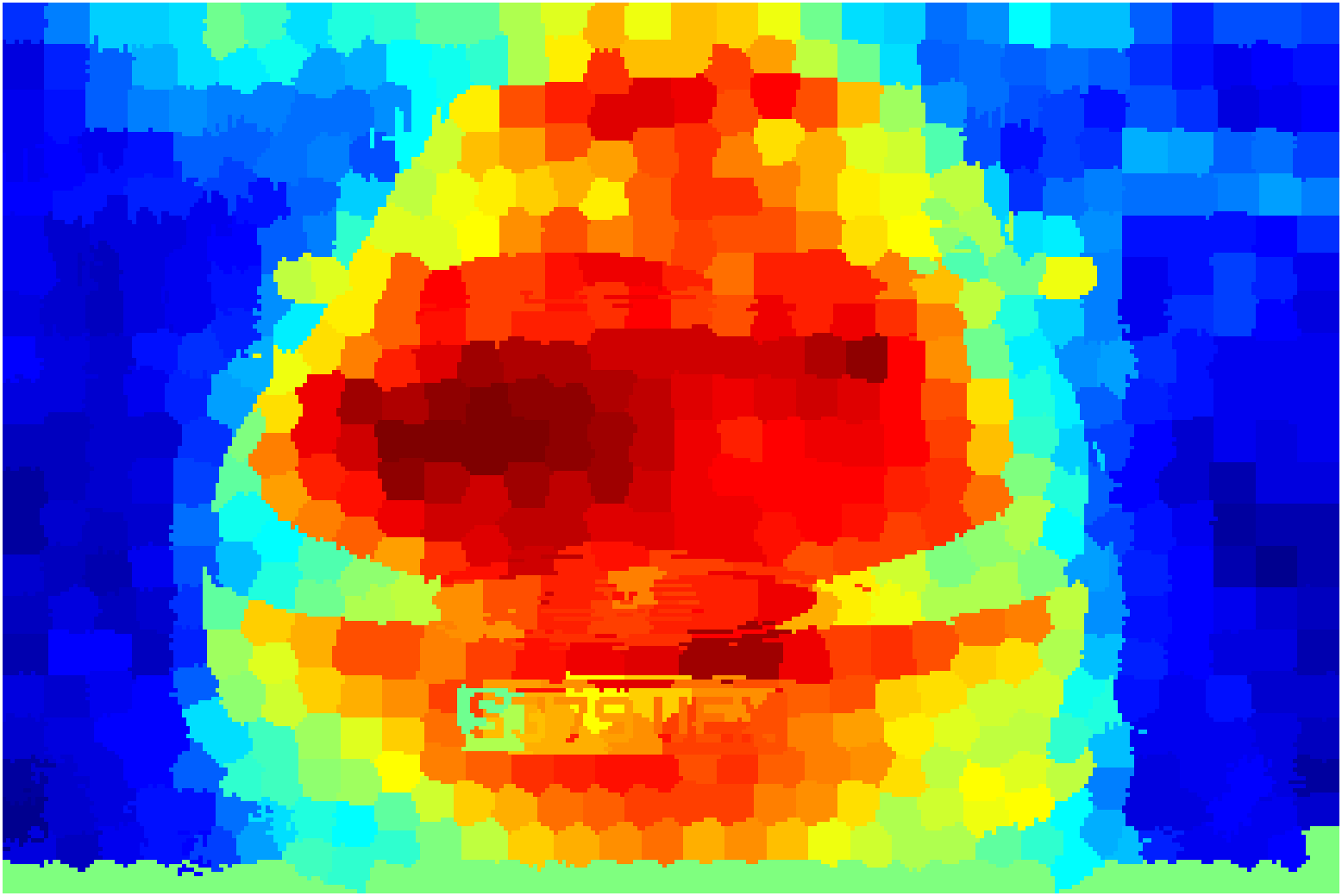}
\includegraphics[width=0.113\textwidth,clip]{./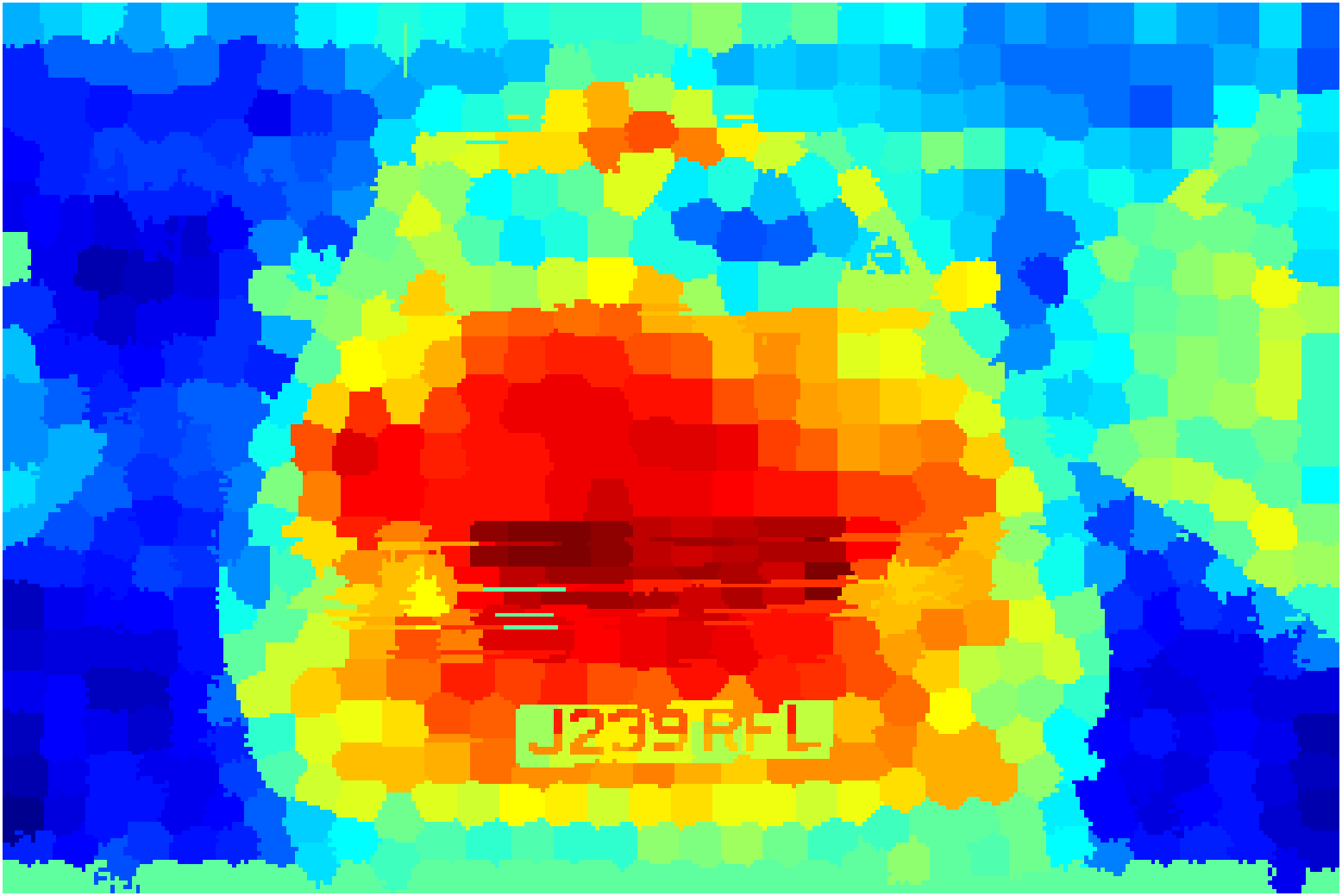}
\includegraphics[width=0.113\textwidth,clip]{./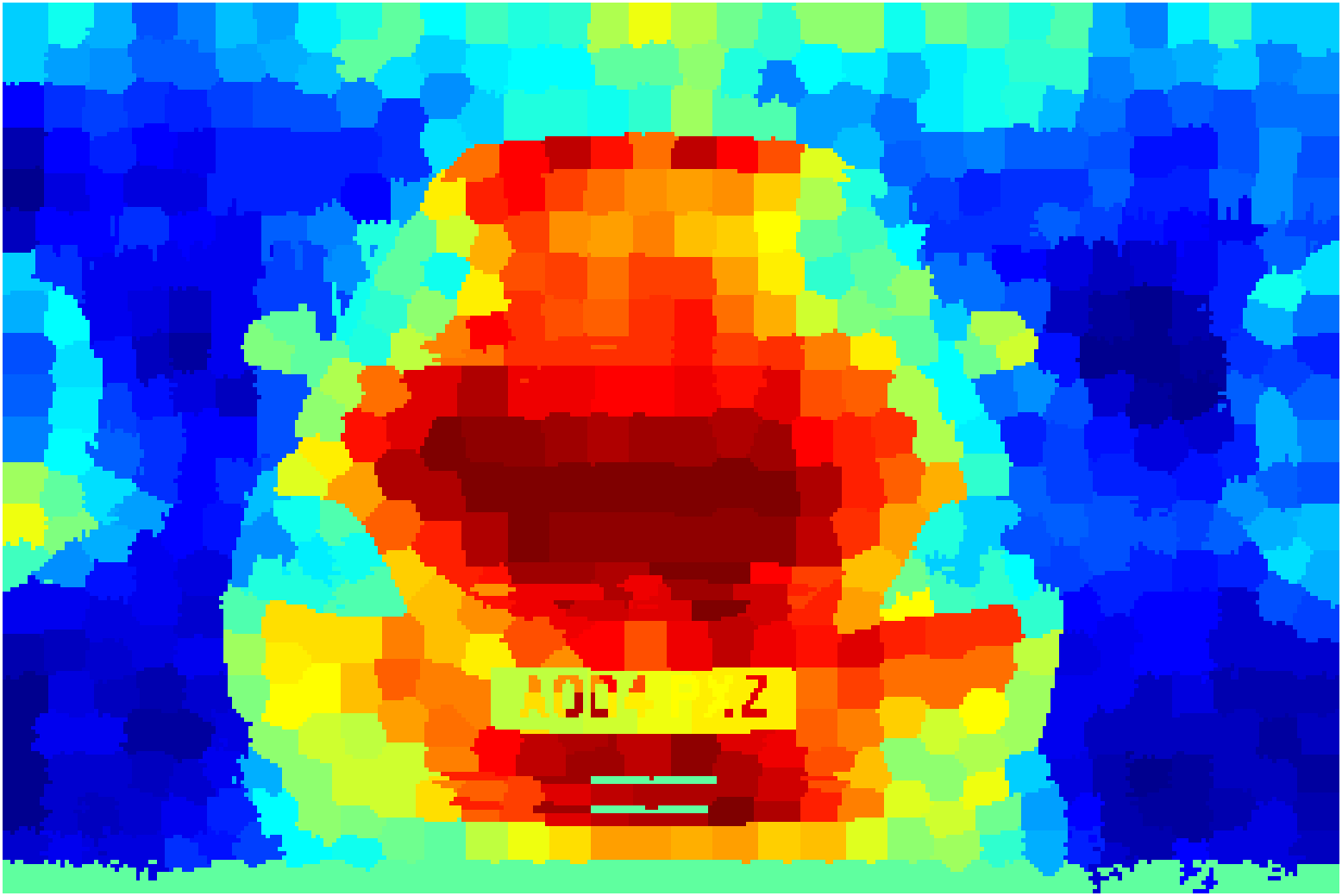}
\includegraphics[width=0.113\textwidth,clip]{./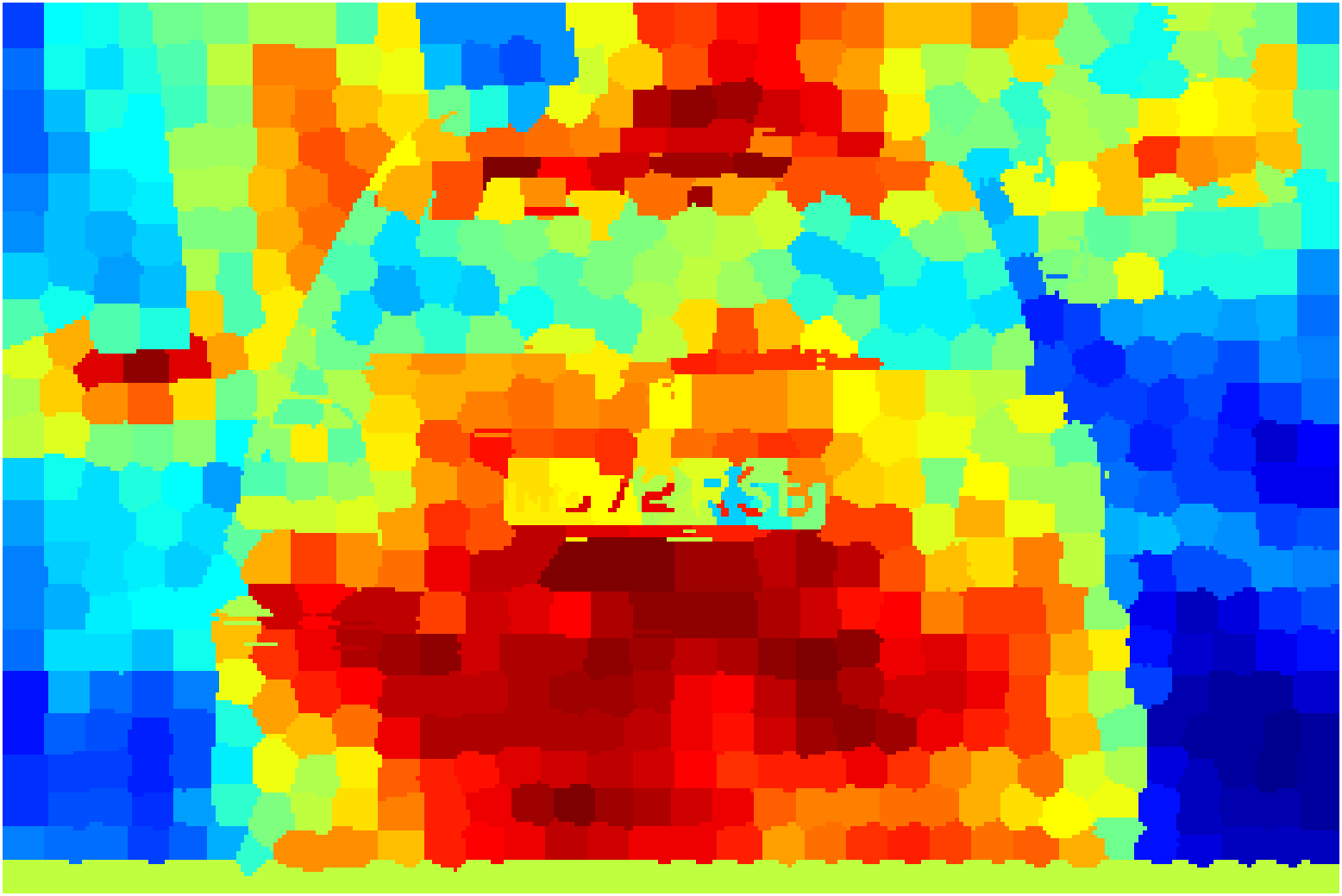}
\includegraphics[width=0.113\textwidth,clip]{./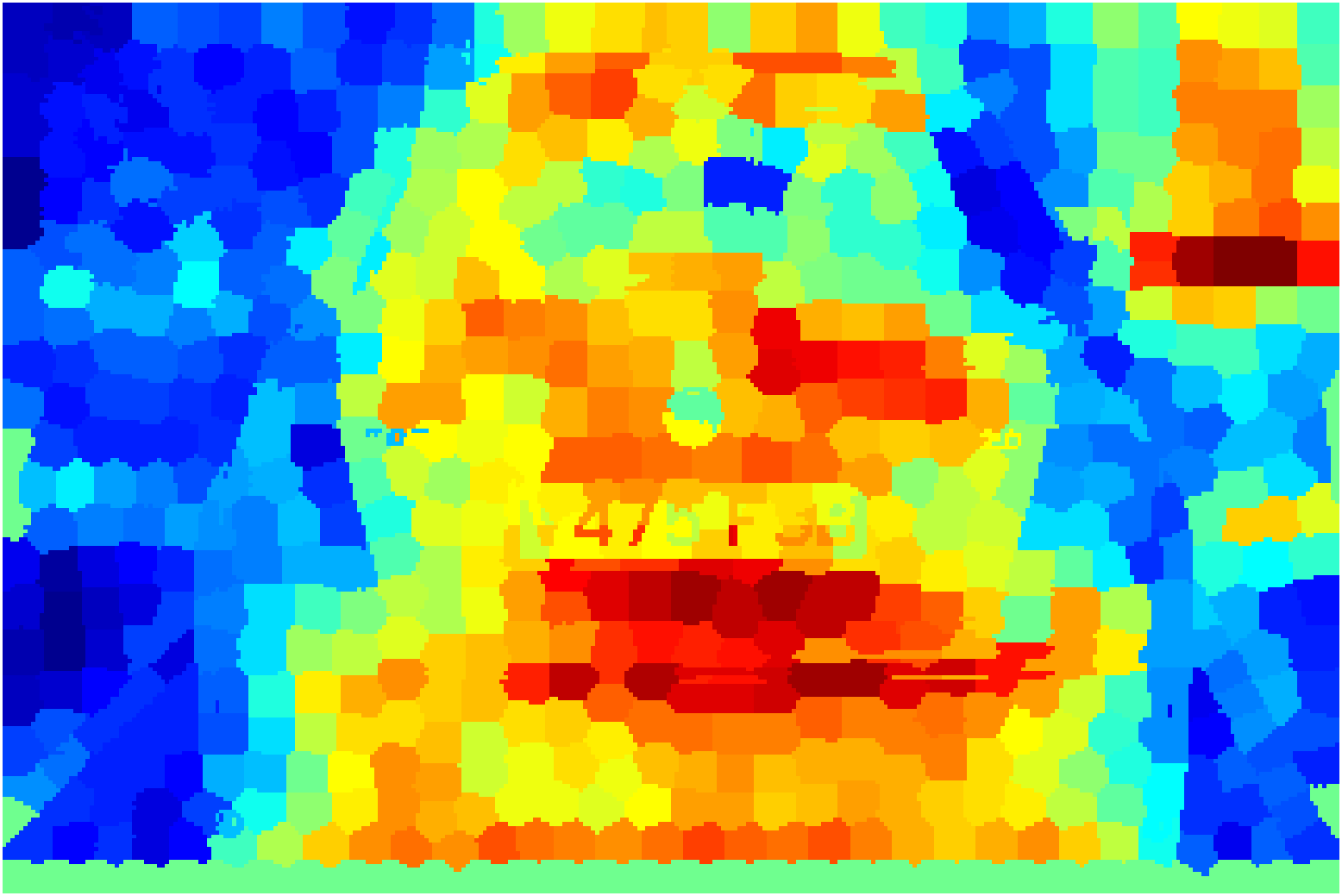}
\includegraphics[width=0.113\textwidth,clip]{./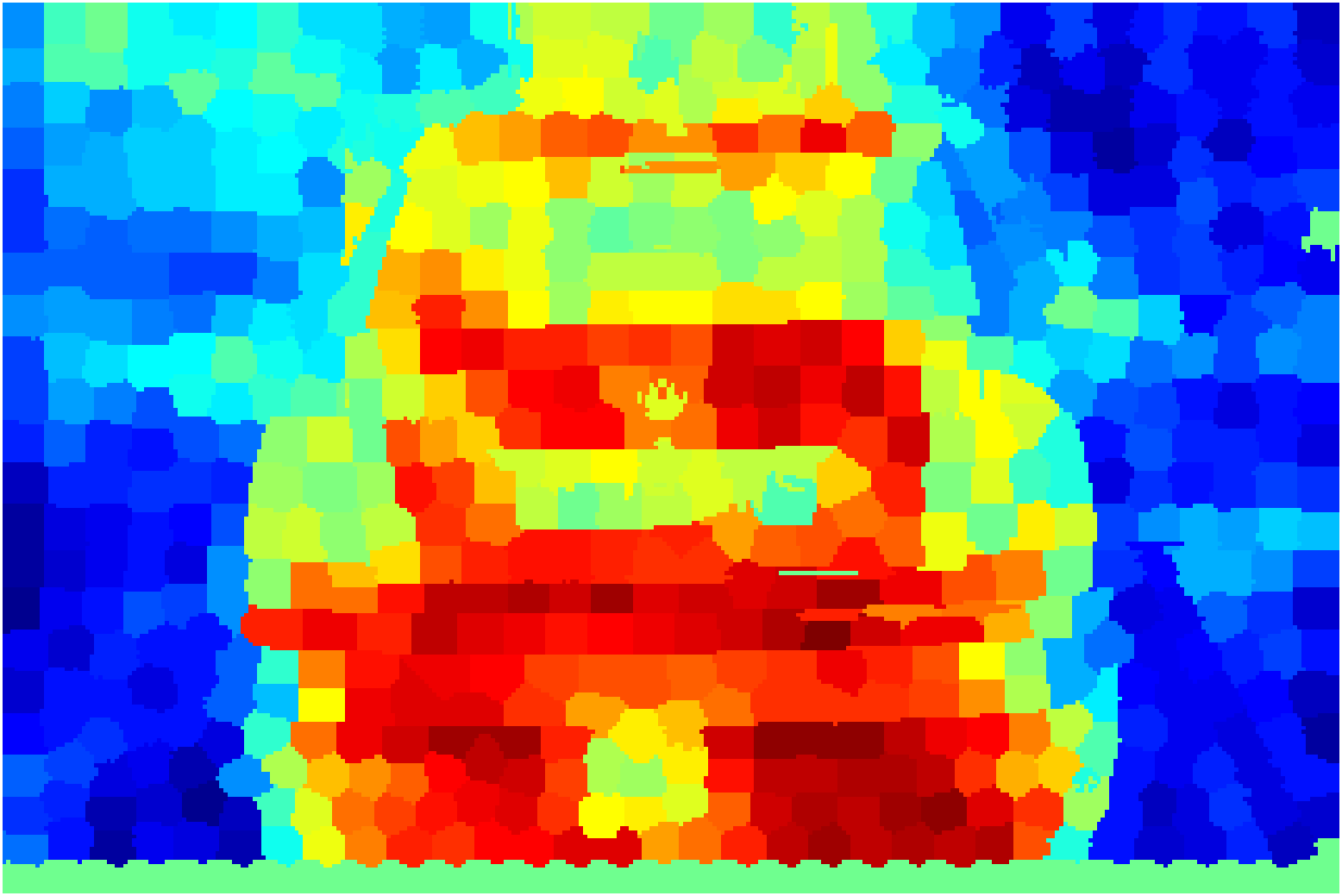}
\includegraphics[width=0.113\textwidth,clip]{./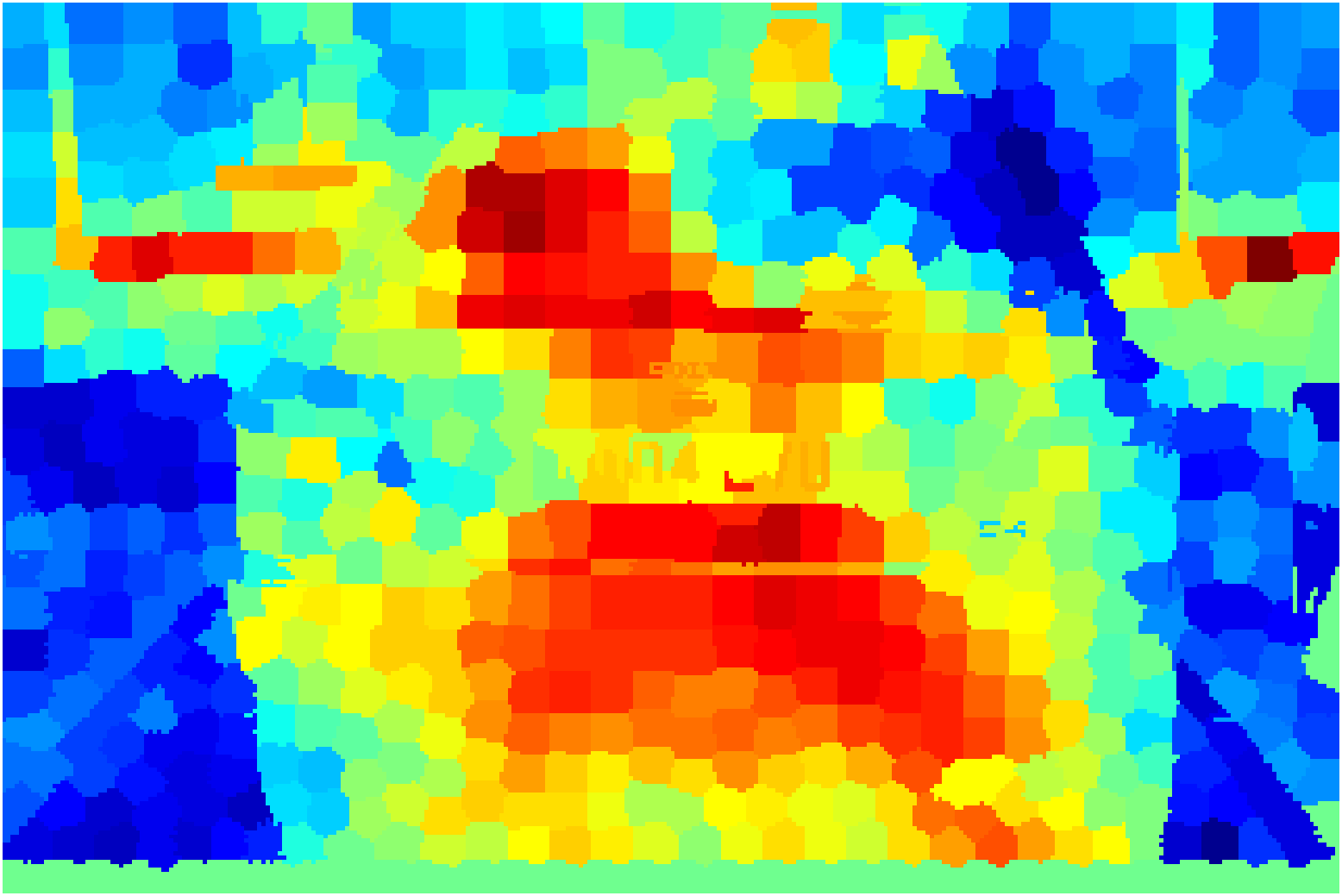}
\end{minipage}
}}\\ \vspace{-2mm}
\subfloat{
\centering{
\begin{minipage}[c]{0.01\textwidth}
\begin{turn}{90} {\scriptsize Images} \end{turn}
\end{minipage}
\begin{minipage}[c]{0.99\textwidth}
\includegraphics[width=0.113\textwidth,clip]{./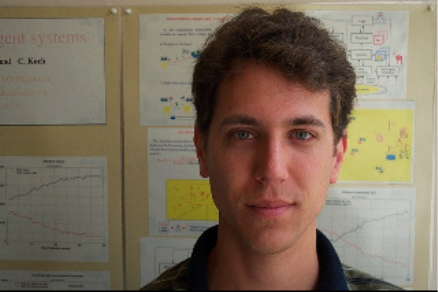}
\includegraphics[width=0.113\textwidth,clip]{./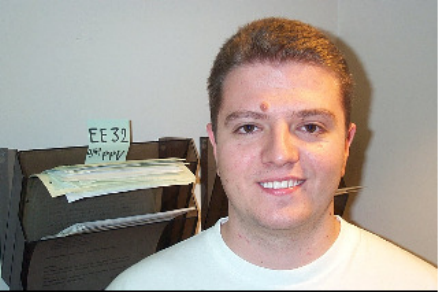}
\includegraphics[width=0.113\textwidth,clip]{./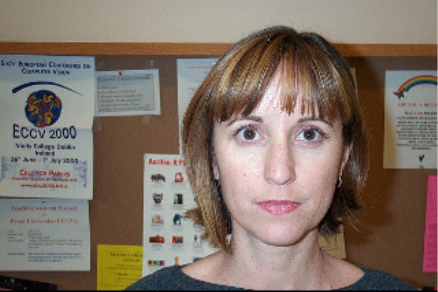}
\includegraphics[width=0.113\textwidth,clip]{./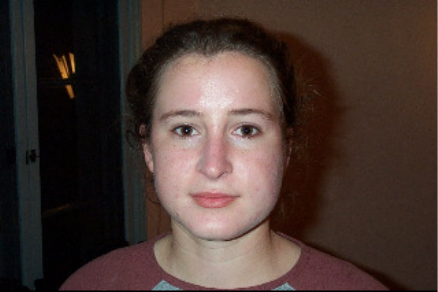}
\includegraphics[width=0.113\textwidth, height = 0.076\textwidth, clip]{./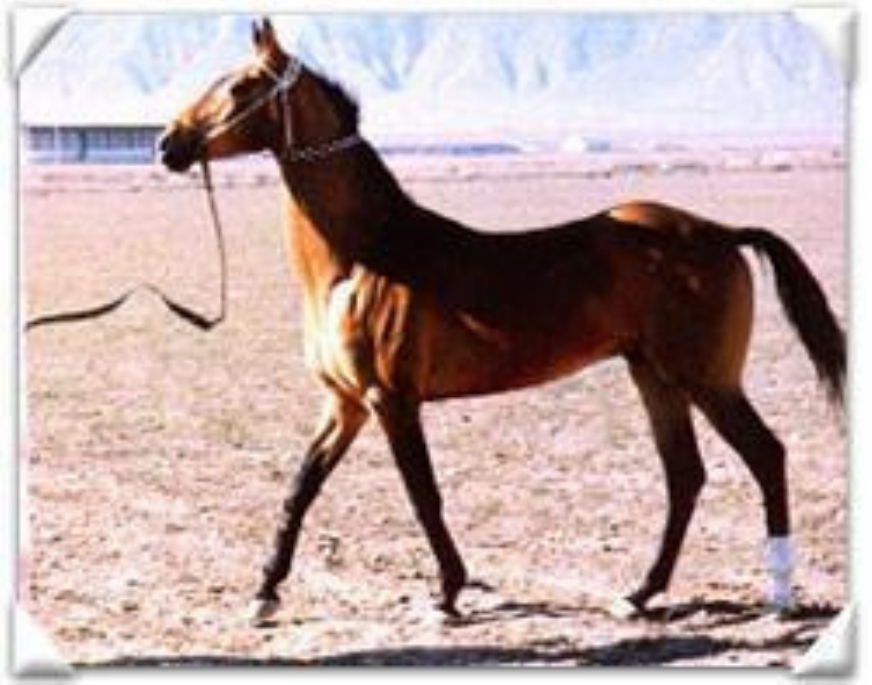}
\includegraphics[width=0.113\textwidth, height = 0.076\textwidth, clip]{./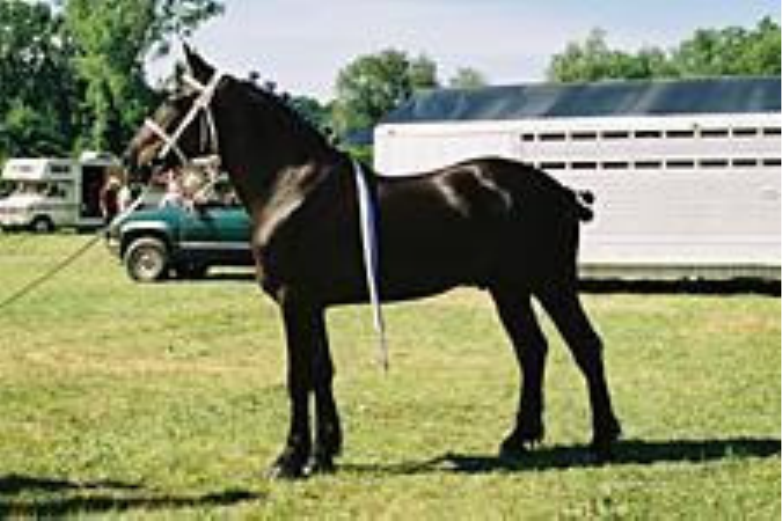}
\includegraphics[width=0.113\textwidth, height = 0.076\textwidth, clip]{./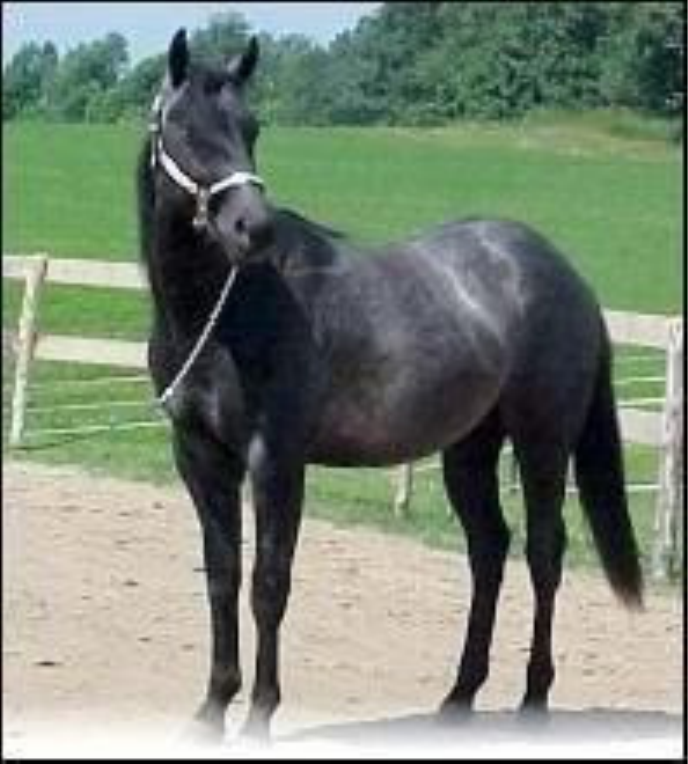}
\includegraphics[width=0.113\textwidth, height = 0.076\textwidth, clip]{./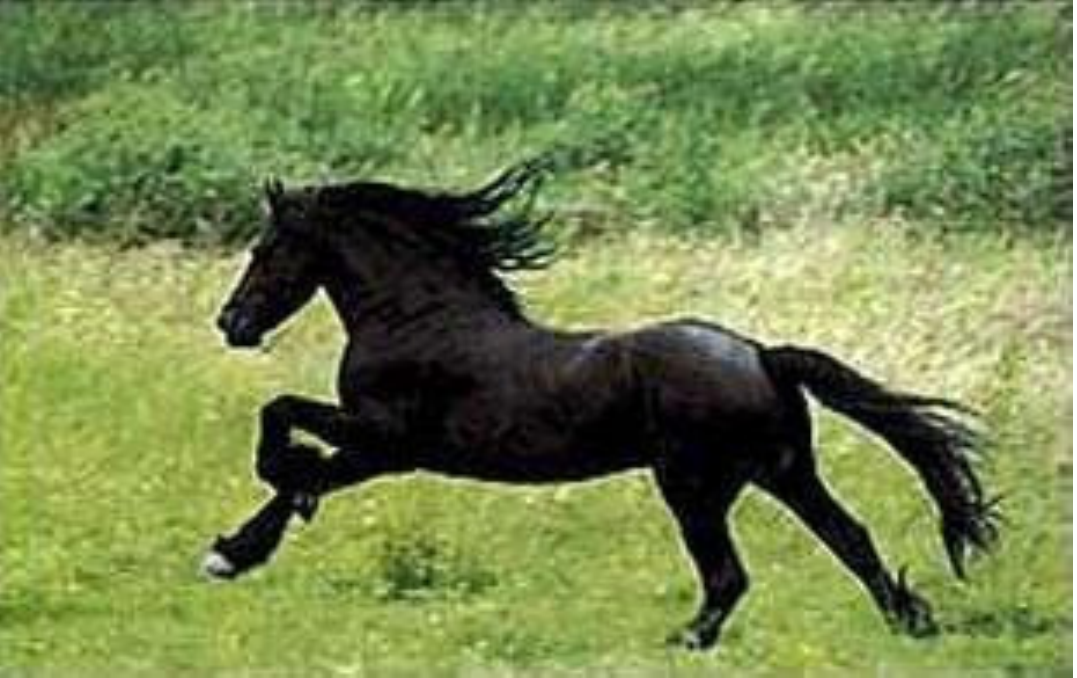}
\end{minipage}
}}\\ \vspace{-2mm}
\subfloat{
\centering{
\begin{minipage}[c]{0.01\textwidth}
\begin{turn}{90} {\scriptsize \lowrank} \end{turn}
\end{minipage}
\begin{minipage}[c]{0.99\textwidth}
\includegraphics[width=0.113\textwidth,clip]{./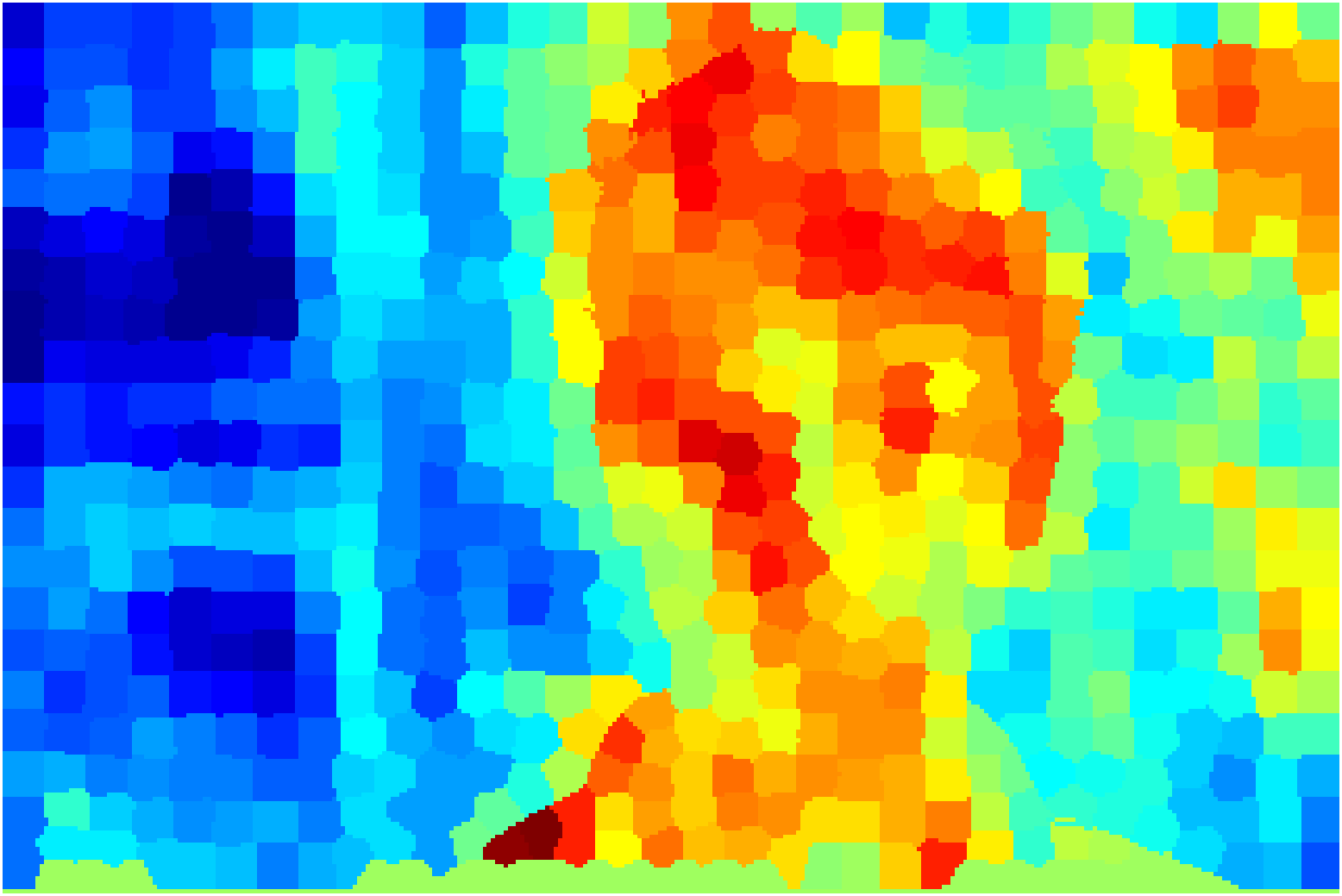}
\includegraphics[width=0.113\textwidth,clip]{./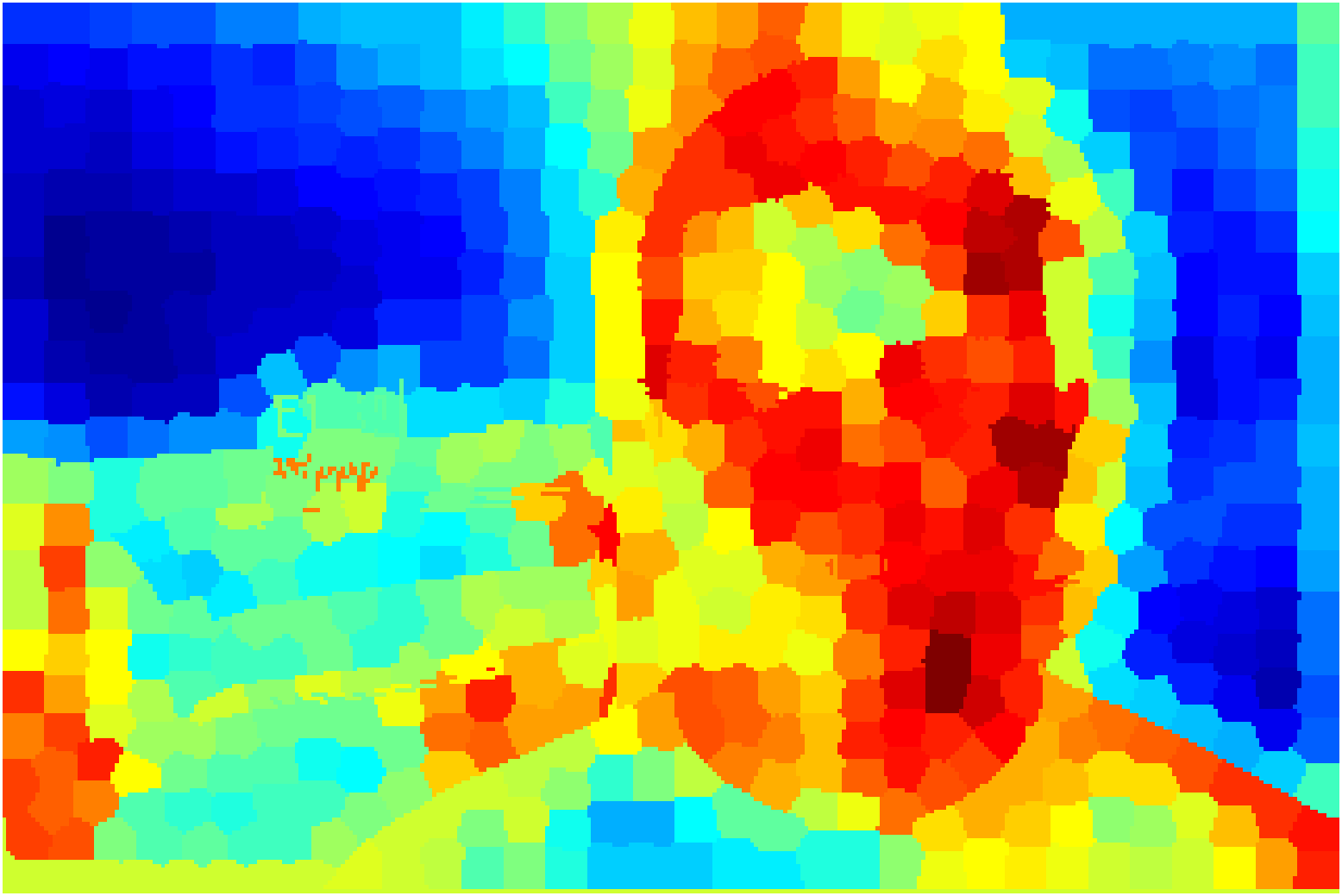}
\includegraphics[width=0.113\textwidth,clip]{./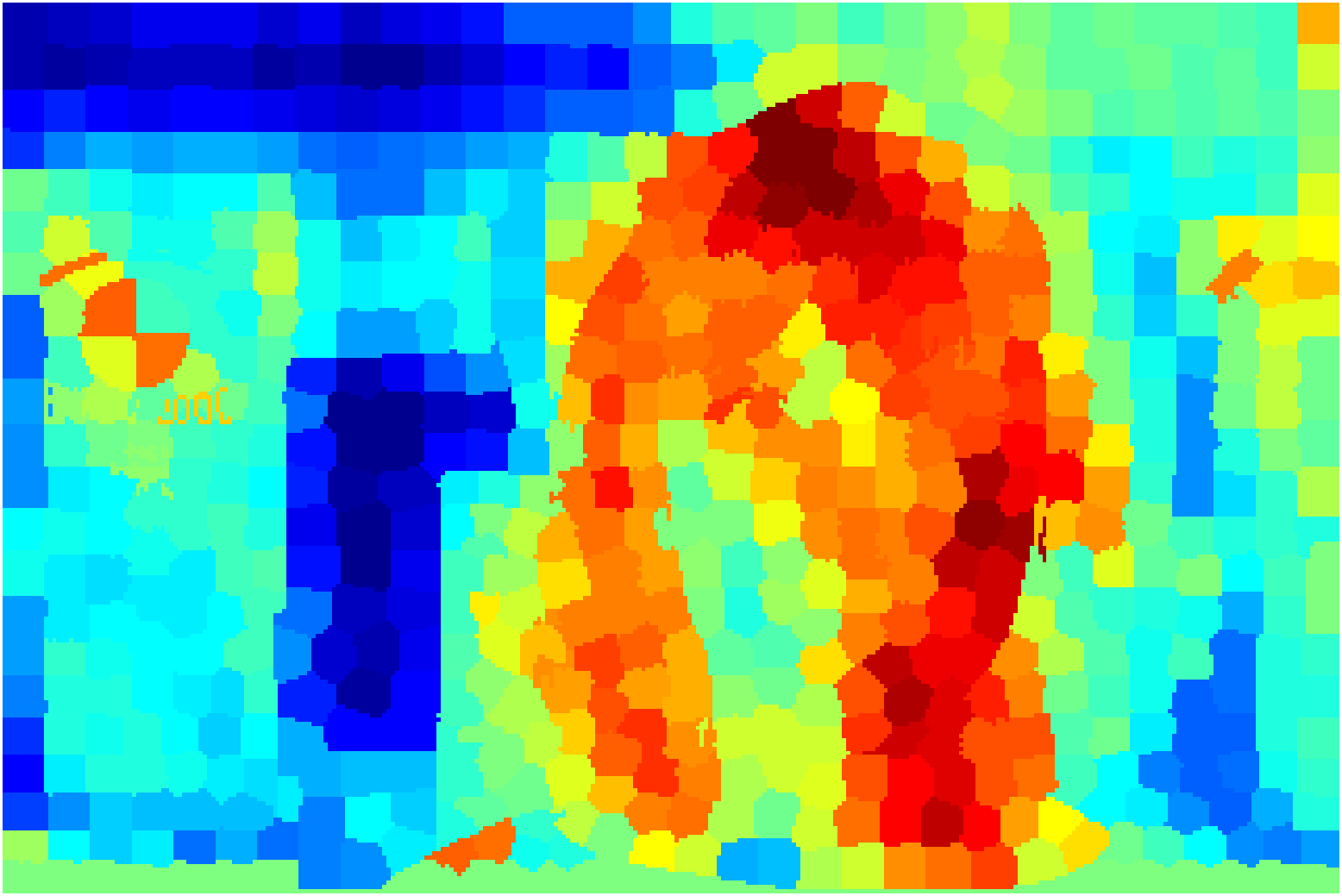}
\includegraphics[width=0.113\textwidth,clip]{./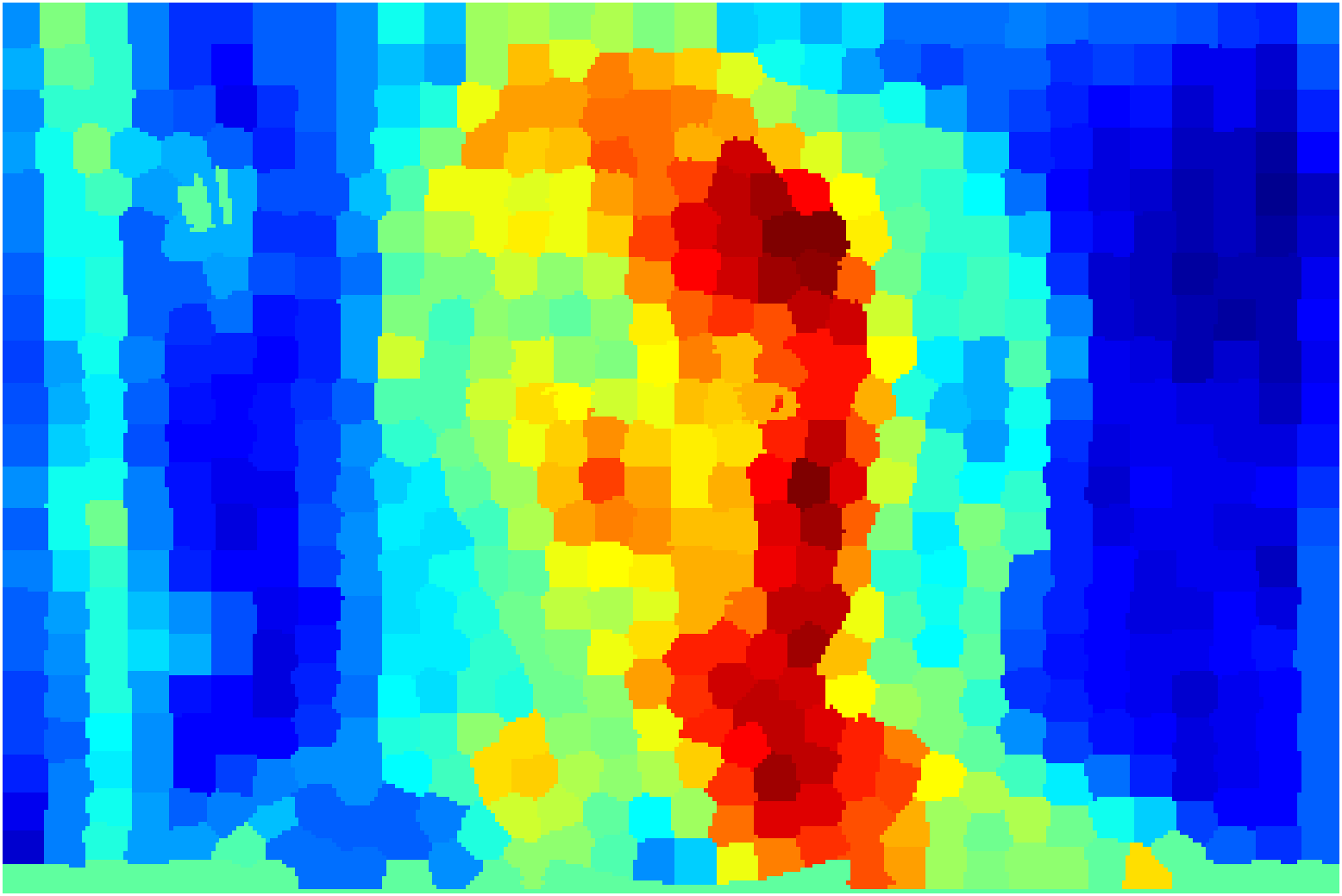}
\includegraphics[width=0.113\textwidth, height = 0.076\textwidth, clip]{./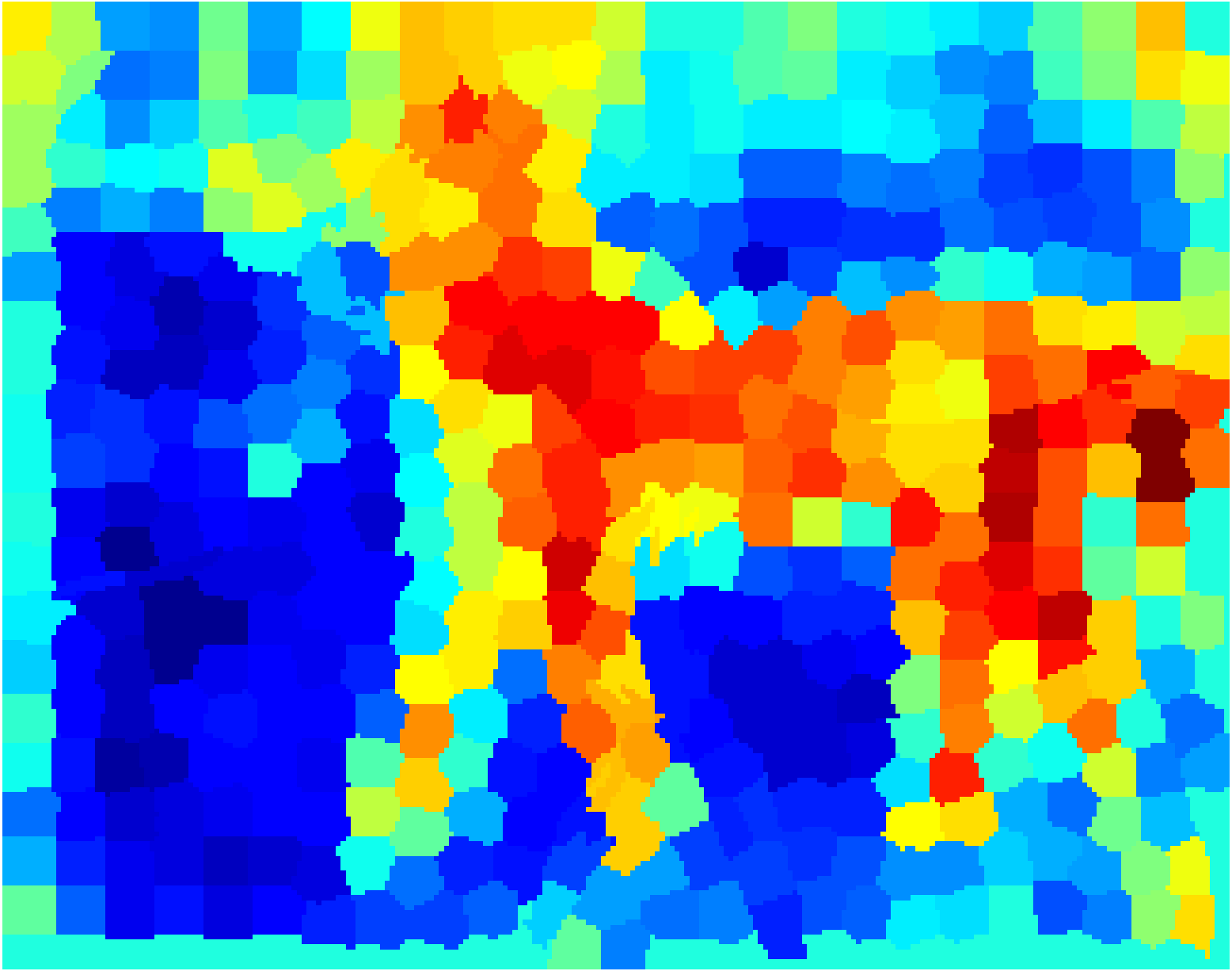}
\includegraphics[width=0.113\textwidth, height = 0.076\textwidth, clip]{./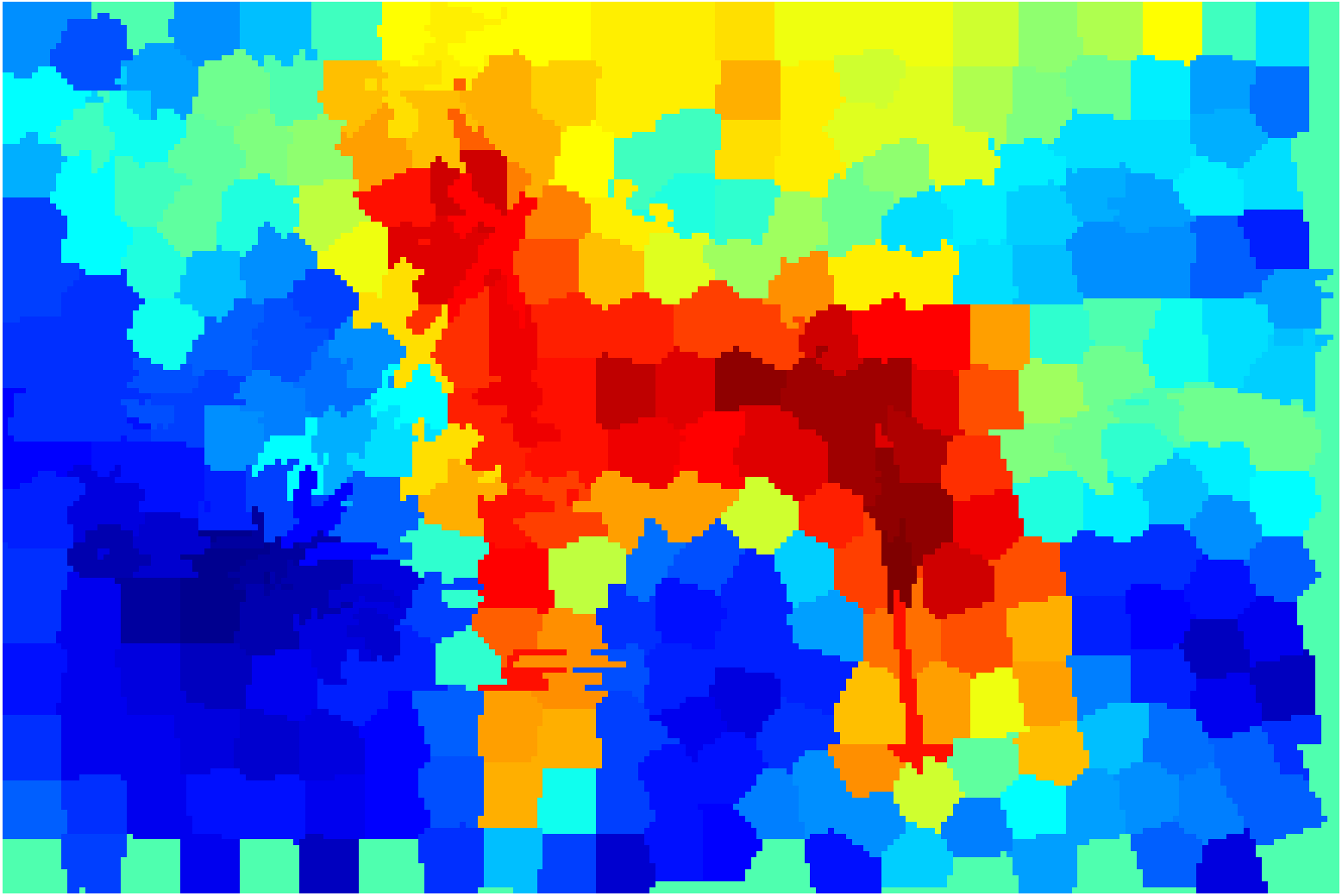}
\includegraphics[width=0.113\textwidth, height = 0.076\textwidth, clip]{./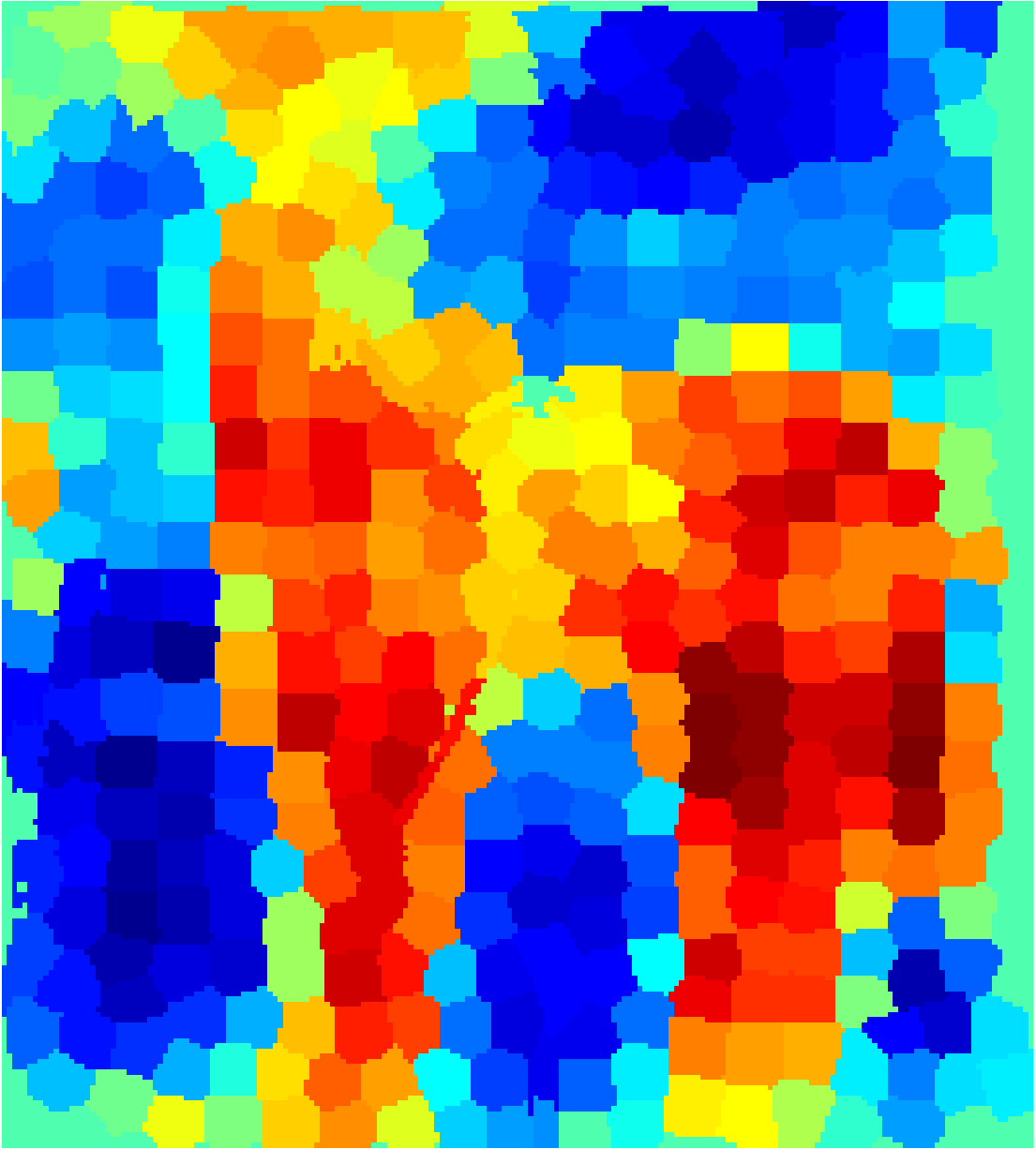}
\includegraphics[width=0.113\textwidth, height = 0.076\textwidth, clip]{./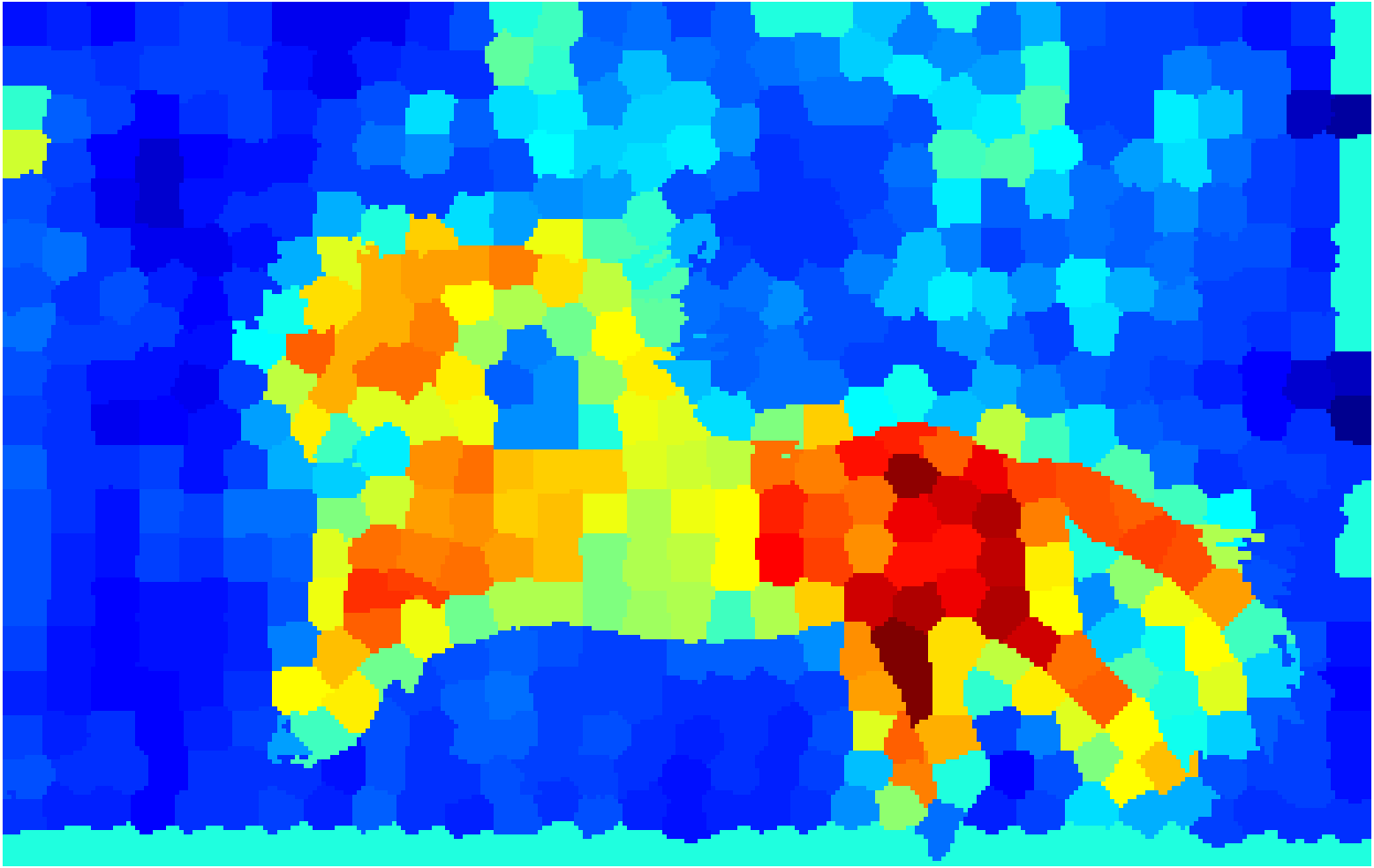}
\end{minipage}
}}\\ \vspace{-2mm}
\subfloat{
\centering{
\begin{minipage}[c]{0.01\textwidth}
\begin{turn}{90} {\scriptsize \lbfgsb} \end{turn}
\end{minipage}
\begin{minipage}[c]{0.99\textwidth}
\includegraphics[width=0.113\textwidth,clip]{./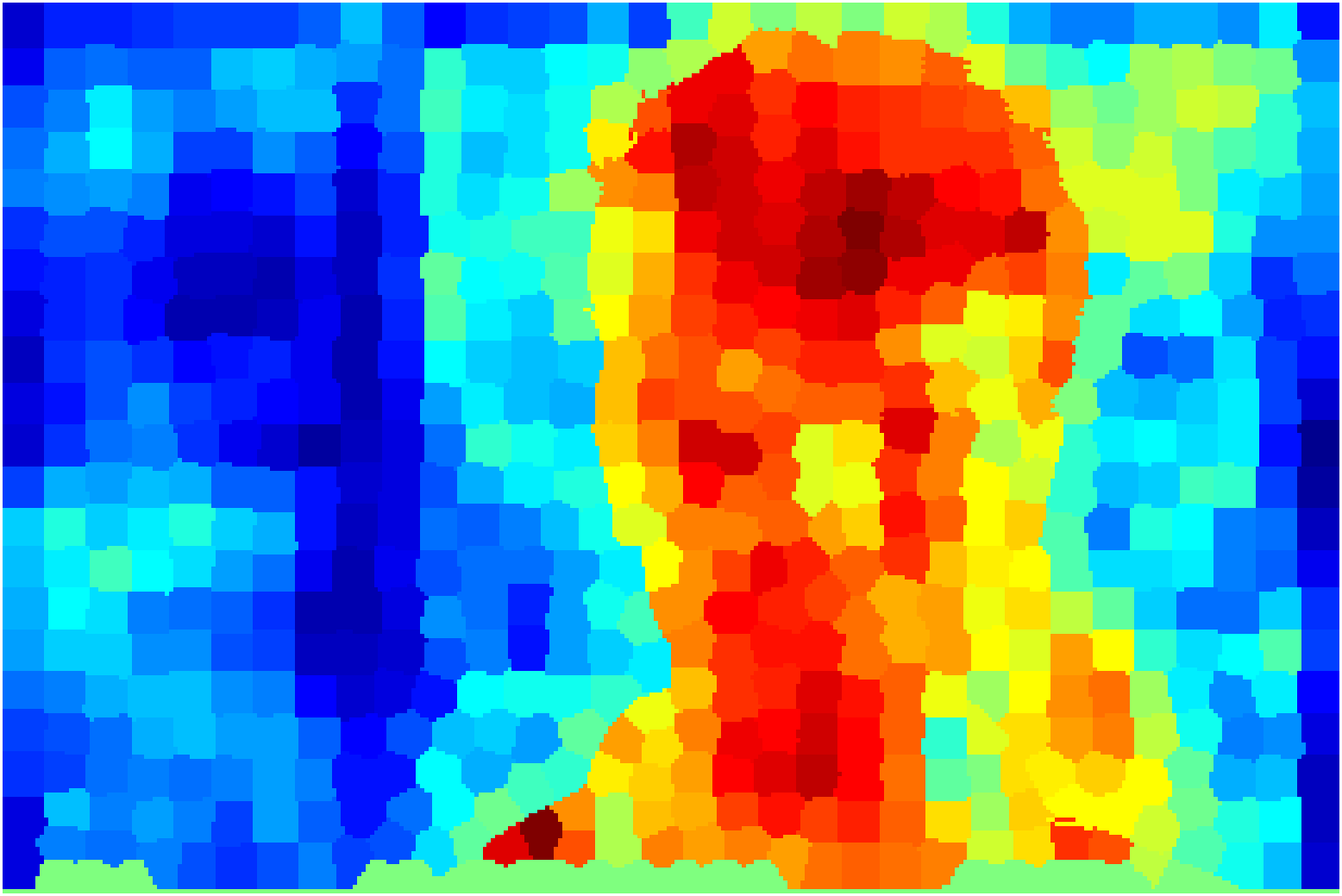}
\includegraphics[width=0.113\textwidth,clip]{./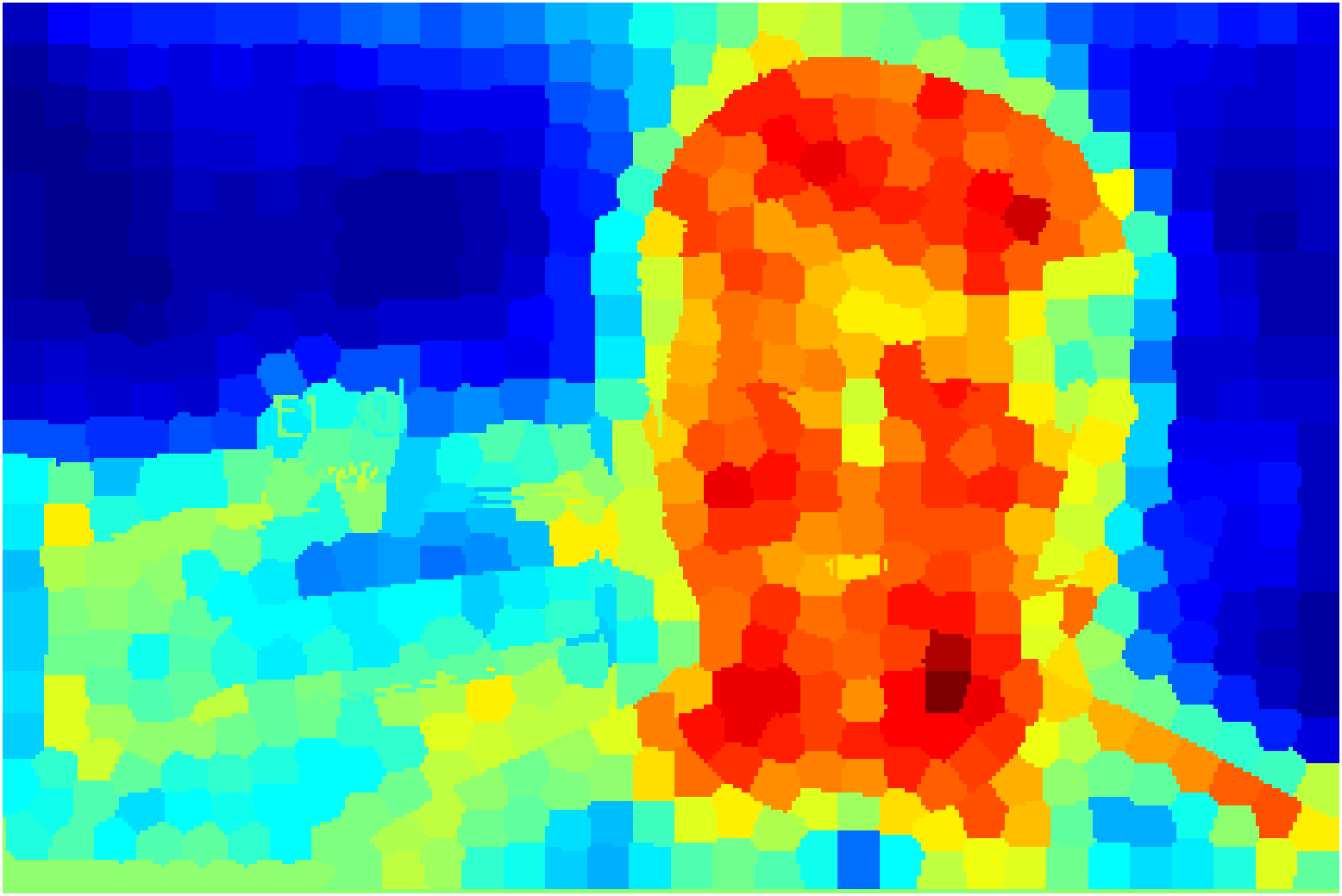}
\includegraphics[width=0.113\textwidth,clip]{./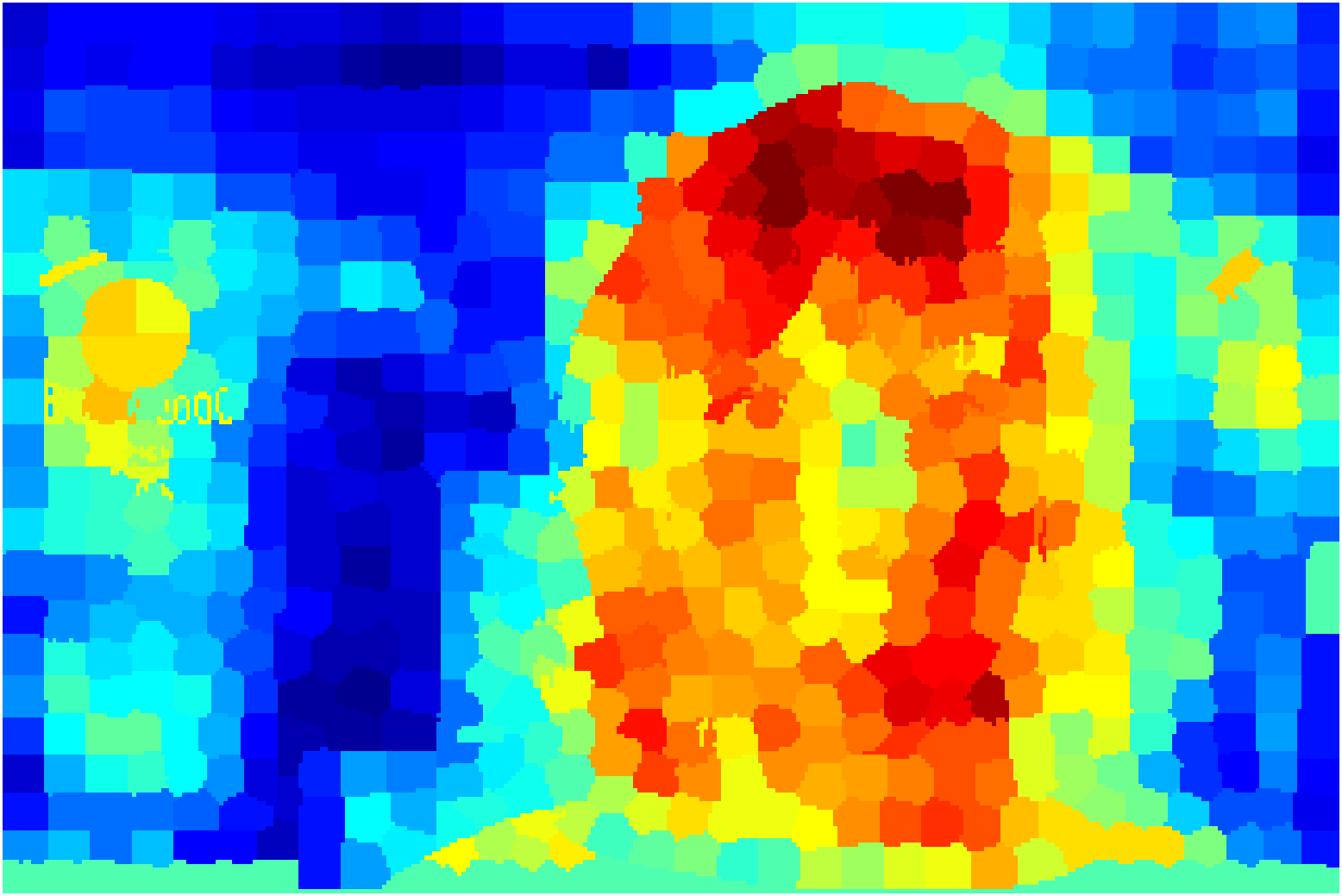}
\includegraphics[width=0.113\textwidth,clip]{./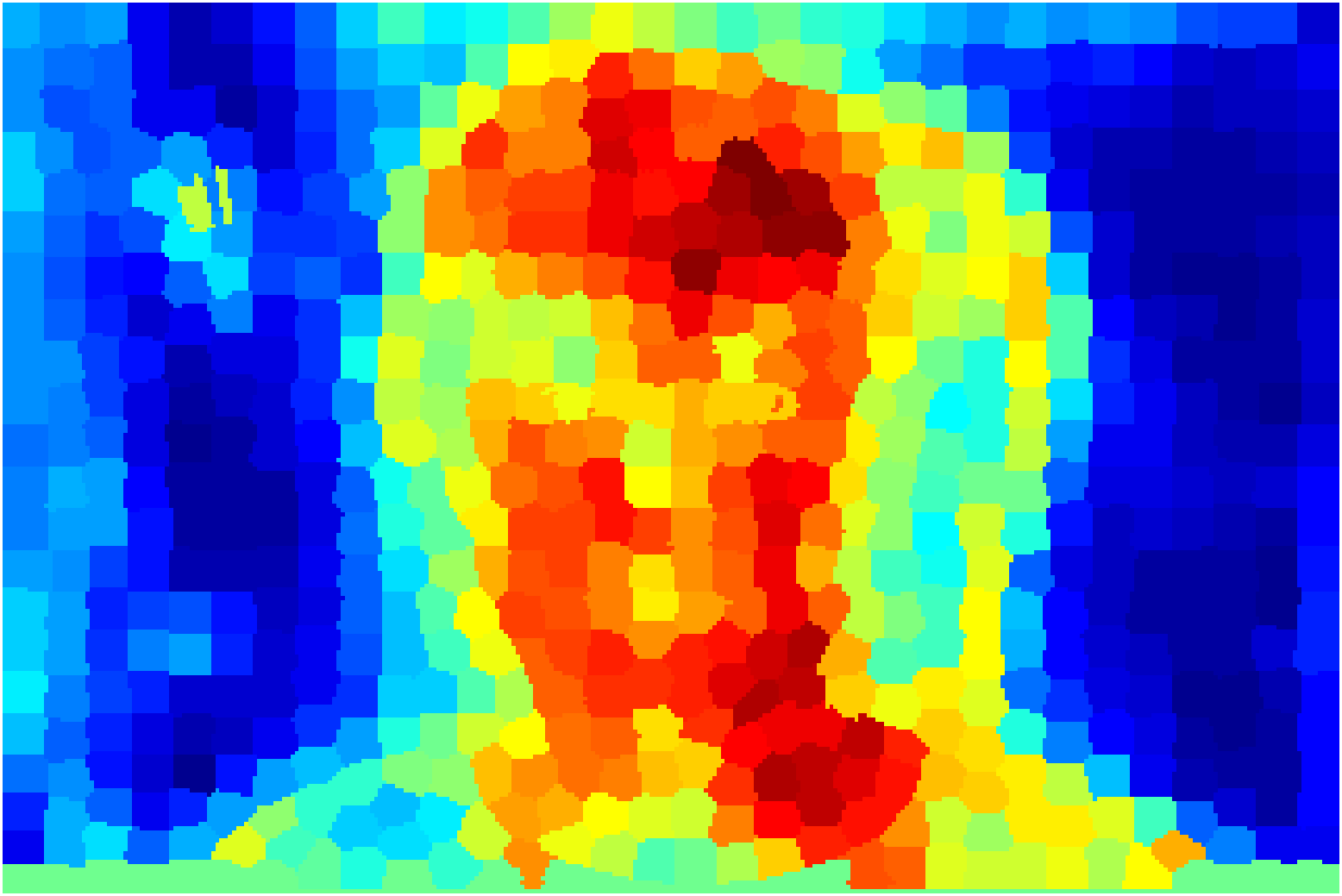}
\includegraphics[width=0.113\textwidth, height = 0.076\textwidth, clip]{./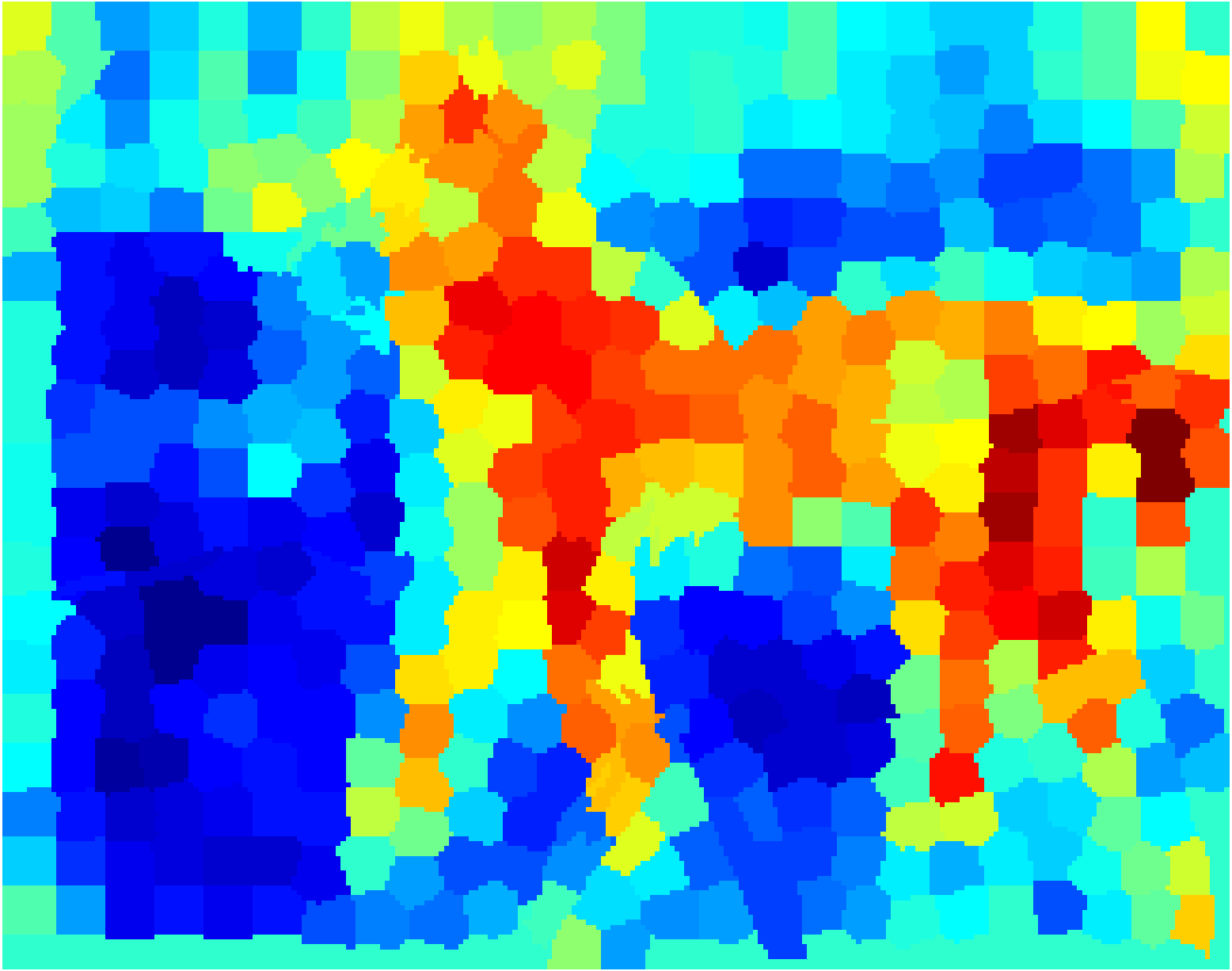}
\includegraphics[width=0.113\textwidth, height = 0.076\textwidth, clip]{./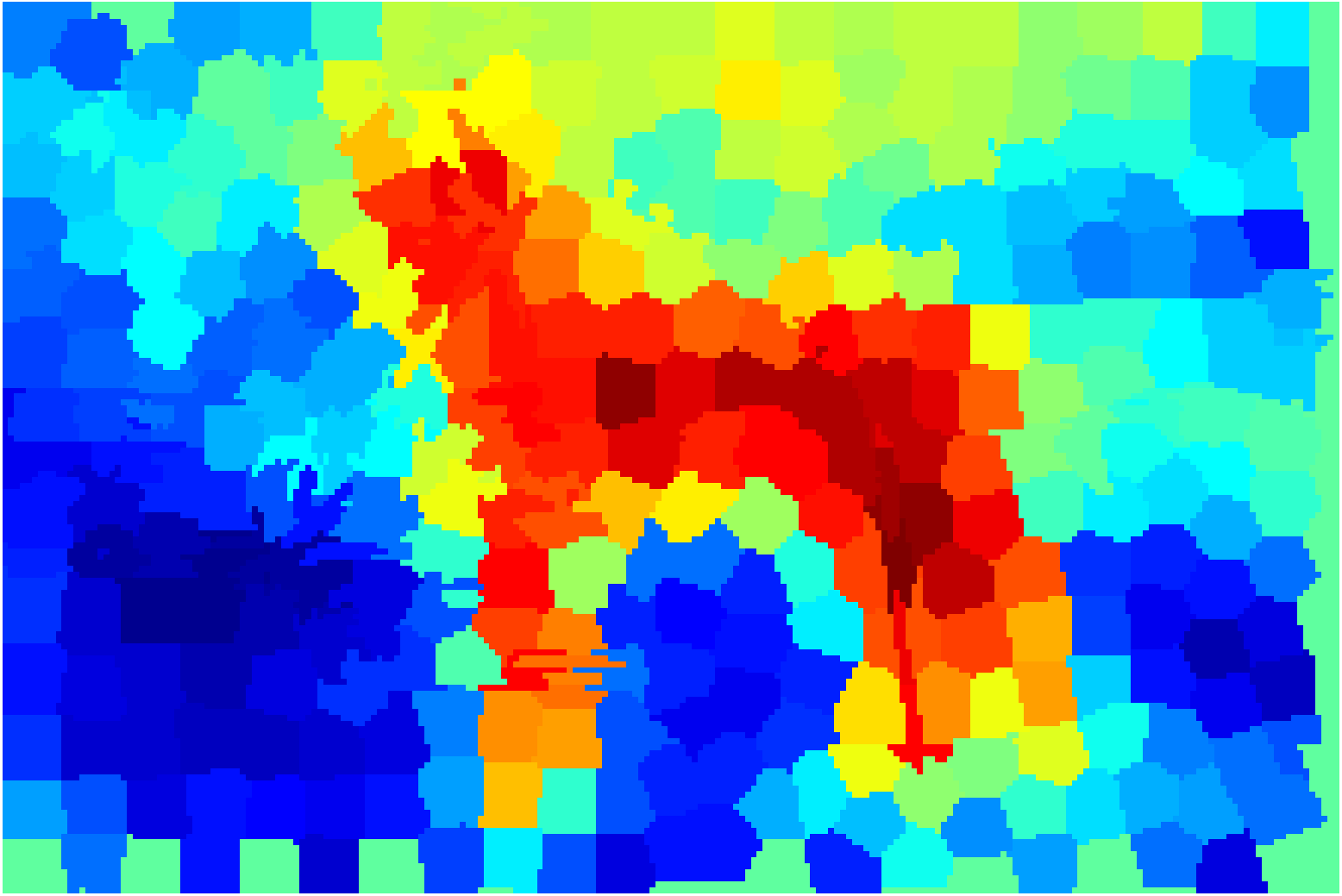}
\includegraphics[width=0.113\textwidth, height = 0.076\textwidth, clip]{./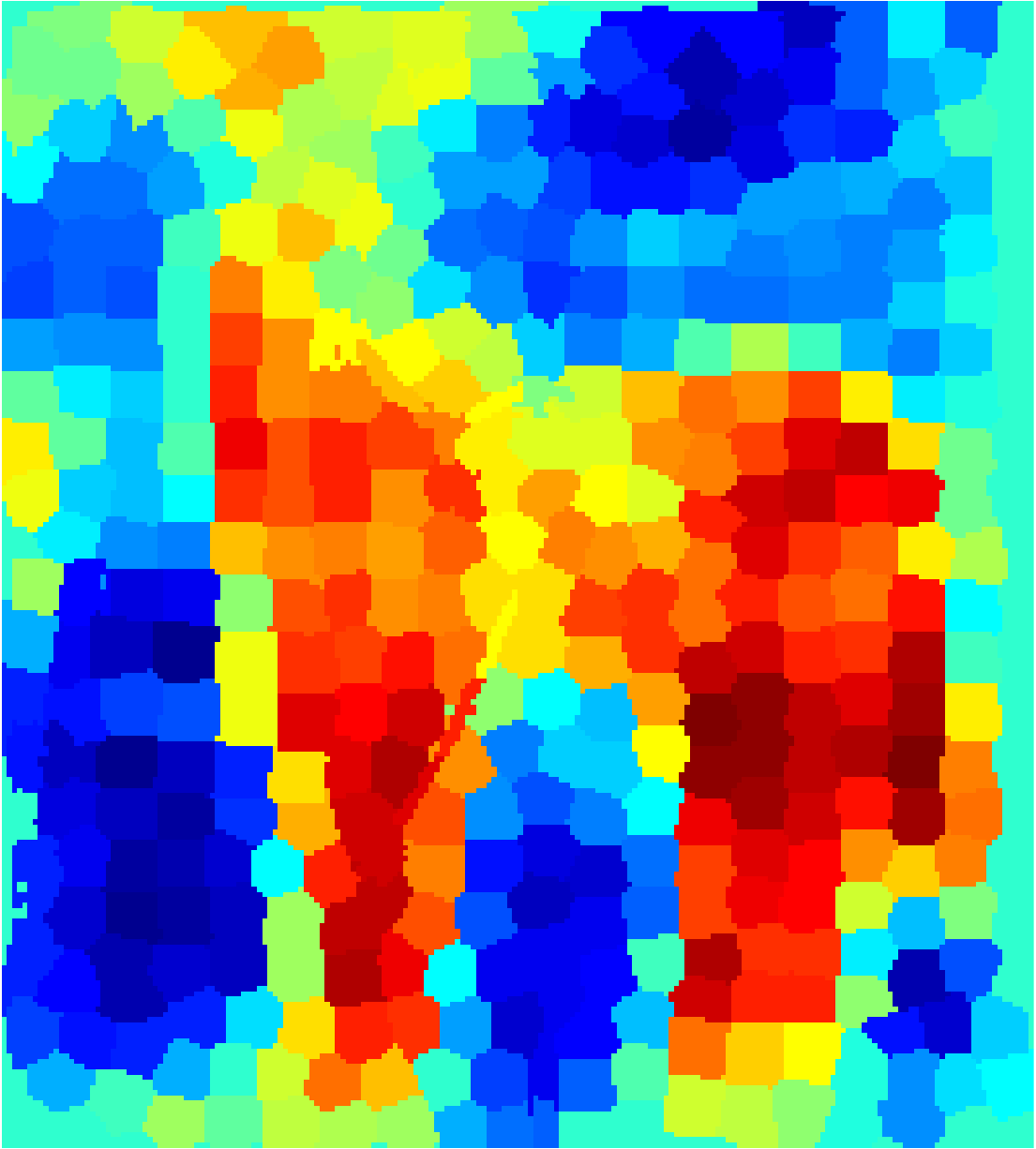}
\includegraphics[width=0.113\textwidth, height = 0.076\textwidth, clip]{./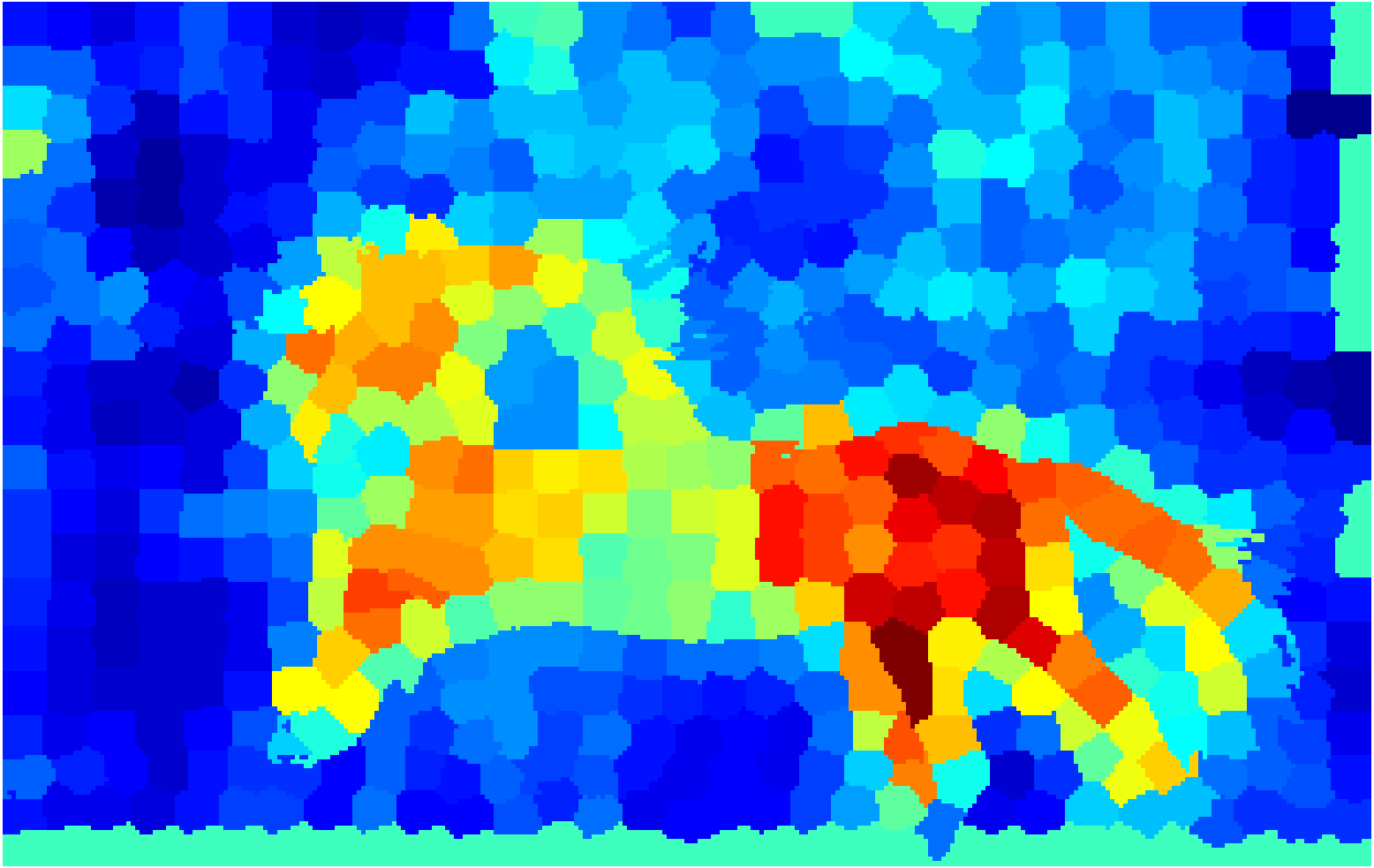}
\end{minipage}
}}\\ \vspace{-2mm}
\vspace{0.2cm}
\caption{Image co-segmentation on Weizman horses and MSRC datasets.
The original images, the results (score vectors) of \lowrank and \lbfgsb are illustrated from top to bottom.
Other methods produce similar segmentation results.}
\label{fig:img_cosegm}
}
\vspace{-0.0cm}
\end{figure*}

The task of image co-segmentation~\cite{Joulin2010dis} aims to partition a common object from multiple images simultaneously.
In this work, the Weizman horses\footnote{\url{http://www.msri.org/people/members/eranb/}}
and
MSRC\footnote{\url{http://www.research.microsoft.com/en-us/projects/objectclassrecognition/}}
datasets are tested.
There are $6\!\thicksim\!10$ images in each of four classes, namely ``car-front'', ``car-back'', ``face'' and ``horse''.
Each image is oversegmented to $400\!\thicksim\!700$ superpixels.
The number of binary variables $n$ is then increased to $4000\!\thicksim\!7000$.

The BQP formulation for image co-segmentation can be found in Table~\ref{tab:formulation} (see \cite{Joulin2010dis} for details).
The matrix $\bA$ can be decomposed into a sparse matrix and a structural matrix,
such that ARPACK can be used by \lbfgsb.
{%
Each vector $\bz = [{\bz^{(1)}}^\T, \cdots, {\bz^{(K)}}^\T ]^\T$ (where $\bz^{i}$ corresponds to the $i$-th image)
randomly sampled from $\mathcal{N}(\mathbf{0}, \bX^\star)$ is discretized to a feasible BQP solution as follows:
\begin{align}
\label{eq:cosegm-round}
\bx = \left[ \mathrm{sign}(\bz^{(1)} - \theta^{(1)})^\T, \cdots, \mathrm{sign}(\bz^{(K)} - \theta^{(K)})^\T \right]^\T,
\end{align}
where $\theta^{(i)}$ can be obtained as \eqref{eq:hist-round}.
}

We compare our methods with the low-rank factorization method~\cite{Joulin2010dis} (referred to as \lowrank) and
interior-point methods.
As we can see in Table~\ref{tab:imgcoseg},
\lbfgsb takes $10$ times more iterations than \smooth, but still runs faster than \smooth especially
when the size of problem is large (see ``face'' data).
The reason is that \lbfgsb can exploit the specific structure of matrix $\bA$ in eigen-decomposition.
\lbfgsb runs also $5$ times faster than \lowrank.
All methods provide similar upper-bounds (primal objective values),
and the score vectors shown in Fig.~\ref{fig:img_cosegm} also show that the evaluated methods achieve similar visual results.

\begin{table*}[t]
  \centering
\begin{minipage}[t]{0.7\textwidth}
  \centering
  \scriptsize
  \begin{tabular}{l|l|cccccc}
  \hline
& & & & & &\\ [-2ex]
     Data, $n$, $m$  & Methods  & \lbfgsb  & \smooth & SeDuMi & MOSEK & \lowrank \\
  \hline
  \hline
& \\ [-2ex]
\multirow{3}{*}{\begin{tabular}{c} $\text{car-back}$, \\ $4012$, $4018$ \end{tabular}}
& Time/Iters
& $\mathbf{06}$\bf{m}$\mathbf{08}$\bf{s}/$140$	& $09$m$59$s/$\mathbf{15}$	& $07$h$02$m		& $02$h$54$m	& $28$m$44$s	\\
& Upper-bound
& $12.71$	& $12.71$	& $12.74$		& $12.74$	& $12.64$	\\
& Lower-bound
& $-7.41$	& $-7.41$	& $-7.30$		& $-7.30$	& \NA	\\
& \\ [-2ex]
\hline\multirow{3}{*}{\begin{tabular}{c} $\text{car-front}$, \\ $4017$, $4023$ \end{tabular}}
& \\ [-2ex]
& Time/Iters
& $\mathbf{07}$\bf{m}$\mathbf{32}$\bf{s}/$188$	& $11$m$25$s/$\mathbf{16}$	& $07$h$04$m		& $02$h$54$m	& $59$m$47$s	\\
& Upper-bound
& $8.16$	& $8.16$	& $8.16$		& $8.16$	& $8.61$	\\
& Lower-bound
& $-7.67$	& $-7.67$	& $-7.56$		& $-7.56$	& \NA	\\
& \\ [-2ex]
\hline\multirow{3}{*}{\begin{tabular}{c} $\text{face}$, \\ $6684$, $6694$ \end{tabular}}
& \\ [-2ex]
& Time/Iters
& $\mathbf{08}$\bf{m}$\mathbf{18}$\bf{s}/$164$	& $43$m$57$s/$\mathbf{16}$	& $>24$hrs		& $12$h$06$m	& $40$m$56$s	\\
& Upper-bound
& $12.65$	& $12.65$	& \NA		& $12.96$	& $20.53$	\\
& Lower-bound
& $-9.73$	& $-9.73$	& \NA		& $-9.53$	& \NA	\\
& \\ [-2ex]
\hline\multirow{3}{*}{\begin{tabular}{c} $\text{horse}$, \\ $4587$, $4597$ \end{tabular}}
& \\ [-2ex]
& Time/Iters
& $\mathbf{06}$\bf{m}$\mathbf{15}$\bf{s}/$167$	& $17$m$01$s/$\mathbf{16}$	& $11$h$03$m		& $04$h$14$m	& $42$m$14$s	\\
& Upper-bound
& $14.78$	& $14.78$	& $14.76$		& $14.76$	& $15.77$	\\
& Lower-bound
& $-6.83$	& $-6.83$	& $-6.69$		& $-6.69$	& \NA	\\
\hline
  \end{tabular}
\end{minipage}\hfill
\begin{minipage}[t]{0.3\textwidth}
\vspace{-21mm}
  \caption{Numerical results for image co-segmentation.
           All the evaluated are SDP based methods and achieve similar upper-bounds and lower-bounds.
           \lbfgsb runs significantly faster than other methods,
           although it needs more iterations to converge than \smooth.}
\label{tab:imgcoseg}
\end{minipage}
\end{table*}

\vspace{0.1cm}
\subsection{Graph Matching}
\label{sec:matching}

In the graph matching problems considered in this work, each of the $K$ source points must be matched to one of the $L$ target points, where $L \geq K$.
The optimal matching should maximize both of the local feature similarities between matched-pairs
and the structure similarity between the source and target graphs.

The BQP formulation of graph matching can be found in Table~\ref{tab:formulation},
which can be relaxed to:
\begin{subequations}
\label{eq:graph_match_sdp}
\begin{align}
\min_{\bx, \bX}
           &\quad \langle \bX, \bH \rangle + \bh^\T \bx \\
\sst
                &\quad \mathrm{diag}(\bX) = \bx, \label{eq:graph_match_cons1} \\
                &\quad \textstyle{\sum_{j=1}^{L}} x_{(i-1)L+j} = 1, \, \forall i \in \mathcal{K}, \label{eq:graph_match_cons2} \\
                &\quad X_{\scriptsize (i-1)L+j,(i-1)L+k} = 0,  \forall i \in \mathcal{K},\,\, j \neq k \in \mathcal{L},  \label{eq:graph_match_cons3} \\
                &\quad X_{(j-1)L+i,(k-1)L+i} = 0,  \forall i \in \mathcal{L},\,\, j \neq k \in \mathcal{K}, \label{eq:graph_match_cons4} \\
                &\quad {\scriptsize \begin{bmatrix} 1 & \bx^\T \\ \bx & \bX \end{bmatrix} }\in \mathcal{S}^{KL+1},
\end{align}
\end{subequations}
where $\mathcal{K} = \{1, \cdots, K \}$ and $\mathcal{L} = \{1, \cdots, L \}$.
A feasible binary solution $\bx$ is obtained by solving the following linear program (see \cite{Schellewald05} for details):
\begin{subequations}
 \label{eq:graph_match_round}
\begin{align}
\max_{\bx \geq \mathbf{0} } \quad &\bx^{\T} \mathrm{diag}(\bX^\star), \\
\ \ \sst \quad
&\ \textstyle{\sum_{j=1}^L} \bx_{\tiny(i-1)\!L+j}=1,  \forall i \in \mathcal{K}, \\
&\ \textstyle{\sum_{i=1}^K} \bx_{\tiny(i-1)\!L+j} \leq 1,  \forall j \in \mathcal{L}.
\end{align}
\end{subequations}

Two-dimensional points are randomly generated for evaluation.
Table~\ref{tab:graphmatching} shows the results for different problem sizes: $n$ ranges from $201$ to $3201$ and $m$ ranges from $3011$ to $192041$.
\smooth and \lbfgsb achieves exactly the same upper-bounds as interior-point methods and comparable lower-bounds.
Regarding the running time, \smooth takes much less number of iterations to converge and is relatively faster (within $2$ times) than \lbfgsb.
Our methods run significantly faster than interior-point methods.
Taking the case $K \times L = 25 \times 50$ as an example,
\smooth and \lbfgsb converge at around $4$ minutes and interior-point methods do not converge within $24$ hours.
Furthermore, interior-point methods runs out of $100$G memory limit when the number of primal constraints $m$ is over $10^5$.
SMAC~\cite{Gour2006}, a spectral method incorporating affine constrains, is also evaluated in this experiment, which provides worse upper-bounds and error ratios.

\begin{table*}[t]
  \centering
  \scriptsize
  \begin{tabular}{l|l|ccccc|c}
  \hline
& & & & & & &\\ [-2ex]
     $K\!\times\!L$, $n$, $m$  & Methods  & \lbfgsb  & \smooth & SeDuMi & SDPT3 & MOSEK & SMAC \\
  \hline
  \hline
\multirow{4}{*}{\begin{tabular}{c} $10 \times 20$, \\ $201$, \\ $3011$ \end{tabular}}
& & & & & & &\\ [-2ex]
& Time/Iters
& $5.8$s/$262.6$	& $\mathbf{1.7}\bf{s}$/$\mathbf{34.2}$	& $02$m$17$s	& $45.0$s	& $30.7$s	& $0.1$s	\\
& Error ratio
& $1/100$	& $1/100$	& $1/100$	& $1/100$	& $1/100$	& $1/100$	\\
& Upper-bound
& $- 1.30\!\times\!10^{-1}$	& $- 1.30\!\times\!10^{-1}$	& $- 1.30\!\times\!10^{-1}$	& $- 1.30\!\times\!10^{-1}$	& $- 1.30\!\times\!10^{-1}$	& $- 1.27\!\times\!10^{-1}$	\\
& Lower-bound
& $- 1.31\!\times\!10^{-1}$	& $- 1.31\!\times\!10^{-1}$	& $- 1.31\!\times\!10^{-1}$	& $- 1.30\!\times\!10^{-1}$	& $- 1.30\!\times\!10^{-1}$	& \NA	\\
& & & & & & &\\ [-2ex]
\hline\multirow{4}{*}{\begin{tabular}{c} $15 \times 30$, \\ $451$, \\ $10141$ \end{tabular}}
& & & & & & &\\ [-2ex]
& Time/Iters
& $22.5$s/$359.7$	& $\mathbf{11.2}\bf{s}$/$\mathbf{35.7}$	& $01$h$34$m	& $15$m$48$s	& $30$m$38$s	& $0.3$s	\\
& Error ratio
& $1/150$	& $1/150$	& $1/150$	& $1/150$	& $1/150$	& $6/150$	\\
& Upper-bound
& $- 3.77\!\times\!10^{-2}$	& $- 3.77\!\times\!10^{-2}$	& $- 3.77\!\times\!10^{-2}$	& $- 3.77\!\times\!10^{-2}$	& $- 3.77\!\times\!10^{-2}$	& $- 2.01\!\times\!10^{-2}$	\\
& Lower-bound
& $- 3.81\!\times\!10^{-2}$	& $- 3.81\!\times\!10^{-2}$	& $- 3.78\!\times\!10^{-2}$	& $- 3.79\!\times\!10^{-2}$	& $- 3.78\!\times\!10^{-2}$	& \NA	\\
& & & & & & &\\ [-2ex]
\hline\multirow{4}{*}{\begin{tabular}{c} $20 \times 40$, \\ $801$, \\ $24021$ \end{tabular}}
& & & & & & &\\ [-2ex]
& Time/Iters
& $01$m$27$s/$405.2$	& $\mathbf{51.2}\bf{s}$/$\mathbf{41.7}$	& $17$h$48$m	& $02$h$09$m	& $04$h$39$m	& $0.2$s	\\
& Error ratio
& $1/200$	& $1/200$	& $1/200$	& $1/200$	& $1/200$	& $6/200$	\\
& Upper-bound
& $ 4.01\!\times\!10^{-2}$	& $ 4.01\!\times\!10^{-2}$	& $ 4.01\!\times\!10^{-2}$	& $ 4.01\!\times\!10^{-2}$	& $ 4.01\!\times\!10^{-2}$	& $ 4.29\!\times\!10^{-2}$	\\
& Lower-bound
& $ 3.93\!\times\!10^{-2}$	& $ 3.93\!\times\!10^{-2}$	& $ 3.99\!\times\!10^{-2}$	& $ 3.98\!\times\!10^{-2}$	& $ 3.99\!\times\!10^{-2}$	& \NA	\\
& & & & & & &\\ [-2ex]
\hline\multirow{4}{*}{\begin{tabular}{c} $25 \times 50$, \\ $1251$, \\ $46901$ \end{tabular}}
& & & & & & &\\ [-2ex]
& Time/Iters
& $04$m$05$s/$384.0$	& $\mathbf{03}$\bf{m}$\mathbf{50}$\bf{s}/$\mathbf{41.0}$	& $>24$hrs	& $>24$hrs	& $>24$hrs	& $0.3$s	\\
& Error ratio
& $0/250$	& $0/250$	& \NA	& \NA	& \NA	& $3/250$	\\
& Upper-bound
& $ 1.04\!\times\!10^{-1}$	& $ 1.04\!\times\!10^{-1}$	& \NA	& \NA	& \NA	& $ 1.06\!\times\!10^{-1}$	\\
& Lower-bound
& $ 1.03\!\times\!10^{-1}$	& $ 1.03\!\times\!10^{-1}$	& \NA	& \NA	& \NA	& \NA	\\
& & & & & & &\\ [-2ex]
\hline\multirow{4}{*}{\begin{tabular}{c} $30 \times 60$, \\ $1801$, \\ $81031$ \end{tabular}}
& & & & & & &\\ [-2ex]
& Time/Iters
& $14$m$43$s/$500.0$	& $\mathbf{10}$\bf{m}$\mathbf{20}$\bf{s}/$\mathbf{50.0}$	& $>24$hrs	& $>24$hrs	& $>24$hrs	& $0.4$s	\\
& Error ratio
& $2/300$	& $2/300$	& \NA	& \NA	& \NA	& $4/300$	\\
& Upper-bound
& $ 1.59\!\times\!10^{-1}$	& $ 1.59\!\times\!10^{-1}$	& \NA	& \NA	& \NA	& $ 1.60\!\times\!10^{-1}$	\\
& Lower-bound
& $ 1.58\!\times\!10^{-1}$	& $ 1.58\!\times\!10^{-1}$	& \NA	& \NA	& \NA	& \NA	\\
& & & & & & &\\ [-2ex]
\hline\multirow{4}{*}{\begin{tabular}{c} $40 \times 80$, \\ $3201$, \\ $192041$ \end{tabular}}
& & & & & & &\\ [-2ex]
& Time/Iters
& $03$h$02$m/$500.0$	& $\mathbf{02}$\bf{h}$\mathbf{26}$\bf{m}/$\mathbf{50.0}$	& Out of mem.	& Out of mem.	& Out of mem.	& $1.2$s	\\
& Error ratio
& $1/400$	& $1/400$	& \NA	& \NA	& \NA	& $9/400$	\\
& Upper-bound
& $ 2.63\!\times\!10^{-1}$	& $ 2.63\!\times\!10^{-1}$	& \NA	& \NA	& \NA	& $ 2.66\!\times\!10^{-1}$	\\
& Lower-bound
& $ 2.61\!\times\!10^{-1}$	& $ 2.61\!\times\!10^{-1}$	& \NA	& \NA	& \NA	& \NA	\\
\hline\end{tabular}
  \caption{Numerical results for graph matching, which are the mean over $10$ random graphs.
           For the fourth and fifth models, interior-point methods including Sedumi, SDPT3 and Mosek do not converge within $24$ hours.
           For the last model with around $2 \times 10^5$ constraints, Sedumi, SDPT3 and Mosek run out of $100$G memory limit.
           \smooth uses fewer iterations than \lbfgsb and achieves the fastest speed over all SDP based methods.
           All SDP based methods achieve the same upper-bounds and error rates.
           The lower-bounds for \smooth and \lbfgsb are slightly worse than interior-point methods.
           SMAC provides worse upper-bounds and error rates than SDP-based methods.
}
\label{tab:graphmatching}
\end{table*}

\subsection{Image Deconvolution}
\label{sec:deconv}

\begin{table}[t]
  \centering
  \scriptsize
{ %
  \begin{tabular}{l|c@{\hspace{0.1cm}}c@{\hspace{0.1cm}}c@{\hspace{0.1cm}}c@{\hspace{0.1cm}}c@{\hspace{0.1cm}}c}
  \hline
& \\ [-2ex]
      Methods  & \smooth & SeDuMi & MOSEK &TRWS &MPLP\\
  \hline
  \hline
& & & & &\\ [-2ex]
 Time/Iters
	        & $\mathbf{2}$\bf{m}$\mathbf{27}$\bf{s}/$\mathbf{20.5}$	& $2$h$48$m	& $21$m$33$s	& $20$m & $20$m		\\
 Error
	& $0.091$	& $\mathbf{0.074}$	& $0.083$	& $0.111$  & $0.112$		\\
 Upper-bound
	& $-988.3$	& $\mathbf{-988.8}$	& $-988.8$		& $-986.9$  & $-987.4$	\\
 Lower-bound
	& $-1054.2$	& $\mathbf{-993.7}$	& $-993.7$		& $-1237.9$  & $-1463.8$	\\
\hline\end{tabular} }
  \caption{{ %
Image deconvolution ($n = 2250$, $m = 2250$). The results are the average over two models in Figure~\ref{fig:deconv}. TRWS and MPLP are stopped at $20$ minutes.
Compared to TRWS and MPLP, \smooth achieves better upper-/lower-bounds and uses less time.
The bounds given by \smooth is comparable to those of interior-point methods.
}}
\vspace{-0.3cm}
\label{tab:binary}
\end{table}

{ %
Image deconvolution with a known blurring kernel is typically equivalent to
solving a regularized linear inverse problem (see \eqref{eq:deconv} in Table~\ref{tab:formulation}).
}
In this experiment, we test our algorithms on two binary $30 \times 75$ images blurred by an $11 \times 11$ Gaussian kernel.
LP based methods such as TRWS~\cite{kolmogorov2006convergent} and MPLP~\cite{Globerson2007} are also evaluated.
Note that the resulting models are difficult for graph cuts or LP relaxation based methods
in that it is densely connected and contains a large portion of non-submodular pairwise potentials.
We can see from Fig.~\ref{fig:deconv} and Table~\ref{tab:binary} that
QPBO~\cite{kolmogorov2004energy,kolmogorov2007minimizing,rother2007optimizing} leaves most of pixels unlabelled
and LP methods (TRWS and MPLP) achieves worse segmentation accuracy.
\smooth achieves a $10$-fold speedup over interior-point methods while keep comparable upper-/lower-bounds.
Using much less runtime, \smooth still yields significantly better upper-/lower-bounds than LP methods.

\begin{figure*}[t]
\centering
\begin{minipage}[c]{0.13\textwidth}
\includegraphics[width=1\textwidth]{./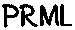}\\
\includegraphics[width=1\textwidth]{./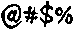}\\
\vspace{-0.8cm}
\begin{center}
{\centering \scriptsize Images}
\end{center}
\end{minipage}
\begin{minipage}[c]{0.13\textwidth}
\includegraphics[width=1\textwidth]{./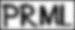}\\
\includegraphics[width=1\textwidth]{./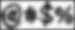}\\
\vspace{-0.8cm}
\begin{center}
{\centering \scriptsize Blurred Images}
\end{center}
\end{minipage}
\begin{minipage}[c]{0.13\textwidth}
\includegraphics[width=1\textwidth]{./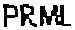}\\
\includegraphics[width=1\textwidth]{./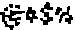}\\
\vspace{-0.8cm}
\begin{center}
{\centering \scriptsize \smooth}
\end{center}
\end{minipage}
\begin{minipage}[c]{0.13\textwidth}
\includegraphics[width=1\textwidth]{./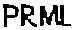}\\
\includegraphics[width=1\textwidth]{./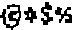}\\
\vspace{-0.8cm}
\begin{center}
{\centering \scriptsize MOSEK}
\end{center}
\end{minipage}
\begin{minipage}[c]{0.13\textwidth}
\includegraphics[width=1\textwidth]{./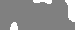}\\
\includegraphics[width=1\textwidth]{./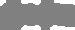}\\
\vspace{-0.8cm}
\begin{center}
{\centering \scriptsize QPBO}
\end{center}
\end{minipage}
\begin{minipage}[c]{0.13\textwidth}
\includegraphics[width=1\textwidth]{./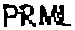}\\
\includegraphics[width=1\textwidth]{./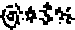}\\
\vspace{-0.8cm}
\begin{center}
{\centering \scriptsize TRWS}
\end{center}
\end{minipage}
\begin{minipage}[c]{0.13\textwidth}
\includegraphics[width=1\textwidth]{./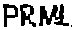}\\
\includegraphics[width=1\textwidth]{./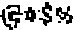}\\
\vspace{-0.8cm}
\begin{center}
{\centering \scriptsize MPLP}
\end{center}
\end{minipage}
\caption{{ %
Image deconvolution. QPBO cannot label most of pixels (grey pixels denote unlabelled pixels), as the MRF models are highly non-submodular.
\smooth and MOSEK have similar segmentation results.
TRWS and MPLP achieve worse segmentation results than our methods.
\label{fig:deconv}} }
\end{figure*}

{ %
\subsection{Chinese Character Inpainting}
\label{sec:dtf}

The MRF models for Chinese character inpainting are obtained from the OpenGM benchmark~\cite{Andres12OpenGM},
in which the unary terms and pairwise terms are learned using decision tree fields~\cite{nowozin2011decision}.
As there are non-submodular terms in these models, they cannot be solved exactly using graph cuts.
In this experiments, all models are firstly reduced using QPBO and different algorithms are compared on the $100$ reduced models.
Our approach is compared to LP-based methods, including TRWS, MPLP.
From the results shown in Table~\ref{tab:dtf}, we can see that \smooth runs much faster than interior-point methods (SeDuMi and MOSEK) and has similar upper-bounds and lower-bounds.
\smooth is also better than TRWS and MPLP  in terms of upper-bound and lower-bound.
An extension of MPLP (refer to as MPLP-C)~\cite{sontag2008,sontag2012efficiently}, which adds violated cycle constraints iteratively, is also evaluated in this experiment.
In MPLP-C, $1000$ LP iterations are performed initially and then $20$ cycle constraints are added at every $20$ LP iteration.
MPLP-C performs worse than \smooth under the runtime limit of $5$ minutes,
and outperforms \smooth with a much longer runtime limit ($1$ hour).
We also find that \smooth achieves better lower-bounds than MPLP-C on the instances with more edges (pairwise potential terms).
Note that the time complexity of MPLP-C (per LP iteration) is proportional to the number of edges,
while the time complexity of \smooth (see Table~\ref{tab:complexity}) is less affected by the edge number.
It should be also noticed that \smooth uses much less runtime than MPLP-C,
and its bound quality can be improved by adding linear constraints (including cycle constraints) as well~\cite{helmberg1998cutting}.

\begin{table*}[t]
  \centering
  \scriptsize
{ %
  \begin{tabular}{l|c@{\hspace{0.1cm}}c@{\hspace{0.1cm}}c@{\hspace{0.1cm}}c@{\hspace{0.1cm}}c@{\hspace{0.1cm}}c@{\hspace{0.1cm}}c@{\hspace{0.1cm}}c}
  \hline
& \\ [-2ex]
      Methods  & \smooth & SeDuMi & MOSEK &TRWS &MPLP &MPLP-C ($5$m) &MPLP-C ($1$hr)\\
  \hline
  \hline
& & & & &\\ [-2ex]
 Time/Iters
	& ${22.7}$s/${18.4}$	& $3$m$50$s	        & $42$s	        & ${23.5}${s} &$24.9$s     & $5$m        & $51$m$24$s	\\
 Upper-bound
	& $-49525.5$	        & ${-49525.5}(31)$	& $-49525.5(30)$	& $-49511.9(79)$ &$-49403.4(100)$  & $-49410.4(85)$	 & $-49495.8(47)$   \\
 Lower-bound
	& $-49683.0$	        & ${-49676.2}(0)$	& $-49676.2(0)$	& $-50119.4(100)$  &$-50119.4(100)$  & $-49908.7(72)$	 & $-49614.0(7)$   \\
\hline\end{tabular} }
  \caption{{ %
Chinese character inpainting using decision tree fields ($n = 191 \sim 1522$, $m = 191 \sim 1522$).
The results are the average over $100$ models.
The numbers of instances on which \smooth performs better are shown in the parentheses.
The upper-/lower-bounds given by \smooth is comparable to those of interior-point methods.
\smooth also achieves better solutions than TRWS, MPLP and MPLP-C ($5$m).
Using a much longer runtime ($55$m$24$s \vs $22.7$s), MPLP-C outperforms \smooth.
}}
\label{tab:dtf}
\end{table*}
}

\section{Conclusion}
In this paper, we have presented a regularized SDP algorithm (SDCut) for BQPs.
SDCut produces bounds comparable to the conventional SDP relaxation, and can be solved much more efficiently.
Two algorithms are proposed based on quasi-Newton methods (\lbfgsb) and smoothing Newton methods (\smooth) respectively.
Both \lbfgsb and \smooth are more efficient than classic interior-point algorithms.
To be specific, \smooth is faster than \lbfgsb for small to medium sized problems.
If the matrix to be eigen-decomposed, $\bC(\bu)$, has a special structure (\eg, sparse or low-rank)
such that matrix-vector products can be computed efficiently,
\lbfgsb is much more scalable to large problems.
The proposed algorithms have been applied to several computer vision tasks,
which demonstrate their flexibility in accommodating different types of constraints.
Experiments also show the computational efficiency and good solution quality of SDCut.
We have made the code
available online\footnote{\url{http://cs.adelaide.edu.au/~chhshen/projects/BQP/}}.

\vspace{2mm}
\noindent{\bf Acknowledgements}
We thank the anonymous reviewers for the constructive comments on Propositions~\ref{thm:prop1} and \ref{Remark:4}.


  This work was in part supported by ARC Future
    Fellowship FT120100969.
    This work was also in part supported by
    the Data to Decisions Cooperative Research Centre.

\section{Proofs}
\label{sec:appdx}

\subsection{Proof of Proposition~\ref{thm:prop1}}
\begin{proof}

{%
($i$) Let $t := \frac{1}{2\gamma}$ and $\mathcal{P} := \{\bX \in \mathcal{S}^n_+ | \langle \bB_i, \bX \rangle = b_i, i \in \mathcal{I}_{eq}; \langle \bB_i, \bX \rangle \leq b_i, i \in \mathcal{I}_{in} \}$, we have
\begin{align}
\lvert \mathrm{p}(\bX^\star) - \mathrm{p}(\bX^\star_\gamma) \rvert
\leq & \mathrm{p}(\bX^\star_\gamma) + \frac{1}{2\gamma}\lVert \bX^\star_\gamma \rVert^2_F - \mathrm{p}(\bX^\star) \\
= & \min_{\bX \in \mathcal{P}} \mathrm{p}(\bX) + t \lVert \bX \rVert^2_F - \mathrm{p}(\bX^\star) := \theta (t). \notag
\end{align}
As a pointwise minimum of affine functions of $t$, $\theta (t)$ is concave and continuous.
It is also easy to find that $\theta (0) = 0$ and $\theta (t)$ is monotonically increasing on $\mathbb{R}_+$.
So for any $\epsilon > 0$, there is a $t > 0$ (and equivalently $\gamma = \frac{1}{2t} > 0$) such that
$\lvert \mathrm{p}(\bX^\star) - \mathrm{p}(\bX^\star_\gamma) \rvert  < \epsilon$.
}

\noindent
($ii$)
By the definition of $\bX^\star_{\gamma_1}$ and $\bX^\star_{\gamma_2}$, it is clear that
$\mathrm{p}_{\gamma_1}(\bX_{\gamma_1}^\star) \leq \mathrm{p}_{\gamma_1}(\bX_{\gamma_2}^\star)$ and
$\mathrm{p}_{\gamma_2}(\bX_{\gamma_2}^\star) \leq \mathrm{p}_{\gamma_2}(\bX_{\gamma_1}^\star)$.
Then we have
$\mathrm{p}_{\gamma_1}(\bX_{\gamma_1}^\star) - \frac{\gamma_2}{\gamma_1}\mathrm{p}_{\gamma_2}(\bX_{\gamma_1}^\star)
 = (1 \!-\! \frac{\gamma_2}{\gamma_1}) \cdot \mathrm{p}(\bX^\star_{\gamma_1})
\leq \ \mathrm{p}_{\gamma_1}(\bX_{\gamma_2}^\star) - \frac{\gamma_2}{\gamma_1}\mathrm{p}_{\gamma_2}(\bX_{\gamma_2}^\star)
 = (1 \!-\! \frac{\gamma_2}{\gamma_1}) \cdot \mathrm{p}(\bX^\star_{\gamma_2})$.
Because $\gamma_2 / \gamma_1 > 1$, $\mathrm{p}(\bX^\star_{\gamma_1}) \geq \mathrm{p}(\bX^\star_{\gamma_2})$.
\end{proof}

\subsection{Proof of Proposition~\ref{thm:dual_simple}}
\begin{proof}
The Lagrangian of the primal problem~\eqref{eq:sdcut_sdp1} is:
\begin{align}
\mathrm{L} (\bX, \bu, \bZ)  =& \langle \bX, \bA \rangle - \langle \bX, \bZ \rangle + \frac{1}{2\gamma} \lVert \bX \rVert_F^2 \notag \\
            &+ \textstyle{\sum_{i=1}^m} u_i (\langle \bX, \bB_i\rangle \! - \! b_i),  \label{eq:fastsdp_lagrangian}
\end{align}
where $\bu \in \mathbb{R}^{|\mathcal{I}_{eq}|} \times \mathbb{R}^{|\mathcal{I}_{in}|}_+$ and $\bZ \in \mathcal{S}^n_+$ are Lagrangian multipliers.

Supposing \eqref{eq:sdcut_sdp1} and \eqref{eq:fastsdp_lagrangian} are feasible, strong duality holds and
$\nabla_{\bX} \mathrm{L} (\bX^\star, \bu^\star, \bZ^\star)  = 0$, where $\bX^\star$, $\bu^\star$ and $\bZ^\star$ are optimal solutions.
Then we have that
\begin{align}
\bX^\star \!=\! \gamma (\bZ^\star \!-\! \bA \!-\! \textstyle{\sum_{i=1}^{m}} u_i^\star \bB_i) = \gamma (\bZ^\star \!+\! \bC(\bu^\star)).
\label{eq:primal_dual_solution}
\end{align}
By substituting $\bX^\star$ to \eqref{eq:fastsdp_lagrangian}, we have the dual:
\begin{align}
\max_{\bu \in \mathbb{R}^{|\mathcal{I}_{eq}|} \times \mathbb{R}^{|\mathcal{I}_{in}|}_+, \, \bZ \in \mathcal{S}^n_+}
    &\,\, - \bu^{\T} \bb -\frac{\gamma}{2} \lVert \bZ + \bC(\bu) \rVert_F^2 .  \label{eq:fastsdp_dual_2}
\end{align}

The variable $\bZ$ can be further eliminated as follows.
Given a fixed $\bu$, \eqref{eq:fastsdp_dual_2} can be simplified to:
$\min_{\bZ \in \mathcal{S}^n_+} \  \lVert \bZ + \bC(\bu) \rVert_F^2$,
which is proved to has the solution $\bZ^\star = \Pi_{\mathcal{S}^n_+}(-\bC(\bu))$ (see \cite{higham1988computing} or Section 8.1.1 of \cite{boyd2004convex}).
Note that $\bC(\bu) = \Pi_{\mathcal{S}^n_+}(\bC(\bu)) - \Pi_{\mathcal{S}^n_+}(-\bC(\bu))$,
so $\bZ^\star + \bC(\bu) = \Pi_{\mathcal{S}^n_+}(\bC(\bu))$.
By substituting $\bZ$ into~\eqref{eq:fastsdp_dual_2} and \eqref{eq:primal_dual_solution}, we have
the simplified dual~\eqref{eq:fastsdp_dual} and Equation~\eqref{eq:fastsdp_primal_dual}.
\end{proof}

\subsection{Proof of Proposition~\ref{thm:dual_objective}}
\begin{proof}
Set $\zeta(\bX) := \frac{1}{2}\lVert \Pi_{\mathcal{S}^n_+} (\bX) \rVert_F^2 = \frac{1}{2} \sum_{i=1}^{n} (\max(0,\lambda_i))^2 $,
where $\lambda_i$ denotes the $i$-th eigenvalue of $\bX$.
$\zeta(\bX)$ is  a {\em separable spectral function} associated with the function
$\mathrm{g}(x) = \frac{1}{2} (\max(0,x) )^2$.
$\zeta: \mathcal{S}^n \rightarrow \mathbb{R}$ is continuously differentiable but not necessarily twice differentiable at $\bX \in \mathcal{S}^n$,
as $\mathrm{g}: \mathbb{R} \rightarrow \mathbb{R}$ has the same smoothness property (see \cite{lewis1996derivatives,lewis2001twice,sendov2007higher}).
We also have $\nabla \zeta(\bX) = \Pi_{\mathcal{S}^n_+}(\bX)$.
\end{proof}

\subsection{The Spherical Constraint}

Before proving Proposition~\ref{Remark:4}, we first give the following theorem.
\begin{theorem} (The spherical constraint).
For any $\bX \in \mathcal{S}^n_+$, we have the inequality $\lVert \bX \rVert_F \leq \mathrm{trace}(\bX)$, in which
the equality holds if and only if $\mathrm{rank}(\bX) = 1$.
\label{thm:1}
\end{theorem}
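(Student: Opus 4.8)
The plan is to reduce the statement to an elementary inequality among the eigenvalues of $\bX$. Since $\bX \in \mathcal{S}^n_+$ is symmetric and positive semidefinite, I would first invoke the spectral theorem to write $\bX = \sum_{i=1}^n \lambda_i \bp_i \bp_i^\T$, where $\{\bp_i\}_{i=1}^n$ is an orthonormal eigenbasis and the eigenvalues satisfy $\lambda_i \geq 0$. Both quantities appearing in the theorem are unitarily invariant, so they are determined by the spectrum alone: $\mathrm{trace}(\bX) = \sum_{i=1}^n \lambda_i$ and $\lVert \bX \rVert_F^2 = \langle \bX, \bX \rangle = \mathrm{trace}(\bX^2) = \sum_{i=1}^n \lambda_i^2$.

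Next I would prove the inequality in its squared form, which removes the square root. Expanding the square of the trace gives
\begin{align}
\mathrm{trace}(\bX)^2 &= \Big( \textstyle\sum_{i=1}^n \lambda_i \Big)^2 = \sum_{i=1}^n \lambda_i^2 + 2\!\!\sum_{1 \leq i < j \leq n}\!\! \lambda_i \lambda_j \notag \\
&= \lVert \bX \rVert_F^2 + 2\!\!\sum_{1 \leq i < j \leq n}\!\! \lambda_i \lambda_j. \notag
\end{align}
Because every $\lambda_i \geq 0$, the cross term is nonnegative, so $\lVert \bX \rVert_F^2 \leq \mathrm{trace}(\bX)^2$; taking nonnegative square roots on both sides yields the desired $\lVert \bX \rVert_F \leq \mathrm{trace}(\bX)$.

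Finally, for the equality case I would track exactly when the cross term vanishes. Equality holds if and only if $\sum_{i<j} \lambda_i \lambda_j = 0$, and since each summand is nonnegative this is equivalent to $\lambda_i \lambda_j = 0$ for all $i \neq j$, i.e.\ at most one eigenvalue is nonzero. As $\mathrm{rank}(\bX)$ equals the number of nonzero eigenvalues, equality corresponds to $\mathrm{rank}(\bX) \leq 1$. I do not anticipate any genuine obstacle in this argument; the only point requiring care is the boundary case $\bX = \mathbf{0}$ (rank $0$), for which both sides vanish trivially---discarding this degenerate matrix recovers the statement as written, $\mathrm{rank}(\bX) = 1$.
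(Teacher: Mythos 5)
Your proof is correct and follows essentially the same route as the paper's: both reduce $\lVert \bX \rVert_F^2 = \sum_i \lambda_i^2$ and $\mathrm{trace}(\bX)^2 = (\sum_i \lambda_i)^2$ to a comparison of eigenvalue sums and characterize equality by at most one nonzero eigenvalue. Your explicit expansion of the cross terms and your remark about the degenerate case $\bX = \mathbf{0}$ (rank $0$, where equality also holds) are in fact slightly more careful than the paper's one-line argument.
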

\begin{proof}
The proof given here is an extension of the one in~\cite{Malick2007spherical}.
We have $\lVert \bX \rVert_F^2 =
\mathrm{trace} (\bX \bX^\T )
= \sum_{i=1}^n \lambda^2_i
\leq (\mathrm{trace}(\bX))^2$, where $\lambda_i \geq 0$ denotes the $i$-th eigenvalue of $\bX$.
Note that $\lVert \bX \rVert_F = \mathrm{trace}(\bX)$ (\ie $\sum_{i=1}^n \lambda^2_i = (\sum_{i=1}^n \lambda_i)^2$),
 if and only if there is only one non-zero eigenvalue of $\bX$, \ie, $\mathrm{rank}(\bX) = 1$.
\end{proof}

\subsection{Proof of Proposition~\ref{Remark:4}}

\begin{proof}
Firstly, we have the following inequalities:
\begin{align}
 \mathrm{p}(\bX^\star) = \mathrm{p}_\gamma(\bX^\star) - \frac{\lVert \bX^\star \rVert^2_F}{2\gamma} \geq \mathrm{p}_\gamma(\bX^\star) - \frac{(\mathrm{trace}(\bX^\star))^2}{2\gamma},
\label{eq:inequality0}
\end{align}
where the second inequality is based on Theorem~\ref{thm:1}.
For the BQP~\eqref{eq:bqp} that we consider, it is easy to see $\mathrm{trace}(\bX^\star) = n$.
Furthermore, $\mathrm{p}_\gamma(\bX^\star) \geq \mathrm{p}_\gamma(\bX^\star_\gamma) $ holds by definition.
Then we have that
\begin{align}
 \mathrm{p}(\bX^\star) \geq \mathrm{p}_\gamma(\bX^\star_\gamma) - \frac{n^2}{2\gamma}.
\label{eq:inequality}
\end{align}

It is known that the optimum of the original SDP problem~\eqref{eq:backgd_sdp1}
is a lower-bound on the optimum of the BQP~\eqref{eq:bqp} (denoted by $p^{\star}$):
$\mathrm{p}(\bX^\star) \leq p^{\star}$.
Then according to \eqref{eq:inequality}, we have $\mathrm{p}_\gamma(\bX^\star_\gamma) - \frac{n^2}{2\gamma} \leq \mathrm{p}(\bX^\star) \leq p^{\star}$.
Finally based on the strong duality, the primal objective value is not smaller than the dual objective value in the feasible set
(see \eg \cite{boyd2004convex}):
$\mathrm{d}_\gamma(\bu) \leq \mathrm{p}_\gamma(\bX^\star_\gamma)$,
where $\bu \in \mathbb{R}_{\vphantom{+}}^{|\mathcal{I}_{eq}|} \times \mathbb{R}_{+}^{|\mathcal{I}_{in}|}$, $\gamma > 0$.
In summary, we have:
$\mathrm{d}_\gamma(\bu) - \frac{n^2}{2\gamma} \leq \mathrm{p}_\gamma(\bX^\star_\gamma) - \frac{n^2}{2\gamma} \leq \mathrm{p}(\bX^\star) \leq p^{\star},
\, \forall \bu \in \mathbb{R}_{\vphantom{+}}^{|\mathcal{I}_{eq}|} \times \mathbb{R}_{+}^{|\mathcal{I}_{in}|}, \forall \gamma > 0$.
\end{proof}

{ %
\bibliographystyle{IEEEtran}
\bibliography{IEEEabrv,sdcut1}
}

\begin{IEEEbiographynophoto}{Peng Wang}
received the B.S. degree in electrical engineering
and automation, and the PhD degree in control science and
engineering from Beihang University, China, in 2004 and 2011, respectively.
He is now a post-doctoral researcher at the University of Adelaide.
\end{IEEEbiographynophoto}
\begin{IEEEbiographynophoto}{Chunhua Shen}
is a Professor at School of
Computer Science, The University of Adelaide.
His research interests are in the intersection of
computer vision and statistical machine learning.
He studied at Nanjing University, at Australian
National University, and received his PhD
degree from University of Adelaide. In 2012, he
was awarded the Australian Research Council
Future Fellowship.
\end{IEEEbiographynophoto}
\begin{IEEEbiographynophoto}{Anton van den Hengel}
is a Professor and the Founding Director of the Australian Centre for Visual Technologies,
at the University of Adelaide, focusing on innovation in the production and analysis of visual digital media.
He received the Bachelor of
mathematical science degree, Bachelor of laws degree, Master's degree in computer science,
and the PhD degree in computer vision from The University of Adelaide in 1991, 1993, 1994, and 2000, respectively.
\end{IEEEbiographynophoto}
\begin{IEEEbiographynophoto}{Philip H. S. Torr}
received the Ph.D. (D.Phil.)
degree from the Robotics Research Group at the
University of Oxford, Oxford, U.K., under Prof.
D. Murray of the Active Vision Group. He was a
research fellow at Oxford for another three years. 
He left Oxford to work as a research
scientist with Microsoft Research for six years,
first with Redmond, WA, USA, and then with
the Vision Technology Group, Cambridge, U.K.,
founding the vision side of the Machine Learning
and Perception Group. Currently, he is a professor in computer vision
and machine learning with Oxford University, Oxford, U.K.
\end{IEEEbiographynophoto}

\clearpage

\onecolumn

\section*{Appendix}

In this section, we present some  computational details.

\setcounter{section}{0}
\setcounter{theorem}{0}
\setcounter{equation}{0}

\renewcommand{\theequation}{A.\arabic{equation}}
\renewcommand{\thetable}{A.\arabic{table}}
\renewcommand{\thefigure}{A.\arabic{figure}}
\renewcommand{\thesection}{A.\arabic{section}}
\renewcommand{\thetheorem}{A.\arabic{theorem}}

\section{Preliminaries}

\subsection{Euclidean Projection onto the {P.S.D\onedot} Cone}

\begin{theorem}
\label{thm:sdp_projection}
The Euclidean projection of a symmetric matrix $\bX \in \mathcal{S}^n$ onto the positive semidefinite cone $\mathcal{S}^n_+$,
is given by
\begin{align}
\Pi_{\mathcal{S}^n_+}(\bX)
:= \mathrm{arg}\min_{\bY \in \mathcal{S}^n_+} \  \lVert \bY - \bX \rVert_F^2
= \sum_{i=1}^n \max(0,\lambda_i) \bp_i \bp_i^\T,
\end{align}
where $\lambda_i, \bp_i, i = 1, \cdots, n$ are the eigenvalues and the corresponding eigenvectors of $\bX$.
\end{theorem}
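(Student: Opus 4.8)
The plan is to reduce the matrix projection to a collection of one-dimensional problems by exploiting the orthogonal invariance of the Frobenius norm, and then to certify the proposed minimizer by a direct lower-bound argument. First I would observe that $\mathcal{S}^n_+$ is a closed convex cone and that $\lVert \bY - \bX \rVert_F^2$ is strictly convex in $\bY$; hence the minimization problem defining $\Pi_{\mathcal{S}^n_+}(\bX)$ has a unique minimizer, so it suffices to exhibit one matrix that attains the minimum.

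Next I would diagonalize $\bX = \bP \Lambda \bP^\T$ with $\bP$ orthogonal and $\Lambda = \mathrm{Diag}(\lambda_1,\dots,\lambda_n)$, and substitute $\tilde{\bY} := \bP^\T \bY \bP$. Since conjugation by an orthogonal matrix preserves both the Frobenius norm and membership in $\mathcal{S}^n_+$, the problem becomes $\min_{\tilde{\bY}\in\mathcal{S}^n_+}\lVert \tilde{\bY} - \Lambda\rVert_F^2$. Expanding entrywise and using that $\Lambda$ is diagonal gives
\[
\lVert \tilde{\bY} - \Lambda \rVert_F^2 = \sum_{i\neq j}\tilde Y_{ij}^2 \; + \; \sum_{i=1}^n (\tilde Y_{ii} - \lambda_i)^2 \;\geq\; \sum_{i=1}^n (\tilde Y_{ii} - \lambda_i)^2.
\]

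The key step is to lower-bound the remaining diagonal sum using the single structural fact that the diagonal entries of any p.s.d.\ matrix are nonnegative, i.e.\ $\tilde Y_{ii}\geq 0$. Minimizing each scalar term $(\tilde Y_{ii}-\lambda_i)^2$ over $\tilde Y_{ii}\geq 0$ yields the minimizer $\tilde Y_{ii}=\max(0,\lambda_i)$ and the value $(\max(0,\lambda_i)-\lambda_i)^2$; hence $\lVert \tilde{\bY} - \Lambda \rVert_F^2 \geq \sum_{i=1}^n(\max(0,\lambda_i)-\lambda_i)^2$ for every feasible $\tilde{\bY}$. I would then exhibit the candidate $\tilde{\bY}^\star = \mathrm{Diag}(\max(0,\lambda_1),\dots,\max(0,\lambda_n))$, which is feasible (diagonal with nonnegative entries, hence p.s.d.), has vanishing off-diagonal part, and attains exactly this lower bound; therefore it is the unique minimizer. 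Transforming back via $\bY^\star = \bP\tilde{\bY}^\star\bP^\T = \sum_{i=1}^n \max(0,\lambda_i)\,\bp_i \bp_i^\T$ gives the claimed formula.

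The argument is routine once the reduction is made; the only point requiring care is the decoupling of the diagonal and off-diagonal parts, which is precisely what keeps the proof clean. Because $\Lambda$ is diagonal, the off-diagonal entries of $\tilde{\bY}$ enter the objective only through the nonnegative term $\sum_{i\neq j}\tilde Y_{ij}^2$, so discarding them can only decrease the objective, and the p.s.d.\ constraint is felt solely through $\tilde Y_{ii}\geq 0$. I would finally note that the resulting expression is independent of the particular orthonormal eigenbasis $\{\bp_i\}$, since it depends only on the spectral projections onto the eigenspaces of $\bX$, so no ambiguity arises from repeated eigenvalues.
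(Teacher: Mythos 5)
Your proof is correct. Note that the paper does not actually prove this statement---it simply defers to Higham (1988) and Section 8.1.1 of Boyd and Vandenberghe---so there is no in-paper argument to compare against; your write-up supplies the standard self-contained derivation that those references contain. The logic is sound at the one point that deserves scrutiny: after reducing to $\min_{\tilde{\bY}\in\mathcal{S}^n_+}\lVert\tilde{\bY}-\Lambda\rVert_F^2$, you lower-bound the objective over the \emph{larger} set $\{\tilde{\bY}:\tilde Y_{ii}\geq 0\}$ (a relaxation of the p.s.d.\ constraint), and then exhibit a candidate that is feasible for the \emph{original} constraint and attains that relaxed bound; combined with strict convexity of the squared Frobenius norm over the closed convex cone, this certifies the unique minimizer. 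Your closing remark about independence from the choice of eigenbasis under repeated eigenvalues is a worthwhile addition that the statement itself glosses over.
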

\begin{proof}
This result is well-known and
its proof can be found in \cite{higham1988computing} or Section 8.1.1 of \cite{boyd2004convex}.
\end{proof}

\subsection{Derivatives of Separable Spectral Functions}
\label{sec:spectral_functions}

A {\em spectral} function $\mathrm{F}(\bX): \mathcal{S}^n \rightarrow \mathbb{R}$ is a function which depends only on the eigenvalues of a symmetric matrix $\bX$,
and can be written as $\mathrm{f}(\blambda)$ for some {\em symmetric} function $\mathrm{f}: \mathbb{R}^n \rightarrow \mathbb{R}$,
where $\blambda = [\lambda_i, \cdots, \lambda_n]^\T$ denotes the vector of eigenvalues of $\bX$.
A function $\mathrm{f}(\cdot)$ is {\em symmetric} means that $\mathrm{f}(\bx) = \mathrm{f}(\mathbf{U} \bx)$
for any permutation matrix $\mathbf{U}$ and any $\bx$ in the domain of $\mathrm{f(\cdot)}$.
Such symmetric functions and the corresponding spectral functions are called {\em separable},
when $\mathrm{f}(\bx) = \sum_{i=1}^{n} \mathrm{g}(x_i)$ for some function $\mathrm{g}: \mathbb{R} \rightarrow \mathbb{R}$.
It is known (see \cite{lewis1996derivatives,lewis2001twice,sendov2007higher}, \eg) that a spectral function has the following properties:

\begin{theorem}
\label{thm:spectral_1}
A separable spectral function $\mathrm{F}(\cdot)$ is $k$-times (continuously) differentiable at $\bX \in \mathcal{S}^n$,
if and only if its corresponding function $\mathrm{g}(\cdot)$ is $k$-times (continuously) differentiable
at $\lambda_i$, $i = 1, \dots, n$,
and the first- and second-order derivatives of $\mathrm{F}(\cdot)$ are given by
\begin{align}
\label{eq:spectral_1}
\nabla \mathrm{F}(\bX) &= \bP \Big( \mathrm{diag} \big(\nabla \mathrm{g}(\lambda_1), \nabla \mathrm{g}(\lambda_2), \dots, \nabla \mathrm{g}(\lambda_n)\big) \Big) \bP^{\T}, \\
\nabla^2 \mathrm{F}(\bX)(\bH) &= \bP \left( \Omega({\blambda}) \circ (\bP^{\T} \bH \bP) \right) \bP^{\T}, \forall \bH \in \mathcal{S}^n
\end{align}
where
$[\Omega(\blambda)]_{ij} := \left\{ \begin{array}{ll}
                                    \frac{\nabla \mathrm{g}(\lambda_i)- \nabla \mathrm{g}(\lambda_j)}{\lambda_i - \lambda_j}
                                           & \mbox{if } \lambda_i \neq \lambda_j, \\
                                    \nabla^2 \mathrm{g}(\lambda_i)
                                           & \mbox{if } \lambda_i = \lambda_j,
                                          \end{array}
                                  \right.
\,\, i,j = 1, \dots, n$.
$\blambda = [\lambda_1, \cdots, \lambda_n]^\T$ and $\bP = [\bp_1, \cdots, \bp_n]$
are the collection of eigenvalues and the corresponding eigenvectors of $\bX$.
\end{theorem}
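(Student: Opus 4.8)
The plan is to prove the two assertions — the differentiability class and the two derivative formulas — by reducing the separable spectral function to the trace of a primary matrix function and then invoking the perturbation calculus for such functions developed in \cite{lewis1996derivatives,lewis2001twice,sendov2007higher}. Throughout I would fix a spectral decomposition $\bX = \bP\,\mathrm{diag}(\blambda)\,\bP^\T$ and write $\mathrm{F}(\bX) = \mathrm{f}(\blambda) = \sum_{i=1}^n \mathrm{g}(\lambda_i) = \mathrm{trace}(\mathrm{g}(\bX))$, where $\mathrm{g}(\bX) := \bP\,\mathrm{diag}(\mathrm{g}(\lambda_1),\dots,\mathrm{g}(\lambda_n))\,\bP^\T$ denotes the primary matrix function associated with the scalar $\mathrm{g}$.

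First I would settle the first-order statement. Because $\mathrm{f}$ is symmetric and separable, it is $k$-times (continuously) differentiable at $\blambda$ precisely when $\mathrm{g}$ is $k$-times (continuously) differentiable at each $\lambda_i$, and then $[\nabla\mathrm{f}(\blambda)]_i = \nabla\mathrm{g}(\lambda_i)$. Lewis's characterisation of spectral functions transfers this differentiability to $\mathrm{F}$ and yields $\nabla\mathrm{F}(\bX) = \bP\,\mathrm{diag}(\nabla\mathrm{f}(\blambda))\,\bP^\T = \bP\,\mathrm{diag}(\nabla\mathrm{g}(\lambda_1),\dots,\nabla\mathrm{g}(\lambda_n))\,\bP^\T$, which is the first line of \eqref{eq:spectral_1}. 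For a more self-contained derivation I would instead differentiate along a line $\bX + t\bH$: first-order eigenvalue perturbation gives $\mathrm{g}(\lambda_i(\bX+t\bH)) = \mathrm{g}(\lambda_i) + t\,\nabla\mathrm{g}(\lambda_i)\,\bp_i^\T\bH\bp_i + o(t)$, and summing over $i$ identifies the directional derivative as $\langle \bP\,\mathrm{diag}(\nabla\mathrm{g}(\blambda))\,\bP^\T,\bH\rangle$, recovering the same gradient.

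Next I would obtain the second-order formula by differentiating the gradient map. By the first step, $\nabla\mathrm{F}(\bX)$ is exactly the primary matrix function $(\nabla\mathrm{g})(\bX)$, so $\nabla^2\mathrm{F}(\bX)(\bH)$ is the Fréchet derivative of $\bX\mapsto(\nabla\mathrm{g})(\bX)$ in direction $\bH$. The Daleckii--Krein formula for the derivative of a primary matrix function then gives $\nabla^2\mathrm{F}(\bX)(\bH) = \bP\big(\Omega(\blambda)\circ(\bP^\T\bH\bP)\big)\bP^\T$, where $[\Omega(\blambda)]_{ij}$ is the divided difference of the scalar $\nabla\mathrm{g}$, namely $(\nabla\mathrm{g}(\lambda_i)-\nabla\mathrm{g}(\lambda_j))/(\lambda_i-\lambda_j)$ when $\lambda_i\neq\lambda_j$ and $\nabla^2\mathrm{g}(\lambda_i)$ when $\lambda_i=\lambda_j$, matching the stated $\Omega$. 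Continuity of these entries, hence $C^2$-differentiability of $\mathrm{F}$, follows once $\nabla\mathrm{g}$ is $C^1$, i.e. $\mathrm{g}\in C^2$; the general $C^k$ equivalence follows because the formulas express $\nabla\mathrm{F}$ and $\nabla^2\mathrm{F}$ continuously in terms of $\bP$, $\blambda$ and the derivatives of $\mathrm{g}$.

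The main obstacle is the degenerate case of repeated eigenvalues. The individual maps $\lambda_i(\cdot)$ are not differentiable where eigenvalues collide, so the first-order computation must be justified through the symmetric/separable structure: within a degenerate block $\nabla\mathrm{g}$ takes a single value, so $\sum_{i}\nabla\mathrm{g}(\lambda_i)\bp_i^\T\bH\bp_i$ is invariant under the (non-unique) choice of eigenbasis, making the sum smooth even though the summands are not. Likewise, the diagonal entries of $\Omega$ must be read as the $\lambda_j\to\lambda_i$ limit of the divided difference, which equals $\nabla^2\mathrm{g}(\lambda_i)$. I would handle both points by appealing to the degenerate perturbation analysis carried out in \cite{lewis1996derivatives,lewis2001twice,sendov2007higher}, which is exactly where this care is required.
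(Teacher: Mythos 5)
The paper does not actually prove this theorem: it is quoted as a known result, with the reader sent to \cite{lewis1996derivatives,lewis2001twice,sendov2007higher}, so there is no internal argument to compare against. Your sketch supplies a proof along the standard lines of those references, and it is sound: writing $\mathrm{F}(\bX)=\mathrm{trace}(\mathrm{g}(\bX))$ for the primary matrix function $\mathrm{g}(\bX)=\bP\,\mathrm{diag}(\mathrm{g}(\lambda_1),\dots,\mathrm{g}(\lambda_n))\,\bP^\T$, identifying $\nabla\mathrm{F}(\bX)$ with the primary matrix function $(\nabla\mathrm{g})(\bX)$, and then applying the Daleckii--Krein formula to the map $\bX\mapsto(\nabla\mathrm{g})(\bX)$ reproduces exactly the divided-difference matrix $\Omega(\blambda)$ in the statement. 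You also correctly locate the only genuinely delicate point, namely repeated eigenvalues: the individual $\lambda_i(\cdot)$ and the eigenbasis $\bP$ are not continuous (let alone differentiable) functions of $\bX$ there, so the displayed formulas are well defined and smooth only because of the block-invariance you describe, and making that rigorous is precisely the content of Lewis's and Sendov's theorems, which you (like the paper) ultimately invoke rather than reprove. Two small things you gloss over: the ``only if'' direction of the equivalence is easiest to see by restricting $\mathrm{F}$ to diagonal matrices, where $\mathrm{F}(\mathrm{Diag}(\bx))=\sum_i\mathrm{g}(x_i)$ immediately forces $\mathrm{g}$ to inherit the differentiability of $\mathrm{F}$; and your closing claim that the $C^k$ equivalence ``follows because the formulas express $\nabla\mathrm{F}$ and $\nabla^2\mathrm{F}$ continuously in terms of $\bP$, $\blambda$'' needs the same caveat, since $\bP$ itself is discontinuous at degenerate spectra and only the assembled expressions are continuous. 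Neither issue is a real gap given that you defer to the cited degenerate perturbation analysis; your route is simply a fleshed-out version of the argument the paper outsources.
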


\section{Inexact Smoothing Newton Methods: Computational Details}

\subsection{Smoothing Function}
In this section, we show how to constrcut a smoothing function of $\mathrm{F}(\bu)$ (see \eqref{eq:smooth_F_equation}).
First, the smoothing functions for $\Pi_{\mathcal{D}}$ and $\Pi_{\mathcal{S}^n_+}$ are written as follows respectively:
\begin{align}
  &\tilde{\Pi}_{\mathcal{D}}(\epsilon,\bv) := \left\{ \begin{array}{ll}
                                    v_i                 & \mbox{if } i \in \mathcal{I}_{eq}, \\
                                    \phi(\epsilon,v_i)  & \mbox{if } i \in \mathcal{I}_{in},
                                  \end{array}
                          \right. \quad (\epsilon, \bv) \in \mathbb{R} \times \mathbb{R}^{m}, \\
  &\tilde{\Pi}_{\mathcal{S}^n_+}(\epsilon,\bX)   :=
                           \sum_{i=1}^n \phi(\epsilon, \lambda_i) \bp_i \bp_i^\T,
                           \quad (\epsilon, \bX) \in \mathbb{R} \times \mathcal{S}^n,
\end{align}
where $\lambda_i$ and $\bp_i$ are the $i$th eigenvalue and the corresponding eigenvector of $\bX$.
$\phi(\epsilon,v)$ is the Huber smoothing function that we adopt here to replace $\max(0,v)$:
\begin{align}
\phi(\epsilon,v) := \left\{ \begin{array}{ll}
                  v & \mbox{if } v > 0.5\epsilon, \\
                  (v+0.5\epsilon)^2 / 2\epsilon, &\mbox{if } -0.5\epsilon \leq v \leq 0.5\epsilon, \\
                  0 & \mbox{if } v < -0.5\epsilon.
                  \end{array} \right.
\end{align}
Note that at $\epsilon = 0$,
$\phi(\epsilon,v) = \max(0,v)$,
$\tilde{\Pi}_{\mathcal{D}}(\epsilon,\bv) = \Pi_{\mathcal{D}}(\bv)$ and
$\tilde{\Pi}_{\mathcal{S}^n_+}(\epsilon,\bX)   = \Pi_{\mathcal{S}^n_+}(\bX)$.
$\phi$, $\tilde{\Pi}_{\mathcal{D}}$, $\tilde{\Pi}_{\mathcal{S}^n_+}$ are Lipschitz continuous on
$\mathbb{R}$, $\mathbb{R} \times \mathbb{R}^m$, $\mathbb{R} \times \mathcal{S}^n$ respectively,
and they are continuously differentiable when $\epsilon \neq 0$.
Now we have a smoothing function for $\mathrm{F}(\cdot)$:
\begin{align}
\tilde{\mathrm{F}}(\epsilon,\bu) := \bu - \tilde{\Pi}_{\mathcal{D}} \left(\epsilon,
   \bu -  \gamma \Phi \left[ \tilde{\Pi}_{\mathcal{S}^n_+} \left( \epsilon,\bC(\bu) \right) \right] - \bb \right),
\quad (\epsilon,\bu) \in \mathbb{R}\times\mathbb{R}^m,
\label{eq:Upsilon_smooth_F}
\end{align}
which has the same smooth property as $\tilde{\Pi}_{\mathcal{D}}$ and $\tilde{\Pi}_{\mathcal{S}^n_+}$.

\subsection{Solving the Linear System \eqref{eq:cg_equation}}
\label{sec:solve_linear_system}

The linear system~\eqref{eq:cg_equation} can be decomposed to two parts:
\begin{subequations}
\label{eq:equations_two_part}
\begin{align}
\eqref{eq:cg_equation} \Leftrightarrow
\left[ \begin{array}{c} \epsilon_k \\ \tilde{\mathrm{F}}(\epsilon_k,\bu_k) \end{array}\right] +
\left[ \begin{array}{cc} 1 & 0 \\ (\nabla_\epsilon \tilde{\mathrm{F}})(\epsilon_k,\bu_k) & (\nabla_{\bu} \tilde{\mathrm{F}})(\epsilon_k,\bu_k) \end{array}\right],
\label{eq:equations_two_part_2}
\end{align}
\begin{numcases}{\Leftrightarrow }
\Delta \epsilon_k = \bar{\epsilon} - \epsilon_k \label{eq:smooth_E_equation_decomp1}\\
\nabla_{\bu} \tilde{\mathrm{F}}(\epsilon_k,\bu_k) ( \Delta \bu_k ) =
- \tilde{\mathrm{F}}(\epsilon_k,\bu_k) - \nabla_\epsilon \tilde{\mathrm{F}}(\epsilon_k,\bu_k) ( \Delta \epsilon_k), \label{eq:smooth_E_equation_decomp2}
\end{numcases}
\end{subequations}
where $\nabla_\epsilon \tilde{\mathrm{F}}$ and $\nabla_{\bu} \tilde{\mathrm{F}}$ denote the partial derivatives of $\tilde{\mathrm{F}}$ \wrt $\epsilon$ and $\bu$ respectively.
One can firstly obtain the value of $\Delta \epsilon_k$ by \eqref{eq:smooth_E_equation_decomp1}
and then solve the linear system~\eqref{eq:smooth_E_equation_decomp2} using CG-like algorithms.

Since the Jacobian matrix $\nabla_{\bu} \tilde{\mathrm{F}}(\epsilon_k,\bu_k) \in \mathbb{R}^{m \times m}$
is nonsymmetric when inequality constraints exist,
biconjugate gradient stabilized (BiCGStab) methods~\cite{van1992bi} are used for \eqref{eq:smooth_E_equation_decomp2} \wrt $|\mathcal{I}_{in}| \neq 0$,
and classic conjugate gradient methods are used when $|\mathcal{I}_{in}| = 0$.

The computational bottleneck of CG-like algorithms is on the Jacobian-vector products at each iteration.
We discuss in the following the computational complexity of it in our specific cases.
Firstly, we give the partial derivatives of smoothing functions
$\phi(\epsilon,v): \mathbb{R}\times\mathbb{R} \rightarrow \mathbb{R}$,
$\tilde{\Pi}_{\mathcal{D}}(\epsilon,\bv): \mathbb{R}\times\mathbb{R}^m \rightarrow \mathbb{R}^m$
and $\tilde{\Pi}_{\mathcal{S}^n_+}(\epsilon,\bX): \mathbb{R}\times\mathcal{S}^n \rightarrow \mathcal{S}^n$:
\begin{subequations}
\begin{align}
\nabla_{\epsilon} \phi(\epsilon,v)
                  &= \left\{ \begin{array}{ll}
                  0.125 - 0.5(v/\epsilon)^2 &\mbox{if } -0.5\epsilon \leq v \leq 0.5\epsilon, \\
                  0 & \mbox{otherwise,}
                  \end{array} \right. \\
\nabla_{v} \phi(\epsilon,v)
                  &= \left\{ \begin{array}{ll}
                  1 & \mbox{if } v > 0.5\epsilon, \\
                  0.5 + v/\epsilon &\mbox{if } -0.5\epsilon \leq v \leq 0.5\epsilon, \\
                  0 & \mbox{if } v < -0.5\epsilon,
                  \end{array} \right.
\end{align}
\end{subequations}
\begin{subequations}
\begin{align}
\left[\nabla_{\epsilon} \tilde{\Pi}_{\mathcal{D}}(\epsilon,\bv)\right]_i &= \left\{ \begin{array}{ll}  
0                 & \mbox{if } i \in \mathcal{I}_{eq}, \\
\nabla_{\epsilon}\phi(\epsilon,v_i)  & \mbox{if } i \in \mathcal{I}_{in} ,
\end{array}  
\right. \\
\left[\nabla_{\bv} \tilde{\Pi}_{\mathcal{D}}(\epsilon,\bv)\right]_{ij} &= \left\{ \begin{array}{ll}  
1                 & \mbox{if } i = j \in \mathcal{I}_{eq}, \\
\nabla_{v_i}\phi(\epsilon,v_i)  & \mbox{if } i = j \in \mathcal{I}_{in}, \\
0 & \mbox{if } i \neq j,
\end{array}  
\right. 
\end{align}
\end{subequations}
\begin{subequations}
\begin{align}
\nabla_{\epsilon}\tilde{\Pi}_{\mathcal{S}^n_+}(\epsilon,\bX)   &= \bP \mathrm{diag}\left( \nabla_{\epsilon} \phi(\epsilon,\blambda) \right) \bP^{\T}, \label{eq:derivative_5}\\
\nabla_{\bX}\tilde{\Pi}_{\mathcal{S}^n_+}(\epsilon,\bX) (\bH) &= \bP \left( \Omega(\epsilon,{\blambda}) \circ (\bP^{\T} \bH \bP) \right) \bP^{\T}, \label{eq:derivative_6}
\end{align}
\end{subequations}
where
$\blambda$ and $\bP$
are the collection of eigenvalues and the corresponding eigenvectors of $\bX$.
$\nabla_{\epsilon} \phi(\epsilon,\blambda) := \left[ \nabla_{\epsilon} \phi(\epsilon,\lambda_i) \right]_{i=1}^n$ and
$\Omega(\epsilon,\blambda): \mathbb{R}\times\mathbb{R}^n \rightarrow \mathcal{S}^n$ is defined as
\begin{align}
[\Omega(\epsilon,\blambda)]_{ij} := \left\{ \begin{array}{ll}
                                    \frac{\phi(\epsilon,\lambda_i)-\phi(\epsilon,\lambda_j)}{\lambda_i - \lambda_j}
                                           & \mbox{if } \lambda_i \neq \lambda_j, \\
                                    \nabla_{\lambda_i}\phi(\epsilon,\lambda_i)
                                           & \mbox{if } \lambda_i = \lambda_j,
                                          \end{array}
                                  \right.
\,\, i,j = 1, \dots, n.
\end{align}
Equations~\eqref{eq:derivative_5} and \eqref{eq:derivative_6} are derived based on Theorem~\ref{thm:spectral_1}.

Then we have the partial derivatives of $\tilde{\mathrm{F}}(\epsilon,\bu): \mathbb{R}\times\mathbb{R}^m \rightarrow \mathbb{R}^m$ \wrt $\epsilon$ and $\bu$:
\begin{subequations}
\begin{align}
\nabla_{\epsilon} \tilde{\mathrm{F}}(\epsilon,\bu) &= - \nabla_{\epsilon} \tilde{\Pi}_{\mathcal{D}} \left(\epsilon, \bw \right)
- \nabla_{\bw} \tilde{\Pi}_{\mathcal{D}} \left(\epsilon, \bw \right) (\nabla_{\epsilon} \bw), \notag \\
&= - \nabla_{\epsilon} \tilde{\Pi}_{\mathcal{D}} \left(\epsilon, \bw \right)
+  \nabla_{\bw} \tilde{\Pi}_{\mathcal{D}} \left(\epsilon, \bw \right)
\left( \gamma \Phi \left[ \bP \mathrm{diag}\left( \nabla_{\epsilon} \phi(\epsilon,\blambda)\right) \bP^{\T} \right]\right),
\label{eq:Upsilon_derivative_1}\\
\nabla_{\bu} \tilde{\mathrm{F}}(\epsilon,\bu) (\bh)
&= \bh - \nabla_{\bw} \tilde{\Pi}_{\mathcal{D}} \left(\epsilon, \bw \right) (\nabla_{\bu} \bw), \notag \\
&= \bh - \nabla_{\bw} \tilde{\Pi}_{\mathcal{D}} \left(\epsilon, \bw \right)
\left(\bh + \gamma
\Phi\left[ \bP \left( \Omega(\epsilon,{\blambda}) \circ (\bP^{\T} \Psi[\bh] \bP) \right) \bP^{\T}\right] \right) ,
\label{eq:Upsilon_derivative_2}
\end{align}
\end{subequations}
where $\bw := \bu -  \gamma \Phi \left[ \tilde{\Pi}_{\mathcal{S}^n_+} \left( \epsilon,\bC(\bu) \right) \right] - \bb$;
$\bC(\bu) := \!-\!\bA \!-\! \Psi[\bu]$;
$\Phi(\bX) := [\langle \bB_1, \bX\rangle, \cdots, \langle \bB_m, \bX\rangle]^{\T}$;
$\Psi(\bu) := \sum_{i=1}^m u_i \bB_i$;
$\blambda$ and $\bP$
are the collection of eigenvalues and the corresponding eigenvectors of $\bC(\bu)$.

In general cases, computing \eqref{eq:Upsilon_derivative_1} and \eqref{eq:Upsilon_derivative_2}
needs $\mathcal{O}(mn^2+n^3)$ flops.
However, based on the observation that most of $\bB_i,i = 1, \dots, m$ contain only $\mathcal{O}(1)$ elements and
$r = \mathrm{rank}(\Pi_{\mathcal{S}^n_+}(\bC(\bu))) \ll n$,
the computation cost can be dramatically reduced.
Firstly, super sparse $\bB_i$s lead to the computation cost of $\Phi$ and $\Psi$
reduced from $\mathcal{O}(mn^2)$ to $\mathcal{O}(m+n)$.
Secondly, note that $[\Omega(\epsilon,\blambda)]_{ij} = 0, \forall \lambda_i, \lambda_j < 0$.
Given $r \ll n$ and $\epsilon$ is small enough,
the matrix $\Omega$ only contains non-zero elements in the first $r$ columns and rows.
Thus the matrix multiplication in \eqref{eq:Upsilon_derivative_1}, \eqref{eq:Upsilon_derivative_2} and \eqref{eq:Upsilon_smooth_F}
can be computed in $\mathcal{O}(n^2r)$ flops rather than the usual $\mathcal{O}(n^3)$ flops.

In summary, the computation cost of the right hand side of Equ.~\eqref{eq:smooth_E_equation_decomp2}
and the Jacobian-vector product~\eqref{eq:Upsilon_derivative_2} can be
reduced from $\mathcal{O}(mn^2+n^3)$ to $\mathcal{O}(m+n^2r)$ in our cases.

\end{document}